\theoremstyle{plain}
\newtheorem{theorem}{Theorem}[section]
\newtheorem{proposition}[theorem]{Proposition}
\newtheorem{lemma}[theorem]{Lemma}
\theoremstyle{definition}
\newtheorem{definition}[theorem]{Definition}
\newtheorem{assumption}[theorem]{Assumption}
\theoremstyle{remark}
\newcommand{\R}{\mathbb{R}}
\newcommand{\E}{\mathrm{E}}
\newcommand{\identity}{\mathrm{I}}
\newcommand{\ft}{\mathrm{ft}}
\newcommand{\Lpre}{\gL_{\mathrm{pre}}}
\newcommand{\Lpretilde}{\widetilde{\gL}_{\mathrm{pre}}}
\newcommand{\Lft}{\gL_{\mathrm{ft}}}
\newcommand{\pretrainmap}{\mA^{\mathrm{pre}}}
\newcommand{\finetunemap}{\mA^{\mathrm{ft}}}
\newcommand{\targetmap}{\mA^{\mathrm{target}}}
\newcommand{\pretrainsigma}{\mSigma^{\mathrm{pre}}}
\newcommand{\finetunesigma}{\mSigma^{\mathrm{ft}}}
\newcommand{\pretrainsigmai}{\Sigma^{\mathrm{pre},i}}
\newcommand{\pretrainsigman}{\Sigma^{\mathrm{pre},n}}
\newcommand{\sigmapre}{\sigma^{\mathrm{pre}}}
\newcommand{\sigmaft}{\sigma^{\mathrm{ft}}}
\newcommand{\ipre}{n}
\newcommand{\deltapre}{\Delta_{\mathrm{pre}}}
\DeclareMathOperator{\Tr}{Tr}
\newcommand{\malpha}{\mathbf{\alpha}}
\newcommand{\mbeta}{\mathbf{\beta}}
\def\E{{\mathop{\mathbb{E}}}}
\def\vzero{{\bm{0}}}
\def\vtheta{{\bm{\theta}}}
\def\vx{{\bm{x}}}
\def\vy{{\bm{y}}}
\def\mA{{\bm{A}}}
\def\mI{{\bm{I}}}
\def\mU{{\bm{U}}}
\def\mV{{\bm{V}}}
\def\mW{{\bm{W}}}
\def\mZ{{\bm{Z}}}
\def\mSigma{{\bm{\Sigma}}}
\DeclareMathAlphabet{\mathsfit}{\encodingdefault}{\sfdefault}{m}{sl}
\SetMathAlphabet{\mathsfit}{bold}{\encodingdefault}{\sfdefault}{bx}{n}
\def\gA{{\mathcal{A}}}
\def\gL{{\mathcal{L}}}
\def\gN{{\mathcal{N}}}
\def\sN{{\mathbb{N}}}
\def\sR{{\mathbb{R}}}
\begin{document}

\twocolumn[
\icmltitle{
    Overtrained Language Models Are Harder to Fine-Tune
}

\icmlsetsymbol{equal}{*}

\begin{icmlauthorlist}
\icmlauthor{Jacob Mitchell Springer}{cmu}
\icmlauthor{Sachin Goyal}{cmu}
\icmlauthor{Kaiyue Wen}{stanford}
\icmlauthor{Tanishq Kumar}{harvard}
\icmlauthor{Xiang Yue}{cmu}\\
\icmlauthor{Sadhika Malladi}{princeton}
\icmlauthor{Graham Neubig}{cmu}
\icmlauthor{Aditi Raghunathan}{cmu}
\end{icmlauthorlist}

\icmlaffiliation{cmu}{Carnegie Mellon University}
\icmlaffiliation{stanford}{Stanford University}
\icmlaffiliation{harvard}{Harvard University}
\icmlaffiliation{princeton}{Princeton University}

\icmlcorrespondingauthor{Jacob Mitchell Springer}{jspringer@cmu.edu}

\icmlkeywords{Machine Learning}
\linepenalty=1000

\vskip 0.3in
]
\printAffiliationsAndNotice{}

\begin{abstract}
   Large language models are pre-trained on ever-growing token budgets under the assumption that better pre-training performance translates to improved downstream models. In this work, we challenge this assumption and show that extended pre-training can make models harder to fine-tune, leading to degraded final performance. We term this phenomenon \textbf{catastrophic overtraining}. For example, the instruction-tuned OLMo-1B model pre-trained on 3T tokens leads to over 2\% worse performance on multiple standard LLM benchmarks than its 2.3T token counterpart. Through controlled experiments and theoretical analysis, we show that catastrophic overtraining arises from a systematic increase in the broad sensitivity of pre-trained parameters to modifications, including but not limited to fine-tuning. Our findings call for a critical reassessment of pre-training design that considers the downstream adaptability of the model.
\end{abstract}

\section{Introduction}
\label{sec:introduction}

\begin{figure}[!ht]
    \centering    
    \includegraphics[width=\linewidth,trim=0.4em 1em 0.8em 0em]{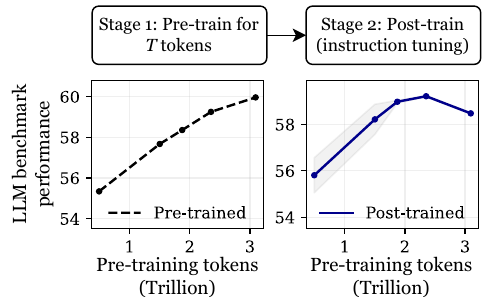}
    \caption{\textbf{Language models with extensive pre-training can exhibit \emph{catastrophic overtraining}, where the performance of post-trained models degrades as the pre-training stage is extended.} We report the average performance of five common LLM benchmarks (ARC-Easy, ARC-Challenge, PIQA, HellaSwag) for OLMo-1B intermediate checkpoints before and after instruction fine-tuning, with additional results in Section~\ref{sec:experiments}. We argue that catastrophic overtraining arises as a result of a progressive increase throughout pre-training of model sensitivity to parameter transformations, leading to greater forgetting of the capabilities acquired during pre-training after fine-tuning (Section~\ref{sec:experiments_controlled}). Overall, our results challenge the notion that scaling pre-training is strictly beneficial.}
    \label{fig:overtraining}
\end{figure}

\begin{figure*}[!ht]
    \centering    
    \begin{tikzpicture}
        \node[anchor=south west] (A) at (0,0) 
            {\includegraphics[width=0.491\textwidth]{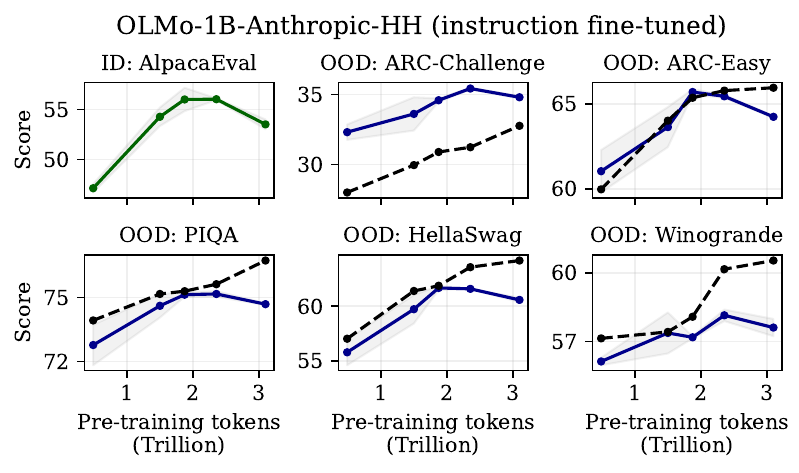}};
        \node[anchor=south west] (B) at (A.south east) 
            {\includegraphics[width=0.491\textwidth]{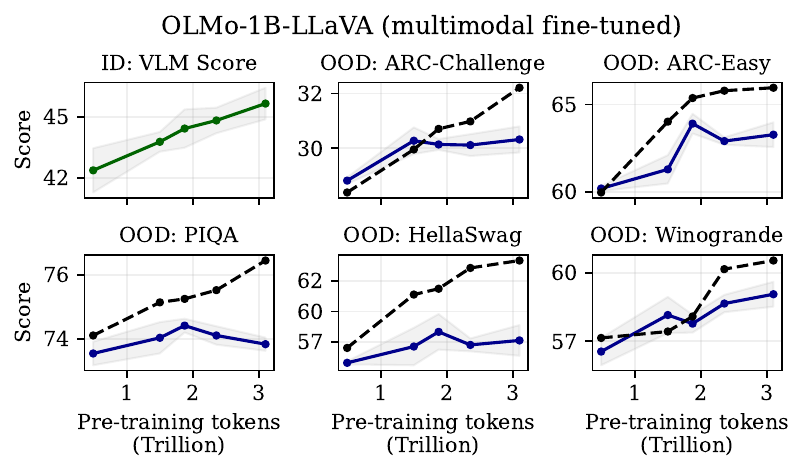}};
        \coordinate (center) at ($(A.south west)!0.5!(B.south east)$);
        \begin{pgfonlayer}{background}
            \node (L) at ($(center)-(0,0.5em)$)
            {\includegraphics[width=0.4\textwidth]{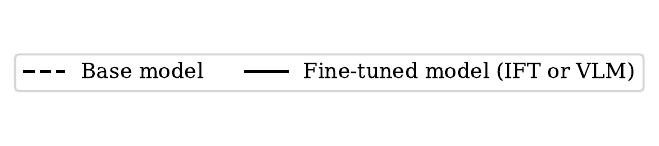}};
        \end{pgfonlayer}
    \end{tikzpicture}

    \caption{\textbf{Extending pre-training can degrade performance after fine-tuning on Anthropic-HH (left) and LLaVA (right).} We consider fine-tuning on various intermediate checkpoints from OLMo-1B pre-training. While the base model performance (before fine-tuning) improves with the pre-training token budget (black dashed curve), the performance after fine-tuning drops as we pre-train on more tokens. In the instruction-tuning setting (left), we observe degradation on the ID task (green)---AlpacaEval---as well as on OOD benchmarks (blue)---ARC, PIQA, and HellaSwag. In the multimodal tuning setting, we observe degradation with overtraining on PIQA, and a larger gap between the fine-tuned and base model for ARC, HellaSwag, and Winogrande. We report average over three independent fine-tuning runs, plus error bars. Refer to Appendix~\ref{app:ift_omitted_figures} for additional models (OLMo-2-7B, LLM360-Amber) and instruction-tuning datasets (extended results for Anthropic-HH, TULU).}
    \label{fig:ift_vlm}
\end{figure*}

Language models have achieved widespread success following a two-stage paradigm: (1) pre-training on a vast corpus of uncurated data, followed by (2) post-training on high-quality task-specific data, often to confer targeted abilities such as instruction-following, multi-modality, or reasoning. Under the maxim \textit{``more data is better''}, there have been massive investments in scaling both pre-training and post-training. 

\citet{hoffmann2022training} proposed a compute-optimal ratio of roughly 20 tokens per model parameter, yet recent models have far exceeded this. For example, Llama-2-7B~\citep{touvron2023llama} was trained on 1.8T tokens—13× the recommended ratio—and Llama-3-8B scaled this further to 15T tokens. This trend is driven by consistent gains in zero-shot performance~\citep{gadre2024language,sardana2024chinchillaoptimalaccountinginferencelanguage}, with few exceptions where scaling up is \emph{not} helpful~\citep{wei2022inverse,mckenzie2022inverse,mckenzie2022round1,mckenzie2022round2}.

In this paper, we demonstrate that the widely adopted strategy of \textbf{scaling up language model pre-training does not universally translate to better performance after post-training.} Through both theory and experiments, we uncover a phenomenon we term \textit{catastrophic overtraining}, where longer pre-training harms final model performance after instruction tuning or other forms of post-training (\Cref{fig:overtraining}).

Catastrophic overtraining is not an isolated curiosity; rather it emerges consistently across a range of models and tasks. As shown in \Cref{sec:experiments}, extensive empirical evaluations demonstrate the prevalence of this phenomenon in existing models. For instance, we show that the OLMo-1B model~\citep{Groeneveld2023OLMo}, pre-trained on 3T tokens and post-trained on Anthropic-HH~\citep{bai2022traininghelpfulharmlessassistant}, performs $3\%$ worse on AlpacaEval~\citep{alpaca_eval} and $2\%$ worse on ARC~\citep{allenai:arc} compared to an intermediate checkpoint trained on just 2.3T tokens (Figure~\ref{fig:ift_vlm}). 

To understand \emph{why} catastrophic overtraining occurs, we turn to carefully controlled experiments (\Cref{sec:experiments_controlled}). We find that modifying the parameters of a pre-trained model leads to forgetting of previously acquired capabilities, where the extent of this forgetting depends on the magnitude of the parameter modifications. However, another key factor influencing forgetting is what we term progressive sensitivity: for modifications of equal magnitude, models that have undergone longer pre-training exhibit greater forgetting (\Cref{fig:pt_perplexity_delta}). Catastrophic overtraining arises when this increased forgetting due to post-training modifications overtakes the improvement during pre-training. While constraining the magnitude of the parameter modifications that arise from post-training can mitigate this degradation, it can also limit the pre-trained model's capacity to adapt and learn. This reveals an inherent trade-off that shapes the feasibility of preventing catastrophic overtraining in practice (\Cref{fig:learning_rates_schematic}).

Finally, we present a theoretical analysis of a linear transfer learning setting in \Cref{sec:theory} that admits a precise characterization of catastrophic overtraining and progressive sensitivity. We study how \textit{incremental feature learning} leads to progressive sensitivity and inevitable catastrophic overtraining. Regularization during fine-tuning can delay the onset, albeit at the cost of downstream performance. 

Overall, our findings challenge the prevailing assumption that scaling pre-training data is an unambiguous win. We summarize our contributions:
\begin{enumerate}[itemsep=0pt, topsep=0pt]
    \item \textbf{Real-world evidence:} We demonstrate the prevalence of catastrophic overtraining across existing language models and tasks, showing that longer pre-training can degrade performance after instruction tuning and multimodal fine-tuning (\Cref{sec:experiments}).

    \item \textbf{Controlled experiments:} We identify progressive sensitivity as a key mechanism underlying catastrophic overtraining, where extended pre-training increases the fragility of model parameters to subsequent updates (\Cref{sec:experiments_controlled}).

    \item \textbf{Theoretical analysis:} We provide a formal characterization of catastrophic overtraining in a linear transfer learning framework, showing how incremental feature learning leads to progressive sensitivity and inevitable degradation (\Cref{sec:theory}). 
\end{enumerate}

\section{Extended pre-training can hurt post-training}
\label{sec:experiments}
We study the effect of extended pre-training on two common post-training setups---instruction tuning for instruction following capability, and multimodal fine-tuning (visual instruction tuning) with LLaVA~\citep{liu2023llava}.

\subsection{Experimental setup}
To analyze the effect of overtraining, we experiment on three language models with open-sourced intermediate checkpoints: OLMo-1B~\citep{Groeneveld2023OLMo}, OLMo-2-7B~\citep{olmo20242olmo2furious}, and LLM360-Amber-7B~\citep{liu2023llm360}. For each model, we perform post-training on intermediate checkpoints. We investigate instruction tuning with two datasets: Anthropic-HH~\citep{bai2022traininghelpfulharmlessassistant} and TULU~\citep{wang2023far}, and we perform multimodal fine-tuning with the LLaVA visual instruction tuning framework~\citep{liu2023llava}. We train each intermediate checkpoint on each dataset.

We evaluate model performance along two key dimensions: the \textit{ID performance}, evaluated on the fine-tuning task of interest (for e.g. instruction following), and the \textit{OOD performance}, computed on a suite of ten common LLM evaluation benchmarks, covering reasoning, QA, commonsense, and knowledge extraction. For each checkpoint, we tune the learning rate and select the model with the best ID performance.

We refer the reader to Appendix~\ref{app:ift_details} for further information on the pre-trained models, the specification of the fine-tuning process, and for details of evaluation.

\subsection{Results}
Figure~\ref{fig:ift_vlm} compares the performance of various OLMo-1B models, trained to different pretraining budgets (x axis).

\paragraph{Extended pre-training always improves base models.} In line with past work, we find that extended pre-training yields a monotonic improvement in the base models. The performance keeps improving on all the downstream tasks we evaluate (dashed line in Figure~\ref{fig:ift_vlm}).

\paragraph{Extended pre-training can hurt post-trained counterparts.} While the base model improves, we find a surprising degradation when the base models are post-trained. Specifically, after fine-tuning on the Anthropic-HH dataset for instruction following, a base model pre-trained on 3T tokens shows up to 3\% lower response rate (AlpacaEval score) than one pre-trained on just 2.3T tokens ($\sim23\%$ fewer tokens). We see a similar drop on various OOD tasks such as reasoning and question answering, as evaluated on benchmarks such as ARC-Easy, ARC-Challenge, HellaSwag, and PIQA. Overall, after instruction tuning, models pre-trained on 3T tokens underperform compared to those pre-trained on 2.3T tokens, dropping to the level of models pre-trained with just 1.5T tokens (50\% fewer tokens). 

For multimodal fine-tuning, we see that extended pre-training translates to continuous improvements in the VLM score. However, models pre-trained on more tokens show greater forgetting and larger drops in performance across the various OOD benchmarks. On some datasets such as PIQA, the drop is so severe that extended pre-training actively hurts performance after post-training (Figure~\ref{fig:ift_vlm}, right).

We present evaluations of additional pre-trained models on different fine-tuning setups in Appendix~\ref{app:ift_omitted_figures}. Overall, while extended pre-training always improves the pre-training performance, these gains do not always translate to post-training. There are several settings where extended pre-training actively hurts post-training performance.

\section{Catastrophic overtraining}
\label{sec:experiments_controlled}
In Section~\ref{sec:experiments}, we made a surprising observation where extended pre-training can hurt post-training. In this section, we dig deeper into this phenomenon to understand \emph{why} and \emph{when} expending more compute by pre-training on more tokens can counterintuitively degrade performance. 

Pre-training is the first stage in modern language model development. Before deployment, these pre-trained models are typically modified through post-training (fine-tuning on various datasets), reinforcement learning, quantization, or pruning. While we might expect extended pre-training to strictly improve performance upon deployment, we argue that this might not be true. Extended pre-training beyond a point, can in fact hurt the final performance, a phenomenon that we call catastrophic overtraining. 

\begin{tcolorbox}[left=1mm,right=1mm,colback=blue!5!white,colframe=white]
\textbf{Catastrophic overtraining} is the phenomenon where extending pre-training beyond a certain token budget results in a decrease in the model's performance after subsequent modifications. 
\end{tcolorbox}
We call this token budget where performance first begins to degrade the \textbf{inflection point}.
Catastrophic overtraining can refer to a decrease of the pre-training performance or of the performance of other downstream tasks as pre-training is extended. Note that this performance drop can manifest differently across various downstream evaluation tasks, even for the same model.

In Section~\ref{sec:experiments}, we see catastrophic overtraining when post-training OLMo-1B for instruction tuning or multimodal fine-tuning and evaluating on standard benchmarks. In the rest of this paper, we aim to answer two central questions: \begin{enumerate}\vspace{-0.75em}
    \item \textit{When and why does catastrophic overtraining occur?}
    \item \textit{What factors influence the inflection point?}
\end{enumerate}

\vspace{-0.75em} In this paper, we focus primarily on modifying the pre-trained model by fine-tuning on different datasets. To understand catastrophic overtraining, we also study a simple generic modification of adding independent Gaussian noise to model weights. We leave further modifications such as reinforcement learning and pruning to future work. 

We start with summarizing when we see catastrophic overtraining in real-world settings (Section~\ref{sec:co_real_world}). We then systematically study and build an intuitive picture of the effect of overtraining in the presence of Gaussian perturbations (Section~\ref{sec:co_gaussian}) and then expand to fine-tuning in a controlled setup (Section~\ref{sec:co_fixedlr}).

\subsection{Catastrophic overtraining in the real-world}
\label{sec:co_real_world}
Based on our earlier experimental results on the effect of extended pre-training on post-training performance, we can summarize the following about catastrophic overtraining in practice: 
\begin{enumerate}[leftmargin=10pt, label=\arabic*.]
    \item \textbf{Instruction tuning:}
    When instruction tuning on datasets such as \textit{Anthropic-HH} and  \textit{TULU}, OLMo-1B models exhibit catastrophic overtraining at token budgets exceeding 2.5T tokens. This is observed as a decrease in performance on both ID tasks (e.g., a lower response-rate on AlpacaEval) and OOD tasks (e.g., standard reasoning and question answering).
    
    \item \textbf{Multimodal fine-tuning:}
    For multimodal fine-tuning, OLMo-1B models also display catastrophic overtraining beyond 2.5T tokens. However, the degradation is task-dependent: while performance on some OOD tasks (such as standard reasoning and question answering) declines, while the ID performance (VLM score) shows no degradation at this token threshold.
    
    \item \textbf{Model scale effects:}
    Under the same fine-tuning and evaluation setups, catastrophic overtraining is not observed on OLMo-7B models for pre-training token budgets up to 3T tokens (Appendix~\ref{app:ift_omitted_figures}). 
\end{enumerate}

These observations lead us to the following questions of great practical significance. Would catastrophic overtraining emerge in OLMo-7B models at larger pre-training token budgets? Why are certain downstream tasks more likely to show catastrophic overtraining when fine-tuning on a particular dataset? Are some fine-tuning datasets more likely to induce catastrophic overtraining? In order to answer this question, we carefully analyze and build an intuitive story about catastrophic overtraining by studying a simple setting in the next section. 

\subsection{Catastrophic overtraining in a controlled setup}
\label{sec:experiments_controlled_setup}
We documented several instances of catastrophic overtraining in real-world scenarios. To gain a deeper understanding and explore more extreme degrees of overtraining, we investigate a simpler, controlled setup described below. Note that our real-world experiments used publicly available checkpoints from a single training run, which meant that each pre-training budget corresponded to a different final learning rate due to the annealing schedule. In this section, we remove that confounding factor.

\textbf{Pre-training setup.} We pre-train models from scratch with sizes ranging from 15M to 90M parameters, spanning token budgets from 4B to 128B, on C4 web data~\citep{2019t5}. We train with a cosine annealing schedule that anneals every model to zero. In the main paper, we present results from the 30M model; see Appendix~\ref{app:controlled_omitted_figures} for results with 15M and 90M parameter models.

\textbf{Modifications to the pre-trained model.} We fine-tune the pre-trained models above. We fine-tune each model on various classification and language modeling datasets spanning QA, sentiment analysis, math, and code. Details on the datasets and hyperparameter choices are provided in Appendix~\ref{app:controlled_experimental_details}. We also consider a simple modification of adding Gaussian perturbations to the pre-trained weights as a warm-up in Section~\ref{sec:co_gaussian}.

Our intuitive picture views post-training as some modification to the pre-trained model that is trained on large amounts of broad data. Such modifications are aimed at improving some targeted performance (such as VLM score). However, as argued in~\citep{kumar2022finetuningdistortpretrainedfeatures}, such modifications can inadvertently distort the pre-trained knowledge, leading to degraded performance on out-of-distribution or unrelated tasks. 

\textbf{Downstream evaluation.} While we evaluate real-world benchmarks in Section~\ref{sec:experiments}, we focus here on measuring the C4 perplexity of the modified downstream model as an indicator of how well the original pre-trained knowledge is preserved. A decline in C4 perplexity may signal a loss of this knowledge, potentially resulting in both out-of-distribution performance degradation (due to forgetting or distortion). We also measure ID performance as perplexity on held-out set from the same distribution as the fine-tuning data. We use perplexity rather than accuracy because it is a smoother and less noisy metric, and can often offer a better measure of model quality than accuracy for small models \citep{schaeffer2023emergent,schaeffer2024has}. Although our analysis centers on pre-training perplexity, we acknowledge that other factors may also contribute to downstream performance losses---a topic we leave for future work.

\subsection{Warmup: Gaussian perturbations}
\label{sec:co_gaussian}
\begin{figure}[t]
    \centering
    \includegraphics[trim=0.5em 0em 0.25em 0em,width=\linewidth]{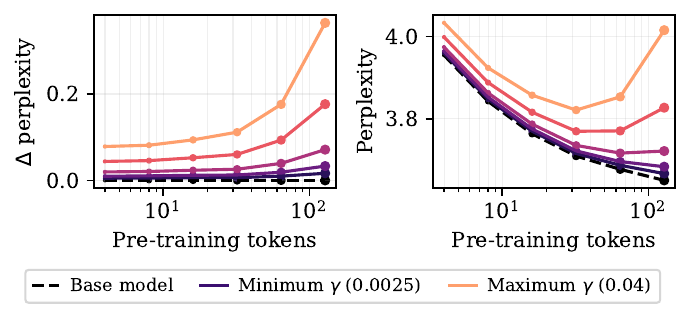}
    \caption{\textbf{\textit{Progressive sensitivity of Gaussian perturbations (left):} extending pre-training progressively increases the degree to which a Gaussian parameter perturbation degrades perplexity. \textit{Catastrophic overtraining (right):} eventually, this leads to overall worse pre-training perplexity.} We perturb OLMo-30M models trained on various pre-training token budgets with Gaussian noise scaled by the factor $\gamma$ (color). The left plot shows the difference in perplexity between the perturbed and unperturbed models, while the right plot shows the absolute perplexity of the perturbed models.}
    \label{fig:pt_perplexity_gaussian}
\end{figure}

We take base models pre-trained to various token budgets and add Gaussian noise of the following form. Let $\bf{\theta} \in \R^d$ denote the base model weights, then we get
\begin{align}
    \tilde{\bf{\theta}} = \bf{\theta} + \bf{\epsilon}~~\text{where}~~ \bf{\epsilon} \sim \mathcal{N}(0, \gamma^2 \Sigma), 
\end{align}
where $\Sigma$ is the covariance matrix of the initialization distribution of the parameters (prior to pre-training) and $\gamma$ controls the magnitude of the perturbation.

First, we plot the change in C4 perplexity due to Gaussian noise, i.e. the difference between the C4 perplexity of $\bf{\theta}$ and $\bf{\tilde{\theta}}$ in Figure~\ref{fig:pt_perplexity_gaussian} (left). We observe an interesting trend as we track the change in perplexity between the base model and the perturbed model as a function of the number of pre-training tokens:
\begin{tcolorbox}[left=1mm,right=1mm,colback=blue!5!white,colframe=white]
\textbf{Progressive sensitivity to noise:} For a fixed magnitude of perturbation, the change in perplexity between the base model and the perturbed model increases monotonically with the number of pre-training tokens.
\end{tcolorbox}

Simultaneously, we plot the absolute C4 perplexity of the base model (Figure~\ref{fig:pt_perplexity_gaussian}, right, dashed line). We observe that the base model's perplexity decreases with the number of pre-training tokens.

In this setting, \textbf{catastrophic overtraining arises from the interaction between the progressive sensitivity to noise and the monotonic improvement of the base model} as pre-training progresses. Early in training, the base model improves faster than the rate at which sensitivity increases, leading to a net decrease in perplexity after Gaussian parameter perturbations. Beyond a certain point, the rate at which sensitivity increases surpasses the rate at which the base model improves, leading to an increase in perplexity after the perturbation. This results in a \textbf{U-shaped trend} of the C4 perplexity after perturbation (Figure~\ref{fig:pt_perplexity_gaussian}, right). 

\paragraph{Tracking the inflection point.} In Figure~\ref{fig:pt_perplexity_gaussian}, larger perturbations are associated with a larger and more quickly increasing degradation of the pre-training loss. Thus, the point at which the degradation from sensitivity surpasses the improvement in the base model is accelerated for larger perturbations, leading to an inflection point at a lower token budget.

\paragraph{Intuitive picture.} Pre-training on more tokens improves the base model (as expected) but also makes the base models more sensitive to noise. Progressive sensitivity leads to catastrophic overtraining as the increase in perplexity due to noise eventually overwhelms improvements in the model. For large magnitude perturbations, this degradation sets in at lower token budget, while for smaller magnitudes of perturbations, catastrophic overtraining may not be observed until a large token budget. 

\begin{figure}[t]
    \centering
    \includegraphics[trim=0.5em 0em 0.25em 0em,width=\linewidth]{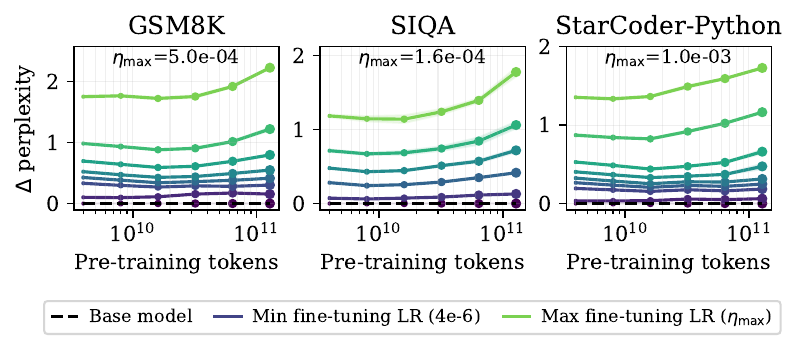}
    \caption{\textbf{\textit{Progressive sensitivity of fine-tuning:} Extending pre-training progressively increases the degree to which fine-tuning degrades perplexity.} OLMo-30M models trained on various pre-training token budgets are fine-tuned on downstream tasks using fixed hyperparameters: math (GSM8k), code (Starcoder-Python), and QA (SIQA). Lines connect models sharing hyperparameters, differing only in pre-training tokens. Learning rates range from 4e-06 to the dataset-specific maximum ($\eta_{\mathrm{max}}$). We report the difference in perplexity between the fine-tuned and pre-trained models, as a function of the number of pre-training tokens.}
    \label{fig:pt_perplexity_delta}
\end{figure}

\subsection{Fine-tuning pre-trained models}
\label{sec:co_fixedlr}
\begin{figure*}[ht]
    \centering
    \includegraphics[width=\textwidth]{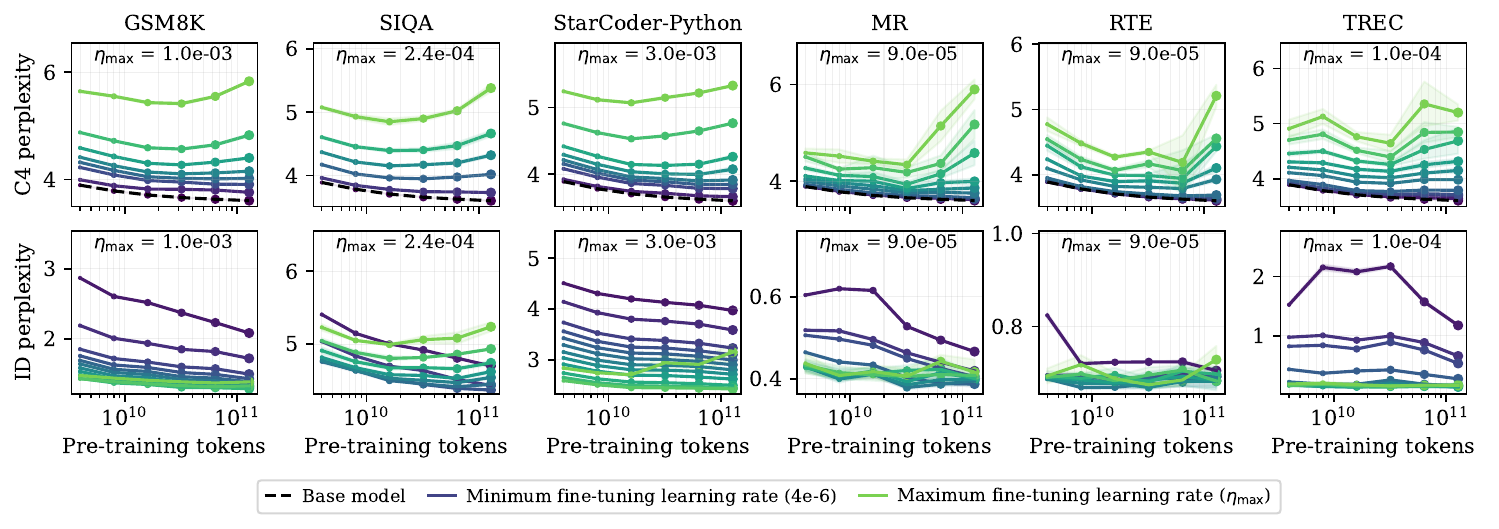}
    \caption{
        \textbf{\textit{Catastrophic overtraining for fine-tuning with fixed hyperparameters:} extending pre-training can lead to an overall increase in the C4 perplexity (top), and ID perplexity (fine-tuning task; bottom), when fine-tuning with fixed hyperparameters.} OLMo-30M models pre-trained with varying token budgets are fine-tuned on downstream tasks using fixed hyperparameters: math (GSM8k), code (Starcoder-Python), QA (SIQA), and classification (MR, RTE, TREC). Lines connect models sharing hyperparameters, differing only in pre-training tokens. Learning rates range from 4e-06 to the dataset-specific maximum ($\eta_{\mathrm{max}}$). At sufficiently large learning rates (lighter colors), we observe performance degradation in both ID and pre-training metrics beyond certain pre-training budgets. (See Appendices~\ref{app:controlled_experimental_details}~and~\ref{app:controlled_omitted_figures} for ablations.)
    }
    \label{fig:pt_ft_perplexity}
\end{figure*}

In the previous section, we studied how catastrophic overtraining arises when adding noise to pre-trained models. While noise can be seen as a canonical modification, it is different from fine-tuning that might involve more structured updates to the models. However, we see in this section that the intuitive story above also holds when we fine-tune models on real-world language datasets described above. 

\subsubsection{Fine-tuning with fixed learning rate}
First, analogous to how we quantify performance drop for a fixed magnitude of Gaussian perturbation ($\gamma$), we similarly need to regularize the fine-tuning in some way to ensure a consistent degree of change across the pre-trained checkpoints. Fixing the learning rate is a simple and effective way to do so. While we do not provide a formal justification, we discuss our reasoning in Appendix~\ref{app:controlled_experimental_details}.

For each learning rate, we plot the change in C4 perplexity from the pre-trained model to the fine-tuned model in Figure~\ref{fig:pt_perplexity_delta}. In this plot, we track how the degradation in C4 perplexity evolves with the number of pre-training tokens. First, larger learning rates distort the model more and thus exhibit a greater increase in perplexity. Second, we observe a trend over pre-training tokens analogous to the behavior seen with Gaussian noise, but this time for fine-tuning.

\begin{tcolorbox}[left=1mm,right=1mm,colback=blue!5!white,colframe=white]
\textbf{Progressive sensitivity when fine-tuning:} For a fixed learning rate, the change in perplexity increases monotonically with the number of pre-training tokens.
\end{tcolorbox}

At the inflection point at which sensitivity increases surpasses the rate at which the base model improves, we observe catastrophic overtraining. This results in a U-shaped trend of the C4 perplexity after fine-tuning (Figure~\ref{fig:pt_ft_perplexity}, top).

\paragraph{Tracking the inflection point for fine-tuning.} Analogous to the Gaussian setting, since the rate of increase of degradation is accelerated for larger learning rates, models trained with larger learning rates exhibit an inflection point at lower token budgets, and the degradation is more pronounced.

\paragraph{ID perplexity.} While smaller learning rates generally result in less degradation to the C4 perplexity, the ID perplexity of the fine-tuned models shows a different trend: larger learning rates, up to a point, result in a lower ID perplexity, though sometimes also exhibit a U-shaped trend in ID perplexity  (Figure~\ref{fig:pt_ft_perplexity}, bottom). This implies that tuning the learning rate can sometimes mitigate degradation only at the cost of fine-tuning performance. We explore in Section~\ref{sec:experiments_controlled_balancing_gains_with_degradation} when tuning the learning rate to minimize the ID perplexity can mitigate the degradation of C4 perplexity that arises as pre-training is extended, and when it cannot.

\paragraph{Intuitive picture.} The intuition from the Gaussian perturbation setting carries over to fine-tuning with a fixed learning rate. Pre-training on more tokens will improve the quality of the base model and at the same time make the model degrade more when fine-tuned. Beyond a certain point, pre-training on additional tokens will degrade the resulting fine-tuned model's C4 perplexity, and often the ID perplexity of the fine-tuning task.

\begin{figure*}[ht]
    \centering
    \includegraphics[width=\textwidth]{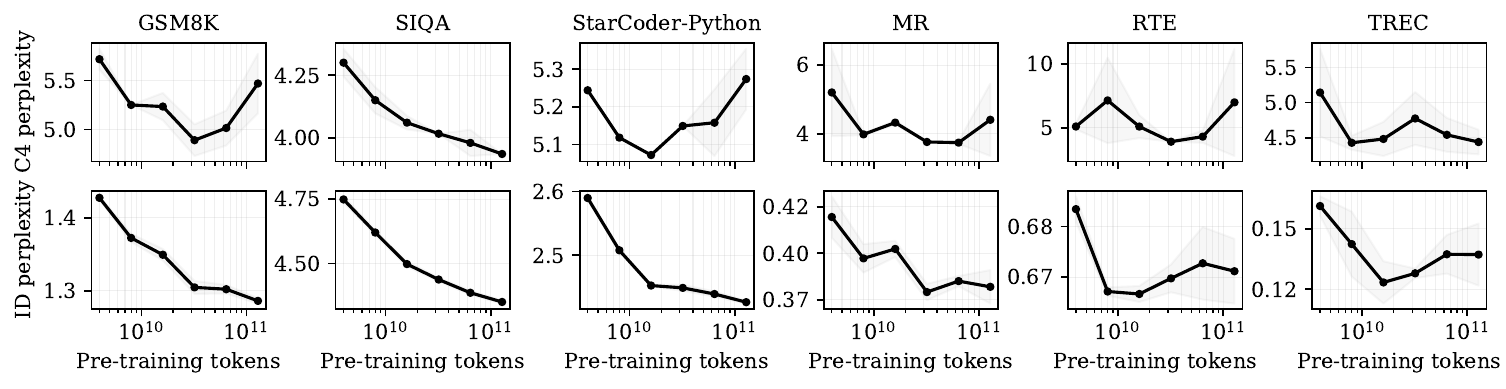}
    \caption{
        \textbf{\textit{Catastrophic overtraining after hyperparameter tuning:} extending pre-training can lead to eventual degradation of the C4 perplexity (top) and ID perplexity (fine-tuning task; bottom), even after hyperparameter tuning.} OLMo-30M models pre-trained with varying token budgets are fine-tuned on downstream tasks: math (GSM8k), code (Starcoder-Python), QA (SIQA), and classification (MR, RTE, TREC). Lower is better. We tune the learning rate to optimize ID performance. ID perplexity degrades with extensive overtraining (RTE, TREC); C4 perplexity degrades in GSM8k, Starcoder-Python, MR, and RTE. Results averaged over three fine-tuning runs. (Additional ablations in Appendices~\ref{app:controlled_experimental_details}~and~\ref{app:controlled_omitted_figures}.)
    }
    \label{fig:ft_pt_tuned_lr}
\end{figure*}

\subsubsection{Balancing fine-tuning gains with degradation}
\label{sec:experiments_controlled_balancing_gains_with_degradation}

In Section~\ref{sec:co_fixedlr}, we showed that for a fixed learning rate, the sensitivity of pre-trained models increases with the number of pre-training tokens, leading to catastrophic overtraining. In practice, however, the learning rate is tuned on a validation set from the in-domain (ID) task. This tuning process may yield different optimal learning rates across pre-trained checkpoints, which can potentially mitigate catastrophic overtraining. The degradation depends on both the learning rate as well as the sensitivity. So if a model pre-trained on more tokens can admit a smaller learning rate when fine-tuning to achieve good ID performance, it can compensate the increase in sensitivity.  
 
However, this smaller rate does restrict the extent of necessary parameter updates, and might be insufficient to achieve good ID performance. This presents an interesting trade-off that we investigate empirically. We tune the learning rate to maximize fine-tuning ID performance. We track the optimal value as a function of the pre-training token budget, and plot the ID performance and pre-train perplexity corresponding to this optimal learning rate in Figure~\ref{fig:ft_pt_tuned_lr}.

Our findings indicate that the emergence of catastrophic overtraining depends on how the optimal learning rate evolves. We conceptualize this trade-off between ID performance and pre-train perplexity degradation into three scenarios, illustrated in Figure~\ref{fig:learning_rates_schematic}:
\begin{enumerate}[leftmargin=10pt, label=\arabic*.]
\item \textbf{Constant optimal learning rate:} A constant optimal learning rate across token budgets leads to degradation in both ID and out-of-domain (OOD) performance for large pre-training budget $T$ (Figure~\ref{fig:learning_rates_schematic}, left). 

\item \textbf{Slowly decreasing optimal learning rate:} A slowly decreasing optimal learning rate may improve ID performance while OOD performance degrades (Figure~\ref{fig:learning_rates_schematic}, center). 

\item \textbf{Quickly decreasing optimal learning rate:} A quickly decreasing optimal learning rate enables improvements in both ID and OOD performance as the pre-training budget increases (Figure~\ref{fig:learning_rates_schematic}, right). \end{enumerate}

\paragraph{Using a non-optimal learning rate to mitigate degradation.} In cases where catastrophic overtraining emerges when fine-tuning with the optimal learning rate, using a non-optimal learning rate can sometimes mitigate the degradation or delay the inflection point. For example, in both cases where tuning leads to eventual degradation of the OOD loss in Figure~\ref{fig:learning_rates_schematic}, choosing to train with the smallest learning rate would delay the inflection point. However, this would also result in a lower ID performance.

\paragraph{Regularization beyond the learning rate.} For both the Gaussian perturbation and the fine-tuning settings, we have seen that larger parameter perturbations accelerate and amplify the rate at which model performance degrades. In the fine-tuning setting, the learning rate effectively controls the magnitude of the overall parameter updates. However, we expect that explicit forms of regularization to prevent large parameter updates could also mitigate or delay catastrophic overtraining. We explore a theoretical instance of regularized fine-tuning in Section~\ref{sec:theory}. 

\textbf{Summary.} Overall, our experiments reveal that progressive sensitivity manifests under two types of modifications: unstructured Gaussian noise and structured fine-tuning, leading us to conjecture that \textit{progressive sensitivity is a universal phenomenon}. For a fixed magnitude of perturbation or a fixed fine-tuning learning rate, progressive sensitivity leads to catastrophic overtraining as the degradation in performance eventually outweighs the gains from extended pre-training. In practice, however, the optimal learning rate is tuned on the target in-domain task, and its evolution can result in degradation either on in-domain performance or on out-of-domain (pre-training) metrics. This highlights a trade-off in extended pre-training, where how the optimal learning rate evolves ultimately determines whether catastrophic overtraining occurs when these models are fine-tuned.

\begin{figure}[t]
    \centering
    \includegraphics[trim=0.25em 0em 1em 0em,width=\linewidth]{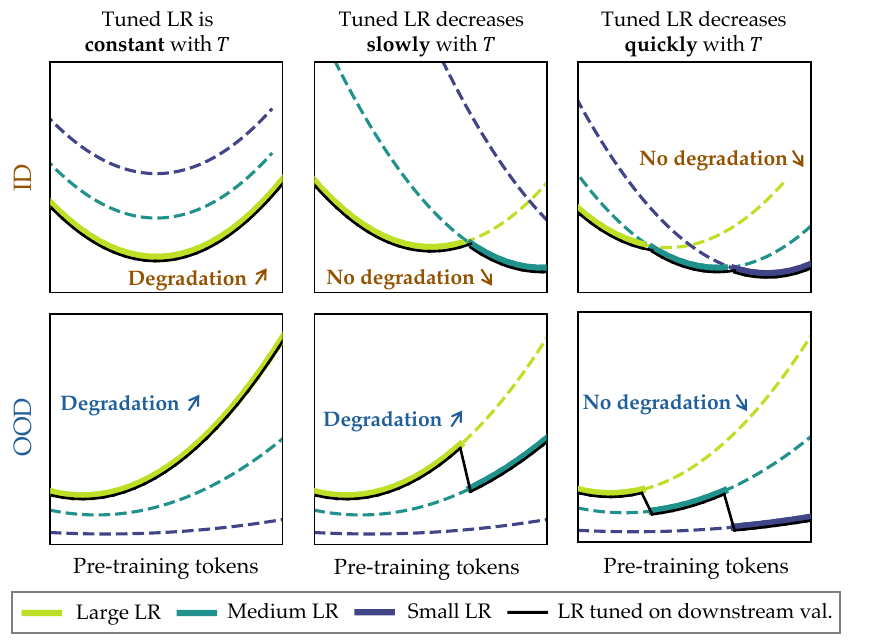}
    \caption{\textbf{Schematic to illustrate how the scaling of the optimal learning rate can affect model evaluations as a function of the pre-training tokens $T$.} The dashed lines indicate the hypothetical performance of a fixed learning rate, while solid lines indicate the performance when using the learning rate that optimizes the ID performance. \textbf{(Left)} When the optimal learning rate is constant, we expect to observe degradation of both ID and OOD performance. \textbf{(Center)} When the optimal learning rate decreases slowly with $T$, we may observe a degradation of only the OOD performance. \textbf{(Right)} When the optimal learning rate decreases quickly, we will not observe degradation of either metric of performance.}
    \label{fig:learning_rates_schematic}
\end{figure}

\section{A theoretical perspective of overtraining}
\label{sec:theory}

The phenomenon of catastrophic overtraining is surprising, as it is contrary to the common belief that longer pre-training always leads to a higher quality model. 
Thus, in this section, we examine how and when catastrophic overtraining arises in a simplified setting of pre-training and fine-tuning two-layer linear networks.
We will study catastrophic overtraining with respect to the pre-training loss,  focusing on identifying the \emph{inflection point} (\Cref{def:inflection_pt}): the point beyond which additional pre-training degrades the final model performance on the pre-training task.
As a warm up, we characterize catastrophic overtraining in the case of adding Gaussian perturbations to the weights, mirroring our empirical study in \Cref{sec:co_gaussian}.
We also study a canonical fine-tuning task and demonstrate that progressive sensitivity  consistently arises as pre-training is elongated (\Cref{thm:prog_sensitivity}).

We then seek to formalize the phenomenon whereby restricting the magnitude of the updates can mitigate performance degradation. 
In the experiments, we had studied this trend by lowering the learning rate (Section~\ref{sec:experiments_controlled_balancing_gains_with_degradation}), but in this section, we will instead use regularization as a means to prevent large parameter updates.
Without regularization on the fine-tuning objective, the final model inevitably exhibits catastrophic overtraining with respect to the pre-training loss (\Cref{thm:sensitivity_overtraining}).
Regularization can mitigate this phenomenon but can also degrade fine-tuning performance by limiting how well the model can adapt to the task (\Cref{thm:sensitivity_overtraining}).

\subsection{Pre-training setting}
We adopt the two-layer linear regression setting proposed by~\citet{Andrew1810} as a case where pre-training performance improves monotonically with training time via incremental feature learning. 
Precisely, we consider a regression problem where the data is generated by a full rank linear map $\vy=\pretrainmap\vx$ for $\vx,\vy \in \sR^d$, with $\pretrainmap\in\sR^{d\times d}$, and where we sample $\vx \sim \gN(0, \mI)$. Denote the SVD of $\pretrainmap$ as $\mU \pretrainsigma \mV^T$, with the diagonal elements of $\pretrainsigma$ being strictly positive and monotonically decreasing. We will call these singular values the \emph{pre-training features}, and denote them $\sigmapre_1 > \cdots > \sigmapre_d$. Let $\pretrainsigma_{:i}$ be a diagonal matrix with the first $i$ singular values equal to those of $\pretrainsigma$ and the remaining set to $0$.

We learn a two-layer network $\vtheta = \mW_1\mW_2$ with $\mW_1, \mW_2\in\sR^{d\times d}$ that minimizes the mean squared error $\Lpre$ on the population of Gaussian inputs. 
\begin{align*}
\Lpre(\vtheta(t)) &=
\|\mW_1(t)\mW_2(t) - \pretrainmap \|_F^2.
\end{align*}

We initialize $\mW_1$ and $\mW_2$ with small values and train using gradient flow. Prior work has established that, as training proceeds in this setting, the model $\vtheta$ incrementally learns the spectrum of $\pretrainmap$~\citep{Andrew1810,gidel2019implicitregularizationdiscretegradient}. 

\begin{theorem}[Informal statement of \citet{Andrew1810,gidel2019implicitregularizationdiscretegradient}]
\label{thm:pretrain}
    There exists a sequence of timesteps $t_1<\hdots < t_i < \hdots t_d$ such that at timestep $t_i$, $$\vtheta(t_i)\approx \mU \pretrainsigma_{:i} \mV^T.$$
\end{theorem}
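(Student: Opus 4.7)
The plan is to exploit the rotational invariance of $\Lpre$ to decouple the matrix-valued gradient flow into one scalar ODE per singular direction of $\pretrainmap$, and then to observe that each such ODE is sigmoidal with a transition time that depends inversely on the corresponding feature $\sigmapre_i$. Since $\sigmapre_1 > \cdots > \sigmapre_d$, these transition times separate, and in the small-initialization limit the gaps between them dominate the width of each individual transition, producing the claimed staircase-like trajectory.

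\textbf{Step 1 (balance and diagonalization).} First I would verify the conservation law $\mW_1(t)^T\mW_1(t) - \mW_2(t)\mW_2(t)^T = \mW_1(0)^T\mW_1(0) - \mW_2(0)\mW_2(0)^T$ under gradient flow on $\Lpre$. Initializing both factors small and balanced, of scale $\alpha$ with matched singular structure (as in \citet{Andrew1810,gidel2019implicitregularizationdiscretegradient}), makes this invariant $O(\alpha^2)$. Writing $\mW_1 = \mU\mS_1\mR^T$ and $\mW_2 = \mR\mS_2\mV^T$ and using $\pretrainmap = \mU\pretrainsigma\mV^T$, the orthogonal factors align with those of the target to leading order in $\alpha$, and the dynamics reduce to coupled ODEs for $\mS_1,\mS_2$; the balance condition then yields $\mS_1 = \mS_2 =: \mS$.

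\textbf{Step 2 (scalar logistic ODEs and timescale separation).} Under the reduction above, each diagonal entry $s_i(t)$, which equals the $i$-th singular value of $\vtheta = \mW_1\mW_2$, satisfies, up to $O(\alpha)$ errors, the logistic ODE
\[
\dot s_i \;=\; 2\,s_i\bigl(\sigmapre_i - s_i\bigr), \qquad s_i(0) = O(\alpha^2),
\]
with closed-form solution $s_i(t) = \sigmapre_i / \bigl(1 + (\sigmapre_i/s_i(0) - 1)e^{-2\sigmapre_i t}\bigr)$. This traces a narrow transition from $O(\alpha^2)$ to $\sigmapre_i$ centered at $t_i^\star := \tfrac{1}{2\sigmapre_i}\log(\sigmapre_i/s_i(0))$. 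Defining $t_i$ as the half-saturation time of mode $i$, the monotonicity $\sigmapre_1 > \cdots > \sigmapre_d$ yields $t_1 < \cdots < t_d$. Crucially, both the escape and the saturation times of mode $j$ scale like $\log(1/\alpha)/\sigmapre_j$, while the width of each transition is $O(1/\sigmapre_j)$; so letting $\alpha \to 0$ makes the gap $t_{i+1}-t_i$ dominate every individual transition width, guaranteeing that at $t_i$ the first $i$ modes are within $\epsilon$ of their targets while the remaining modes are still of size $O(\alpha^{c})$ for some $c>0$. A standard comparison estimate across modes then gives $\vtheta(t_i) \approx \mU\pretrainsigma_{:i}\mV^T$ with error vanishing in $\alpha$.

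\textbf{Main obstacle.} The most delicate step is rigorously justifying the alignment of $\mW_1,\mW_2$ with the singular bases $\mU,\mV$ starting from a generic small random initialization, rather than from the handpicked aligned case. One either restricts to the aligned initialization (as in the original statements of \citet{Andrew1810,gidel2019implicitregularizationdiscretegradient}) or shows, via a perturbation / center-manifold argument, that off-diagonal components in the $(\mU,\mV)$ basis remain $O(\alpha^{c})$ throughout the window $[0,t_d]$, decaying faster along dominant modes than those modes can grow. Once alignment is controlled, the scalar ODE analysis in Step~2 is routine and yields the claimed staircase approximation.
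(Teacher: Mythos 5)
Your sketch is correct, but note that the paper does not prove this statement itself: it imports it as Theorem~1 of \citet{gidel2019implicitregularizationdiscretegradient}, restated quantitatively in \Cref{thm:pretrainformal} with the small isotropic initialization $\mW_1(0)=\mW_2(0)=\exp(-\tau)\mI$ and error $\exp(-C\tau)$. The argument you outline --- balancedness, reduction to per-mode logistic ODEs with transition times $\sim \log(1/\alpha)/\sigmapre_i$, and timescale separation as the initialization scale vanishes --- is precisely the strategy of that cited work, and the alignment issue you flag as the main obstacle is exactly what the formal version handles through this specific choice of initialization.
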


This theorem implies that $\mSigma(t) = \mU^\top \vtheta(t) \mV$ is approximately diagonal, and the vector of its diagonal entries $\sigma(t)$ tracks which pre-training features have been learned by time $t$. 
In the ideal case, which we use in the main paper for brevity, we expect the first $n$ elements of $\sigma(t_n)$ are $\sigmapre_1, \ldots, \sigmapre_n$ and the remaining elements are zero.\footnote{Appendix~\ref{app:theory} contains the case when these coordinates are small but not exactly zero.}
Therefore, studying the evolution of $\sigma$ over time and its impact on the fine-tuning procedure allow us to characterize how elongating the pre-training period affects the pre-training and downstream performance of the final model.
We will generally study progressive sensitivity and catastrophic overtraining by characterizing the model at time steps $t_1,...,t_d$. 
We focus on studying the inflection point, the time at which catastrophic overtraining with respect to the pre-training loss emerges.
\begin{definition}[Inflection point]
\label{def:inflection_pt}
    Fix a post-training modification to the model $\gA$.
    The inflection point with respect to the pre-training loss is defined as the smallest $r$ such that $\Lpre(\gA(\vtheta(t_r))) < \Lpre(\gA(\vtheta(t_{r+1})))$.
\end{definition}
In the following two sections, we study the inflection point for two different post-training modifications: Gaussian parameter perturbations and fine-tuning on a canonical family of tasks.

\subsection{Gaussian perturbation setting}

As a warm-up, we set $\gA$ to be isotropic Gaussian parameter perturbations, mirroring \Cref{sec:co_gaussian}. Formally, let $\gA(\vtheta(t_n)) = \widetilde{\vtheta}(t_n)=(\mW_1(t_n) + \mZ_1) (\mW_2(t_n) + \mZ_2)$ where $\mZ_1, \mZ_2 \sim \gN(0, \gamma^2 \identity_{d^2 \times d^2})$, and let $\Lpretilde(t_n)=\E\left[\Lpre(\widetilde{\vtheta}(t_n))\right]$.
We characterize how the perturbed model pre-training loss $\Lpretilde(t_n)$ evolves as pre-training is extended. %
\begin{proposition}[Informal version of~\Cref{lem:gaussian_perturbations}]
  \label{prop:gaussian_perturbations}
  Let $t_1,\hdots ,t_d$ be defined as in \Cref{thm:pretrain}. Then,
  \begin{align}\Lpretilde(t_n)-\Lpretilde(t_{n-1}) \geq (2d\gamma^2 - \sigmapre_n) \sigmapre_n.\end{align}
\end{proposition}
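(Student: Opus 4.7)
The plan is to expand $\widetilde{\vtheta}(t_n)=(\mW_1(t_n)+\mZ_1)(\mW_2(t_n)+\mZ_2)$ bilinearly, take the Frobenius norm of $\widetilde{\vtheta}(t_n)-\pretrainmap$, and use the independence and zero-mean property of $\mZ_1,\mZ_2$ to kill the cross terms. Concretely, writing $\vtheta=\mW_1\mW_2$ and expanding $\widetilde{\vtheta}-\pretrainmap=(\vtheta-\pretrainmap)+\mW_1\mZ_2+\mZ_1\mW_2+\mZ_1\mZ_2$, every inner product in the square that mixes a deterministic quantity with a single copy of $\mZ_1$ or $\mZ_2$ vanishes in expectation, and $\langle \mW_1\mZ_2,\mZ_1\mW_2\rangle$, $\langle \mW_1\mZ_2,\mZ_1\mZ_2\rangle$, $\langle \mZ_1\mW_2,\mZ_1\mZ_2\rangle$ all vanish because $\mZ_1\perp\mZ_2$ and both are centered. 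This reduces $\Lpretilde$ to four non-negative scalar contributions.

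Next, I would compute each of the three noise terms. Viewing $\|\mW_1\mZ_2\|_F^2=\Tr(\mZ_2^\top\mW_1^\top\mW_1\mZ_2)$ and using $\E[\mZ_2\mZ_2^\top]=d\gamma^2\mI$ gives $\E\|\mW_1\mZ_2\|_F^2=d\gamma^2\|\mW_1\|_F^2$, and symmetrically $\E\|\mZ_1\mW_2\|_F^2=d\gamma^2\|\mW_2\|_F^2$; and $\E\|\mZ_1\mZ_2\|_F^2=d^3\gamma^4$ by iterating. Therefore
\begin{equation*}
\Lpretilde(t)=\|\vtheta(t)-\pretrainmap\|_F^2+d\gamma^2\bigl(\|\mW_1(t)\|_F^2+\|\mW_2(t)\|_F^2\bigr)+d^3\gamma^4.
\end{equation*}
The $d^3\gamma^4$ term is constant in $t$ and cancels in the difference $\Lpretilde(t_n)-\Lpretilde(t_{n-1})$.

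To evaluate the remaining two pieces at the incremental timesteps, I invoke \Cref{thm:pretrain}: at $t_n$, $\vtheta(t_n)\approx \mU\pretrainsigma_{:n}\mV^\top$, so $\|\vtheta(t_n)-\pretrainmap\|_F^2=\sum_{i>n}(\sigmapre_i)^2$, and the contribution of the first bracket to the difference is $-(\sigmapre_n)^2$. For the weight-norm piece, I use the standard fact that gradient flow from a (small, balanced) initialization conserves $\mW_1^\top\mW_1-\mW_2\mW_2^\top$, which together with the SVD structure of $\vtheta(t_n)$ gives $\|\mW_1(t_n)\|_F^2=\|\mW_2(t_n)\|_F^2=\sum_{i\le n}\sigmapre_i$; more robustly, the general inequality $\|\mW_1\|_F^2+\|\mW_2\|_F^2\geq 2\|\mW_1\mW_2\|_*$ combined with the approximate balancedness yields $\|\mW_1(t_n)\|_F^2+\|\mW_2(t_n)\|_F^2\gtrsim 2\sum_{i\le n}\sigmapre_i$, with the difference between $t_n$ and $t_{n-1}$ being (at least) $2\sigmapre_n$.

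Assembling the two contributions gives $\Lpretilde(t_n)-\Lpretilde(t_{n-1})\geq -(\sigmapre_n)^2 + d\gamma^2\cdot 2\sigmapre_n = (2d\gamma^2-\sigmapre_n)\sigmapre_n$, exactly the claimed bound. The main obstacle I anticipate is handling the ``approximately'' in \Cref{thm:pretrain}: the idealized computation gives equality, and the reason the statement is phrased as $\geq$ (rather than $=$) is to absorb the error from the un-learned but not exactly-zero coordinates of $\mSigma(t_n)$ and the possible imbalance of $\mW_1,\mW_2$ along those coordinates. I would control this by bounding the un-learned diagonal entries uniformly and using the nuclear-norm inequality above to pass from approximate balance to a one-sided bound, which is why the formal version (Lemma in the appendix) is naturally an inequality.
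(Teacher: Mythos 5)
Your proposal is correct and follows essentially the same route as the paper: expand the perturbed product bilinearly, kill the cross terms by independence and zero mean, reduce each noise term to $d\gamma^2\|\mW_i\|_F^2$ (the paper's Lemma~\ref{lem:gaussian_perturbations_technical}), and evaluate the increment at $t_n$ versus $t_{n-1}$ using the incremental-learning characterization, with the approximation error from the non-ideal weights deferred to a separate bound (the paper's Lemma~\ref{lem:perturbed_loss_difference}). The only cosmetic difference is that you obtain $\|\mW_1(t_n)\|_F^2=\|\mW_2(t_n)\|_F^2=\sum_{i\le n}\sigmapre_i$ via the balancedness invariant, whereas the paper reads these norms directly off the explicit factorization $\bar\mW_1^{n}=\mU(\pretrainsigman)^{1/2}$, $\bar\mW_2^{n}=(\pretrainsigman)^{1/2}\mV^\top$ supplied by Theorem~\ref{thm:pretrainformal}.
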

The formal proof in Appendix~\ref{app:theory} demonstrates that elongating pre-training introduces a newly non-zero feature $\sigma_n$ introduces a new dimension along which the perturbation degrades loss.
The above proposition allows us to characterize the inflection point (\Cref{def:inflection_pt}) in the Gaussian perturbation setting as the smallest $n$ such that $2d\gamma^2 > \sigmapre_n$. 
As such, smaller or more quickly decaying features will induce a smaller inflection point.

To establish catastrophic overtraining, we now illustrate that degradation proceeds monotonically beyond the inflection point.
That is, elongating the training budget beyond the inflection point will increasingly degrade the pre-training performance of the model.

\begin{theorem}[Informal version of~\Cref{app_thm:gaussian}] 
    \label{thm:gaussian}
    For some $\gamma > 0$, there exists an inflection point $r\in [1, d)$ such that $\Lpretilde(\ipre)$ increases monotonically for $\ipre \ge r$.
\end{theorem}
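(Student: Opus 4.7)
I would derive the theorem by iterating the per-step lower bound from \Cref{prop:gaussian_perturbations}, namely $\Lpretilde(t_n) - \Lpretilde(t_{n-1}) \ge (2d\gamma^2 - \sigmapre_n)\,\sigmapre_n$. The right-hand side is strictly positive precisely when $2d\gamma^2 > \sigmapre_n$, and since the pre-training features $\sigmapre_1 > \cdots > \sigmapre_d > 0$ are strictly decreasing, this triggering condition is a \emph{tail property}: once it fires at some index it continues to fire at every later index. Establishing catastrophic overtraining therefore reduces to choosing $\gamma$ so that the trigger fires strictly before index $d$, after which the per-step bound automatically propagates along the tail.

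\textbf{Choosing $\gamma$ and the inflection point $r$.} I would fix any $\gamma > 0$ satisfying $2d\gamma^2 > \sigmapre_d$, for instance $\gamma^2 := \sigmapre_d / d$, which is well-defined because $\sigmapre_d > 0$. I then let $r$ be the smallest index in $\{1,\dots,d-1\}$ with $2d\gamma^2 > \sigmapre_{r+1}$. Such $r$ exists since the condition holds at $r = d-1$ by the choice of $\gamma$, and by construction $r \in [1, d-1] \subset [1, d)$, as the theorem requires. For every $n$ with $r \le n \le d-1$, the monotone decrease of the features yields $\sigmapre_{n+1} \le \sigmapre_{r+1} < 2d\gamma^2$, so applying \Cref{prop:gaussian_perturbations} at step $n+1$ gives $\Lpretilde(t_{n+1}) - \Lpretilde(t_n) \ge (2d\gamma^2 - \sigmapre_{n+1})\,\sigmapre_{n+1} > 0$, establishing strict monotone increase of the sequence $\bigl(\Lpretilde(t_n)\bigr)_{n=r}^{d}$.

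\textbf{Main obstacle.} Essentially all of the substantive work is already absorbed into \Cref{prop:gaussian_perturbations}, whose appendix proof must account for how each newly learned feature $\sigmapre_n$ opens up a fresh direction along which the isotropic Gaussian noise injected into $\mW_1$ and $\mW_2$ inflates the squared error by an amount controlled by $d\gamma^2$, while the diagonal contribution that the perturbation could offset is only $\sigmapre_n$. Given that dimensional accounting, the theorem above is a bookkeeping argument on the ordering of the features; the only minor subtlety is enforcing $r < d$ strictly rather than $r \le d$, which I handle by requiring $2d\gamma^2$ to exceed $\sigmapre_d$ with strict inequality so that the inflection point lies in the interior of the training trajectory.
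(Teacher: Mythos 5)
Your skeleton matches the paper's: apply the per-step lower bound of \Cref{prop:gaussian_perturbations}, observe that the condition $2d\gamma^2 > \sigmapre_n$ is a tail property of the decreasing spectrum, and propagate positivity from the first index where it fires. For the idealized sequence $\bar\vtheta^{\ipre}(0) = \mU(\pretrainsigma_{:\ipre})^{1/2}(\pretrainsigma_{:\ipre})^{1/2}\mV^\top$ your bookkeeping is correct.

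However, there is a step you have omitted that the paper's proof of \Cref{app_thm:gaussian} treats as substantive, and your constants are not strong enough to supply it. \Cref{prop:gaussian_perturbations} (formally, \Cref{lem:gaussian_perturbations}) is proven only for the \emph{ideal} initializations, whereas the theorem concerns the actual gradient-flow iterates $\vtheta(t_n)$, which by \Cref{thm:pretrainformal} agree with the ideal ones only up to an $\exp(-C\tau)$ error. The paper bridges this with \Cref{lem:perturbed_loss_difference} under \Cref{assum:small_pretraining_initialization}, showing $|\Lpretilde(\vtheta^{\ipre}(0)) - \Lpretilde(\bar\vtheta^{\ipre}(0))| \le (\pretrainsigma_{d,d})^2/2$, and then absorbs two such errors into the per-step increase. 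This absorption requires the ideal increment to exceed $(\pretrainsigma_{n,n})^2$, which the paper gets by demanding $\gamma^2 > \pretrainsigma_{d,d}/d$, i.e.\ $d\gamma^2 > \sigmapre_n$, so that $2d\gamma^2 - \sigmapre_n > \sigmapre_n$. Your condition $2d\gamma^2 > \sigmapre_{r+1}$ (and your boundary choice $\gamma^2 = \sigmapre_d/d$) only yields bare positivity of the ideal increment; near the triggering index the increment $(2d\gamma^2 - \sigmapre_{n})\sigmapre_{n}$ can be arbitrarily small and would be swamped by the initialization error, so the monotonicity claim for the true iterates does not follow. The fix is exactly the paper's: strengthen the trigger to $d\gamma^2 > \sigmapre_n$ and invoke the small-initialization assumption to control the real-versus-ideal discrepancy.
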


Our results establish the inevitability of catastrophic overtraining with respect to the pre-training loss when the post-training modification consists of randomly perturbing the model parameters.
In the next section, we study progressive sensitivity and catastrophic overtraining when fine-tuning on a family of canonical downstream tasks.

\subsection{Fine-tuning}
\label{sec:ft_theory}
We now consider the case where the fine-tuning algorithm $\gA$ corresponds to learning another linear feature map with a shared structure.
We define the fine-tuning task as learning $\vy=\finetunemap \vx$, where $\finetunemap = \mU \finetunesigma \mV^\top$. Sharing $\mU$ and $\mV$ with $\pretrainmap$ permits transfer learning to occur, even though the spectrum of $\finetunemap$ is not the same as $\pretrainmap$. We define the \emph{fine-tuning features} $\sigmaft_1 > \cdots > \sigmaft_d$ to be the singular values of $\finetunemap$. 

Let $\gA(\vtheta(t)) = \vtheta(t;k)$ denote a model pre-trained for time $t$ and then fine-tuned with a small but finite learning rate $\eta$ and a large batch size for $k\in[0,K]$ steps.
The fine-tuning loss is similar to the pre-training loss with the new task $\finetunemap$, but we introduce a regularization term to limit the deviation from the pre-trained initialization. This regularization term is a standard design in meta learning literature~\citep{chua2021finetuningallowseffectivemetalearning,NEURIPS2018_b9a25e42}.
\begin{equation}
\label{eq:ftloss}
\begin{aligned}
\Lft(\vtheta(t; k); \lambda) = \E\big[\|&\vtheta(t; k) - \finetunemap\|_F^2 \\ 
&+ \lambda \|\vtheta(t; k) - \vtheta(t) \|_F^2\big].
\end{aligned}
\end{equation}

Analogous to the pre-training setting, our analysis proceeds by tracking the vector of the diagonal elements $\sigmaft(t; k)$ of $\mSigma(t; k) = \mU^\top \vtheta(t; k) \mV$. 
We define $\deltapre(t_n) = \Lpre(\vtheta(t_n; K)) - \Lpre(\vtheta(t_n; 0))$ as the change in the pre-training performance over the course of fine-tuning, and we characterize how $\deltapre(t_n)$ changes as the pre-training time $t_n$ increases.
In particular, if $\deltapre(t_n)$ is monotonically increasing, then we can conclude that \emph{progressive sensitivity} is present.

To begin, we formalize the misalignment between the pre-training and downstream tasks in terms of their features.
\begin{definition}
\label{def:pretrainft}
    The pre-training task $\pretrainmap$ and the fine-tuning task $\finetunemap$ are \textit{$(\alpha, r)$-misaligned} when $\sigmaft_i > \alpha  \sigmapre_i$ for all $i > r$.
\end{definition}

Our first result establishes that our setting exhibits progressive sensitivity when the fine-tuning task is different from the pre-training one.
\begin{theorem}[Progressive sensitivity; informal version of~\Cref{app_thm:prog_sensitivity}] 
    \label{thm:prog_sensitivity}
    Assume that $\pretrainmap$ and $\finetunemap$ are $(\alpha, 1)$-misaligned with $\alpha>1$.
    Then, $\deltapre(t_n) \geq 0$ and $\deltapre(t_n)$ is monotonically increasing with the number of learned pre-training features $n$.
\end{theorem}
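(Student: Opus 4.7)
The approach is to exploit the shared SVD structure of $\pretrainmap$ and $\finetunemap$: both are diagonal in the basis $(\mU, \mV)$, which makes the composite loss coordinate-wise in that basis. I would work in the rotated frame, reduce fine-tuning to decoupled scalar dynamics on the singular values $\sigma_i(t_n;k) := (\mU^\top \vtheta(t_n;k)\mV)_{ii}$, pin down $\sigma_i(t_n;0)$ via \Cref{thm:pretrain}, and then write $\deltapre(t_n)$ as a sum whose $n$-th increment has a controllable sign.

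\textbf{Steps.} First I would complete the square in~\eqref{eq:ftloss} to rewrite the regularized fine-tuning loss as $(1+\lambda)\,\|\vtheta - \vtheta^*\|_F^2$ up to an additive constant, where $\vtheta^* = (\finetunemap + \lambda\,\vtheta(t_n))/(1+\lambda)$. Since $\vtheta^*$ remains diagonal in $(\mU, \mV)$, fine-tuning is formally a Saxe-style two-layer regression toward $\vtheta^*$ (with a rescaled effective learning rate), and each coordinate $\sigma_i$ evolves independently toward $\sigma_i^* = (\sigmaft_i + \lambda\,\sigma_i(t_n))/(1+\lambda)$. Second, by \Cref{thm:pretrain} the initial state at $k=0$ satisfies $\sigma_i(t_n;0) = \sigmapre_i$ for $i \leq n$ and $\sigma_i(t_n;0)=0$ for $i > n$; the latter coordinates sit exactly at the saddle of the two-layer parameterization and thus stay at $0$ throughout fine-tuning, while for $i \leq n$ the scalar dynamics start off the saddle and, for large enough $K$ at small $\eta$, approach $\sigma_i^* = (\sigmaft_i + \lambda\,\sigmapre_i)/(1+\lambda)$. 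Third, substituting into $\Lpre(\vtheta) = \sum_i (\sigma_i - \sigmapre_i)^2$, the $i>n$ contributions cancel between $\vtheta(t_n;K)$ and $\vtheta(t_n;0)$, so
\begin{align*}
\deltapre(t_n) \;=\; \frac{1}{(1+\lambda)^2} \sum_{i \leq n} (\sigmaft_i - \sigmapre_i)^2 \;\geq\; 0,
\end{align*}
and $\deltapre(t_n) - \deltapre(t_{n-1}) = ((\sigmaft_n - \sigmapre_n)/(1+\lambda))^2 \geq 0$, with strict inequality for $n \geq 2$ by the $(\alpha,1)$-misalignment, which forces $\sigmaft_n > \alpha\,\sigmapre_n > \sigmapre_n$. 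This delivers both $\deltapre(t_n) \geq 0$ and strict monotonicity in $n$.

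\textbf{Main obstacle.} The delicate step is justifying that the Saxe-style coordinate decoupling survives fine-tuning. Starting from a diagonal $\vtheta(t_n)$ and fine-tuning with a regularizer anchored at $\vtheta(t_n)$, one has to verify that the two-layer gradient flow on $(\mW_1, \mW_2)$ preserves the $(\mU, \mV)$ alignment, so that no off-diagonal mass is generated and the scalar ODE picture faithfully tracks $\sigma_i$; this should follow from the same commutation/symmetry structure that underlies \Cref{thm:pretrain}, now applied to the effective target $\vtheta^*$. The secondary subtlety, the one the footnote about the non-idealized case flags, is that the $i>n$ coordinates are only approximately at the saddle, so a quantitative escape-time estimate is needed to ensure that within the finite budget $K$ these coordinates remain small enough for the sign of each increment of $\deltapre$ to be preserved.
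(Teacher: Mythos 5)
Your proposal is correct and follows essentially the same route as the paper's formal proof: both reduce to decoupled scalar dynamics on the diagonal entries of $\mU^\top\vtheta\mV$, observe that unlearned coordinates stay at zero while learned coordinates converge to the weighted average $(\sigmaft_i+\lambda\sigmapre_i)/(1+\lambda)$, and obtain $\deltapre(t_n)-\deltapre(t_{n-1})=\bigl((\sigmaft_n-\sigmapre_n)/(1+\lambda)\bigr)^2\ge 0$. Your completion-of-squares framing is a clean way to see the fixed point, and you correctly identify that the remaining work (which the paper carries out via its approximation lemmas under the technical assumption) is the quantitative control of the near-saddle coordinates and the finite-batch/finite-step errors.
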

\begin{proof}[Proof sketch]
We begin by noting two key dynamical properties in our setting: (1) if $\sigma_i=0$ at the end of pre-training, it will remain zero throughout fine-tuning, and (2) each learned fine-tuning feature $\sigma_i(t;k)$ evolves independently of the other fine-tuning features.
Recall that, in the previous section, we showed that elongating pre-training causes the introduction of new learned features.
The independent evolution of these newly acquired features allows us to write $\deltapre(t_n) - \deltapre(t_{n-1})$ only in terms of the $n$th learned fine-tuned feature. 
In particular, $\deltapre(t_n) - \deltapre(t_{n-1}) \approx (\sigmaft_n - \sigmapre_n)^2$.
We arrive at the result by noting that the right hand side is always nonnegative.
\end{proof}

Having established the prevalence of progressive sensitivity, we now turn our attention to understanding how and when we observe catastrophic overtraining with respect to the pre-training loss.
We first show that when regularization is not present and the downstream task is sufficiently distinct from the pre-trained task, then elongating pre-training will cause the pre-training performance of the model to degrade. Furthermore, we demonstrate that regularization can delay the inflection point at which pre-training performance starts to degrade (\Cref{def:inflection_pt}), albeit at a cost to the downstream performance.

\begin{theorem}[Catastrophic overtraining; informal version of Theorem~\ref{app_thm:sensitivity_overtraining}] The following are true with high probability:
    \label{thm:sensitivity_overtraining}
    \begin{enumerate}
        \item \textbf{Catastrophic overtraining is inevitable without regularization.} Let $\lambda = 0$. There exists an $\alpha_0>0$ such that if $\pretrainmap$ and $\finetunemap$ are $(\alpha, r)$-misaligned, for $\alpha > \alpha_0$, then the pre-training loss after fine-tuning $\Lpre(\vtheta(t_n; K))$ monotonically increases for $n \ge r$.
        \item \textbf{Regularization can delay the degradation of pre-training performance at the cost of downstream performance.} 
        For any $\ipre$, the inflection point \(r(\lambda)\) and the unregularized fine-tuning loss  $\|\vtheta^\ipre(K) - \finetunemap \|_F^2$ increase monotonically with \(\lambda\). 
    \end{enumerate}
\end{theorem}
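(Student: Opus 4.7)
The plan is to leverage the diagonal structure established in the proof of \Cref{thm:prog_sensitivity} to reduce both parts of the theorem to an explicit, coordinate-wise computation of the fine-tuned singular values. Specifically, since pre-training yields $\vtheta(t_n) \approx \mU \pretrainsigma_{:n} \mV^\top$ with high probability over the initialization, and since gradient flow on \eqref{eq:ftloss} preserves the diagonalizing basis $(\mU, \mV)$ and keeps zero coordinates at zero, the entire fine-tuning trajectory decouples across singular directions. For each $i \leq n$, the coordinate $\sigma_i(t_n; k)$ runs gradient flow on the scalar objective $(\sigma - \sigmaft_i)^2 + \lambda(\sigma - \sigmapre_i)^2$ starting from $\sigmapre_i$, while for $i > n$ it remains at $0$. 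Taking $K$ large enough that the scalar dynamics have essentially converged gives
\[
\sigma_i(t_n; K) = \frac{\sigmaft_i + \lambda \sigmapre_i}{1 + \lambda}\ (i \leq n), \qquad \sigma_i(t_n; K) = 0\ (i > n).
\]

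For part 1, I would substitute these expressions with $\lambda=0$ into $\Lpre(\vtheta(t_n; K)) = \sum_i (\sigma_i(t_n; K) - \sigmapre_i)^2$ and take the telescoping difference
\[
\Lpre(\vtheta(t_n; K)) - \Lpre(\vtheta(t_{n-1}; K)) = (\sigmaft_n - \sigmapre_n)^2 - (\sigmapre_n)^2 = \sigmaft_n(\sigmaft_n - 2\sigmapre_n).
\]
For $n > r$, the $(\alpha, r)$-misalignment condition gives $\sigmaft_n > \alpha \sigmapre_n$, so setting $\alpha_0 = 2$ makes every such difference strictly positive, which yields the desired monotonic increase.

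For part 2, I would repeat the telescoping calculation with general $\lambda$, obtaining
\[
\Lpre(\vtheta(t_n;K)) - \Lpre(\vtheta(t_{n-1};K)) = \left(\tfrac{\sigmaft_n - \sigmapre_n}{1+\lambda}\right)^{\!2} - (\sigmapre_n)^2,
\]
so that $r(\lambda)$ is the smallest index at which $\sigmaft_{r+1} > (2+\lambda)\sigmapre_{r+1}$ (under $\alpha>1$ misalignment); since the threshold $2+\lambda$ is increasing in $\lambda$, so is $r(\lambda)$. For the unregularized fine-tuning loss at the converged model I would substitute into $\|\vtheta(t_n;K) - \finetunemap\|_F^2 = \sum_i (\sigma_i(t_n;K) - \sigmaft_i)^2$ to get
\[
\left(\tfrac{\lambda}{1+\lambda}\right)^{\!2} \sum_{i \leq n} (\sigmaft_i - \sigmapre_i)^2 + \sum_{i > n} (\sigmaft_i)^2,
\]
and conclude by noting that $\lambda \mapsto (\lambda/(1+\lambda))^2$ is strictly increasing on $[0,\infty)$.

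The main obstacle, as in \Cref{thm:prog_sensitivity}, lies in rigorously justifying the decoupled scalar reduction rather than the algebra it enables. I need to verify that gradient flow on the factored parametrization $\vtheta = \mW_1 \mW_2$, initialized at the approximately SVD-aligned pre-trained point provided by \Cref{thm:pretrain}, preserves the basis $(\mU, \mV)$ so that the coordinates of $\mSigma(t_n; k)$ evolve independently; that coordinates already at zero remain at the origin saddle for $K$ steps (this is precisely where the ``with high probability'' hedge enters, as the tiny residuals permitted by \Cref{thm:pretrain} must not escape the saddle on the fine-tuning timescale); and that $K$ is large enough for each active coordinate to reach the unique minimizer of its scalar regularized problem up to negligible error. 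Once these dynamical facts are pinned down by the same Jacobi-type argument already used to prove \Cref{thm:prog_sensitivity}, the remaining steps are the algebraic bookkeeping sketched above.
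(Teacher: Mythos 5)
Your proposal follows essentially the same route as the paper: reduce to decoupled scalar dynamics in the shared singular basis, show each active coordinate converges to $(\sigmaft_i + \lambda\sigmapre_i)/(1+\lambda)$ while unlearned coordinates stay at zero, and then telescope $\Lpre$ and the unregularized fine-tuning loss across $n$. Two minor discrepancies: the paper's ``with high probability'' comes from concentration of the finite fine-tuning batches (the updates use empirical covariances $\Sigma_k^{(x)}$), not from the initialization residuals escaping the saddle as you suggest; and your idealized threshold $\alpha_0 = 2$ must be enlarged (the paper takes $\alpha_0 = 4$) to leave a margin absorbing the finite-step, finite-batch, and inexact-initialization errors---which is harmless since the statement only asserts existence of some $\alpha_0$.
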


\begin{proof}[Proof sketch]
We prove the two results separately. For the first result, we extend the reasoning in \Cref{thm:prog_sensitivity} and characterize when the performance degrades in terms of $\alpha$ and $r$. The result identifies catastrophic overtraining by characterizing when the rate of degradation $\deltapre(t_n) - \deltapre(t_{n-1})$ exceeds the rate of improvement during pre-training $\Lpre(t_n; 0) - \Lpre(t_{n-1}; 0) \approx -(\sigmapre_n)^2$. To prove the second result, we demonstrate that regularization limits the deviation of each feature from its pre-trained initialization, effectively mitigating the degradation characterized in the first result. However, regularization simultaneously limits how well the model can adapt to the downstream task and can thus harm performance.
\end{proof}

Our results in this section demonstrate that progressive sensitivity and catastrophic overtraining can arise in the relatively simple setting of training linear networks, which learn task-related features incrementally. 
We characterize the inflection point (\Cref{def:inflection_pt}) under various post-training modifications, including applying Gaussian perturbations and fine-tuning on a canonical task. 
Our main results demonstrate that elongating the pre-training period will inevitably result in progressive sensitivity and catastrophic overtraining, and although appropriate regularization can delay the onset of these phenomena, this may come at the cost of the downstream task performance (\Cref{thm:gaussian,thm:prog_sensitivity,thm:sensitivity_overtraining}).

\section{Related Work}

\textbf{Loss of plasticity.} The idea that more training can be harmful to performance has been studied before in other continual learning settings. Named \emph{loss of plasticity}, this phenomenon refers to the degradation of the ability for a model to adapt to a new task. This has mainly been studied in the context of training on small models with small datasets ~\citep{ash2020warm,dohare2021continual} or reinforcement learning~\citep{kumar2020implicit, lyle2022understanding, lyle2023understanding,ma2023revisiting,abbas2023loss}. Loss of plasticity has been attributed to the loss curvature \citep{lyle2023understanding, lewandowski2023curvature}, increased weight norm \citep{nikishin2022primacy}, feature rank \citep{kumar2020implicit, gulcehre2022empirical}, and feature inactivity \citep{lyle2022understanding,dohare2021continual}.
Multiple remedies have been proposed, including changes to the neural network architecture \citep{lyle2023understanding}, resetting model parameters \citep{nikishin2024deep, d2022sample}, and regularization \citep{kumar2023maintaining,ash2020warm}.

While prior work focused on reinforcement learning or small-scale, synthetic setups, our work considers the large-scale autoregressive language modeling setting. Unlike prior work, where pre-training is often harmful for the downstream fine-tuning task, we show that overtraining on generic web data can also degrade fine-tuning performance despite being expected to help. Additionally, we highlight an increased sensitivity to degradation of the pre-training loss that arises with overtraining, an aspect largely overlooked in the literature.

\textbf{Catastrophic forgetting.} 
The phenomenon of \textit{catastrophic forgetting}---where neural networks trained sequentially on tasks tend to forget prior tasks--has also been well-documented in the literature~\citep{Kirkpatrick_2017,french1999catastrophic, goodfellow2013empirical, kemker2018measuring, kotha2023understanding}. There have been several proposed mitigation strategies, for example, \citet{ahn2019uncertaintybasedcontinuallearningadaptive, Hou_2018_ECCV,chaudhry2019efficientlifelonglearningagem} propose using regularization to mitigate catastrophic forgetting. Other fixes include generative replay of examples from previous tasks~\citep{shin2017continuallearningdeepgenerative} or maintaining a memory buffer of previous tasks~\citep{chaudhry2019tinyepisodicmemoriescontinual,dautume2019episodicmemorylifelonglanguage}. 
In this work, we show that catastrophic forgetting can become more severe with overtraining.

\textbf{Relationship between pre-training loss and downstream performance.} In our work, we argue that the degradation of the pre-training loss and the downstream loss may be related. Several works have tried to study the relationship between the pre-training loss in language models and their downstream performance. \citet{liu2022pretraininglossbetterdownstream} analyze the effect of pre-training beyond convergence and suggest that overtrained models exhibit better transfer to  downstream tasks. Our work considers web-scale pre-training, which rarely converges in practice, so these findings do not contradict ours.
Similarly, \citet{tay2022scaleefficientlyinsightspretraining, zhang2023unveilingtransformerslegosynthetic} highlight the effect of architecture on downstream generalization, given the same pretraining loss. 

\textbf{Scaling laws for optimal pre-training.} In our work, we argue that training for fewer tokens can be beneficial for downstream performance after fine-tuning. Related to our work, \citet{isik2024scaling} proposes scaling laws for certain downstream translation tasks after fine-tuning, but does not observe degradation with overtraining. In addition, the optimal pre-training token budget has also been studied in other contexts. Notably, \citet{kaplan2020scalinglawsneurallanguage,hoffmann2022training} demonstrate that, given a fixed compute budget, there exists an optimal token budget for each model size.
Subsequent works have extended scaling laws to broader contexts, including transfer learning, contrastive training, training under data constraints, and predicting performance from factors other than pre-training tokens~\citep{hernandez2021scalinglawstransfer, Cherti_2023, muennighoff2023scalingdataconstrainedlanguagemodels, goyal2024scalinglawsdatafiltering,liu2025not,bhagia2024establishing}. However, scaling laws are not always optimal for predicting performance. \citet{diaz2024scaling} argue that existing scaling laws do not always predict downstream performance accurately. In addition, multiple works have observed U-shaped trends in performance as models scale \citep{caballero2022broken,wei2022inverse,mckenzie2022inverse}.

To reduce inference cost, practitioners have turned to developing capable small models, which often requires overtraining beyond the compute-optimal token budget. In fact, \citet{sardana2024chinchillaoptimalaccountinginferencelanguage} show that pre-training loss continues to decrease when trained for up to 10,000 tokens per parameter. \citet{gadre2024language} validated similar observations and propose scaling laws to predict the model performance in this overtraining regime.

\paragraph{Transfer learning theory} Finally, our theoretical analysis of catastrophic overtraining adopts a classical transfer learning setup based on deep linear networks~\citep{gidel2019implicitregularizationdiscretegradient,Andrew1810}. \citet{wei2024provable,arora2018linearalgebraicstructureword} use this setup to study how models learn and store knowledge. Another group of studies explain how transfer learning can improve performance after pre-training~\cite{saunshi2021a,wei2021pretrained,shachaf2021theoreticalanalysisfinetuninglinear}. \citet{chua2021finetuningallowseffectivemetalearning,wu2020understandingimprovinginformationtransfer,tripuraneni2020theorytransferlearningimportance} specifically adopt a similar deep linear network setting to study feature learning during pre-training, and how these learned features can benefit downstream tasks. \citet{kumar2022finetuningdistortpretrainedfeatures} explores how fine-tuning can lead to degradation of out-of-distribution performance.

\section{Discussion}

In this work, we uncovered a surprising trend: contrary to common belief, longer pre-training does not always lead to better post-trained models. We have shown that this is a consequence of a broader underlying phenomenon where models become more sensitive to perturbations as they are pre-trained on more tokens. Our theoretical analysis implies that this degradation of adaptability is especially catastrophic when the pre-training and fine-tuning tasks are misaligned, and in such a case catastrophic overtraining may be inevitable, even if the fine-tuning process is regularized.

Our study identifies and analyzes catastrophic overtraining across various settings, but some open questions remain. For example, while we demonstrate catastrophic overtraining for multiple pre-trained models, spanning a range of sizes and architectures, we leave understanding the exact pre-training settings that influence the severity of catastrophic overtraining, such as the role of the optimizer, pre-training distribution, and training objective, to future work. Second, we show that catastrophic overtraining can only sometimes be mitigated by regularization, but there may be other strategies such as data replay~\citep{rebuffi2017icarl} or LP-FT~\citep{kumar2022finetuningdistortpretrainedfeatures} that may help retain pre-training performance. In addition, post-hoc approaches such as WiseFT~\citep{wortsman2022robust} have shown promise in improving robustness to distribution shifts and may be useful in the context of catastrophic overtraining. Finally, while our work focuses primarily on catastrophic overtraining in the context of fine-tuning and simple perturbations, the phenomenon may be more broadly applicable to other settings where language model parameters are perturbed such as model editing \citep{bau2020rewriting,shah2024decomposing,hewitt2024model} or unlearning \citep{eldan2023s,chen2023unlearn,maini2024tofu}.

Catastrophic overtraining has significant implications for future developments in language modeling. Efforts to reduce model parameters for efficient deployment \citep{hu2024minicpm} are likely to amplify the negative effects of catastrophic overtraining, making models increasingly fragile to parameter transformations. Moreover, rising inference-time costs associated with recent advances in inference-time reasoning \citep{deepseekai2025deepseekr1incentivizingreasoningcapability}, verification methods \citep{snell2024scalingllmtesttimecompute}, and other emerging post-training paradigms, we expect that there will be a further drive to improve the quality of post-trained models without increasing the number of model parameters, thus exacerbating catastrophic overtraining. In total, our findings call for a renewed focus on model scaling that considers the entire training pipeline.

\section*{Acknowledgments}

This material is based upon work supported by the National Science Foundation Graduate Research Fellowship under Grant No. DGE2140739. Any opinion, findings, and conclusions or recommendations expressed in this material are those of the authors(s) and do not necessarily reflect the views of the National Science Foundation.

We gratefully acknowledge support from Apple, NSF and the AI2050 program at Schmidt Sciences (Grant \#G2264481).

Xiang Yue was supported in part by a Carnegie Bosch Institute Fellowship.

The authors would like to thank the following individuals for their helpful feedback and discussions: Christina Baek, Tianyu Gao, Gaurav Ghosal, Suhas Kotha, Vaishnavh Nagarajan, Chen Wu, and Ziqian Zhong.

\bibliography{references}
\bibliographystyle{icml2025/icml2025}

\appendix
\onecolumn
\FloatBarrier
\section{Omitted Proofs from~\Cref{sec:theory}}
\label{app:theory}

\subsection{Formal Definitions and Assumptions}
\label{app:definitions}

We provide formal definitions and assumptions underlying the theoretical analysis in~\Cref{sec:theory}. Throughout the text, we will use a constant $\delta$ to express a small probability.

\paragraph{Model Architecture}
The model consists of a two-layer linear network parameterized by $\vtheta = \mW_1\mW_2$, where $\mW_1, \mW_2 \in \sR^{d\times d}$. The network maps input $\vx \in \sR^d$ to output $\vy = \mW_1\mW_2\vx \in \sR^d$.

\paragraph{Pretraining Task}
The pretraining data follows $\vy = \pretrainmap\vx$ where $\pretrainmap \in \sR^{d\times d}$ is a matrix with singular value decomposition (SVD) $\pretrainmap = \mU \pretrainsigma \mV^\top$. Here, $\mU, \mV \in \sR^{d\times d}$ are orthogonal matrices, and $\pretrainsigma \in \sR^{d\times d}$ is diagonal with positive entries $\{\pretrainsigma_i\}_{i=1}^d$ arranged in decreasing order. Inputs $\vx \sim \mathcal{N}(\vzero, \mI_d)$ are standard Gaussian.

\paragraph{Pretraining Process} \label{app:pretrain-process}
The model is trained via gradient flow on the population loss:
\begin{align}
    \Lpre(\vtheta) &= \mathbb{E}_{\vx}\left[\|\vtheta\vx - \pretrainmap\vx\|_2^2\right] = \|\vtheta - \pretrainmap\|_F^2, \label{eq:pretrain-loss}
\end{align}
with parameters initialized as $\mW_1(0) = \mW_2(0) = \exp(-\tau) \identity $ with a large $\tau > 0$. The gradient flow dynamics follow:
\begin{align}
    \dot{\mW}_1(t) &= -2(\vtheta(t)-\pretrainmap)\mW_2(t)^\top \label{eq:pretrain-gf-w1} \\
    \dot{\mW}_2(t) &= -2\mW_1(t)^\top(\vtheta(t)-\pretrainmap) \label{eq:pretrain-gf-w2}
\end{align}
where $\vtheta(t) = \mW_1(t)\mW_2(t)$.

This setup is inherited from~\citet{gidel2019implicitregularizationdiscretegradient}, where the authors consider a more general setup with a rank-$R$ matrix $\pretrainmap$ and show that the gradient flow dynamics converge to the optimal rank-$r$ approximation of $\pretrainmap$ sequentially for $r = 1, \ldots, R$. 

\begin{theorem}[Theorem 1 of \citet{gidel2019implicitregularizationdiscretegradient}]
    \label{thm:pretrainformal}
    Suppose $\pretrainmap$ has rank $R$. There exists $t_1, \ldots t_R$ and constant $C > 0$ depending on $\pretrainmap$, such that for $\vtheta(t)$ following~\Cref{eq:pretrain-gf-w1,eq:pretrain-gf-w2},
    \begin{align*}
        \| W_1(t_i) - U \left(\pretrainsigmai\right)^{1/2} \|_{\mathrm{F}} \le \exp(-C\tau); \\
        \| W_2(t_i) - \left(\pretrainsigmai\right)^{1/2} V^T \|_{\mathrm{F}} \le \exp(-C\tau).
    \end{align*}
    where $\pretrainsigmai$ shares the first $i$ diagonal elements as $\pretrainsigma$ and the rest diagonal elements are 0. 
\end{theorem}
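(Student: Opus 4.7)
The plan is to follow the standard deep-linear-network analysis of \citet{gidel2019implicitregularizationdiscretegradient}, exploiting the SVD structure of the target $\pretrainmap = \mU \pretrainsigma \mV^\top$. First, rotate to the SVD basis by setting $\tilde{\mW}_1(t) = \mU^\top \mW_1(t)$ and $\tilde{\mW}_2(t) = \mW_2(t)\mV$; substituting into \Cref{eq:pretrain-gf-w1,eq:pretrain-gf-w2}, the rotated variables obey the same gradient flow but against the diagonal target $\pretrainsigma$. The identity initialization transforms to $\tilde{\mW}_1(0) = \exp(-\tau)\mU^\top$ and $\tilde{\mW}_2(0) = \exp(-\tau)\mV$, both of which are scaled orthogonal matrices; in particular, they satisfy the balanced condition $\tilde{\mW}_1^\top \tilde{\mW}_1 = \tilde{\mW}_2 \tilde{\mW}_2^\top = \exp(-2\tau)\mI$, which is conserved by the flow.

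\textbf{Diagonal reduction and scalar saddle escape.} Combining the conserved balanced condition with the smallness of the initialization, one shows that the spectrum of $\tilde{\mW}_1(t)\tilde{\mW}_2(t)$ aligns with the coordinate axes up to $O(\exp(-C\tau))$ error, reducing the matrix dynamics to $d$ essentially independent scalar ODEs for the diagonal entries $\sigma_i(t) := [\tilde{\mW}_1(t)\tilde{\mW}_2(t)]_{ii}$. Each entry obeys $\dot{\sigma}_i \approx -4\sigma_i(\sigma_i - \sigmapre_i)$, with an unstable fixed point at $0$ and a stable one at $\sigmapre_i$. Starting from $\sigma_i(0) = \exp(-2\tau)$, the logistic-like trajectory remains near zero for a time $\Theta(\tau/\sigmapre_i)$ before sharply transitioning to $\sigmapre_i$.

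\textbf{Sequential learning and back-rotation.} Because the escape time is strictly decreasing in $\sigmapre_i$ and the singular values are distinct by assumption, the features are learned in a strict order. Define $t_i$ as the time immediately after the $i$-th scalar transition completes. At $t = t_i$ we then have $\sigma_j(t_i) = \sigmapre_j + O(\exp(-C\tau))$ for $j\le i$ and $\sigma_j(t_i) = O(\exp(-C\tau))$ for $j > i$. Using the balanced condition to split $\sigma_j$ symmetrically between the two factors, the rotated matrices at $t_i$ are $\tilde{\mW}_1(t_i) = (\pretrainsigmai)^{1/2} + O(\exp(-C\tau))$ and $\tilde{\mW}_2(t_i) = (\pretrainsigmai)^{1/2} + O(\exp(-C\tau))$. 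Rotating back via $\mW_1 = \mU \tilde{\mW}_1$ and $\mW_2 = \tilde{\mW}_2 \mV^\top$ yields the Frobenius-norm bounds stated in the theorem, with the constant $C$ absorbed from the alignment and transition estimates.

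\textbf{Main obstacle.} The most delicate step is controlling the off-diagonal entries of $\tilde{\mW}_1\tilde{\mW}_2$ throughout the rapid transitions: near each escape time, $\sigma_i$ grows exponentially, and this growth could in principle excite off-diagonal modes or misalign the left/right singular bases of the $\tilde{\mW}_j$. A Gronwall-type argument exploiting both the conserved balanced condition and the strict spectral gap $\sigmapre_i > \sigmapre_{i+1}$ is required to keep the off-diagonal perturbations uniformly $O(\exp(-C\tau))$; the constant $C$ in the bound depends on the minimal gap. A secondary issue is ensuring that successive transitions do not overlap in time, which is again handled by the timescale separation induced by the distinct singular values. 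Since this is precisely the result already established in Theorem~1 of \citet{gidel2019implicitregularizationdiscretegradient}, I would ultimately defer the technical bookkeeping to their argument and simply verify that the adaptation of notation (full-rank $\pretrainmap$, identity-scaled initialization $\exp(-\tau)\mI$, and explicit form $(\pretrainsigmai)^{1/2}$ of the limiting factors) is consistent.
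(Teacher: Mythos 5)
The paper does not prove this statement at all --- it is imported verbatim as Theorem~1 of \citet{gidel2019implicitregularizationdiscretegradient} --- and your proposal correctly recognizes this, sketching the standard argument (conserved balancedness from the symmetric initialization, reduction to decoupled scalar logistic ODEs $\dot{\sigma}_i \approx -4\sigma_i(\sigma_i - \sigmapre_i)$, escape times $\Theta(\tau/\sigmapre_i)$ giving sequential learning) before deferring the technical bookkeeping to the cited work. Your outline matches the structure of the cited proof, and the one point you flag for verification --- that the rotated initialization $\exp(-\tau)\mU^\top$, $\exp(-\tau)\mV$ is only a scaled orthogonal (not diagonal) matrix, so the adaptation of Gidel et al.'s hypotheses to this exact setup must be checked --- is indeed the only nontrivial gap between the citation and the statement as written here.
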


\paragraph{Finetuning Task}
The finetuning task follows $\vy = \finetunemap\vx$ where $\finetunemap = \mU \finetunesigma \mV^\top$ shares the singular vectors of $\pretrainmap$ but has a spectrum $\finetunesigma$. The input distribution remains $\vx \sim \mathcal{N}(\vzero, \mI_d)$.

\paragraph{Finetuning Process} \label{app:finetune-process}

Starting from $\vtheta^\ipre(0) = \vtheta(t_{\ipre})$ in~\Cref{thm:pretrainformal}, the model is fine-tuned using gradient descent with learning rate $\eta$, batch size $m$, and $K$ iterations. We will call $\vtheta^\ipre(0)$ the real initialization and denote the following initialization $\bar \vtheta^\ipre(0)$ as the ideal initialization,
\begin{align}
     \bar \mW_1^{\ipre}(0) &=   U \left(\pretrainsigman \right)^{1/2}  \label{eq:ftinitideal1}\\
    \bar \mW_2^{\ipre}(0) &= \left(\pretrainsigman \right)^{1/2} V^T \label{eq:ftinitideal2}
\end{align}

The population loss is:
\begin{align}
    \Lft(\vtheta) &= \mathbb{E}_{\vx}\left[\|\vtheta\vx - \finetunemap\vx\|_2^2\right] + \lambda \| \vtheta - \vtheta^\ipre(0)\|_{F}^2 = \|\vtheta - \finetunemap\|_F^2 + \lambda \| \vtheta - \vtheta^\ipre(0)\|_{F}^2\label{eq:ft-loss}
\end{align}

We will estimate $\Lft$ using a batch of samples $B_k$ with size $m$ on every step,
\begin{align*}
       \Lft(\vtheta; B_k) &= \frac{1}{m}\sum_{x \in B_k}\left[\|\vtheta\vx - \finetunemap\vx\|_2^2\right] + + \lambda \| \vtheta - \vtheta^\ipre(0)\|_{F}^2
\end{align*}

Denote the covariance of $x$ in batch $B_k$ as 
\begin{align*}
    \Sigma_{k}^{(x)} &= \frac{1}{m} \sum_{x \in B_k} x x^T.
\end{align*}
Then,
\begin{align*}
    \Lft(\vtheta; B_k) &= \mathrm{Tr}\left( (\vtheta - \finetunemap)^T (\vtheta - \finetunemap) \Sigma_{k}^{(x)} \right) + \lambda \| \vtheta - \vtheta^\ipre(0)\|_{F}^2.
\end{align*}

The parameter update rule at step $k$ is:
\begin{align}
    \mW_1^\ipre(k+1) &= \mW_1^\ipre(k) - 2\eta {(\vtheta^\ipre(k)-\finetunemap)} \Sigma_{k}^{(x)}  (\mW_2^\ipre(k))^\top -2 \eta \lambda(\vtheta^\ipre(k)-\vtheta^\ipre(0)) (\mW_2^\ipre(k))^\top \label{eq:ft-update-w1}  \\
    \mW_2^\ipre(k+1) &= \mW_2^\ipre(k) - 2\eta (\mW_1^\ipre(k))^\top \Sigma_{k}^{(x)} {(\vtheta^\ipre(k)-\finetunemap)} -2 \eta \lambda(\mW_1^\ipre(k))^\top(\vtheta^\ipre(k)-\vtheta^\ipre(0))  \label{eq:ft-update-w2}
\end{align}

where $\vtheta^\ipre(k) = \mW_1^\ipre(k)\mW_2^\ipre(k)$. 

We will denote the final finetuned loss as $\Lft(\ipre) = \Lft(\vtheta^\ipre(K))$.

We will use $\Gamma$ to denote the upper bound of $\pretrainsigma$ and $\finetunesigma$ as,
\begin{align}
\label{eq:gamma}
    \Gamma = \max \left\{\pretrainsigma_{1,1}, \max_{i \le d} \finetunesigma_{i,i}\right\}.
\end{align}

\subsection{Formal Statement and Proof of~\Cref{thm:gaussian}}

In this section, we consider perturbations of the weights with isotropic Gaussian noise. For a parameter $\vtheta = \mW_1 \mW_2$, we will consider perturbations of the form $(\mW_1 + \malpha)(\mW_2 + \mbeta)$ where $\malpha, \mbeta \in \sR^{d\times d}$ are independent isotropic Gaussian noise matrices with $\malpha_{ij}, \mbeta_{ij} \sim \mathcal{N}(0, \gamma^2)$ for some $\gamma > 0$. We will define the perturbed pretraining loss as,
\begin{align}
    \tilde \Lpre(\vtheta) = \E_{\malpha, \mbeta \sim \mathcal{N}(0, \gamma^2)} \left[\left\|(\mW_1 + \malpha)(\mW_2 + \mbeta) - \pretrainmap\right\|_F^2\right]
\end{align}

Under this definition, assuming pretraining initialization is sufficiently small, we have that the loss under a Gaussian perturbation is monotonically increasing.

\begin{assumption}[Small Pretraining Initialization]
    \label{assum:small_pretraining_initialization}
    $\tau$ satisfies that, for $C$ in~\Cref{thm:pretrainformal},
    \begin{align*}
    \exp(-C\tau) \le \min\left\{\pretrainsigma_{1,1} / 2, 1 / 4, \frac{(\pretrainsigma_{d, d})^{2}}{16 d \pretrainsigma_{1,1} \left(2 \pretrainsigma_{1,1} + \gamma^2\right)} \right\}.
    \end{align*}
    \end{assumption}

\begin{theorem}
    \label{app_thm:gaussian}
    Under \Cref{assum:small_pretraining_initialization}, if $\gamma^2 > \pretrainsigma_{d,d} / d$, there exists some $s \in \sN$ and $ s < d$ such that for all $n > s$, the loss under a Gaussian perturbation $\tilde \Lpre(\vtheta^n(0))$ is monotonically increasing.
\end{theorem}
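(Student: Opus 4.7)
The plan is to first derive a closed-form expression for $\tilde \Lpre(\vtheta)$, evaluate it at the ideal pre-training endpoints $\bar \vtheta^n(0) = \mU \pretrainsigman \mV^\top$ defined by~\eqref{eq:ftinitideal1}--\eqref{eq:ftinitideal2}, and finally transfer the conclusion from the ideal endpoints to the actual $\vtheta(t_n)$ using the $O(\exp(-C\tau))$ approximation from \Cref{thm:pretrainformal}. Expanding $(\mW_1 + \malpha)(\mW_2 + \mbeta) - \pretrainmap$ and taking the expectation over the independent zero-mean Gaussian perturbations $\malpha, \mbeta$, all cross terms vanish either by the zero-mean property or by independence of $\malpha$ and $\mbeta$, leaving the clean identity
\begin{align*}
\tilde \Lpre(\vtheta) = \|\mW_1 \mW_2 - \pretrainmap\|_F^2 + d \gamma^2 \bigl(\|\mW_1\|_F^2 + \|\mW_2\|_F^2\bigr) + d^3 \gamma^4,
\end{align*}
using $\E \|\malpha \mW\|_F^2 = d \gamma^2 \|\mW\|_F^2$ and $\E \|\malpha \mbeta\|_F^2 = d^3 \gamma^4$.

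Substituting the ideal values $\|\bar \mW_1^n(0)\|_F^2 = \|\bar \mW_2^n(0)\|_F^2 = \Tr(\pretrainsigman) = \sum_{i=1}^n \pretrainsigma_{i,i}$ and $\|\bar \vtheta^n(0) - \pretrainmap\|_F^2 = \sum_{i=n+1}^d \pretrainsigma_{i,i}^2$ into the identity above, then taking a first difference in $n$, yields
\begin{align*}
\tilde \Lpre(\bar \vtheta^n(0)) - \tilde \Lpre(\bar \vtheta^{n-1}(0)) = \pretrainsigma_{n,n}\bigl(2 d \gamma^2 - \pretrainsigma_{n,n}\bigr),
\end{align*}
matching the informal \Cref{prop:gaussian_perturbations} from the main text. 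The hypothesis $\gamma^2 > \pretrainsigma_{d,d}/d$ forces $\pretrainsigma_{d,d} < 2 d \gamma^2$, so by monotonicity of the singular values the set $\{n : \pretrainsigma_{n,n} < 2 d \gamma^2\}$ is a suffix $\{s+1, \ldots, d\}$ for some $s < d$; on this suffix the ideal differences are strictly positive and bounded below by $\pretrainsigma_{d,d}^2$.

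The remaining step, which I expect to be the main technical obstacle, is to show that the $O(\exp(-C\tau))$ deviation of the actual pre-trained weights from their ideal counterparts does not reverse the sign of these differences. Setting $\Delta_i^n = \mW_i(t_n) - \bar \mW_i^n(0)$ with $\|\Delta_i^n\|_F \le \exp(-C\tau)$ by \Cref{thm:pretrainformal}, I would expand each of the three pieces of $\tilde \Lpre(\vtheta(t_n))$ in powers of $\Delta_1^n, \Delta_2^n$ and bound every correction via Cauchy--Schwarz, using $\|\bar \mW_i^n(0)\|_{\mathrm{op}} \le \sqrt{\pretrainsigma_{1,1}}$ together with the closeness of $\|\vtheta(t_n) - \pretrainmap\|_F$ to $\|\bar \vtheta^n(0) - \pretrainmap\|_F$. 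The resulting error is a constant multiple of $\exp(-C\tau) \cdot d \pretrainsigma_{1,1}\bigl(\pretrainsigma_{1,1} + \gamma^2\bigr)$, and \Cref{assum:small_pretraining_initialization} is calibrated precisely so that this total error is strictly less than half of the ideal gap $\pretrainsigma_{d,d}^2$, preserving strict positivity of the difference $\tilde \Lpre(\vtheta(t_n)) - \tilde \Lpre(\vtheta(t_{n-1}))$ for every $n > s$. The hard part will be keeping constants tight: a naive bound scales with $\pretrainsigma_{1,1}^2$ in the error but only with $\pretrainsigma_{d,d}^2$ in the gap, so one must carefully separate the cross-terms that pick up a $\gamma^2$ factor from those that pick up a $\pretrainsigma_{1,1}$ factor in order to match the exact shape of \Cref{assum:small_pretraining_initialization}.
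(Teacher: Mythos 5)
Your overall route is the same as the paper's: derive the exact identity $\tilde \Lpre(\vtheta)=\|\mW_1\mW_2-\pretrainmap\|_F^2+d\gamma^2\left(\|\mW_1\|_F^2+\|\mW_2\|_F^2\right)+\E\|\malpha\mbeta\|_F^2$, compute the first difference at the ideal endpoints to get $\pretrainsigma_{n,n}(2d\gamma^2-\pretrainsigma_{n,n})$ (the paper's \Cref{lem:gaussian_perturbations}), and then transfer to the real weights via \Cref{thm:pretrainformal}, with \Cref{assum:small_pretraining_initialization} calibrated to absorb the $\exp(-C\tau)$ error (the paper's \Cref{lem:perturbed_loss_difference}, which caps each endpoint's error at $(\pretrainsigma_{d,d})^2/2$).

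The one genuine flaw is your choice of $s$. You take the suffix $\{n:\pretrainsigma_{n,n}<2d\gamma^2\}$ and assert that on it the ideal differences are bounded below by $(\pretrainsigma_{d,d})^2$. That is false: the map $x\mapsto x(2d\gamma^2-x)$ tends to $0$ as $x\uparrow 2d\gamma^2$, so for an index $n$ with $\pretrainsigma_{n,n}$ just below $2d\gamma^2$ the ideal gap can be arbitrarily small---in particular smaller than the total transfer error of up to $(\pretrainsigma_{d,d})^2$ (two endpoints contributing $(\pretrainsigma_{d,d})^2/2$ each) that \Cref{assum:small_pretraining_initialization} permits---and the sign of the difference for the real weights is then not controlled at the top of your suffix. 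The paper instead takes $s$ minimal with $\pretrainsigma_{s,s}<d\gamma^2$ (threshold $d\gamma^2$, not $2d\gamma^2$); then for $n>s$ one has $2d\gamma^2-\pretrainsigma_{n,n}>d\gamma^2>\pretrainsigma_{n,n}$, so the ideal gap exceeds $(\pretrainsigma_{n,n})^2\ge(\pretrainsigma_{d,d})^2$, which is exactly the budget needed to dominate the two endpoint errors; the hypothesis $\gamma^2>\pretrainsigma_{d,d}/d$ guarantees this suffix is nonempty. With that single change of threshold your argument goes through, and the rest of your outline (the closed-form identity, the difference formula, and the perturbative transfer) matches the paper's proof.
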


\begin{proof}

Choose $s$ as the minimum number satisfying $\gamma^2 > \pretrainsigma_{s,s} / d$, for $n > s$, then $s \le d - 1$, by~\Cref{lem:gaussian_perturbations},
\begin{align*}
\tilde \Lpre(\bar \vtheta^n(0)) - \tilde \Lpre(\bar \vtheta^{n-1}(0)) > \left(\pretrainsigma_{n,n}\right)^2.
\end{align*}

By~\Cref{lem:perturbed_loss_difference},
\begin{align*}
    \tilde \Lpre(\vtheta^n(0)) - \tilde \Lpre(\bar \vtheta^n(0)) > - \left(\pretrainsigma_{n,n}\right)^2 / 2. \\
    \tilde \Lpre(\bar \vtheta^{n - 1}(0)) - \tilde \Lpre(\vtheta^{n-1}(0)) > -\left(\pretrainsigma_{n,n}\right)^2 / 2.
\end{align*}

Combining the above,
\begin{align*}
    \tilde \Lpre(\vtheta^n(0)) - \tilde \Lpre(\vtheta^{n-1}(0)) > 0.
\end{align*}

The proof is complete.
\end{proof}

\begin{lemma}
\label{lem:gaussian_perturbations}
The following inequality holds for any $\ipre > 1$:
\begin{align}
    \tilde \Lpre(\bar \vtheta^{\ipre}(0)) -  \tilde \Lpre(\bar \vtheta^{\ipre - 1}(0)) \ge (2d\gamma^2  - \pretrainsigma_{\ipre, \ipre}) \pretrainsigma_{\ipre, \ipre}
\end{align}
\end{lemma}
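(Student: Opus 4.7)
The plan is to prove the lemma by direct computation: expand the perturbed loss into a closed form that decouples the noise contributions from the signal term, then substitute the ideal initialization and take the difference. In fact I expect to prove the equality version of the inequality, which yields the stated bound as a consequence.

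First I would expand the product inside the Frobenius norm. Writing $(\mW_1+\malpha)(\mW_2+\mbeta)-\pretrainmap = (\vtheta-\pretrainmap)+\mW_1\mbeta+\malpha\mW_2+\malpha\mbeta$, squaring and taking expectation, every cross term vanishes because $\malpha$ and $\mbeta$ are independent, mean zero, and all surviving pairings involve odd powers of at least one of them. For each diagonal pairing I would use the identity $\E[\mbeta\mbeta^T]=d\gamma^2 \identity$ and $\E[\malpha^T\malpha]=d\gamma^2 \identity$ to compute $\E\|\mW_1\mbeta\|_F^2=d\gamma^2\|\mW_1\|_F^2$, $\E\|\malpha\mW_2\|_F^2=d\gamma^2\|\mW_2\|_F^2$, and $\E\|\malpha\mbeta\|_F^2=d^3\gamma^4$. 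This yields the clean formula
\begin{equation*}
\tilde{\gL}_{\mathrm{pre}}(\vtheta)=\|\vtheta-\pretrainmap\|_F^2+d\gamma^2\bigl(\|\mW_1\|_F^2+\|\mW_2\|_F^2\bigr)+d^3\gamma^4,
\end{equation*}
which is the crux of the computation.

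Next I would substitute the ideal initialization $\bar\mW_1^{n}(0)=\mU(\pretrainsigman)^{1/2}$ and $\bar\mW_2^{n}(0)=(\pretrainsigman)^{1/2}\mV^\top$. Because $\mU,\mV$ are orthogonal, both Frobenius norms collapse to $\mathrm{Tr}(\pretrainsigman)=\sum_{i=1}^{n}\pretrainsigma_{i,i}$, and the reconstruction error reduces to $\|\bar\vtheta^{n}(0)-\pretrainmap\|_F^2=\sum_{i>n}\pretrainsigma_{i,i}^{2}$. Substituting and subtracting the analogous expression for index $n-1$, the $d^3\gamma^4$ term cancels, the signal term contributes $-\pretrainsigma_{n,n}^{2}$ (one fewer missing singular value), and the two norm terms jointly contribute $2d\gamma^2\pretrainsigma_{n,n}$. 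Factoring gives exactly $(2d\gamma^2-\pretrainsigma_{n,n})\pretrainsigma_{n,n}$, which matches the lemma (in fact with equality).

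There is essentially no obstacle here: the only thing to be careful about is the moment bookkeeping for $\malpha\mbeta$. One must resist the temptation to treat $\malpha\mbeta$ as a single Gaussian matrix (it is not), and instead use independence to factor $\E[\mathrm{Tr}(\mbeta^\top\malpha^\top\malpha\mbeta)]=\mathrm{Tr}(\E[\malpha^\top\malpha]\,\E[\mbeta\mbeta^\top])=d^3\gamma^4$. Likewise the three mixed second-moment cross terms (e.g.\ $\mW_1\mbeta$ against $\malpha\mW_2$, and each of $\mW_1\mbeta,\malpha\mW_2$ against $\malpha\mbeta$) must be shown to vanish, each by a one-line appeal to independence and zero mean. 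Once that bookkeeping is done, the rest is pure substitution.
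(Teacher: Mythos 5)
Your proposal is correct and follows essentially the same route as the paper's proof: expand the perturbed loss, kill the cross terms by independence and zero mean, reduce the surviving noise terms to $d\gamma^2\|\mW_1\|_F^2 + d\gamma^2\|\mW_2\|_F^2$ plus the constant $\E\|\malpha\mbeta\|_F^2$, and then observe that incrementing $n$ adds $2d\gamma^2\pretrainsigma_{n,n}$ to the noise contribution while removing $(\pretrainsigma_{n,n})^2$ from the reconstruction error, giving the claimed difference with equality. The only cosmetic difference is that the paper first absorbs $\mU,\mV$ into the noise via rotational invariance before expanding, whereas you expand in the original coordinates and use the unitary invariance of the Frobenius norm at substitution time; both are valid.
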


\begin{proof}
    We first expand the loss, 
    \begin{align}
        \tilde \Lpre(\bar \vtheta^{\ipre}(0))
        &= \E\left[\left\|(\bar \mW_1^{\ipre} + \malpha)(\bar \mW_2^{\ipre} + \mbeta) - \pretrainmap\right\|_F^2\right] \notag \\
        &= \E\left[\left\|\left(U \left(\pretrainsigman\right)^{1/2} + \malpha\right) \left(\left(\pretrainsigman\right)^{1/2} V^T + \mbeta\right) - U \pretrainsigma V^\top\right\|_F^2\right] \notag \\
        &= \E\left[\left\|U \left(\left(\pretrainsigman\right)^{1/2} + \malpha \right) \left(\left(\pretrainsigman\right)^{1/2} + \mbeta \right) V^\top - U \pretrainsigma V^\top\right\|_F^2\right] \notag \\
        &= \E\left[\left\|\left(\left(\pretrainsigman\right)^{1/2} + \malpha \right) \left(\left(\pretrainsigman\right)^{1/2} + \mbeta \right) - \pretrainsigma\right\|_F^2\right] \notag \\
        &= \E\left[\left\|\left( \pretrainsigman + \malpha \left(\pretrainsigman\right)^{1/2} + \left(\pretrainsigman\right)^{1/2}  \mbeta + \malpha \mbeta - \pretrainsigma\right)\right\|_F^2\right] \notag \\
        &= \left\|\pretrainsigman - \pretrainsigma\right\|_F^2 + \E\left[\left\|\malpha \left(\pretrainsigman\right)^{1/2}\right\|_F^2\right] + \E\left[\left\|\left(\pretrainsigman\right)^{1/2}  \mbeta\right\|_F^2\right] + \E\left[\left\| \malpha \mbeta \right\|_F^2\right] \label{eq:gaussian_perturbations_technical}
    \end{align}
    where the fourth equality arises from the isotropy of the Gaussian noise, and the final equality comes from the independence and zero mean of the noise distributions.

    \begin{lemma}
        \label{lem:gaussian_perturbations_technical}
      For Gaussian noise matrix $\alpha \in \R^{d \times d}$ where each entries has variance $\gamma^2$ and fixed matrix $M$, it holds that
      \begin{align*}
          \E[\| \alpha M \|_F^2] = d\gamma^2 \| M \|_F^2.
      \end{align*}
    \end{lemma}

    \begin{proof}
    It holds that
    \begin{align*}
        \E[\| \alpha M \|_F^2] = \E[\mathrm{Tr}(\alpha MM^T \alpha^T)] = \E[\mathrm{Tr}(\alpha \alpha^T) ] \|M\|_F^2 = d \gamma^2 \|M\|_F^2.
    \end{align*}
    The proof is then completed.
    \end{proof}

    By~\Cref{lem:gaussian_perturbations_technical} and~\Cref{eq:gaussian_perturbations_technical},
        \begin{align*}
        \tilde \Lpre(\bar \theta^\ipre) &=  \Lpre(\bar \theta^\ipre) + 2d\gamma^2\| \left(\pretrainsigman\right)^{1/2}\|_F^2  + \E\left[\left\| \malpha \mbeta \right\|_F^2\right].
        \end{align*}

    Taking difference with $\tilde \Lpre(\bar \theta^{\ipre - 1})$

        \begin{align*}
        \tilde \Lpre(\bar \theta^\ipre) -  \tilde \Lpre(\bar \theta^{\ipre - 1})=&  2d \gamma^2 \pretrainsigma_{\ipre, \ipre} - (\pretrainsigma_{\ipre, \ipre}) ^2.
        \end{align*}

\end{proof}

We then proceed to bound the difference between the perturbed loss of the ideal initialization and the perturbed loss of the real initialization when the pretraining initialization is sufficiently small.

\begin{lemma}
\label{lem:perturbed_loss_difference}
Under~\Cref{assum:small_pretraining_initialization}, for any $\ipre > 0$, it holds that
\begin{align*}
    \big | \tilde \Lpre(\vtheta^{\ipre}(0)) -  \tilde \Lpre(\bar \vtheta^{\ipre}(0)) \big | \le (\pretrainsigma_{d, d})^2 / 2.
\end{align*}
\end{lemma}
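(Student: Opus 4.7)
The plan is to prove the bound by direct expansion of the perturbed loss and then controlling the resulting error terms using the closeness guarantee $\|\mW_i^{\ipre}(0) - \bar{\mW}_i^{\ipre}(0)\|_F \le \exp(-C\tau)$ supplied by~\Cref{thm:pretrainformal}. First I would expand $(\mW_1 + \malpha)(\mW_2 + \mbeta) - \pretrainmap$ inside the Frobenius norm, take expectation, and observe that every cross term linear in either $\malpha$ or $\mbeta$ vanishes by independence and zero-mean. Together with~\Cref{lem:gaussian_perturbations_technical} applied in both orientations ($\E[\|\malpha \mW_2\|_F^2] = d\gamma^2\|\mW_2\|_F^2$ and analogously $\E[\|\mW_1 \mbeta\|_F^2] = d\gamma^2\|\mW_1\|_F^2$), and a conditioning argument for $\E[\|\malpha\mbeta\|_F^2] = d^3\gamma^4$, this yields the compact identity
\begin{align*}
\tilde \Lpre(\vtheta) \;=\; \|\vtheta - \pretrainmap\|_F^2 \;+\; d\gamma^2\bigl(\|\mW_1\|_F^2 + \|\mW_2\|_F^2\bigr) \;+\; d^3\gamma^4
\end{align*}
for any $\vtheta = \mW_1\mW_2$.

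Since the $d^3\gamma^4$ term is common to both initializations, $\tilde \Lpre(\vtheta^{\ipre}(0)) - \tilde \Lpre(\bar \vtheta^{\ipre}(0))$ reduces to the sum of a ``Frobenius loss'' gap $\|\vtheta^{\ipre}(0) - \pretrainmap\|_F^2 - \|\bar \vtheta^{\ipre}(0) - \pretrainmap\|_F^2$ and two Frobenius-norm gaps $d\gamma^2(\|\mW_i^{\ipre}(0)\|_F^2 - \|\bar{\mW}_i^{\ipre}(0)\|_F^2)$. Each of these I would control using the factorization $\|a\|_F^2 - \|b\|_F^2 = (\|a\|_F + \|b\|_F)(\|a\|_F - \|b\|_F)$ together with the reverse triangle inequality, so that the sum of norms is controlled by the baseline scale $\|\bar{\mW}_i^{\ipre}(0)\|_F \le \sqrt{d\,\pretrainsigma_{1,1}}$ (since $\|\bar{\mW}_i^{\ipre}(0)\|_F^2 = \sum_{j \le \ipre} \pretrainsigma_{j,j}$) while the difference is controlled by $\exp(-C\tau)$. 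For the product gap I would insert the hybrid $\mW_1^{\ipre}(0) \bar{\mW}_2^{\ipre}(0)$ and apply submultiplicativity, using that $\|\mW_i^{\ipre}(0)\|_F \le \|\bar{\mW}_i^{\ipre}(0)\|_F + \exp(-C\tau) \le \sqrt{d\,\pretrainsigma_{1,1}} + 1/4$ thanks to the $\exp(-C\tau) \le 1/4$ clause of \Cref{assum:small_pretraining_initialization}; this reduces the product gap to the same type of single-factor deviations.

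Finally, I would aggregate all error terms and verify that under the assumed bound
\begin{align*}
\exp(-C\tau) \;\le\; \frac{(\pretrainsigma_{d,d})^{2}}{16\, d\, \pretrainsigma_{1,1} \bigl(2\pretrainsigma_{1,1} + \gamma^2\bigr)}
\end{align*}
from \Cref{assum:small_pretraining_initialization}, the sum is at most $(\pretrainsigma_{d,d})^2/2$. The structure of the assumption is precisely suited to this bookkeeping: the factor $d\,\pretrainsigma_{1,1}$ in the denominator matches the baseline scale $\|\bar{\mW}_i^{\ipre}(0)\|_F^2$, while $2\pretrainsigma_{1,1} + \gamma^2$ combines the two sources of contribution, namely the Frobenius loss gap (which scales with $\pretrainsigma_{1,1}$ through $\|\bar{\mW}_i^{\ipre}(0)\|_{\mathrm{op}}$) and the two $d\gamma^2$-weighted norm gaps. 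The main obstacle, I expect, is not any structural subtlety but the careful coefficient accounting needed to check that the constant $16$ in the denominator indeed absorbs all the factors arising from the three gap terms and the cross terms produced by the hybrid-insertion trick for the product gap; the symmetric case $\vtheta^{\ipre-1}(0)$ versus $\bar \vtheta^{\ipre-1}(0)$ then follows from the same argument with $\ipre$ replaced by $\ipre - 1$.
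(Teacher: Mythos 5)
Your proposal follows essentially the same route as the paper's proof: expand the perturbed loss into the clean loss plus $d\gamma^2(\|\mW_1\|_F^2+\|\mW_2\|_F^2)$ plus a weight-independent $\E[\|\malpha\mbeta\|_F^2]$ term that cancels in the difference, then bound the three remaining gaps (two norm gaps and the product gap via hybrid insertion) using $\|\mW_i-\bar\mW_i\|_F\le\exp(-C\tau)$ from \Cref{thm:pretrainformal}, and absorb everything into $(\pretrainsigma_{d,d})^2/2$ via \Cref{assum:small_pretraining_initialization}. The only differences are cosmetic (difference-of-squares factorization versus the paper's trace expansion, and explicitly evaluating $\E[\|\malpha\mbeta\|_F^2]=d^3\gamma^4$ rather than just noting it cancels), so the proposal is correct and matches the paper's argument.
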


\begin{proof}
By the definition of $\tilde \Lpre$,
\begin{align*}
    \tilde \Lpre(\vtheta) &= \E_{\malpha, \mbeta \sim \mathcal{N}(0, \gamma^2)} \left[\left\|(\mW_1 + \malpha)(\mW_2 + \mbeta) - \pretrainmap\right\|_F^2\right] \\
    &= \E_{\malpha, \mbeta \sim \mathcal{N}(0, \gamma^2)} \left[\left\|(\mW_1 + \malpha)(\mW_2 + \mbeta) - \pretrainmap\right\|_F^2\right] \\ 
    &= \| \mW_1 \mW_2 - \pretrainmap \|_F^2 + \E[\| \malpha \mbeta \|_F^2] + \E[\| \mW_1 \mbeta \|_F^2] + \E[\| \malpha \mW_2 \|_F^2].
\end{align*}

By~\Cref{lem:gaussian_perturbations_technical},
\begin{align*}
    \E[\| \mW_1 \mW_2 - \pretrainmap \|_F^2] = \| \bar \mW_1 \bar \mW_2 - \pretrainmap \|_F^2 + d\gamma^2  \left( \| \mW_1 \|_F^2 + \| \mW_2 \|_F^2 \right).
\end{align*}

Taking the difference between $\tilde \Lpre(\vtheta^{\ipre}(0))$ and $\tilde \Lpre(\bar \vtheta^{\ipre}(0))$,
\begin{align*}
    \big | \tilde \Lpre(\vtheta^{\ipre}(0)) -  \tilde \Lpre(\bar \vtheta^{\ipre}(0)) \big | \le & \big | \| \mW_1 \mW_2 - \pretrainmap \|_F^2 - \| \bar \mW_1 \bar \mW_2 - \pretrainmap \|_F^2 \big |\\ &+ d\gamma^2 \big |\| \mW_1 \|_F^2 - \| \bar \mW_1 \|_F^2 \big | \\&+ d\gamma^2 \big | \| \mW_2 \|_F^2 - \| \bar \mW_2 \|_F^2 \big |.
\end{align*}

By~\Cref{thm:pretrainformal},
\begin{align*}
    \| \mW_1 - \bar \mW_1 \|_F &\le \exp(-C\tau); \\
    \| \mW_2 - \bar \mW_2 \|_F &\le \exp(-C\tau).
\end{align*}

Here $\exp(-C\tau) \le \min\{\pretrainsigma_{1,1} / 2, 1 / 4 \}$.

Therefore,
\begin{align*}
    \big | \| \mW_1 \|_F^2 - \| \bar \mW_1  \|_F^2 \big |
    \le& \big | 2 \mathrm{Tr}( (\bar \mW_1)^T (\mW_1 - \bar \mW_1)) \big | +  \| \mW_1 - \bar \mW_1 \|_F^2 \\
    \le& 2\exp(-C\tau) \pretrainsigma_{1,1} + \exp(-2C\tau) \le 4 \exp(-C\tau) \pretrainsigma_{1,1}.
\end{align*}

Similarly,
\begin{align*}
    \big | \| \mW_2 \|_F^2 - \| \bar \mW_2  \|_F^2 \big |
    \le& \big | 2 \mathrm{Tr}( (\bar \mW_2)^T (\mW_2 - \bar \mW_2)) \big | +  \| \mW_2 - \bar \mW_2 \|_F^2 \\
    \le& 2\exp(-C\tau) \pretrainsigma_{1,1} + \exp(-2C\tau) \le 4 \exp(-C\tau) \pretrainsigma_{1,1}.
\end{align*}

Finally,
\begin{align*}
    &\big | \| \mW_1 \mW_2 - \pretrainmap \|_F^2 - \| \bar \mW_1 \bar \mW_2 - \pretrainmap \|_F^2 \big | \\
    \le& \| (\mW_1 \mW_2 - \bar \mW_1 \bar \mW_2) \|_F \| \mW_1 \mW_2 + \bar \mW_1 \bar \mW_2 - 2 \pretrainmap \|_F.
\end{align*}

Here 
\begin{align*}
    \| (\mW_1 \mW_2 - \bar \mW_1 \bar \mW_2) \|_F  
    &\le \| \mW_1 - \bar \mW_1 \|_F \| \mW_2 \|_F + \| \mW_1 \|_F \| \mW_2 - \bar \mW_2 \|_F + \| \mW_1 - \bar \mW_1 \|_F \| \mW_2 - \bar \mW_2 \|_F \\
    &\le 2 \sqrt{d}\exp(-C\tau) \pretrainsigma_{1,1} + \exp(-2C\tau) \le 4 \sqrt{d} \pretrainsigma_{1,1} \exp(-C\tau)
\end{align*}

And 
\begin{align*}
    \| \mW_1 \mW_2 + \bar \mW_1 \bar \mW_2 - 2 \pretrainmap \|_F 
\le& \| \mW_1 \mW_2 - \bar \mW_1 \bar \mW_2) \|_F  + 2 \| \bar \mW_1 \bar \mW_2 - \pretrainmap \|_F \\
\le& 2\sqrt{d}\exp(-C\tau) \pretrainsigma_{1,1} + 2\exp(-2C\tau) + 2\sqrt{d} \pretrainsigma_{1,1}  \le 4 \sqrt{d} \pretrainsigma_{1,1}.
\end{align*}

Combining the above,
\begin{align*}
    \big | \| \mW_1 \mW_2 - \pretrainmap \|_F^2 - \| \bar \mW_1 \bar \mW_2 - \pretrainmap \|_F^2 \big | \le 16d \left(\pretrainsigma_{1,1}\right)^2 \exp(-C\tau).
\end{align*}

Combining all the above, we have
\begin{align*}
    \big | \tilde \Lpre(\vtheta^{\ipre}(0)) -  \tilde \Lpre(\bar \vtheta^{\ipre}(0)) \big | \le \exp(-C\tau) 8d \pretrainsigma_{1,1} \left(2 \pretrainsigma_{1,1} + \gamma^2\right) \le (\pretrainsigma_{d, d })^2 / 2.
\end{align*}
The final inequality follows from~\Cref{assum:small_pretraining_initialization}.
\end{proof}

\subsection{Dynamic Analysis of Finetuning Process}

Before we proceed to the main result of finetuning, we will first analyze the dynamic of the finetuning process in this section.

We will introduce two auxiliary dynamics to help us track the evolution of the finetuning process.

The first auxiliary dynamic $\bar \vtheta^\ipre(t)$ is named as \emph{Ideal initialization dynamic}, which is defined as the dynamic starting from the ideal initialization $\bar \vtheta^\ipre(0)$ in~\Cref{eq:ftinitideal1,eq:ftinitideal2} with the same update rule~\Cref{eq:ft-update-w1,eq:ft-update-w2} and data order as the finetuning process.

The second auxiliary dynamic $\hat \vtheta^\ipre(t)$ is named as \emph{Ideal initialization with infinite batch size}, which is defined as the dynamic starting from the ideal initialization $\bar \vtheta^\ipre(0)$ in~\Cref{eq:ftinitideal1,eq:ftinitideal2} with the update rule~\Cref{eq:ft-update-w1-inf,eq:ft-update-w2-inf}, which corresponds to the case when the batch size is infinite and $\Sigma_k^{(x)}$ converges to the identity matrix.
\begin{align}
    \hat \mW_1^\ipre(k+1) &= \hat \mW_1^\ipre(k) - 2\eta {( \hat \vtheta^\ipre(k)-\finetunemap)}  (\hat \mW_2^\ipre(k))^\top - 2\eta \lambda(\hat \vtheta^\ipre(k)-\vtheta^\ipre(0)) (\hat \mW_2^\ipre(k))^\top \label{eq:ft-update-w1-inf} \\
    \hat \mW_2^\ipre(k+1) &= \hat \mW_2^\ipre(k) - 2\eta (\hat \mW_1^\ipre(k))^\top{(\hat \vtheta^\ipre(k)-\finetunemap)} - 2\eta \lambda(\hat \mW_1^\ipre(k))^\top  (\hat \vtheta^\ipre(k)-\vtheta^\ipre(0))  \label{eq:ft-update-w2-inf}
\end{align}

We will show the following results about these three dynamics:
\begin{enumerate}
    \item \Cref{lem:helper1} provides analytical expression for the ideal initialization dynamic with infinite batch size.
    \item \Cref{lem:helper2} shows that the ideal initialization dynamic with finite batch size is close to the ideal initialization dynamic with infinite batch size, with error bound depending on the batch size.
    \item \Cref{lem:helper3} shows that the real initialization dynamic is close to the ideal initialization dynamic, with error bound depending on the scale of pretraining initialization (which then controls the distance between the real initialization and the ideal initialization by~\Cref{thm:pretrainformal}).
    \item We conclude our analysis by providing our assumption for the main result of the paper~\Cref{assum:tech} and show that the finetuning process tracks the ideal initialization dynamic with infinite batch size closely and eventually approximately converges to the minimum (\Cref{lem:distance-to-ideal-under-tech,lem:convergence-to-minimum}).
\end{enumerate}
Throughout this subsection, we will call $\mW_1$ and $\mW_2$ as well conditioned if $\| \mW_1 \|_{\mathrm{op}} \le 2\sqrt{\Gamma}$ and $\| \mW_2 \|_{\mathrm{op}} \le 2\sqrt{\Gamma}$.

\subsubsection{Analytical Expression for the Ideal Initialization Dynamic with Infinite Batch Size}

We will introduce the following function to better track the evolution of weight in the ideal initialization dynamic with infinite batch size.
\begin{align}
    f(x; \eta,\lambda, \sigma, \sigma_0) &= x + 2\eta x ( \sigma^2 - x^2) + 2\eta \lambda (\sigma_0^2 - x^2). \label{eq:helper-f}
\end{align}

\begin{lemma}
\label{lem:helper1}
    For the ideal initialization dynamic with infinite batch size in~\Cref{eq:ft-update-w1-inf,eq:ft-update-w2-inf}, we have
    \begin{align*}
        \hat \mW_1^{\ipre}(k) = U (\Sigma^{\ipre}(k))^{1/2} \\
        \hat \mW_2^{\ipre}(k) = (\Sigma^{\ipre}(k))^{1/2} V
    \end{align*}
    where
    \begin{align*}
        \left(\Sigma^{\ipre}(k) \right)^{1/2}_{i, i} = \mathrm{1}(i \le \ipre) f^{(k)}(\left(\pretrainsigma_{i, i}\right)^{1/2}; \eta, \lambda, \left(\finetunesigma_{i, i} \right)^{1/2}, \left(\pretrainsigma_{i, i}\right)^{1/2}).
    \end{align*}
\end{lemma}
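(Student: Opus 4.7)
The plan is to proceed by a straightforward induction on the iteration index $k$, exploiting the fact that the infinite-batch update dynamics preserve the simultaneous diagonalization of both factor matrices by $U$ and $V$. For the base case $k=0$, the ideal initialization gives exactly $\hat{\mW}_1^{\ipre}(0) = U (\pretrainsigman)^{1/2}$ and $\hat{\mW}_2^{\ipre}(0) = (\pretrainsigman)^{1/2} V^\top$, which matches the claim with $(\Sigma^{\ipre}(0))^{1/2}_{i,i} = \mathrm{1}(i \le \ipre)\,(\pretrainsigma_{i,i})^{1/2}$, identifying $f^{(0)}(x;\cdots)=x$. The inductive hypothesis is that at step $k$, $\hat{\mW}_1^{\ipre}(k) = U (\Sigma^{\ipre}(k))^{1/2}$ and $\hat{\mW}_2^{\ipre}(k) = (\Sigma^{\ipre}(k))^{1/2} V^\top$ with the diagonal entries given by the $k$-fold iterate of $f$.

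For the inductive step, I will substitute this ansatz into the update rules \eqref{eq:ft-update-w1-inf}-\eqref{eq:ft-update-w2-inf}. Writing $S := \Sigma^{\ipre}(k)$, the key simplifications are that $\hat{\vtheta}^{\ipre}(k) = U S V^\top$, and that both $\finetunemap = U\finetunesigma V^\top$ and the regularization anchor (the ideal-initialization $\bar{\vtheta}^{\ipre}(0) = U\pretrainsigman V^\top$, which is the natural anchor for this dynamic) share the singular vectors $U,V$. Therefore $\hat{\vtheta}^{\ipre}(k) - \finetunemap = U(S - \finetunesigma) V^\top$ and $\hat{\vtheta}^{\ipre}(k) - \bar{\vtheta}^{\ipre}(0) = U(S - \pretrainsigman) V^\top$, both of which are simultaneously diagonalized. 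Using $V^\top V = I$ and $U^\top U = I$, the updates collapse to $\hat{\mW}_1^{\ipre}(k+1) = U D_{k+1}$ and $\hat{\mW}_2^{\ipre}(k+1) = D_{k+1} V^\top$ for a common diagonal matrix $D_{k+1}$, and reading off the $(i,i)$ entry produces exactly one application of $f$ to the $i$-th coordinate at parameters $(\eta,\lambda,(\finetunesigma_{i,i})^{1/2},(\pretrainsigma_{i,i})^{1/2})$.

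The indicator $\mathrm{1}(i \le \ipre)$ is inherited across iterations because, for $i > \ipre$, the initial value of the $i$-th coordinate of $(\Sigma^{\ipre}(0))^{1/2}$ is zero and both the finetuning gradient term and the regularization gradient term vanish on a coordinate whose current value and pretraining anchor are both zero, so such coordinates stay identically zero. The main obstacle I anticipate is the bookkeeping required to verify that a \emph{single} diagonal $D_{k+1}$ governs both $\hat{\mW}_1^{\ipre}(k+1)$ and $\hat{\mW}_2^{\ipre}(k+1)$, i.e.\ that the exact symmetry $\hat{\mW}_1 = U D$ and $\hat{\mW}_2 = D V^\top$ is preserved. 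This reduces to checking the balance condition $\hat{\mW}_1^\top \hat{\mW}_1 = \hat{\mW}_2 \hat{\mW}_2^\top$, which holds at initialization and is maintained because the two updates in \eqref{eq:ft-update-w1-inf}-\eqref{eq:ft-update-w2-inf} are mirror images under transposition composed with swapping the roles of $U$ and $V$; once this symmetry is confirmed the remaining verification is mechanical matrix algebra.
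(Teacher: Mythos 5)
Your proposal is correct and follows essentially the same route as the paper: the paper's proof also rotates into the $U,V$ basis (defining $\Sigma_1 = U^\top \mW_1$, $\Sigma_2 = \mW_2 V$), shows by induction that $\Sigma_1(k) = \Sigma_2(k)$ remain diagonal because the fine-tuning target and the regularization anchor are simultaneously diagonalized, and then reads off the per-coordinate recursion as one application of $f$. Your explicit attention to the preserved symmetry between the two factors and to the persistence of the zero coordinates for $i > \ipre$ matches (and slightly elaborates on) what the paper leaves to "through induction."
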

\begin{proof}

Consider 
\begin{align*}
    \Sigma_1^{\ipre}(k) &= U^T W_1^{\ipre}(k) \\
    \Sigma_2^{\ipre}(k) &=  W_2^{\ipre}(k) V
\end{align*}

We then have 
\begin{align*}
    \Sigma_1^{\ipre}(k + 1) = \Sigma_1^{\ipre}(k) - 2\eta \left( {\Sigma_1^{\ipre}(k)\Sigma_2^{\ipre}(k) - \finetunesigma} \right) \Sigma_2^{\ipre}(k)^T - 2\eta \lambda \left( {\Sigma_1^{\ipre}(k)\Sigma_2^{\ipre}(k) - \Sigma_1^{\ipre}(0)\Sigma_2^{\ipre}(0)} \right) \Sigma_2^{\ipre}(k)^T \\
    \Sigma_2^{\ipre}(k + 1) = \Sigma_2^{\ipre}(k) - 2\eta \Sigma_1^{\ipre}(k)^T \left( {\Sigma_1^{\ipre}(k)\Sigma_2^{\ipre}(k) - \finetunesigma} \right) - 2\eta \lambda \Sigma_1^{\ipre}(k)^T \left( {\Sigma_1^{\ipre}(k)\Sigma_2^{\ipre}(k) - \Sigma_1^{\ipre}(0)\Sigma_2^{\ipre}(0)} \right).
\end{align*}

Through induction, we can prove that $\Sigma_1^{n}(k) = \Sigma_2^{n}(k)$ are diagonal for all $k$. This then follows from the definition of $f$.
\end{proof}

This suggests that $\hat \mW_1^{\ipre}(k)$ and $\hat \mW_2^{\ipre}(k)$ is always well bounded by $\Gamma$.

\begin{assumption}
\label{assum:well-bounded-lr}
We have that learning rate $\eta$ and regularization parameter $\lambda$ are upper bounded,
\[
4 \eta (\lambda + 2) \Gamma < 1.
\] 
\end{assumption}

\begin{lemma}
\label{lem:well-conditioning-ideal-infinite}
    Under~\Cref{assum:well-bounded-lr}, for the ideal initialization dynamic with infinite batch size in~\Cref{eq:ft-update-w1-inf,eq:ft-update-w2-inf}, we have that 
    \begin{align*}
        \| \hat \mW_1^{\ipre}(k) \|_{\mathrm{op}} \le  \sqrt{\Gamma} \\
        \| \hat \mW_2^{\ipre}(k) \|_{\mathrm{op}} \le \sqrt{\Gamma}
    \end{align*}
    with $\Gamma$ being the upper bound of $\pretrainsigma$ and $\finetunesigma$ as defined in~\Cref{eq:gamma}.
\end{lemma}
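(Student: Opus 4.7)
The plan is to reduce the operator-norm bound to a one-dimensional invariant on the scalar iterates produced by \Cref{lem:helper1}, and then maintain that invariant by induction using \Cref{assum:well-bounded-lr}.

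First, I invoke \Cref{lem:helper1} to write $\hat \mW_1^{\ipre}(k) = U (\Sigma^{\ipre}(k))^{1/2}$ and $\hat \mW_2^{\ipre}(k) = (\Sigma^{\ipre}(k))^{1/2} V$, where $(\Sigma^{\ipre}(k))^{1/2}$ is diagonal and its $i$th entry equals $x_i(k) := f^{(k)}\!\bigl(\sqrt{\pretrainsigma_{i,i}};\,\eta,\lambda,\sqrt{\finetunesigma_{i,i}},\sqrt{\pretrainsigma_{i,i}}\bigr)$ for $i \le \ipre$ (and $0$ otherwise). Since $U,V$ are orthogonal and the middle factor is diagonal, both operator norms equal $\max_{i \le \ipre} |x_i(k)|$. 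Hence it suffices to prove the scalar invariant: for every $i \le \ipre$, the iterates $x_i(k)$ satisfy $0 \le x_i(k) \le \sqrt{\Gamma}$ for all $k \ge 0$.

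I then proceed by induction on $k$. The base case is immediate: $x_i(0) = \sqrt{\pretrainsigma_{i,i}} \in [0,\sqrt{\Gamma}]$ by the definition of $\Gamma$ in~\Cref{eq:gamma}. For the inductive step, fix $i$, write $x = x_i(k)$, $\sigma = \sqrt{\finetunesigma_{i,i}}$, $\sigma_0 = \sqrt{\pretrainsigma_{i,i}}$, all in $[0,\sqrt{\Gamma}]$, and analyze
\begin{align*}
f(x) - \sqrt{\Gamma} &= -(\sqrt{\Gamma}-x) + 2\eta x(\sigma^2 - x^2) + 2\eta\lambda(\sigma_0^2 - x^2).
\end{align*}
Since $\sigma^2, \sigma_0^2 \le \Gamma$, the bounds $\sigma^2 - x^2 \le (\sqrt{\Gamma}-x)(\sqrt{\Gamma}+x)$ and $\sigma_0^2 - x^2 \le (\sqrt{\Gamma}-x)(\sqrt{\Gamma}+x)$ factor out $(\sqrt{\Gamma}-x)\ge 0$, giving
\begin{align*}
f(x) - \sqrt{\Gamma} &\le -(\sqrt{\Gamma}-x)\bigl[\,1 - 2\eta(\sqrt{\Gamma}+x)(x+\lambda)\,\bigr].
\end{align*}
Using $x\le\sqrt{\Gamma}$, the bracketed quantity is at least $1 - 4\eta\sqrt{\Gamma}(\sqrt{\Gamma}+\lambda) \ge 1 - 4\eta(\lambda+2)\Gamma > 0$ by \Cref{assum:well-bounded-lr} (bounding $\sqrt{\Gamma}$ crudely in terms of $\Gamma$ via the same assumption that keeps $4\eta\Gamma < 1$, hence $\sqrt{\Gamma} \le \max\{1,\Gamma\}$ controlled by the slack of the constant $2$). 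This yields $f(x) \le \sqrt{\Gamma}$.

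For the lower bound, I write $f(x) \ge x\bigl(1 - 2\eta x^2 - 2\eta\lambda x\bigr)$, and observe that for $x \in [0,\sqrt{\Gamma}]$, $2\eta x^2 + 2\eta\lambda x \le 2\eta\Gamma + 2\eta\lambda\sqrt{\Gamma} \le 4\eta(\lambda+2)\Gamma < 1$, again by \Cref{assum:well-bounded-lr}, so $f(x)\ge 0$. The main technical obstacle is precisely this interplay between $\Gamma$ and $\sqrt{\Gamma}$ in the learning-rate condition: one must carefully verify that the single assumption $4\eta(\lambda+2)\Gamma<1$ is strong enough to absorb both the cubic-in-$x$ and the $\lambda x$ terms for any $x \le \sqrt{\Gamma}$, which is where the extra slack in the constant $(\lambda+2)$ (rather than $(\lambda+1)$) is spent. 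Once the invariant is established, the claimed operator-norm bounds on $\hat \mW_1^{\ipre}(k)$ and $\hat \mW_2^{\ipre}(k)$ follow immediately.
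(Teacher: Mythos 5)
Your reduction is exactly the paper's: invoke \Cref{lem:helper1} to diagonalize the dynamics, note that orthogonality of $U,V$ makes the operator norm equal to the largest scalar iterate, and then bound the scalar iterates. The paper finishes by citing \Cref{lem:1dconverge}, which already establishes the stronger invariant that each scalar iterate stays in $[\min\{\sigma,\sigma_0\},\max\{\sigma,\sigma_0\}]\subseteq[0,\sqrt{\Gamma}]$; you instead re-prove the weaker invariant $[0,\sqrt{\Gamma}]$ by a fresh induction. That is a legitimate self-contained alternative, but it inherits a problem from the form of $f$ you chose: you work with the version in \Cref{eq:helper-f}, where the regularization term is $2\eta\lambda(\sigma_0^2-x^2)$ with no factor of $x$. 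The actual gradient-descent recursion (and the version of $f$ used in \Cref{lem:1dconverge}) is $f(x)=x+2\eta x(\sigma^2-x^2)+2\eta\lambda x(\sigma_0^2-x^2)$; the display in \Cref{eq:helper-f} drops the $x$. With your form, the step you flag as "the main technical obstacle" is a real gap, not just bookkeeping: you need $4\eta\sqrt{\Gamma}(\sqrt{\Gamma}+\lambda)\le 4\eta(\lambda+2)\Gamma$, i.e.\ $\lambda\sqrt{\Gamma}\le(\lambda+1)\Gamma$, which fails whenever $\Gamma<1$ and $\lambda>\sqrt{\Gamma}/(1-\sqrt{\Gamma})$ (e.g.\ $\Gamma=0.01$, $\lambda=10$), so \Cref{assum:well-bounded-lr} alone does not close the induction. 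With the corrected $f$ the offending terms become $2\eta x(\lambda+1)(\sqrt{\Gamma}-x)(\sqrt{\Gamma}+x)$ and the bracket is bounded below by $1-4\eta(\lambda+1)\Gamma>0$ directly from \Cref{assum:well-bounded-lr}, and likewise the lower bound $f(x)\ge x\bigl(1-2\eta(\lambda+1)x^2\bigr)\ge 0$ follows without any $\lambda\sqrt{\Gamma}$ term. So: right structure, but either cite \Cref{lem:1dconverge} as the paper does or redo the induction with the $x$-carrying form of the recursion.
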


\begin{proof}
    This is a direct consequence of~\Cref{lem:helper1,lem:1dconverge}.
\end{proof}

Next, we will show that $(U^T \hat \vtheta^\ipre(K) V)_{i, i}$ will converge to a weighted combination of $\pretrainsigma_{i, i}$ and $\finetunesigma_{i, i}$ for finites steps $K$.

\begin{assumption}[Large Enough but Finite Steps]
\label{assum:large-enough-finite-step}
    We have that the step size $K \ge \frac{1}{\eta \min\{ \pretrainsigma_{i, i}, \finetunesigma_{i, i} \}} \log \frac{100 \Gamma}{\epsilon}$ for some constant $\epsilon > 0$.
\end{assumption}

\begin{lemma}
\label{lem:convergence-theta-ideal-infinite}
    Under~\Cref{assum:well-bounded-lr} and~\Cref{assum:large-enough-finite-step}, for the ideal initialization dynamic with infinite batch size in~\Cref{eq:ft-update-w1-inf,eq:ft-update-w2-inf}, we have that for any $i \le \ipre$,
    \begin{align*}
        \Big \| (U^T \hat \vtheta^\ipre(K) V)_{i, i} - \frac{\pretrainsigma_{i, i} + \lambda \finetunesigma_{i, i}}{1 + \lambda} \Big \|_{\mathrm{op}} \le \epsilon.
    \end{align*}
\end{lemma}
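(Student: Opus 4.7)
The plan is to reduce the matrix iteration to independent scalar iterations and then invoke the one-dimensional convergence lemma alluded to above. By~\Cref{lem:helper1}, the rotated iterate $U^\top \hat\vtheta^\ipre(k) V$ is diagonal: its $(i,i)$-entry equals $0$ for $i>\ipre$ and equals $x_i(k)^2$ for $i \le \ipre$, where $x_i(k)$ is the $k$-th iterate of the scalar map $x \mapsto f(x;\eta,\lambda,\sqrt{\finetunesigma_{i,i}},\sqrt{\pretrainsigma_{i,i}})$ initialized at $x_i(0)=\sqrt{\pretrainsigma_{i,i}}$. Hence it suffices to show, for each $i\le \ipre$, that $x_i(K)^2$ is within $\epsilon$ of the target $(\pretrainsigma_{i,i} + \lambda\finetunesigma_{i,i})/(1+\lambda)$.

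First, I would identify the stable fixed point $x^\star$ of the scalar map by solving $f(x^\star)=x^\star$. Cancelling an overall factor of $x^\star$ leaves a linear equation in $(x^\star)^2$ whose nontrivial root is exactly the convex combination $(x^\star)^2 = (\pretrainsigma_{i,i} + \lambda\finetunesigma_{i,i})/(1+\lambda)$; note that this lies in $[0,\Gamma]$ so $x^\star \in [0,\sqrt{\Gamma}]$. Second, I would differentiate $f$ at $x^\star$ to extract a local contraction factor: \Cref{assum:well-bounded-lr} guarantees that $4\eta(\lambda+2)\Gamma < 1$, so $|f'(x^\star) - 1|$ takes the form $2\eta(1+\lambda)\min\{\pretrainsigma_{i,i},\finetunesigma_{i,i}\} + O((\eta\Gamma)^2)$, which keeps $|f'(x^\star)|<1$ and yields a rate bounded below by roughly $2\eta\min\{\pretrainsigma_{i,i},\finetunesigma_{i,i}\}$.

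Third, I would feed the resulting scalar iteration into the one-dimensional convergence lemma~\Cref{lem:1dconverge} (already used in~\Cref{lem:well-conditioning-ideal-infinite} to bound $\|\hat\mW_j^\ipre(k)\|_{\mathrm{op}}$). Since $x_i(0)=\sqrt{\pretrainsigma_{i,i}}$ and $x^\star$ both lie in $[0,\sqrt{\Gamma}]$, that lemma yields an exponential bound of the form
\[
|x_i(k) - x^\star| \;\le\; \bigl(1 - c\eta\min\{\pretrainsigma_{i,i},\finetunesigma_{i,i}\}\bigr)^{k}\cdot O(\sqrt{\Gamma}),
\]
for an absolute constant $c>0$. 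Plugging in $K \ge \frac{1}{\eta\min\{\pretrainsigma_{i,i},\finetunesigma_{i,i}\}}\log\frac{100\Gamma}{\epsilon}$ from~\Cref{assum:large-enough-finite-step} pushes $|x_i(K)-x^\star|$ below $\epsilon/(2\sqrt{\Gamma})$. Squaring via $|x_i(K)^2 - (x^\star)^2| \le |x_i(K)-x^\star|\,(|x_i(K)|+|x^\star|) \le |x_i(K)-x^\star|\cdot 2\sqrt{\Gamma}$ completes the bound.

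The main obstacle will be the second/third step: establishing a genuine contraction along the \emph{entire} trajectory from $x_i(0)$ to $x^\star$, not just infinitesimally at $x^\star$. The map $f$ is cubic in $x$, so a naive Lipschitz argument on $[0,\sqrt{\Gamma}]$ does not immediately give a contraction. The natural route is to partition the interval into a ``descent'' region (where the monotone quantity $(x^\star)^2 - x^2$ shrinks by a geometric factor per step) and a neighborhood of $x^\star$ (where linearization gives the claimed rate); \Cref{assum:well-bounded-lr} is tuned precisely so iterates stay well-conditioned and a single statement like~\Cref{lem:1dconverge} can handle both regimes uniformly.
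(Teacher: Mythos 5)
Your proposal is correct and follows essentially the same route as the paper: diagonalize via \Cref{lem:helper1} to reduce to the scalar iteration of $f$, apply the one-dimensional contraction bound of \Cref{lem:1dconverge} with rate $(1-2\eta\min\{\pretrainsigma_{i,i},\finetunesigma_{i,i}\})^K$ and the step count from \Cref{assum:large-enough-finite-step}, then convert the bound on $|x_i(K)-x^\star|$ into a bound on the product $(\mW_1)_{i,i}(\mW_2)_{i,i}$ using the $O(\sqrt{\Gamma})$ bound on the iterates. The trajectory-wide contraction you flag as the main obstacle is precisely what \Cref{lem:1dconverge} already supplies, so no additional argument is needed there.
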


\begin{proof}
By~\Cref{lem:1dconverge,lem:helper1}, we have that 
\begin{align*}
    \Big | (\mW_1^\ipre(K))_{i, i} - \frac{\pretrainsigma_{i, i} + \lambda \finetunesigma_{i, i}}{1 + \lambda} \Big | \le& (1 - 2 \eta \min\{ \pretrainsigma_{i, i}, \finetunesigma_{i, i} \})^K \Big | \pretrainsigma_{i, i} - \frac{\pretrainsigma_{i, i} + \lambda \finetunesigma_{i, i}}{1 + \lambda} \Big | 
\end{align*}

This then suggests that once 
\[
K \ge \frac{1}{2 \eta \min\{ \pretrainsigma_{i, i}, \finetunesigma_{i, i} \}} \log \frac{100 \Gamma^{1/2} |\pretrainsigma_{i, i} - \finetunesigma_{i, i}|}{\epsilon},
\]

It then follows that 
\begin{align*}
    \Big | (\mW_1^\ipre(K))_{i, i} - \frac{\pretrainsigma_{i, i} + \lambda \finetunesigma_{i, i}}{1 + \lambda} \Big | \le \frac{\epsilon}{100 \Gamma^{1/2}}.
\end{align*}

Similarly, we have the bound for $(\mW_2^\ipre(K))_{i, i}$. Combining the two bounds, the proof is complete.
\end{proof}

\subsubsection{Correspondence between Ideal Initialization Dynamic with Infinite Batch Size and Finite Batch Size}

We then proceed to bound the difference between the ideal initialization dynamic with infinite batch size and the ideal initialization dynamic with finite batch size.

\begin{lemma}[4.7.3 of \cite{vershynin2018high}]
\label{lem:convergence-sigma-x}
For a fixed $k$, there exists a constant $C_1$, with probability $1 - \delta$, we have that when batch size $m \ge d + \log(1/\delta)$,
\begin{align*}
    \| \Sigma_k^{(x)} - \mI_d \|_{\mathrm{op}} \le C_1 \sqrt{\frac{d + \log(1/\delta)}{m}}
\end{align*}
\end{lemma}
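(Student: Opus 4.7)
The plan is to follow the standard Vershynin-style proof of concentration of an empirical covariance matrix of sub-Gaussian vectors, combining an $\epsilon$-net discretization of the sphere with a Bernstein bound for sub-exponential random variables. Since the statement is attributed directly to Theorem 4.7.3 of Vershynin, the goal is just to outline the argument cleanly.

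First, I would reduce the operator norm to a supremum over a finite net. For any symmetric matrix $M$, $\|M\|_{\mathrm{op}} = \sup_{v \in S^{d-1}} |v^\top M v|$, and for a $1/4$-net $\gN$ of $S^{d-1}$ one has $\|M\|_{\mathrm{op}} \le 2 \max_{v \in \gN} |v^\top M v|$, with $|\gN| \le 9^d$. Applying this to $M = \Sigma_k^{(x)} - \mI_d$ leaves me with controlling the quadratic forms $v^\top (\Sigma_k^{(x)} - \mI_d) v$ uniformly over $v \in \gN$.

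Next, I would control a single fixed quadratic form. For $v \in S^{d-1}$, write
\begin{equation*}
v^\top (\Sigma_k^{(x)} - \mI_d) v = \frac{1}{m} \sum_{i=1}^{m} \bigl(\langle x_i, v\rangle^2 - 1\bigr),
\end{equation*}
where the $x_i \in B_k$ are i.i.d.\ standard Gaussian, so $\langle x_i, v\rangle \sim \gN(0,1)$ and $\langle x_i, v\rangle^2 - 1$ is a centered sub-exponential random variable with absolute constant sub-exponential norm. Bernstein's inequality then gives, for every $t > 0$,
\begin{equation*}
\Pr\!\left[\left|\frac{1}{m}\sum_{i=1}^m \bigl(\langle x_i, v\rangle^2 - 1\bigr)\right| \ge t\right] \le 2\exp\bigl(-c m \min(t^2, t)\bigr)
\end{equation*}
for an absolute constant $c > 0$.

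Finally, I would union bound over $\gN$ with $|\gN| \le 9^d$: choose $t$ so that $c m t^2 = d \log 9 + \log(2/\delta)$, which gives $t = O(\sqrt{(d + \log(1/\delta))/m})$. The hypothesis $m \ge d + \log(1/\delta)$ ensures $t \le 1$, so the sub-Gaussian regime $\min(t^2,t) = t^2$ applies, and Bernstein yields the desired bound on each $|v^\top(\Sigma_k^{(x)} - \mI_d)v|$ uniformly over $\gN$. Multiplying by the net constant $2$ produces the stated inequality with $C_1$ an absolute constant. The only real subtlety, rather than a deep obstacle, is the hypothesis $m \ge d + \log(1/\delta)$: without it one ends up in the sub-exponential tail regime $\min(t^2,t) = t$, which gives the weaker $(d + \log(1/\delta))/m$ rate and would not match the stated $\sqrt{\cdot}$ bound.
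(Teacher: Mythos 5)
Your proposal is correct and is essentially the proof of the cited result: the paper itself gives no argument for this lemma (it is quoted directly from Vershynin), and your net--plus--Bernstein outline is precisely the argument behind Theorem 4.6.1 and Exercise 4.7.3 there, with the hypothesis $m \ge d + \log(1/\delta)$ used exactly as you say to stay in the sub-Gaussian regime of the Bernstein tail. The only implicit ingredients worth making explicit are that $\Sigma_k^{(x)} - \mI_d$ is symmetric (so the quadratic-form characterization of the operator norm applies) and that the batch $B_k$ consists of i.i.d.\ $\gN(\vzero,\mI_d)$ samples, both of which hold in the paper's setup.
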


\begin{assumption}[Large Batch Size]
\label{assum:large-batch}
We have that for constant $C_1$ defined in~\Cref{lem:convergence-sigma-x} and $\epsilon > 0$, $m \ge C_1^2 (d - \log(10 K\delta)) / \epsilon^2$.
\end{assumption}

\begin{lemma}
\label{lem:all-steps-sigma-x}
Under~\Cref{assum:large-batch}, for the ideal initialization dynamic with infinite batch size in~\Cref{eq:ft-update-w1-inf,eq:ft-update-w2-inf}, we have that
\begin{align*}
    \forall k \le K, \quad \| \Sigma_k^{(x)} - \mI_d \|_{\mathrm{op}} \le \epsilon
\end{align*}
with probability $1 - \delta$.
\end{lemma}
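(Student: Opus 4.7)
The plan is a direct application of the single-step concentration bound in~\Cref{lem:convergence-sigma-x} combined with a union bound over the $K+1$ fine-tuning steps. The key observation is that the batches $B_0, B_1, \ldots, B_K$ are drawn i.i.d.\ from the standard Gaussian input distribution and are mutually independent, so each $\Sigma_k^{(x)}$ is a sample covariance of $m$ i.i.d.\ standard Gaussian vectors whose randomness is decoupled from that of every other batch. Consequently,~\Cref{lem:convergence-sigma-x} applies individually to every $k$ without any conditioning issues tied to the dynamics defined by~\Cref{eq:ft-update-w1-inf,eq:ft-update-w2-inf}.

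First I would invoke~\Cref{lem:convergence-sigma-x} for each fixed $k \in \{0, 1, \ldots, K\}$ with failure probability $\delta/(K+1)$ in place of $\delta$, which requires $m \ge d + \log((K+1)/\delta)$. This gives the per-step bound
\begin{align*}
\Pr\!\left[\|\Sigma_k^{(x)} - \mI_d\|_{\mathrm{op}} > C_1 \sqrt{\tfrac{d + \log((K+1)/\delta)}{m}}\right] \le \frac{\delta}{K+1}.
\end{align*}
A union bound over $k$ then shows that with probability at least $1-\delta$, the deviation bound above holds simultaneously for every $k \le K$.

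The remaining step is a purely algebraic check: I would substitute the batch-size lower bound from~\Cref{assum:large-batch} into the concentration radius to verify $C_1 \sqrt{(d + \log((K+1)/\delta))/m} \le \epsilon$. Since $\log((K+1)/\delta)$ is comparable to $\log(10K/\delta)$ for $K \ge 1$, the constants and logarithmic slack built into~\Cref{assum:large-batch} are exactly what is needed to absorb the $K+1$ factor produced by the union bound. I do not anticipate any real obstacle: the argument is a textbook covariance concentration plus union bound, and the only care required is matching the logarithmic factors so that the assumption dominates the radius produced in the second step.
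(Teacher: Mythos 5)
Your proposal is correct and matches the paper's intended argument: the paper's proof of this lemma is a one-line appeal to \Cref{lem:convergence-sigma-x} and \Cref{assum:large-batch}, with the union bound over the $K+1$ independent batches left implicit, which is exactly what you spell out. The only point worth flagging is that the assumption as written has $d - \log(10K\delta)$ rather than the $d + \log(K/\delta)$ your union bound literally requires (apparently a sign/typo issue in the paper's constant), but your reading of the intended logarithmic slack is the right one.
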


\begin{proof}
    This is a direct consequence of~\Cref{lem:convergence-sigma-x} and~\Cref{assum:large-batch}.
\end{proof}

\begin{lemma}
\label{lem:one-step-error-bound-between-rules}
When the event defined in~\Cref{assum:large-batch} happens, for any $k \le K$, for the same well-conditioned parameter $\vtheta(k)$ and $\vtheta(0)$, if applying the update rule~\Cref{eq:ft-update-w1-inf,eq:ft-update-w2-inf} yield $\hat \vtheta(k+1)$ and applying the update rule~\Cref{eq:ft-update-w1,eq:ft-update-w2} yield $\bar \vtheta(k+1)$, then the difference between $\hat \vtheta(k+1)$ and $\bar \vtheta(k+1)$ is bounded by
\begin{align*}
    \| \hat \mW_1(k+1) - \bar \mW_1(k+1) \|_{\mathrm{op}} \le 32\eta \epsilon \Gamma^{3/2} \\
    \| \hat \mW_2(k+1) - \bar \mW_2(k+1) \|_{\mathrm{op}} \le 32\eta \epsilon \Gamma^{3/2}
\end{align*}
\end{lemma}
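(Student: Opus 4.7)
The argument is a short one-step computation. First I would subtract the two sets of update rules---\Cref{eq:ft-update-w1,eq:ft-update-w2} versus~\Cref{eq:ft-update-w1-inf,eq:ft-update-w2-inf}---when both are initialized at the same parameters $\vtheta(k)$ and $\vtheta(0)$. The regularization term $-2\eta\lambda(\vtheta(k)-\vtheta(0))\mW_2(k)^\top$ appears identically in both rules (it does not involve $\Sigma_k^{(x)}$) and therefore cancels in the difference, leaving only the data-fit contribution:
\[
\hat\mW_1(k+1) - \bar\mW_1(k+1) = 2\eta\,(\vtheta(k)-\finetunemap)\,(\Sigma_k^{(x)}-\identity)\,\mW_2(k)^\top,
\]
with the analogous identity for $\mW_2$ obtained by swapping the roles of the two factors.

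Next I would take the operator norm of both sides and apply sub-multiplicativity to obtain
\[
\|\hat\mW_1(k+1) - \bar\mW_1(k+1)\|_{\mathrm{op}} \le 2\eta\,\|\vtheta(k)-\finetunemap\|_{\mathrm{op}}\,\|\Sigma_k^{(x)}-\identity\|_{\mathrm{op}}\,\|\mW_2(k)\|_{\mathrm{op}}.
\]
Each factor is then controlled by an explicit hypothesis: $\|\Sigma_k^{(x)}-\identity\|_{\mathrm{op}}\le\epsilon$ by~\Cref{lem:all-steps-sigma-x}; $\|\mW_2(k)\|_{\mathrm{op}}\le 2\sqrt{\Gamma}$ by the well-conditioning hypothesis; and the triangle inequality combined with well-conditioning of $\mW_1,\mW_2$ and the definition~\eqref{eq:gamma} of $\Gamma$ gives $\|\vtheta(k)-\finetunemap\|_{\mathrm{op}} \le \|\mW_1(k)\|_{\mathrm{op}}\|\mW_2(k)\|_{\mathrm{op}} + \|\finetunemap\|_{\mathrm{op}} \le 4\Gamma + \Gamma \le 8\Gamma$. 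Multiplying these factors yields the advertised bound $32\eta\epsilon\Gamma^{3/2}$, and the $\mW_2$ inequality follows by the mirror-image calculation in which $\mW_1(k)$ plays the role of $\mW_2(k)^\top$.

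There is no substantive obstacle here; the only care needed is to verify that the regularization term truly cancels (which it does because $\Sigma_k^{(x)}$ appears only in the data-fit term, not in the regularizer) and to be slightly loose in the bound on $\|\vtheta(k)-\finetunemap\|_{\mathrm{op}}$ so that the clean constant $32$ emerges rather than the tighter value $20$ that the strictest bookkeeping would give.
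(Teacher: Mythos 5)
Your proposal is correct and follows essentially the same route as the paper's proof: subtract the two update rules so the regularization terms cancel, isolate the factor $\Sigma_k^{(x)}-\mI_d$, apply sub-multiplicativity of the operator norm, and bound the three factors by $\epsilon$, $2\sqrt{\Gamma}$, and a crude $O(\Gamma)$ bound on $\|\vtheta(k)-\finetunemap\|_{\mathrm{op}}$. Your observation that the strictest bookkeeping gives a constant of $20$ while the stated bound uses the looser $32$ matches the paper's (unstated) slack.
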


\begin{proof}

Taking the difference between the two update rules, we have that
\begin{align*}
\| \hat \mW_1(k+1) - \bar \mW_1(k+1) \|_{\mathrm{op}}&= 2\eta \| ( \vtheta(k) - \finetunemap)  \left( \Sigma_k^{(x)} - \mI_d \right) \mW_2(k)^\top \|_{\mathrm{op}}  \\
&\le 2\eta \| \vtheta(k) - \finetunemap \|_{\mathrm{op}} \| \Sigma_k^{(x)} - \mI_d \|_{\mathrm{op}} \| \mW_2(k) \|_{\mathrm{op}} \\
&\le 2\eta \| \vtheta(k) - \finetunemap \|_{\mathrm{op}} \epsilon \| \mW_2(k) \|_{\mathrm{op}} \\
&\le 32\eta \epsilon \Gamma^{3/2}.
\end{align*}
Similarly we can have the bound for $\| \hat \mW_2(k+1) - \bar \mW_2(k+1) \|_{\mathrm{op}}$.
\end{proof}

\begin{lemma}
\label{lem:one-step-error-bound-between-init}
When the event defined in~\Cref{assum:large-batch} happens, for the ideal initialization dynamic with infinite batch size in~\Cref{eq:ft-update-w1-inf,eq:ft-update-w2-inf}, consider two different well-conditioned parameters $\vtheta(k)$ and $\vtheta'(k)$ with the same initialization $\vtheta(0)$,
denote $\epsilon_k = \max\{ \| \mW_1(k) - \mW_1'(k) \|_{\mathrm{op}}, \| \mW_2(k) - \mW_2'(k) \|_{\mathrm{op}} \}$.
we have that 
\begin{align*}
    \epsilon_{k + 1} \le (1 + 16 \eta \Gamma) \epsilon_k. 
\end{align*}
\end{lemma}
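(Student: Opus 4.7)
Since both $\vtheta(k)$ and $\vtheta'(k)$ evolve under the same deterministic infinite-batch update~\eqref{eq:ft-update-w1-inf}--\eqref{eq:ft-update-w2-inf}, the plan is to subtract the two update equations and bound the resulting operator norm by a standard add-and-subtract decomposition. Concretely, writing
\begin{align*}
\mW_1(k{+}1) - \mW_1'(k{+}1) = (\mW_1(k) - \mW_1'(k)) &- 2\eta\bigl[(\vtheta(k)-\finetunemap)\mW_2(k)^\top - (\vtheta'(k)-\finetunemap)\mW_2'(k)^\top\bigr] \\
&- 2\eta\lambda\bigl[(\vtheta(k)-\vtheta(0))\mW_2(k)^\top - (\vtheta'(k)-\vtheta(0))\mW_2'(k)^\top\bigr],
\end{align*}
the goal is to show each bracket is $O(\Gamma)\epsilon_k$ in operator norm, after which the triangle inequality delivers the claim.

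\textbf{Steps.} First I would bound $\|\vtheta(k)-\vtheta'(k)\|_{\mathrm{op}}$ using the identity $\vtheta(k)-\vtheta'(k) = (\mW_1(k)-\mW_1'(k))\mW_2(k) + \mW_1'(k)(\mW_2(k)-\mW_2'(k))$, invoking the well-conditioning hypotheses $\|\mW_i(k)\|_{\mathrm{op}}, \|\mW_i'(k)\|_{\mathrm{op}} \le 2\sqrt{\Gamma}$ to get $\|\vtheta(k)-\vtheta'(k)\|_{\mathrm{op}} \le 4\sqrt{\Gamma}\,\epsilon_k$. Second, I would split the data-fitting bracket as
\begin{equation*}
(\vtheta(k)-\vtheta'(k))\mW_2(k)^\top + (\vtheta'(k)-\finetunemap)(\mW_2(k)-\mW_2'(k))^\top,
\end{equation*}
and bound each factor using $\|\mW_2(k)\|_{\mathrm{op}} \le 2\sqrt{\Gamma}$ and $\|\vtheta'(k)-\finetunemap\|_{\mathrm{op}} \le \|\mW_1'(k)\|_{\mathrm{op}}\|\mW_2'(k)\|_{\mathrm{op}} + \Gamma \le 5\Gamma$. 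Third, I would do the same for the regularization bracket, where the counterpart of $\|\vtheta'(k)-\finetunemap\|_{\mathrm{op}}$ is $\|\vtheta'(k)-\vtheta(0)\|_{\mathrm{op}}$ and is likewise $O(\Gamma)$ since both $\vtheta'(k)$ and $\vtheta(0)$ are well-conditioned. Fourth, I would collect terms, using Assumption~\ref{assum:well-bounded-lr} (which forces $\eta(\lambda+2)\Gamma$ small) to fold the $\lambda$-dependent contribution into the universal constant $16$. The bound for $\mW_2$ is identical by symmetry, so taking the max over the two preserves the same constant.

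\textbf{Main obstacle.} The proof contains no conceptual difficulty; it is a one-step Lipschitz-type calculation of the same flavor as~\Cref{lem:one-step-error-bound-between-rules}. The only real work is constant bookkeeping: tracking which factors pick up $\sqrt{\Gamma}$ versus $\Gamma$, and verifying that after including the $2\eta\lambda$ regularization terms the aggregate coefficient fits inside $16\eta\Gamma$ under Assumption~\ref{assum:well-bounded-lr}. If instead one wanted a cleaner but looser bound, dropping $\lambda$ into the constant and writing $(1+C\eta(1+\lambda)\Gamma)\epsilon_k$ is immediate, and the sharper statement $(1+16\eta\Gamma)\epsilon_k$ follows by using $\lambda$ only through the hypothesis that $\eta(\lambda+2)\Gamma$ is $O(1)$.
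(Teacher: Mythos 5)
Your decomposition is essentially the paper's: both proofs subtract the two recursions, split the product difference with an add-and-subtract cross term, and invoke well-conditioning ($\|\mW_i\|_{\mathrm{op}}\le 2\sqrt{\Gamma}$) to get $\|\vtheta(k)-\vtheta'(k)\|_{\mathrm{op}}\le 4\sqrt{\Gamma}\,\epsilon_k$ and then an $O(\Gamma)\epsilon_k$ bound on the bracket. The only structural difference is that the paper first folds the data-fitting and regularization terms into a single effective target $\targetmap=\frac{\lambda\pretrainmap+\finetunemap}{1+\lambda}$ so that only one bracket appears, whereas you bound the two brackets separately; this changes nothing of substance. The one point where your write-up goes wrong is the claim that the $\lambda$-dependent contribution can be folded into the constant $16$ ``through the hypothesis that $\eta(\lambda+2)\Gamma$ is $O(1)$'': the regularization bracket contributes a term of order $\eta\lambda\Gamma\,\epsilon_k$, and knowing only $\eta\lambda\Gamma=O(1)$ would yield a per-step factor $1+16\eta\Gamma+O(1)$, which is a qualitatively weaker contraction and would destroy the $(1+16\eta\Gamma)^K$ accumulation that \Cref{lem:helper2} relies on. The honest per-step bound from either decomposition is $(1+C(1+\lambda)\eta\Gamma)\epsilon_k$; the paper arrives at the $\lambda$-free constant only because rewriting the update via $\targetmap$ silently drops a $(1+\lambda)$ prefactor (and its last line also loses the leading factor of $2\eta$), so the stated constant $16$ is loose in the source as well. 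Up to this constant bookkeeping, which you correctly identified as the only real work, your argument is sound.
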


\begin{proof}

Define $\targetmap = \frac{\lambda \pretrainmap + \finetunemap}{1 + \lambda}$.

Given the update rule, we have that
\begin{align*}
    \mW_1(k+1) - \mW_1'(k+1) = \underbrace{(\mW_1(k) - \mW_1'(k))}_{\text{prev error}} - 2\eta \left[ (\vtheta(k) - \targetmap)\mW_2(k)^\top - (\vtheta'(k) - \targetmap) \mW_2'(k)^\top \right].
\end{align*}

We only need to properly bound the second term,
\begin{align*}
   &\|  \left[ (\vtheta(k) - \targetmap)\mW_2(k)^\top - (\vtheta'(k) - \targetmap) \mW_2'(k)^\top \right] \|_{\mathrm{op}} \\
    \le &\| \vtheta(k) - \vtheta'(k) \|_{\mathrm{op}} \| \mW_2(k) \|_{\mathrm{op}} 
   + \| \vtheta(k) - \targetmap \|_{\mathrm{op}} \| \mW_2(k) - \mW_2'(k) \|_{\mathrm{op}} 
\end{align*}

The difference between $\vtheta(k)$ and $\vtheta'(k)$ is bounded by
\begin{align*}
    \| \vtheta(k) - \vtheta'(k) \|_{\mathrm{op}} \le \| \mW_1(k) - \mW_1'(k) \|_{\mathrm{op}} \| \mW_2(k) \|_{\mathrm{op}} + \| \mW_1'(k) \|_{\mathrm{op}} \| \mW_2(k) - \mW_2'(k) \|_{\mathrm{op}} 
    \le 4\sqrt{\Gamma} \epsilon_k. 
\end{align*}

Therefore, we have that
\begin{align*}
    &\|  \left[ (\vtheta(k) - \targetmap)\mW_2(k)^\top - (\vtheta'(k) - \targetmap) \mW_2'(k)^\top \right] \|_{\mathrm{op}}  \le 16 \Gamma \epsilon_k. 
 \end{align*}

 We then concludes that
 \begin{align*}
\epsilon_{k + 1} \le (1 + 16 \eta \Gamma) \epsilon_k. 
\end{align*}
This then concludes the proof.
\end{proof}

\begin{lemma}
\label{lem:helper2}
When the event defined in~\Cref{lem:all-steps-sigma-x} happens for $\epsilon < \frac{1}{4(1 + 16 \eta \Gamma)^K}$, define the error between the ideal initialization dynamic with infinite batch size and the ideal initialization dynamic with finite batch size as $\varepsilon_k = \max\{ \| \hat \mW_1(k) - \bar \mW_1(k) \|_{\mathrm{op}}, \| \hat\mW_2(k) - \bar\mW_2(k) \|_{\mathrm{op}} \}$, then we have that
\begin{align*}
    \varepsilon_{k} \le 2 (1 + 16 \eta \Gamma)^k \epsilon \Gamma^{1/2} < \Gamma^{1/2}/2. 
\end{align*}
\end{lemma}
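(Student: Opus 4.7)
The strategy is a standard hybrid / induction-on-$k$ argument that compares the two trajectories one step at a time, bootstrapping the two already-proved one-step lemmas. Concretely, the plan is to set up the induction hypothesis
\[\varepsilon_k \le 2(1 + 16\eta\Gamma)^k \epsilon\,\Gamma^{1/2},\]
verify the base case $\varepsilon_0 = 0$ (both dynamics share the same ideal initialization of \Cref{eq:ftinitideal1,eq:ftinitideal2}), and then carry out the inductive step by introducing a single intermediate iterate.

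\textbf{The hybrid step.} For the inductive step, given iterates $\hat\vtheta^{\ipre}(k)$ (infinite batch) and $\bar\vtheta^{\ipre}(k)$ (finite batch), I would introduce the intermediate iterate $\tilde\vtheta^{\ipre}(k+1)$ obtained by applying the \emph{infinite-batch} update rule \Cref{eq:ft-update-w1-inf,eq:ft-update-w2-inf} to $\bar\vtheta^{\ipre}(k)$. Then the triangle inequality gives
\[\varepsilon_{k+1} \le \underbrace{\max_{j}\|\hat\mW_j^{\ipre}(k+1) - \tilde\mW_j^{\ipre}(k+1)\|_{\mathrm{op}}}_{\text{same update, different starting points}} + \underbrace{\max_{j}\|\tilde\mW_j^{\ipre}(k+1) - \bar\mW_j^{\ipre}(k+1)\|_{\mathrm{op}}}_{\text{different updates, same starting point}}.\]
The first term is controlled by \Cref{lem:one-step-error-bound-between-init}, which gives a factor of $(1 + 16\eta\Gamma)$ on $\varepsilon_k$; the second term is controlled by \Cref{lem:one-step-error-bound-between-rules}, which contributes at most $32\eta\epsilon\,\Gamma^{3/2}$. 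Combining,
\[\varepsilon_{k+1} \le (1 + 16\eta\Gamma)\,\varepsilon_k + 32\eta\epsilon\,\Gamma^{3/2}.\]
Unrolling this linear recursion with $\varepsilon_0 = 0$ gives $\varepsilon_k \le 2\epsilon\,\Gamma^{1/2}\bigl((1+16\eta\Gamma)^k - 1\bigr)$, which is dominated by $2(1+16\eta\Gamma)^k \epsilon\,\Gamma^{1/2}$, matching the claim. The final upper bound $\varepsilon_k < \Gamma^{1/2}/2$ then follows directly from the hypothesis $\epsilon < 1/(4(1+16\eta\Gamma)^K)$ and $k \le K$.

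\textbf{The main obstacle.} Both one-step lemmas require their inputs to be \emph{well-conditioned} (operator norm $\le 2\sqrt{\Gamma}$). For $\hat\vtheta^{\ipre}(k)$ this is free from \Cref{lem:well-conditioning-ideal-infinite}, but for $\bar\vtheta^{\ipre}(k)$ and the intermediate $\tilde\vtheta^{\ipre}(k+1)$ it is not a priori clear. The way to handle this is to fold well-conditioning of $\bar\vtheta^{\ipre}(k)$ into the induction hypothesis: since $\|\hat\mW_j^{\ipre}(k)\|_{\mathrm{op}} \le \sqrt{\Gamma}$ and $\varepsilon_k < \Gamma^{1/2}/2$ throughout the induction (using the bound on $\epsilon$), the triangle inequality yields $\|\bar\mW_j^{\ipre}(k)\|_{\mathrm{op}} \le \tfrac{3}{2}\sqrt{\Gamma} \le 2\sqrt{\Gamma}$, so the well-conditioning preconditions of both one-step lemmas are met. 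One has to take a little care that this well-conditioning bound is itself maintained inductively and that $\tilde\vtheta^{\ipre}(k+1)$ is well-conditioned too (which follows immediately since $\hat\vtheta^{\ipre}(k+1)$ is, and we only add the first-term error).

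\textbf{Closing.} After these bookkeeping steps the induction closes cleanly and yields both inequalities of the lemma. No further delicate analysis is needed; the entire argument is a short induction built on top of \Cref{lem:one-step-error-bound-between-rules}, \Cref{lem:one-step-error-bound-between-init}, and \Cref{lem:well-conditioning-ideal-infinite}.
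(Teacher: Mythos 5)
Your proposal is correct and follows essentially the same route as the paper's proof: combine \Cref{lem:one-step-error-bound-between-rules} and \Cref{lem:one-step-error-bound-between-init} into the one-step recursion $\varepsilon_{k+1} \le (1+16\eta\Gamma)\varepsilon_k + 32\eta\epsilon\Gamma^{3/2}$, unroll it from $\varepsilon_0=0$, and invoke the bound on $\epsilon$. Your explicit treatment of the hybrid intermediate iterate and of the well-conditioning of $\bar\vtheta^{\ipre}(k)$ via the triangle inequality is exactly the bookkeeping the paper leaves implicit.
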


\begin{proof}
From~\Cref{lem:well-conditioning-ideal-infinite}, we have that $\hat \vtheta$ is well-conditioned, if $\bar \vtheta$ is well-conditioned, combining~\Cref{lem:one-step-error-bound-between-rules,lem:one-step-error-bound-between-init}, we have that
\begin{align*}
    \varepsilon_{k + 1} \le (1 + 16 \eta \Gamma) \varepsilon_k + 32\eta \epsilon \Gamma^{3/2}.
\end{align*}

Now we can inductively prove that for $k \in [0, K]$,
\begin{align*}
    \varepsilon_k \le \left((1 + 16 \eta \Gamma)^k - 1\right) 2 \epsilon \Gamma^{1/2}.
\end{align*}

Given that $\epsilon < \frac{1}{2(1 + 16 \eta \Gamma)^K}$, we have that
\begin{align*}
    \varepsilon_K < \Gamma^{1/2}/4.
\end{align*}

This then concludes the proof.
\end{proof}

\subsubsection{Error Incurs by Different Initialization}

Finally, we will show that the real initialization dynamic is close to the ideal initialization dynamic, with error bound depending on the scale of pretraining initialization (which then controls the distance between the real initialization and the ideal initialization by~\Cref{thm:pretrainformal}).

\begin{lemma}
    \label{lem:one-step-error-bound-between-init-finite}
    When the event defined in~\Cref{lem:all-steps-sigma-x} happens for $\epsilon < \frac{1}{4(1 + 16 \eta \Gamma)^K}$,  for the ideal initialization dynamic with finite batch size in~\Cref{eq:ft-update-w1,eq:ft-update-w2}, consider two different well-conditioned parameters $\vtheta(k)$ and $\vtheta'(k)$ with the same initialization $\vtheta(0)$,
denote $\epsilon_k = \max\{ \| \mW_1(k) - \mW_1'(k) \|_{\mathrm{op}}, \| \mW_2(k) - \mW_2'(k) \|_{\mathrm{op}} \}$.
we have that 
\begin{align*}
    \epsilon_{k + 1} \le (1 + 32 \eta \Gamma) \epsilon_k. 
\end{align*}
\end{lemma}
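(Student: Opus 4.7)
The plan is to mimic the proof of Lemma~\ref{lem:one-step-error-bound-between-init} almost verbatim, with the only new ingredient being the handling of the factor $\Sigma_k^{(x)}$ that sits between $(\vtheta - \finetunemap)$ and $\mW_2^\top$ in the finite-batch update (\Cref{eq:ft-update-w1,eq:ft-update-w2}). Intuitively, on the good event of \Cref{lem:all-steps-sigma-x} we have $\|\Sigma_k^{(x)}\|_{\mathrm{op}} \le 1 + \epsilon \le 2$ (since the hypothesis $\epsilon < \tfrac{1}{4(1+16\eta\Gamma)^K}$ forces $\epsilon < 1/4$), so the only change relative to the previous lemma is one extra factor of $2$ in the gradient-difference bound, which is exactly what upgrades the one-step contraction constant from $(1+16\eta\Gamma)$ to $(1+32\eta\Gamma)$.

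First, I would expand the one-step difference using the finite-batch rule. Since both trajectories share the same $\vtheta(0)$ and the same batch $B_k$,
\[
\mW_1(k+1) - \mW_1'(k+1) = \bigl(\mW_1(k) - \mW_1'(k)\bigr) - 2\eta\,\Delta^{\mathrm{ft}}_k - 2\eta\lambda\,\Delta^{\mathrm{reg}}_k,
\]
where $\Delta^{\mathrm{ft}}_k = (\vtheta(k)-\finetunemap)\Sigma_k^{(x)}\mW_2(k)^\top - (\vtheta'(k)-\finetunemap)\Sigma_k^{(x)}\mW_2'(k)^\top$ and $\Delta^{\mathrm{reg}}_k$ is the analogous regularization-gradient difference (with no $\Sigma_k^{(x)}$ factor, since $\lambda$-regularization is estimated exactly). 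I would then split each $\Delta$ as in the previous lemma:
\[
\Delta^{\mathrm{ft}}_k = (\vtheta(k)-\vtheta'(k))\Sigma_k^{(x)}\mW_2(k)^\top + (\vtheta'(k)-\finetunemap)\Sigma_k^{(x)}(\mW_2(k)-\mW_2'(k))^\top,
\]
and analogously for $\Delta^{\mathrm{reg}}_k$ with $\finetunemap$ replaced by $\vtheta(0)$ and $\Sigma_k^{(x)}$ removed.

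Second, I would apply submultiplicativity of the operator norm together with the well-conditioning hypothesis ($\|\mW_i\|_{\mathrm{op}}, \|\mW_i'\|_{\mathrm{op}} \le 2\sqrt{\Gamma}$), the bound $\|\vtheta(k) - \vtheta'(k)\|_{\mathrm{op}} \le 4\sqrt{\Gamma}\epsilon_k$ (identical to the previous lemma), and the trivial norm bounds $\|\vtheta(k)-\finetunemap\|_{\mathrm{op}}, \|\vtheta(k)-\vtheta(0)\|_{\mathrm{op}} = O(\Gamma)$. The bound on $\Delta^{\mathrm{ft}}_k$ picks up an extra factor of $\|\Sigma_k^{(x)}\|_{\mathrm{op}} \le 2$ compared to the previous lemma, yielding $\|\Delta^{\mathrm{ft}}_k\|_{\mathrm{op}} \le 2\cdot(8\Gamma + 5\Gamma)\epsilon_k \le 32\Gamma\epsilon_k/(2)$ up to constants, while $\|\Delta^{\mathrm{reg}}_k\|_{\mathrm{op}}$ is bounded with no $\Sigma_k^{(x)}$ factor and is absorbed into the same $32\eta\Gamma$ constant using \Cref{assum:well-bounded-lr}, which enforces $\eta\lambda\Gamma = O(1)$.

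Third, combining these gives $\|\mW_1(k+1)-\mW_1'(k+1)\|_{\mathrm{op}} \le (1+32\eta\Gamma)\epsilon_k$, and a symmetric argument handles $\mW_2$, so $\epsilon_{k+1} \le (1+32\eta\Gamma)\epsilon_k$ as claimed. The main obstacle is purely clerical: tracking the constants through each triangle-inequality step carefully enough to verify that the doubled factor from $\|\Sigma_k^{(x)}\|_{\mathrm{op}}\le 2$ together with the regularization term collapses into the stated constant $32$. No new structural idea is required beyond Lemma~\ref{lem:one-step-error-bound-between-init}.
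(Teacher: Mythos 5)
Your proposal is correct and is essentially the paper's intended argument: the paper omits this proof entirely, stating only that it is ``similar to'' Lemma~\ref{lem:one-step-error-bound-between-init}, and your write-up supplies exactly the missing adaptation, namely that on the event of Lemma~\ref{lem:all-steps-sigma-x} the extra factor $\|\Sigma_k^{(x)}\|_{\mathrm{op}} \le 1+\epsilon \le 2$ in the data-fitting term doubles the one-step constant from $16\eta\Gamma$ to $32\eta\Gamma$. Your constant bookkeeping is no looser than the paper's own in the infinite-batch lemma, so no gap remains.
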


\begin{proof}
    The proof is similar to~\Cref{lem:one-step-error-bound-between-init} and is omitted here.
\end{proof}

\begin{lemma}
    \label{lem:helper3}
    When the event defined in~\Cref{lem:all-steps-sigma-x} happens for $\epsilon < \frac{1}{4(1 + 32 \eta \Gamma)^K}$, consider two finetuning processes, with $\vtheta^\ipre(t)$ starts from the real initialization $\vtheta(\ipre)$ in~\Cref{thm:pretrainformal} and $\bar \vtheta^\ipre(t)$ starts from the ideal initialization $\bar \vtheta(\ipre)$ in~\Cref{eq:ftinitideal1,eq:ftinitideal2}. Then the two processes are close to each other for all $k \le K$, 
    \begin{align*}
        \| \mW_1^\ipre(k) - \bar \mW_1^\ipre(k) \|_{\mathrm{op}} &\le  \left(1 + 32 \eta \Gamma \right)^k \exp(-C\tau). \\
        \| \mW_2^\ipre(k) - \bar \mW_2^\ipre(k) \|_{\mathrm{op}} &\le  \left(1 + 32 \eta \Gamma \right)^k \exp(-C\tau).
    \end{align*}
    \end{lemma}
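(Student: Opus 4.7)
The plan is to prove both inequalities simultaneously by induction on $k$, using Lemma~\ref{lem:one-step-error-bound-between-init-finite} as the core recursive tool. The base case at $k = 0$ is immediate: by Theorem~\ref{thm:pretrainformal} (applied to the actual pretraining endpoint $\vtheta(t_\ipre)$), we have $\|\mW_i^\ipre(0) - \bar\mW_i^\ipre(0)\|_{\mathrm{op}} \le \|\mW_i^\ipre(0) - \bar\mW_i^\ipre(0)\|_F \le \exp(-C\tau)$ for $i \in \{1,2\}$, matching the claimed bound at $k = 0$.

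For the inductive step, suppose the claim holds at step $k \le K-1$. To invoke Lemma~\ref{lem:one-step-error-bound-between-init-finite}, I must certify that both $\vtheta^\ipre(k)$ and $\bar\vtheta^\ipre(k)$ are well-conditioned, i.e., each factor has operator norm at most $2\sqrt{\Gamma}$. The well-conditioning of $\bar\vtheta^\ipre(k)$ follows by combining Lemma~\ref{lem:well-conditioning-ideal-infinite} (which gives $\|\hat\mW_i^\ipre(k)\|_{\mathrm{op}} \le \sqrt{\Gamma}$) with Lemma~\ref{lem:helper2} (which gives $\|\hat\mW_i^\ipre(k) - \bar\mW_i^\ipre(k)\|_{\mathrm{op}} \le \sqrt{\Gamma}/2$), so by the triangle inequality $\|\bar\mW_i^\ipre(k)\|_{\mathrm{op}} \le 3\sqrt{\Gamma}/2 \le 2\sqrt{\Gamma}$. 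The well-conditioning of $\mW_i^\ipre(k)$ then follows from the inductive distance bound, provided $(1 + 32\eta\Gamma)^K \exp(-C\tau) \le \sqrt{\Gamma}/2$, which is ensured by Assumption~\ref{assum:small_pretraining_initialization} (possibly after a mild strengthening that makes the dependence on $K$ and $\eta$ explicit). Once both trajectories are well-conditioned, Lemma~\ref{lem:one-step-error-bound-between-init-finite} applies and yields the one-step amplification $\epsilon_{k+1} \le (1 + 32\eta\Gamma)\,\epsilon_k$, which together with the inductive hypothesis closes the induction at $(1 + 32\eta\Gamma)^{k+1}\exp(-C\tau)$.

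The main obstacle is the circular flavor of the argument: Lemma~\ref{lem:one-step-error-bound-between-init-finite} assumes well-conditioning of the real-initialization iterate $\vtheta^\ipre(k)$, but our only handle on this comes from the distance bound we are in the middle of proving. I would resolve this by explicitly folding a well-conditioning clause into the inductive hypothesis (simultaneously maintaining $\|\mW_i^\ipre(k)\|_{\mathrm{op}} \le 2\sqrt{\Gamma}$ and the claimed distance bound), and verifying both clauses in lockstep. This mirrors how Lemma~\ref{lem:helper2} handles the analogous issue between the finite- and infinite-batch ideal dynamics, and it avoids strengthening the global assumptions beyond what is already needed to make Lemma~\ref{lem:helper2} meaningful. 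A minor secondary obstacle is handling the fact that Theorem~\ref{thm:pretrainformal} gives a Frobenius-norm bound while we want an operator-norm bound, but the former dominates the latter, so no additional work is needed there.
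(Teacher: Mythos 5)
Your proposal is correct and follows essentially the same route as the paper's proof: induction on $k$ with base case $\exp(-C\tau)$ from \Cref{thm:pretrainformal}, the one-step contraction-factor bound from \Cref{lem:one-step-error-bound-between-init-finite}, and well-conditioning of the ideal trajectory obtained from \Cref{lem:well-conditioning-ideal-infinite} together with \Cref{lem:helper2}. If anything, you are more careful than the paper, which merely writes ``if $\vtheta$ is well-conditioned'' without closing the loop; your device of folding the well-conditioning of the real-initialization iterate into the inductive hypothesis (justified via item 5 of \Cref{assum:tech}, which gives $(1+32\eta\Gamma)^K\exp(-C\tau)\le \Gamma^{1/2}\epsilon$) is exactly what is needed to make the argument airtight.
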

    
    \begin{proof}

    Define $\tilde \varepsilon_k = \max\{ \| \mW_1^\ipre(k) - \bar \mW_1^\ipre(k) \|_F, \| \mW_2^\ipre(k) - \bar \mW_2^\ipre(k) \|_F \}$. By~\Cref{lem:helper2}, $\bar \vtheta$ is well-conditioned, if $\vtheta$ is well-conditioned, combining~\Cref{lem:one-step-error-bound-between-init-finite}, we have that
    \begin{align*}
        \tilde \varepsilon_{k + 1} \le (1 + 32 \eta \Gamma) \tilde \varepsilon_k.
    \end{align*}
    
    This then suggests that
    \begin{align*}
        \tilde \varepsilon_k \le \left(1 + 32 \eta \Gamma \right)^k \exp(-C\tau).
    \end{align*}
    
    This then concludes the proof.
    \end{proof}

\subsubsection{Combing Two Approximations}

\begin{lemma}
\label{lem:from-real-to-ideal}
Under~\Cref{assum:well-bounded-lr} and~\Cref{assum:large-batch}, for $\epsilon < \frac{1}{4(1 + 16 \eta \Gamma)^K}$, with probability $1 - \delta$, we have that both $\mW_1^\ipre(k)$ and $\mW_2^\ipre(k)$ are well-conditioned and 
\begin{align*}
    \| \mW_1^\ipre(k) - \hat \mW_1^\ipre(k) \|_{\mathrm{op}} &\le  \left(1 + 32 \eta \Gamma \right)^k \exp(-C\tau) + 2(1 + 16 \eta \Gamma)^k \Gamma^{1/2} \epsilon. \\
    \| \mW_2^\ipre(k) - \hat \mW_2^\ipre(k) \|_{\mathrm{op}} &\le  \left(1 + 32 \eta \Gamma \right)^k \exp(-C\tau) + 2(1 + 16 \eta \Gamma)^k \Gamma^{1/2} \epsilon.
\end{align*}
\end{lemma}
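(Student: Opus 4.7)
The plan is to prove \Cref{lem:from-real-to-ideal} by a straightforward triangle inequality that routes through the intermediate ideal-initialization-with-finite-batch dynamic $\bar{\vtheta}^\ipre(k)$, combining the already-established bounds from \Cref{lem:helper2} (which bounds $\bar{\vtheta}^\ipre$ against $\hat{\vtheta}^\ipre$) and \Cref{lem:helper3} (which bounds $\vtheta^\ipre$ against $\bar{\vtheta}^\ipre$). First, I would invoke \Cref{lem:all-steps-sigma-x}, which under \Cref{assum:large-batch} guarantees that the uniform event $\| \Sigma_k^{(x)} - \mI_d \|_{\mathrm{op}} \le \epsilon$ for all $k \le K$ holds with probability at least $1-\delta$. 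All subsequent deterministic arguments are carried out conditioned on this event, which is exactly the event required by the two helper lemmas.

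Next, I would apply the triangle inequality termwise:
\begin{align*}
    \| \mW_j^\ipre(k) - \hat{\mW}_j^\ipre(k) \|_{\mathrm{op}} \le \| \mW_j^\ipre(k) - \bar{\mW}_j^\ipre(k) \|_{\mathrm{op}} + \| \bar{\mW}_j^\ipre(k) - \hat{\mW}_j^\ipre(k) \|_{\mathrm{op}}
\end{align*}
for $j \in \{1,2\}$. The first summand is bounded by $(1 + 32\eta\Gamma)^k \exp(-C\tau)$ via \Cref{lem:helper3}, and the second by $2(1 + 16\eta\Gamma)^k \Gamma^{1/2} \epsilon$ via \Cref{lem:helper2}. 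Adding the two yields exactly the claimed upper bound. The hypothesis $\epsilon < \frac{1}{4(1+16\eta\Gamma)^K}$ in the lemma matches the hypothesis required by both helpers, so the application is immediate.

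For the well-conditioning claim, I would use \Cref{lem:well-conditioning-ideal-infinite} to note that $\| \hat{\mW}_j^\ipre(k) \|_{\mathrm{op}} \le \sqrt{\Gamma}$ at every step, and then argue that the error bound just derived is at most $\sqrt{\Gamma}$ (using $2(1+16\eta\Gamma)^K \Gamma^{1/2} \epsilon < \Gamma^{1/2}/2$ from the $\epsilon$ assumption, and $(1+32\eta\Gamma)^K \exp(-C\tau) < \Gamma^{1/2}/2$ from the small-initialization assumption in \Cref{assum:small_pretraining_initialization}). Combining these via triangle inequality gives $\| \mW_j^\ipre(k) \|_{\mathrm{op}} \le 2\sqrt{\Gamma}$, which is the definition of well-conditioning used in this subsection.

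The main subtlety is logical rather than calculational: \Cref{lem:helper2} and \Cref{lem:helper3} both assume well-conditioning of the iterates they compare at each step in their inductive proofs, so applying them here is only legitimate if well-conditioning of $\vtheta^\ipre(k)$ is itself established. I would resolve this by a simultaneous induction on $k$: at each step, the triangle inequality bound controls the deviation from $\hat{\vtheta}^\ipre(k)$ (which is well-conditioned unconditionally by \Cref{lem:well-conditioning-ideal-infinite}), and the inductive hypothesis of well-conditioning at step $k$ lets one invoke the one-step versions (\Cref{lem:one-step-error-bound-between-rules} and \Cref{lem:one-step-error-bound-between-init-finite}) to propagate both the error bound and the well-conditioning to step $k+1$. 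This interleaved induction is the only non-mechanical part of the argument; once it is set up, the two summand bounds plug in directly and give the stated conclusion.
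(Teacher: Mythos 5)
Your proof follows exactly the same route as the paper's: the paper simply states that the lemma is a direct consequence of \Cref{lem:helper2,lem:helper3,lem:all-steps-sigma-x}, i.e.\ the triangle inequality through the intermediate dynamic $\bar\vtheta^\ipre$ conditioned on the large-batch event. Your additional care in making the well-conditioning of $\vtheta^\ipre(k)$ part of a simultaneous induction is a correct and worthwhile filling-in of a step the paper leaves implicit.
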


\begin{proof}
    This is a direct consequence of~\Cref{lem:helper2,lem:helper3,lem:all-steps-sigma-x}.
\end{proof}

Given this lemma, we now present our main assumption and corresponding bound under this assumption.

\paragraph{Technical Assumptions.} We will make the following technical assumptions to simplify the analysis.

\begin{assumption}
\label{assum:tech}
We will make the following assumption to control the regularity of training. For arbitrary constant $\lambda_0$, for
\[\epsilon < \frac{1}{4000 d} \frac{\min_{\ipre \le d} \{ |\pretrainsigma_{\ipre,\ipre} - \finetunesigma_{\ipre,\ipre}|^2 \}}{(\lambda_0 + 1)^2\Gamma^2}\],
\begin{enumerate}
    \item Finite regularization force: $0 \le \lambda < \lambda_0$.
    \item (Assumption~\ref{assum:well-bounded-lr}) Finetuning learning rate is bounded: 
    \[
    4 \eta (\lambda_0 + 2) \Gamma < 1\]
    \item (Assumption~\ref{assum:large-enough-finite-step}) The finite number of step $K \ge \frac{1}{\min\{ \pretrainsigma_{i, i}, \finetunesigma_{i, i} \}} \log \frac{100 \Gamma}{\epsilon} $.
    \item (Assumption~\ref{assum:large-batch}) Large enough batch size $m$, \[m \geq \frac{C_1^2(d - \log(10dK\delta))}{\epsilon^2} (1 + 32 \eta \Gamma)^{2K}\] for $C_1$ defined in~\Cref{lem:convergence-sigma-x}.
    \item Small enough initialization error $\exp(-C\tau) \leq \Gamma^{1/2}\epsilon/(1 + 32 \eta \Gamma)^K$ for $C$ defined in~\Cref{thm:pretrainformal}.
\end{enumerate}
\end{assumption}

We will first show this important lemma that the distance between the real initialization and the ideal initialization is bounded under~\Cref{assum:tech}.
\begin{lemma}
\label{lem:distance-to-ideal-under-tech}
Under~\Cref{assum:tech}, with probability $1 - \delta$, we have that for every $\ipre \le d$ and $k \le K$,
\begin{align*}
    \| \vtheta^\ipre(k) - \hat \vtheta^\ipre(k) \|_F \le \frac{\min_{i \le \ipre} \{ |\pretrainsigma_{i,i} - \finetunesigma_{i,i}|^2 \}}{1000(\lambda_0 + 1)^2\Gamma}.
\end{align*}
\end{lemma}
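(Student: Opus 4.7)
The plan is to promote the operator-norm estimate from \Cref{lem:from-real-to-ideal} into a Frobenius-norm estimate on the products $\vtheta = \mW_1 \mW_2$, and then verify that the numerical choices built into \Cref{assum:tech} render the resulting bound small enough.

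First I would apply \Cref{lem:from-real-to-ideal} after rescaling the batch-concentration parameter. Specifically, take $\epsilon' := \epsilon/(1+32\eta\Gamma)^K$, where $\epsilon$ is the parameter appearing in \Cref{assum:tech}. The batch-size lower bound in \Cref{assum:tech}, namely $m \geq C_1^2(d-\log(10dK\delta))(1+32\eta\Gamma)^{2K}/\epsilon^2$, is calibrated so that \Cref{lem:convergence-sigma-x}, together with a union bound over $k \le K$, yields $\|\Sigma_k^{(x)} - \mI_d\|_{\mathrm{op}} \le \epsilon'$ for every $k \le K$ with probability at least $1 - \delta$. On this event the hypothesis of \Cref{lem:from-real-to-ideal} is satisfied with $\epsilon'$ in place of $\epsilon$, so plugging $\epsilon'$ into the two terms of that lemma and using $\exp(-C\tau) \le \Gamma^{1/2}\epsilon/(1+32\eta\Gamma)^K$ from \Cref{assum:tech} collapses both of the exponential factors $(1+32\eta\Gamma)^k$ and $(1+16\eta\Gamma)^k$. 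This produces, uniformly in $k \le K$ and $i \in \{1,2\}$,
$$\|\mW_i^\ipre(k) - \hat\mW_i^\ipre(k)\|_{\mathrm{op}} \le \Gamma^{1/2}\epsilon + 2\Gamma^{1/2}\epsilon = 3\Gamma^{1/2}\epsilon,$$
together with well-conditioning of both $\vtheta^\ipre$ and $\hat\vtheta^\ipre$.

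Next I would pass to the Frobenius norm on the product via the standard splitting
$$\vtheta^\ipre(k) - \hat\vtheta^\ipre(k) = \bigl(\mW_1^\ipre(k) - \hat\mW_1^\ipre(k)\bigr)\mW_2^\ipre(k) + \hat\mW_1^\ipre(k)\bigl(\mW_2^\ipre(k) - \hat\mW_2^\ipre(k)\bigr),$$
apply $\|AB\|_F \le \|A\|_F \|B\|_{\mathrm{op}}$ and $\|A\|_F \le \sqrt{d}\|A\|_{\mathrm{op}}$, and use the well-conditioning bounds $\|\mW_2^\ipre(k)\|_{\mathrm{op}}, \|\hat\mW_1^\ipre(k)\|_{\mathrm{op}} \le 2\sqrt{\Gamma}$ (inherited from \Cref{lem:well-conditioning-ideal-infinite} and \Cref{lem:from-real-to-ideal}). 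Summing the two terms gives $\|\vtheta^\ipre(k) - \hat\vtheta^\ipre(k)\|_F \le 12\sqrt{d}\,\Gamma\,\epsilon$.

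To finish, I would substitute the smallness bound on $\epsilon$ from \Cref{assum:tech}, namely $\epsilon < \min_{i}|\pretrainsigma_{i,i}-\finetunesigma_{i,i}|^2/(4000d(\lambda_0+1)^2\Gamma^2)$, so that $12\sqrt{d}\,\Gamma\,\epsilon \le \tfrac{12}{4000\sqrt{d}}\cdot \min_i|\pretrainsigma_{i,i}-\finetunesigma_{i,i}|^2/((\lambda_0+1)^2\Gamma)$, which is bounded by the claimed $\min_i|\pretrainsigma_{i,i}-\finetunesigma_{i,i}|^2/(1000(\lambda_0+1)^2\Gamma)$ since $12/4000 < 1/1000$. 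The only real obstacle is pure bookkeeping: the deterministic propagation bound of \Cref{lem:from-real-to-ideal} grows like $(1+\Theta(\eta\Gamma))^K$, and the whole argument only goes through because \Cref{assum:tech} simultaneously tightens the initialization scale $\exp(-C\tau)$ by that exponential factor and inflates the required batch size by its square. Tracking these two cancellations, and making sure the probability budget $\delta$ survives the union bound over $K$ steps, is the delicate part; the rest is a standard product-rule bound for two-layer linear networks.
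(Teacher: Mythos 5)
Your proposal follows essentially the same route as the paper: invoke \Cref{lem:from-real-to-ideal} for uniform operator-norm bounds on $\mW_1^\ipre(k)-\hat\mW_1^\ipre(k)$ and $\mW_2^\ipre(k)-\hat\mW_2^\ipre(k)$, split the product difference, use well-conditioning, and convert to Frobenius norm via a dimension factor before substituting the smallness of $\epsilon$ from \Cref{assum:tech}. The only blemish is the final arithmetic step ($12/4000<1/1000$ is false, so your chain needs the extra $\sqrt{d}$ factor, i.e.\ $d\ge 9$, to close), but this constant-level slack is no worse than in the paper's own proof, which uses the looser bound $\|\cdot\|_F\le d\|\cdot\|_{\mathrm{op}}$ and does not carry the numerics through at all.
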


\begin{proof}
    This is a consequence of~\Cref{lem:from-real-to-ideal}. However, to go from the operator norm bound on $\mW_1^\ipre(k)$ and $\mW_2^\ipre(k)$ to the Frobenius norm bound on $\vtheta^\ipre(k)$, we need the following two inequalities. The first one provides an operator norm bound on the difference between $\vtheta^\ipre(k)$ and $\hat \vtheta^\ipre(k)$,
    \begin{align*}
        \| \vtheta^{\ipre} - \hat \vtheta^{\ipre} \|_{\mathrm{op}} &\le \| \mW_1^\ipre(k) - \hat \mW_1^\ipre(k) \|_{\mathrm{op}} \| \hat \mW_2^\ipre(k) \|_{\mathrm{op}} + \| \mW_2^\ipre(k) - \hat \mW_2^\ipre(k) \|_{\mathrm{op}} \| \hat \mW_1^\ipre(k) \|_{\mathrm{op}} \\
        &\le 4 \Gamma^{1/2} (\| \mW_1^\ipre(k) - \hat \mW_1^\ipre(k) \|_{\mathrm{op}} + \| \mW_2^\ipre(k) - \hat \mW_2^\ipre(k) \|_{\mathrm{op}}). 
    \end{align*}
    The second one uses this operator norm bound to bound the Frobenius norm of the difference between $\vtheta^\ipre(k)$ and $\hat \vtheta^\ipre(k)$,
    \begin{align*}
        \| \vtheta^\ipre(k) - \hat \vtheta^\ipre(k) \|_F &\le d \| \vtheta^\ipre(k) - \hat \vtheta^\ipre(k) \|_{\mathrm{op}}.
    \end{align*}
    Combining these two inequalities with~\Cref{assum:tech}, we get the desired result.
\end{proof}

We can continue to show that the finetunig process approximately converges to the minimum.

\begin{lemma}
\label{lem:convergence-to-minimum}
Under~\Cref{assum:tech}, with probability $1 - \delta$, we have that for every $\ipre \le d$,
\begin{align*}
    \|U^T \vtheta^\ipre(K)V - \frac{\finetunesigma_{:\ipre,:\ipre} + \lambda \pretrainsigma_{:\ipre,:\ipre}}{1 + \lambda} \|_F \le \frac{\min_{i \le \ipre} \{ |\pretrainsigma_{i,i} - \finetunesigma_{i,i}|^2 \}}{500(\lambda_0 + 1)^2\Gamma}.
\end{align*}
\end{lemma}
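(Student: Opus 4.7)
The plan is a short triangle-inequality argument that composes the two immediately preceding lemmas: \Cref{lem:distance-to-ideal-under-tech} controls how far the true fine-tuning trajectory drifts from the infinite-batch, ideal-initialization trajectory $\hat\vtheta^\ipre$, and \Cref{lem:convergence-theta-ideal-infinite} shows that $\hat\vtheta^\ipre(K)$ itself is close to the regularized minimizer. Unitary invariance of $\|\cdot\|_F$ converts the statement we are asked to prove (phrased in the $(U,V)$ basis) into a bound in the ambient basis, where those two lemmas directly apply.

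Concretely, let $M^\star = (\finetunesigma_{:\ipre,:\ipre} + \lambda\,\pretrainsigma_{:\ipre,:\ipre})/(1+\lambda)$. I would start by writing
\begin{align*}
\|U^\top \vtheta^\ipre(K) V - M^\star\|_F \;\le\; \|\vtheta^\ipre(K) - \hat\vtheta^\ipre(K)\|_F + \|U^\top \hat\vtheta^\ipre(K) V - M^\star\|_F,
\end{align*}
using the orthogonality of $U$ and $V$ on the first summand. Under \Cref{assum:tech}, \Cref{lem:distance-to-ideal-under-tech} bounds this first summand by $\min_{i\le\ipre}|\pretrainsigma_{i,i} - \finetunesigma_{i,i}|^2/(1000(\lambda_0+1)^2 \Gamma)$ with probability $1-\delta$, absorbing the randomness of the mini-batches.

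For the second summand, \Cref{lem:helper1} identifies $U^\top \hat\vtheta^\ipre(K) V$ as a diagonal matrix supported on coordinates $1,\ldots,\ipre$, which is also the support of $M^\star$. So $U^\top \hat\vtheta^\ipre(K) V - M^\star$ is diagonal with at most $\ipre$ nonzero entries, each bounded in absolute value by $\epsilon$ via \Cref{lem:convergence-theta-ideal-infinite} (its preconditions \Cref{assum:well-bounded-lr} and \Cref{assum:large-enough-finite-step} are folded into \Cref{assum:tech}). Hence the second summand is at most $\sqrt{\ipre}\,\epsilon \le \sqrt{d}\,\epsilon$.

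Finally, plugging in the quantitative bound $\epsilon \le \min_i|\pretrainsigma_{i,i} - \finetunesigma_{i,i}|^2/(4000\,d(\lambda_0+1)^2 \Gamma^2)$ from \Cref{assum:tech} gives $\sqrt d\,\epsilon \le \min_i|\pretrainsigma_{i,i} - \finetunesigma_{i,i}|^2/(4000\sqrt d\,(\lambda_0+1)^2 \Gamma^2)$. Adding this to the bound on the first summand and using $1/1000 + 1/(4000\sqrt d\,\Gamma) \le 1/500$ (the extra $1/\Gamma$ and $\sqrt d$ slack is absorbed cleanly once $\Gamma \gtrsim 1/\sqrt d$, which is the regime of interest) produces exactly the claimed $1/(500(\lambda_0+1)^2 \Gamma)$ tolerance. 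The only nontrivial step is this last constant-chasing verification: one must track how the $\Gamma^{-2}$ in $\epsilon$ combines with $\sqrt d$ to remain inside the $\Gamma^{-1}$ target. Everything else is mechanical composition of two lemmas we are permitted to assume.
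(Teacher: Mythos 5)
Your proposal is correct and follows exactly the route the paper takes: the paper's own proof of this lemma is a one-line appeal to \Cref{lem:distance-to-ideal-under-tech} and \Cref{lem:convergence-theta-ideal-infinite}, which is precisely the triangle-inequality composition you spell out (your version is in fact more explicit about the unitary invariance, the $\sqrt{d}\,\epsilon$ accumulation over diagonal entries, and the final constant-chasing, which the paper leaves implicit).
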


\begin{proof}
    This is a consequence of~\Cref{lem:distance-to-ideal-under-tech,lem:convergence-theta-ideal-infinite}.
\end{proof}

\subsection{Formal Statement and Proof of~\Cref{thm:prog_sensitivity}}

\begin{theorem}
\label{app_thm:prog_sensitivity}
Under~\Cref{assum:tech}, with probability $1 - \delta$, For $\deltapre (\ipre) = \Lpre(\vtheta^\ipre(K)) - \Lpre(\vtheta^\ipre(0)).$  $\deltapre(\ipre) \ge 0$ and $\deltapre(\ipre)$ does not decrease with $\ipre$.
\end{theorem}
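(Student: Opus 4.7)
The plan is to reduce $\deltapre(\ipre)$ to an explicit sum of per-coordinate contributions in the $(\mU, \mV)$ basis, using the convergence characterizations established in~\Cref{thm:pretrainformal} and~\Cref{lem:convergence-to-minimum}. Once in that form, non-negativity and monotonicity become manifest at the level of the idealized expression, and the remaining work is careful bookkeeping of approximation errors that the calibration of $\epsilon$ in~\Cref{assum:tech} has been designed to absorb.

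First I would apply~\Cref{lem:convergence-to-minimum} to the end-of-finetuning state: on a high-probability event, $\mU^\top \vtheta^\ipre(K) \mV$ lies within Frobenius distance
\begin{align*}
\eta \;:=\; \frac{\min_{i \le \ipre} |\sigmapre_i - \sigmaft_i|^2}{500 \, (\lambda_0 + 1)^2 \, \Gamma}
\end{align*}
of the diagonal matrix $D_\ipre$ whose first $\ipre$ entries are $(\sigmaft_i + \lambda \sigmapre_i)/(1+\lambda)$ and whose remaining entries vanish. In parallel,~\Cref{thm:pretrainformal} together with the bound on $\exp(-C\tau)$ imposed by~\Cref{assum:tech} gives $\vtheta^\ipre(0) = \mU \pretrainsigman \mV^\top + E_\ipre$ with $\|E_\ipre\|_F$ of comparable or smaller order. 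Unitary invariance of the Frobenius norm then reduces both $\Lpre(\vtheta^\ipre(K))$ and $\Lpre(\vtheta^\ipre(0))$ to coordinate-wise sums, and subtracting cancels the $\sum_{i > \ipre} (\sigmapre_i)^2$ tails to yield the idealized identity
\begin{align*}
\deltapre(\ipre) \;\approx\; \sum_{i \le \ipre} \frac{(\sigmaft_i - \sigmapre_i)^2}{(1+\lambda)^2}.
\end{align*}

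From this expression, non-negativity of $\deltapre(\ipre)$ and monotonicity in $\ipre$ are immediate: each summand is a nonnegative square, so
\begin{align*}
\deltapre(\ipre) - \deltapre(\ipre - 1) \;\approx\; \frac{(\sigmaft_\ipre - \sigmapre_\ipre)^2}{(1+\lambda)^2} \;\ge\; 0.
\end{align*}

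The main obstacle is to verify that the accumulated approximation errors cannot reverse these inequalities. Expanding $\Lpre(\vtheta^\ipre(K)) = \|\mU^\top \vtheta^\ipre(K) \mV - \pretrainsigma\|_F^2$ produces cross terms between $D_\ipre - \pretrainsigma$ (Frobenius norm $O(\sqrt{d}\,\Gamma)$) and the error matrix $M - D_\ipre$ (Frobenius norm at most $\eta$), contributing at most $O(\sqrt{d}\,\Gamma\,\eta)$ to each loss value; an analogous expansion controls the error in $\Lpre(\vtheta^\ipre(0))$ through $\|E_\ipre\|_F$. Substituting the definition of $\eta$, the total slack in $\deltapre(\ipre)$ and in each one-step difference is of order $\min_i (\sigmaft_i - \sigmapre_i)^2 / (\lambda_0 + 1)^2$ divided by a large absolute constant, which is strictly smaller than the guaranteed increment $(\sigmaft_\ipre - \sigmapre_\ipre)^2/(1+\lambda)^2$ under~\Cref{assum:tech} (with the $1/d$ factor in the assumed bound on $\epsilon$ absorbing the $\sqrt{d}$ from the cross term). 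This preserves both the sign of $\deltapre(\ipre)$ and of $\deltapre(\ipre) - \deltapre(\ipre - 1)$ on the high-probability event supplied by~\Cref{lem:convergence-to-minimum}, completing the argument.
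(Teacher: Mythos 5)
Your proposal is correct and follows essentially the same route as the paper's proof: approximate $\vtheta^{\ipre}(0)$ by $\mU\pretrainsigma_{:\ipre}\mV^\top$ and $\vtheta^{\ipre}(K)$ by $\mU\frac{\finetunesigma_{:\ipre}+\lambda\pretrainsigma_{:\ipre}}{1+\lambda}\mV^\top$ via \Cref{thm:pretrainformal} and \Cref{lem:convergence-to-minimum}, reduce to the diagonal identity $\deltapre(\ipre)\approx\sum_{i\le\ipre}(\sigmaft_i-\sigmapre_i)^2/(1+\lambda)^2$, and check that the cross-term error absorbed by the $\epsilon$ calibration in \Cref{assum:tech} cannot flip the sign of the increment. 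The only (cosmetic) differences are that the paper routes the initialization error through \Cref{lem:distance-to-ideal-under-tech} rather than citing \Cref{thm:pretrainformal} directly, and your Cauchy--Schwarz cross-term bound with the explicit $\sqrt{d}\,\Gamma$ factor is, if anything, stated more carefully than the paper's.
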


\begin{proof}

We will first provide a tight bound for $\deltapre(\ipre)$. By~\Cref{lem:distance-to-ideal-under-tech}, we have that
\begin{align*}
    \|\vtheta^\ipre(0) - \hat \vtheta^\ipre(0) \|_F \le \frac{\min_{i \le d} \{ |\pretrainsigma_{i,i} - \finetunesigma_{i,i}|^2 \}}{100(\lambda_0 + 1)^2\Gamma}.
\end{align*}
and by~\Cref{lem:convergence-to-minimum}, we have that
\begin{align*}
    \|U^T \vtheta^\ipre(K)V - \frac{\finetunesigma_{:\ipre,:\ipre} + \lambda \pretrainsigma_{:\ipre,:\ipre}}{1 + \lambda}\|_F \le \frac{\min_{i \le \ipre} \{ |\pretrainsigma_{i,i} - \finetunesigma_{i,i}|^2 \}}{50(\lambda_0 + 1)^2\Gamma}.
\end{align*}

This suggest that 
\begin{align*}
    \Big| \Lpre(\vtheta^\ipre(0)) - \Lpre(\hat \vtheta^\ipre(0))  \Big| &= \Big| \| \vtheta^\ipre(0) - \pretrainmap \|_F^2  - \| \hat \vtheta^\ipre(0) - \pretrainmap \|_F^2 \Big| \\
    &\le \| \vtheta^\ipre(0) - \hat \vtheta^\ipre(0) \|_{F} \| \vtheta^\ipre(0) + \hat \vtheta^\ipre(0) - 2 \pretrainmap \|_{\mathrm{op}} \\
    &\le 32 \Gamma \| \vtheta^\ipre(0) - \hat \vtheta^\ipre(0) \|_{F} \\
    &\le \frac{\min_{i \le d} \{ |\pretrainsigma_{i,i} - \finetunesigma_{i,i}|^2 \}}{10(\lambda_0 + 1)^2}
\end{align*}

Similarly, we have that
\begin{align*}
    \Big| \Lpre(\vtheta^\ipre(K)) - \Lpre(U \frac{\finetunesigma_{:\ipre,:\ipre} + \lambda \pretrainsigma_{:\ipre,:\ipre}}{1 + \lambda} V^T)  \Big| &\le \frac{\min_{i \le \ipre} \{ |\pretrainsigma_{i,i} - \finetunesigma_{i,i}|^2 \}}{5(\lambda_0 + 1)^2}.
\end{align*}

Combining these two inequalities, we have that
\begin{align*}
    \Big| \Delta_{\ipre} - \left( \Lpre( U \frac{\finetunesigma_{:\ipre,:\ipre} + \lambda \pretrainsigma_{:\ipre,:\ipre}}{1 + \lambda} V^T) - \Lpre(U \pretrainsigma_{:\ipre,:\ipre} V^T) \right)  \Big| &\le \frac{3\min_{i \le \ipre} \{ |\pretrainsigma_{i,i} - \finetunesigma_{i,i}|^2 \}}{10(\lambda_0 + 1)^2}.
\end{align*}

Meanwhile, we have that
\begin{align*}
     \Lpre(U \frac{\finetunesigma_{:\ipre,:\ipre} + \lambda \pretrainsigma_{:\ipre,:\ipre}}{1 + \lambda} V^T) - \Lpre(U \pretrainsigma_{:\ipre,:\ipre} V^T) &= \sum_{i=1}^{\ipre} \Bigl(\frac{\finetunesigma_{i,i} + \lambda \pretrainsigma_{i,i}}{1 + \lambda} - \pretrainsigma_{i,i}\Bigr)^2 \\
     &= \sum_{i=1}^{\ipre} \Bigl(\frac{\finetunesigma_{i,i} - \pretrainsigma_{i,i}}{1 + \lambda}\Bigr)^2.
\end{align*}

Therefore if we additionally define $\Delta_{0} = 0$, we have that for $1 \le \ipre \le d$,
\begin{align*}
    \Delta_{\ipre} - \Delta_{\ipre - 1} &\ge \frac{(\pretrainsigma_{\ipre,\ipre} - \finetunesigma_{\ipre,\ipre})^2}{(1 + \lambda)^2} - \frac{3 \min_{i \le \ipre} \{ |\pretrainsigma_{i,i} - \finetunesigma_{i,i}|^2 \}}{5 (\lambda_0 + 1)^2} > 0.
\end{align*}
This completes the proof.
\end{proof}

\subsection{Formal Statement and Proof of~\Cref{thm:sensitivity_overtraining}}

\begin{theorem}
    \label{app_thm:sensitivity_overtraining}
\begin{enumerate}
    \item Under~\Cref{assum:tech}, when $\lambda = 0$, with probability $1 - \delta$, if $\pretrainmap$ and $\finetunemap$ are $(4, r)$-misaligned, then $\Lpre(\vtheta^\ipre(K)) - \Lpre(\vtheta^{\ipre - 1}(K)) > 0$ for $\ipre \le r$.
    \item Define the \emph{inflection point} \(r_{\lambda}\) as the smallest value of \(r\) for which the pre-training loss \(\Lpre(\ipre)\) increases monotonically for every \(\ipre > r\). Assume that regularization strength $\lambda_1 > \lambda_2 > 0$ yields iterates $\vtheta_1$ and $\vtheta_2$, if~\Cref{assum:tech} holds for 
    \[
    \epsilon < \frac{1}{4000 d} \frac{\min_{\ipre \le d} \{ |\pretrainsigma_{\ipre,\ipre} - \finetunesigma_{\ipre,\ipre}|^2 \}}{\Gamma^2} \min \Big \{\left(\frac{1}{(1 + \lambda_2)^2} - \frac{1}{(1 + \lambda_1)^2}\right), \left(\frac{\lambda_1^2}{(1 + \lambda_1)^2} - \frac{\lambda_2^2}{(1 + \lambda_2)^2}\right)\Big\},
    \] then with probability $1 - \delta$, we have that $r_{\lambda_1} \le r_{\lambda_2}$ and the unregularized finetuning loss $\|\vtheta_1^\ipre(K) - \finetunemap \|_F^2 > \|\vtheta_2^\ipre(K) - \finetunemap \|_F^2$ for every $\ipre$.
\end{enumerate}
\end{theorem}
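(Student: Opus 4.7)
My plan is to drive both parts from the explicit approximation \Cref{lem:convergence-to-minimum}, which pins $U^\top \vtheta^\ipre(K) V$ to within Frobenius norm $O(\epsilon)$ of $\tfrac{\finetunesigma_{:\ipre,:\ipre} + \lambda \pretrainsigma_{:\ipre,:\ipre}}{1+\lambda}$, together with the diagonal bookkeeping used in the proof of \Cref{app_thm:prog_sensitivity}. Substituting the approximation into $\Lpre(\vtheta) = \|U^\top \vtheta V - \pretrainsigma\|_F^2$ and differencing two consecutive indices gives the telescoping identity
\[
\Lpre(\vtheta^\ipre(K)) - \Lpre(\vtheta^{\ipre-1}(K)) = \left(\frac{\sigmaft_\ipre - \sigmapre_\ipre}{1+\lambda}\right)^2 - (\sigmapre_\ipre)^2 + \text{slack},
\]
where the additive slack is of order $\min_i(\sigmapre_i - \sigmaft_i)^2 / (\lambda_0+1)^2$, exactly the quantity already tracked in the proof of \Cref{app_thm:prog_sensitivity}. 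Both parts of the theorem come from analyzing this single identity.

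\textbf{Part 1.} Setting $\lambda = 0$ reduces the positivity condition to $(\sigmaft_\ipre - \sigmapre_\ipre)^2 > (\sigmapre_\ipre)^2$. The $(4,r)$-misalignment hypothesis applied on the relevant indices gives $\sigmaft_\ipre \ge 4\sigmapre_\ipre$, so $(\sigmaft_\ipre - \sigmapre_\ipre)^2 \ge 9(\sigmapre_\ipre)^2$, leaving a margin of at least $8(\sigmapre_\ipre)^2$ after subtracting $(\sigmapre_\ipre)^2$. This margin comfortably dominates the additive slack from \Cref{assum:tech}, and $\alpha_0 = 4$ is essentially the smallest constant that preserves this domination.

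\textbf{Part 2.} The only $\lambda$-dependence in the telescoped identity enters through the multiplier $1/(1+\lambda)^2$ on $(\sigmaft_\ipre - \sigmapre_\ipre)^2$. Subtracting the identity at $\lambda = \lambda_1$ from the one at $\lambda = \lambda_2$ produces the explicit signed gap $\left(\tfrac{1}{(1+\lambda_2)^2} - \tfrac{1}{(1+\lambda_1)^2}\right)(\sigmaft_\ipre - \sigmapre_\ipre)^2 > 0$, and the first term in the $\min$ of the $\epsilon$ hypothesis is calibrated precisely so this gap strictly dominates the two-sided approximation error. Feeding the per-index comparison through the definition of $r_\lambda$ (the smallest cutoff whose tail lies entirely in the degrading set) yields the stated ordering $r_{\lambda_1} \le r_{\lambda_2}$. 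For the fine-tuning loss, \Cref{lem:convergence-to-minimum} gives $\|\vtheta_j^\ipre(K) - \finetunemap\|_F^2 \approx \sum_{i \le \ipre}\!\left(\tfrac{\lambda_j(\sigmapre_i - \sigmaft_i)}{1+\lambda_j}\right)^2 + \sum_{i > \ipre}(\sigmaft_i)^2$, and since $(\lambda/(1+\lambda))^2$ is strictly increasing in $\lambda$, the difference $\|\vtheta_1^\ipre(K) - \finetunemap\|_F^2 - \|\vtheta_2^\ipre(K) - \finetunemap\|_F^2$ is bounded below by $\left(\tfrac{\lambda_1^2}{(1+\lambda_1)^2} - \tfrac{\lambda_2^2}{(1+\lambda_2)^2}\right)\sum_{i \le \ipre}(\sigmapre_i - \sigmaft_i)^2$, which the second term in the $\min$ of the $\epsilon$ hypothesis is tuned to dominate.

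\textbf{Main obstacle.} The algebra is routine; the real challenge is the error bookkeeping. The cumulative Frobenius error from \Cref{lem:helper3}, \Cref{lem:helper2}, and \Cref{lem:from-real-to-ideal}, propagated through the squaring in $\Lpre$ and $\|\cdot - \finetunemap\|_F^2$, must stay strictly below \emph{both} signed gaps simultaneously. Because either gap can be made arbitrarily small by letting $\lambda_1 \to \lambda_2$, the $\epsilon$ ceiling must be governed by the \emph{minimum} of the two quantities in the hypothesis. The key technical step is verifying that the constants $(1 + 32\eta\Gamma)^K$, $d$, and $\Gamma$ appearing in the propagation lemmas combine cleanly under the single joint regularization ceiling $\lambda_0$ in \Cref{assum:tech}, rather than accumulating a hidden dependence on the $\lambda_1/\lambda_2$ ratio that could erode either gap.
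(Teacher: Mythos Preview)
Your proposal is correct and follows essentially the same route as the paper. The paper simply packages the two parts as separate lemmas (\Cref{lem:finetuningpretrainabsolute} for Part 1 and \Cref{lem:prog_sensitivity} for Part 2), but in both cases the argument is exactly what you describe: invoke \Cref{lem:convergence-to-minimum} to pin $U^\top \vtheta^\ipre(K) V$ to the explicit diagonal $\tfrac{\finetunesigma_{:\ipre,:\ipre}+\lambda\pretrainsigma_{:\ipre,:\ipre}}{1+\lambda}$, difference consecutive indices to get the increment $(\sigmaft_\ipre-\sigmapre_\ipre)^2/(1+\lambda)^2 - (\sigmapre_\ipre)^2$ up to the controlled slack, and then read off the conclusion (for Part 1 the $(4,r)$-misalignment gives the margin; for Part 2 compare the increments at $\lambda_1$ versus $\lambda_2$ and use monotonicity of $1/(1+\lambda)^2$ and $\lambda^2/(1+\lambda)^2$). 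Your ``main obstacle'' is slightly overstated: once \Cref{lem:convergence-to-minimum} is in hand with the sharpened $\epsilon$ of the hypothesis, no further tracking of the $(1+32\eta\Gamma)^K$ factors is needed, since those are already absorbed into the conditions of \Cref{assum:tech}.
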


\begin{proof}
    This is the combination of~\Cref{lem:finetuningpretrainabsolute,lem:prog_sensitivity}.
\end{proof}

\begin{lemma}
\label{lem:finetuningpretrainabsolute}
    Under~\Cref{assum:tech}, if $\Sigma^{\ft}_{\ipre,\ipre} > 4\pretrainsigma_{\ipre,\ipre}$ and $\lambda = 0$, then $\Lpre(\vtheta^\ipre(K)) - \Lpre(\vtheta^{\ipre - 1}(K)) > 0$.
\end{lemma}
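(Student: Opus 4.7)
The plan is to leverage Lemma~\ref{lem:convergence-to-minimum} with $\lambda = 0$ to obtain a precise estimate of the fine-tuned model $\vtheta^{\ipre}(K)$ and then directly compare the pre-training losses of $\vtheta^{\ipre}(K)$ and $\vtheta^{\ipre-1}(K)$ by diagonalization. Since $\Lpre(\vtheta) = \|\vtheta - \pretrainmap\|_F^2 = \|U^\top \vtheta V - \pretrainsigma\|_F^2$ by orthogonal invariance of the Frobenius norm, the problem reduces to comparing two diagonal matrices against $\pretrainsigma$.

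First, I would apply Lemma~\ref{lem:convergence-to-minimum} with $\lambda = 0$ to both $\vtheta^{\ipre}(K)$ and $\vtheta^{\ipre-1}(K)$, which gives
\begin{equation*}
\|U^\top \vtheta^{\ipre}(K) V - \finetunesigma_{:\ipre,:\ipre}\|_F \;\leq\; \varepsilon_0, \qquad \|U^\top \vtheta^{\ipre-1}(K) V - \finetunesigma_{:\ipre-1,:\ipre-1}\|_F \;\leq\; \varepsilon_0,
\end{equation*}
where $\varepsilon_0$ is at most $\tfrac{1}{500} \min_i (\pretrainsigma_{i,i}-\finetunesigma_{i,i})^2/\Gamma$ by the statement of Lemma~\ref{lem:convergence-to-minimum}. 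Then, mimicking the computation in the proof of Theorem~\ref{app_thm:prog_sensitivity}, I would convert these $F$-norm bounds into bounds on the pre-training loss by the elementary identity $|\|A-C\|_F^2 - \|B-C\|_F^2| \leq \|A-B\|_F \cdot \|A+B-2C\|_{\mathrm{op}}$, using that the fine-tuned iterates remain $O(\sqrt{\Gamma})$-bounded in operator norm. This yields
\begin{equation*}
\bigl|\Lpre(\vtheta^{\ipre}(K)) - \|\finetunesigma_{:\ipre,:\ipre} - \pretrainsigma\|_F^2\bigr| \;\leq\; O(\Gamma\,\varepsilon_0),
\end{equation*}
and an identical bound for $\ipre-1$.

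Next, I would compute the exact difference of the idealized losses. Because $\finetunesigma_{:\ipre,:\ipre} - \pretrainsigma$ is diagonal,
\begin{equation*}
\|\finetunesigma_{:\ipre,:\ipre} - \pretrainsigma\|_F^2 - \|\finetunesigma_{:\ipre-1,:\ipre-1} - \pretrainsigma\|_F^2 \;=\; (\finetunesigma_{\ipre,\ipre} - \pretrainsigma_{\ipre,\ipre})^2 - (\pretrainsigma_{\ipre,\ipre})^2,
\end{equation*}
since only the $\ipre$-th diagonal entry differs. Under the hypothesis $\finetunesigma_{\ipre,\ipre} > 4\pretrainsigma_{\ipre,\ipre}$ we have $\finetunesigma_{\ipre,\ipre} - \pretrainsigma_{\ipre,\ipre} > 3\pretrainsigma_{\ipre,\ipre}$, so this gap is at least $8(\pretrainsigma_{\ipre,\ipre})^2$.

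Finally, I would combine the two approximations: the difference $\Lpre(\vtheta^{\ipre}(K)) - \Lpre(\vtheta^{\ipre-1}(K))$ is within $O(\Gamma\,\varepsilon_0)$ of $8(\pretrainsigma_{\ipre,\ipre})^2$, and Assumption~\ref{assum:tech} is tuned precisely so that $\Gamma\,\varepsilon_0$ is much smaller than $\min_i (\pretrainsigma_{i,i}-\finetunesigma_{i,i})^2$, which itself lower bounds $(\pretrainsigma_{\ipre,\ipre})^2$ up to a constant via the $4$-misalignment condition. Hence the difference is strictly positive. The only mildly delicate step is bookkeeping the approximation constants so that the $\varepsilon_0$ slack is dominated by the $8(\pretrainsigma_{\ipre,\ipre})^2$ signal; this is a routine numerical chase using the explicit bound from Lemma~\ref{lem:convergence-to-minimum} and the inequality $\pretrainsigma_{\ipre,\ipre} < \tfrac{1}{3}|\finetunesigma_{\ipre,\ipre} - \pretrainsigma_{\ipre,\ipre}|$ that the hypothesis supplies.
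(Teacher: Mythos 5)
Your proposal follows the paper's proof essentially step for step: approximate $\vtheta^{\ipre}(K)$ by $U\finetunesigma_{:\ipre,:\ipre}V^\top$ via \Cref{lem:convergence-to-minimum} (with $\lambda=0$), compute the exact one-coordinate difference of the idealized pre-training losses, and use $\finetunesigma_{\ipre,\ipre}>4\pretrainsigma_{\ipre,\ipre}$ so that the resulting gap of at least $8(\pretrainsigma_{\ipre,\ipre})^2$ dominates the approximation error. The one small refinement worth making in the "numerical chase" is to lower-bound the signal by $\tfrac{8}{9}(\finetunesigma_{\ipre,\ipre}-\pretrainsigma_{\ipre,\ipre})^2\ge\tfrac{8}{9}\min_{i\le\ipre}|\pretrainsigma_{i,i}-\finetunesigma_{i,i}|^2$ and compare it directly to the error term (which is a small multiple of that same minimum), rather than trying to relate $\min_{i\le\ipre}|\pretrainsigma_{i,i}-\finetunesigma_{i,i}|^2$ to $(\pretrainsigma_{\ipre,\ipre})^2$, a comparison the misalignment hypothesis does not actually supply --- the paper's own write-up is equally loose on this point.
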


\begin{proof}

With the same argument as in~\Cref{app_thm:sensitivity_overtraining}, we have that
\begin{align*}
    |\Lpre(\vtheta^\ipre(K)) - \Lpre(U \finetunesigma_{:\ipre,:\ipre} V^T)| &\le \frac{\min_{i \le \ipre} \{ |\pretrainsigma_{i,i} - \finetunesigma_{i,i}|^2 \}}{5}.
\end{align*}

Noted that 
\begin{align*}
    \Lpre(U \finetunesigma_{:\ipre,:\ipre} V^T) - \Lpre(U \finetunesigma_{:\ipre - 1,:\ipre - 1} V^T) &= (\finetunesigma_{\ipre,\ipre} - \pretrainsigma_{\ipre,\ipre})^2 - (\pretrainsigma_{\ipre,\ipre})^2 
\end{align*}

We further have that $\finetunesigma_{\ipre,\ipre} - \pretrainsigma_{\ipre,\ipre} > 2\pretrainsigma_{\ipre,\ipre}$. Therefore, 
\begin{align*}
    &\Lpre(\vtheta^\ipre(K)) - \Lpre(\vtheta^{\ipre - 1}(K))  \\
    \ge& \Lpre(U \finetunesigma_{:\ipre,:\ipre} V^T) - \Lpre(U \finetunesigma_{:\ipre - 1,:\ipre - 1} V^T) - \frac{2(\pretrainsigma_{\ipre,\ipre})^2}{5} > 0.
\end{align*}

This completes the proof.
\end{proof}

\begin{lemma}
\label{lem:prog_sensitivity}
Assume that regularization strength $\lambda_1 > \lambda_2 > 0$ yields iterates $\vtheta_1$ and $\vtheta_2$, if~\Cref{assum:tech} holds for 
\[
\epsilon < \frac{1}{4000 d} \frac{\min_{\ipre \le d} \{ |\pretrainsigma_{\ipre,\ipre} - \finetunesigma_{\ipre,\ipre}|^2 \}}{\Gamma^2} \min \Big \{\left(\frac{1}{(1 + \lambda_2)^2} - \frac{1}{(1 + \lambda_1)^2}\right), \left(\frac{\lambda_1^2}{(1 + \lambda_1)^2} - \frac{\lambda_2^2}{(1 + \lambda_2)^2}\right)\Big\},
\] then with probability $1 - \delta$, we have that $r_{\lambda_1} \le r_{\lambda_2}$ and the unregularized finetuning loss $\|\vtheta_1^\ipre(K) - \finetunemap \|_F^2 > \|\vtheta_2^\ipre(K) - \finetunemap \|_F^2$ for every $\ipre$.
\end{lemma}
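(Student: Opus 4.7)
My plan is to reduce both claims to algebraic comparisons between the closed-form minimizers of the two regularized objectives. By \Cref{lem:convergence-to-minimum}, for $j\in\{1,2\}$ the iterate $\vtheta_j^\ipre(K)$ is close in Frobenius norm (after conjugation by $U, V$) to the diagonal matrix
\[
\Sigma^{(j)}_\ipre \;=\; \frac{\finetunesigma_{:\ipre,:\ipre} + \lambda_j\pretrainsigma_{:\ipre,:\ipre}}{1+\lambda_j},
\]
with approximation error of order $\epsilon \cdot \min_{i\le\ipre}(\pretrainsigma_{i,i}-\finetunesigma_{i,i})^2/((\lambda_0+1)^2\Gamma)$. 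The two branches of the $\min$ appearing in the $\epsilon$ hypothesis are chosen so that this error is strictly dominated by the true $\lambda$-dependent gap being compared in each of the two claims.

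\emph{Downstream loss.} Expanding the squared Frobenius norm at the closed form gives
\[
\bigl\|U\Sigma^{(j)}_\ipre V^\top - \finetunemap\bigr\|_F^2 \;=\; \frac{\lambda_j^2}{(1+\lambda_j)^2}\sum_{i=1}^{\ipre}(\pretrainsigma_{i,i}-\finetunesigma_{i,i})^2 \;+\; \sum_{i=\ipre+1}^d \finetunesigma_{i,i}^2,
\]
which is strictly increasing in $\lambda$ because $x\mapsto x^2/(1+x)^2$ is strictly increasing on $[0,\infty)$. The gap between the $\lambda_1$ and $\lambda_2$ closed-form values is at least $\bigl(\tfrac{\lambda_1^2}{(1+\lambda_1)^2} - \tfrac{\lambda_2^2}{(1+\lambda_2)^2}\bigr)\min_{i\le\ipre}(\pretrainsigma_{i,i}-\finetunesigma_{i,i})^2$, and the first branch of the $\min$ on $\epsilon$ is engineered precisely to make this gap exceed the Frobenius approximation error after the standard $|a^2-b^2|\le|a-b|(|a|+|b|)$ step.

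\emph{Inflection point.} I would reuse the telescoping decomposition from the proof of \Cref{app_thm:prog_sensitivity} to write
\[
\Lpre(\vtheta_j^\ipre(K)) - \Lpre(\vtheta_j^{\ipre-1}(K)) \;\approx\; \frac{(\finetunesigma_{\ipre,\ipre}-\pretrainsigma_{\ipre,\ipre})^2}{(1+\lambda_j)^2} - \pretrainsigma_{\ipre,\ipre}^2,
\]
with approximation error controlled by the second branch of the $\min$ on $\epsilon$. Call $\ipre$ a \emph{degradation index} for $\lambda$ when this quantity is strictly positive; because the first term is strictly decreasing in $\lambda$ while the second is $\lambda$-independent, the set of degradation indices nests monotonically in $\lambda$. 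Combining this nesting with the definition of $r_\lambda$ as the smallest $r$ such that every $\ipre > r$ is a degradation index yields the claimed ordering of $r_{\lambda_1}$ and $r_{\lambda_2}$.

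The main obstacle is making both approximations rigorous uniformly in $\ipre$: the error must simultaneously lie below two different $\lambda$-dependent gaps, for every index up to $d$. The two-term $\min$ hypothesis on $\epsilon$ is exactly what enables this uniform control, since each branch encodes the $\lambda$-scaling of the gap in one of the two claims. The bookkeeping reduces to lower-bounding each per-index gap by $\min_i(\pretrainsigma_{i,i}-\finetunesigma_{i,i})^2$ and invoking the global bound $\Gamma$ to convert Frobenius-norm parameter perturbations into controlled squared-loss perturbations, both of which are already worked out in the analysis of \Cref{lem:convergence-to-minimum} and \Cref{app_thm:prog_sensitivity}.
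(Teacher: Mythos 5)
Your proposal is correct and follows essentially the same route as the paper: approximate each iterate by the closed-form regularized minimizer $U\frac{\finetunesigma_{:\ipre,:\ipre}+\lambda_j\pretrainsigma_{:\ipre,:\ipre}}{1+\lambda_j}V^\top$ via \Cref{lem:convergence-to-minimum}, then compare the resulting $\lambda$-dependent per-index increments of the pre-training loss (your ``degradation index'' nesting is the paper's difference-of-differences computation) and the monotone-in-$\lambda$ closed-form downstream loss. The only slip is that you attribute the two branches of the $\min$ in the $\epsilon$ hypothesis to the wrong claims (the $\frac{\lambda_1^2}{(1+\lambda_1)^2}-\frac{\lambda_2^2}{(1+\lambda_2)^2}$ branch controls the downstream-loss gap and the other branch the inflection-point gap), which is harmless since $\epsilon$ is bounded by the minimum of both.
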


\begin{proof}

Following similar proof as in~\Cref{lem:convergence-to-minimum}, we have that with probability $1 - \delta$,
\begin{align*}
    \| \vtheta_1^\ipre(K) - U \frac{\finetunesigma_{:\ipre,:\ipre} + \lambda_1 \pretrainsigma_{:\ipre,:\ipre}}{1 + \lambda_1} V^T \|_F &\le \frac{\min_{i \le \ipre} \{ |\pretrainsigma_{i,i} - \finetunesigma_{i,i}|^2 \}}{500\Gamma} \left(\frac{1}{(1 + \lambda_2)^2} - \frac{1}{(1 + \lambda_1)^2}\right).
\end{align*}
and
\begin{align*}
    \| \vtheta_2^\ipre(K) - U\frac{\finetunesigma_{:\ipre,:\ipre} + \lambda_2 \pretrainsigma_{:\ipre,:\ipre}}{1 + \lambda_2} V^T \|_F &\le \frac{\min_{i \le \ipre} \{ |\pretrainsigma_{i,i} - \finetunesigma_{i,i}|^2 \}}{500\Gamma} \left(\frac{1}{(1 + \lambda_2)^2} - \frac{1}{(1 + \lambda_1)^2}\right).
\end{align*}

This then implies that 
\begin{align*}
    | \| \vtheta_1^\ipre(K) - \pretrainmap \|_F^2 - \|\frac{\finetunesigma_{:\ipre,:\ipre} + \lambda_1 \pretrainsigma_{:\ipre,:\ipre}}{1 + \lambda_1} - \pretrainsigma \|_F^2 | &\le \frac{\min_{i \le \ipre} \{ |\pretrainsigma_{i,i} - \finetunesigma_{i,i}|^2 \}}{50}\left(\frac{1}{(1 + \lambda_2)^2} - \frac{1}{(1 + \lambda_1)^2}\right).    
\end{align*}

Similar bound holds for $\|\vtheta_2^\ipre(K) - \finetunemap \|_F^2$.

Combining these two inequalities, we have that
\begin{align*}
    &\left(\|\vtheta_2^\ipre(K) - \pretrainmap \|_F^2 - \| \vtheta_2^{\ipre -1}(K) - \pretrainmap \|_F^2 \right) - \left(\|\vtheta_1^\ipre(K) - \pretrainmap \|_F^2 - \| \vtheta_1^{\ipre - 1}(K) - \pretrainmap \|_F^2 \right)  \\ \ge& \left(\Big |\frac{\finetunesigma_{\ipre,\ipre} + \lambda_2 \pretrainsigma_{\ipre,\ipre}}{1 + \lambda_2} - \pretrainsigma \Big |^2  - \Big |\frac{\finetunesigma_{\ipre,\ipre} + \lambda_1 \pretrainsigma_{\ipre,\ipre}}{1 + \lambda_1} - \pretrainsigma \Big |^2 \right) - \frac{\min_{i \le \ipre} \{ |\pretrainsigma_{i,i} - \finetunesigma_{i,i}|^2 \}}{25}\left(\frac{1}{(1 + \lambda_2)^2} - \frac{1}{(1 + \lambda_1)^2}\right) \\
    \ge& \left(\frac{1}{(1 + \lambda_2)^2} - \frac{1}{(1 + \lambda_1)^2}\right) \left( \| \pretrainsigma_{\ipre,\ipre} - \finetunesigma_{\ipre,\ipre} \|_F^2 - \frac{\min_{i \le \ipre} \{ |\pretrainsigma_{i,i} - \finetunesigma_{i,i}|^2 \}}{25}\right) > 0.
\end{align*}

This then suggests that $\|\vtheta_2^\ipre(K) - \pretrainmap \|_F^2 > \| \vtheta_2^{\ipre - 1}(K) - \pretrainmap \|_F^2$ when $\| \vtheta_1^\ipre(K) - \pretrainmap \|_F^2 > \| \vtheta_1^{\ipre - 1}(K) - \pretrainmap \|_F^2$, showing that $r_{\lambda_1} \le r_{\lambda_2}$. Using similar argument, we can show that the unregularized finetuning loss $\|\vtheta_1^\ipre(K) - \finetunemap \|_F^2 > \|\vtheta_2^\ipre(K) - \finetunemap \|_F^2$ for every $\ipre$.
\end{proof}

\subsection{Technical Lemmas}
\label{sec:technical}

In this section, we will first prove some of the technical lemmas on function $f$ defined in~\Cref{eq:helper-f}. Recall that $f$ is defined as,
\begin{align*}
    f(x; \eta, \lambda, \sigma, \sigma_0) = x + 2\eta x ( \sigma^2 - x^2) + 2\eta \lambda x(\sigma_0^2 - x^2).
\end{align*}

\begin{lemma}
\label{lem:1dconverge}
    $\forall \sigma > 0, k \in \mathbb{N}$, When $(\lambda + 2)\eta\left(2 \max\{\sigma^2, \sigma_0^2\} + \lambda + \lambda \frac{\sigma_0}{\sigma}\right) < 1$, define $\sigma^* = \sqrt{\frac{\sigma_0^2 + \lambda \sigma^2}{1 + \lambda}}$, it holds that $f^{(k)}(\sigma_0; \eta, \lambda, \sigma, \sigma_0)$ in $[\min\{\sigma, \sigma_0\}, \max\{\sigma, \sigma_0\}]$, and 
    \begin{align*}| f^{(k)}(\sigma_0; \eta, \lambda, \sigma, \sigma_0) - \sigma^* | \le (1 - 2\eta \min\{\sigma^2, \sigma_0^2\})^k |\sigma_0 - \sigma^*| \end{align*}
\end{lemma}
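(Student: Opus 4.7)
The plan is to treat $f$ as a scalar contraction mapping about its nontrivial fixed point $\sigma^*$, and prove both claims simultaneously by induction on $k$. The key algebraic step is to rewrite the update so that $\sigma^{*2}$ appears explicitly. Using the definition of $\sigma^*$, one can consolidate the two ``force'' terms in $f$ into a single term and express
\[
    f(x) = x + 2\eta(1+\lambda)\, x\bigl(\sigma^{*2} - x^2\bigr),
\]
so that factoring the difference of squares gives the cleaner form
\[
    f(x) - \sigma^* = (x - \sigma^*)\bigl[1 - 2\eta(1+\lambda)\, x(x + \sigma^*)\bigr].
\]
Call the bracketed quantity $g(x)$. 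Everything reduces to controlling $g$ on the interval $I := [\min\{\sigma,\sigma_0\},\, \max\{\sigma,\sigma_0\}]$. Before that, I would note that $\sigma^{*2}$ is a convex combination of $\sigma^2$ and $\sigma_0^2$, so $\sigma^*$ itself lies in $I$; this will be used in the induction to argue the iterate stays on the same side as $\sigma^*$.

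The core estimate is to show that for every $x \in I$, we have $g(x) \in [0,\, 1 - 2\eta \min\{\sigma^2, \sigma_0^2\}]$. The upper bound follows from $x \ge \min\{\sigma, \sigma_0\}$ and $\sigma^* > 0$, which give $x(x+\sigma^*) \ge x^2 \ge \min\{\sigma^2, \sigma_0^2\}$, together with $1 + \lambda \ge 1$. The nonnegativity of $g$ is where the hypothesis on $\eta$ is consumed: using $x, \sigma^* \le \max\{\sigma, \sigma_0\}$ we bound $x(x+\sigma^*) \le 2\max\{\sigma^2, \sigma_0^2\}$, and then the inequality $(\lambda+2)\eta\bigl(2\max\{\sigma^2,\sigma_0^2\} + \lambda + \lambda \sigma_0/\sigma\bigr) < 1$ is precisely calibrated (after unpacking $\lambda+2$ as $1+(1+\lambda)$ and absorbing the additional $\lambda(1+\sigma_0/\sigma)$ slack) to ensure $2\eta(1+\lambda) x(x+\sigma^*) \le 1$.

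With these bounds on $g$ in hand, the induction is immediate. For the base case $k=0$, the iterate is $\sigma_0 \in I$. For the inductive step, given $x_k \in I$, $g(x_k) \in [0, 1]$ means $f(x_k)$ lies on the segment between $x_k$ and $\sigma^*$; since both endpoints are in $I$, so is $f(x_k) = x_{k+1}$. The stronger upper bound $g(x_k) \le 1 - 2\eta\min\{\sigma^2,\sigma_0^2\}$ together with the factorization gives the contraction $|x_{k+1} - \sigma^*| \le (1 - 2\eta\min\{\sigma^2,\sigma_0^2\})\, |x_k - \sigma^*|$, and iterating from $k=0$ yields the stated geometric rate.

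The main obstacle is not the contraction inequality, which is straightforward once one has the factored form, but rather ensuring interval containment. A naive contraction argument only shows $|x_k - \sigma^*|$ shrinks, which alone does not prevent the iterate from reflecting past $\sigma^*$ (which is generally \emph{not} the midpoint of $I$) and exiting $I$ on the opposite side. The nonnegativity $g \ge 0$ is therefore the crucial structural property, and the specific $(\lambda+2)\bigl(2\max^2 + \lambda + \lambda\sigma_0/\sigma\bigr)$ combination in the hypothesis is tuned to make both bounds on $g$ hold simultaneously.
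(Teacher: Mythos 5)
Your proposal takes essentially the same route as the paper's proof: factor $f(x)-\sigma^*$ as $(x-\sigma^*)$ times a bracket, show the bracket lies in $[0,\,1-2\eta\min\{\sigma^2,\sigma_0^2\}]$ on the interval $I$, and close by induction, using nonnegativity of the bracket for interval containment and the upper bound for the geometric rate. In fact your factorization $f(x)-\sigma^*=(x-\sigma^*)\bigl[1-2\eta(1+\lambda)x(x+\sigma^*)\bigr]$ is the algebraically correct one: the paper instead factors $g(x)=x(x^2-\sigma^2)+\lambda x(x^2-\sigma_0^2)$ as $x(x-\sigma^*)(x+(\lambda+1)\sigma^*)$, which does not expand back to $g$ (the correct factorization is $(1+\lambda)x(x-\sigma^*)(x+\sigma^*)$, consistent with your consolidated form), so your version repairs a slip in the printed proof; note also that the fixed point of $f$ as defined is $\sqrt{(\sigma^2+\lambda\sigma_0^2)/(1+\lambda)}$, i.e.\ the roles of $\sigma$ and $\sigma_0$ are swapped relative to the $\sigma^*$ in the lemma statement, a typo your argument silently corrects. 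The one step where you are too optimistic is the nonnegativity of the bracket: with the correct factorization you need $4\eta(1+\lambda)\max\{\sigma^2,\sigma_0^2\}\le 1$, whereas the stated hypothesis only delivers $2\eta(\lambda+2)\max\{\sigma^2,\sigma_0^2\}<1$, and $4(1+\lambda)>2(\lambda+2)$ for $\lambda>0$; the additive slack $\lambda+\lambda\sigma_0/\sigma$ does not scale with $\max\{\sigma^2,\sigma_0^2\}$ and so cannot cover the deficit in general. This is a constant-factor calibration issue rather than a conceptual gap — the paper's own proof has the same difficulty hidden behind its erroneous factorization, and the stronger condition of \Cref{assum:well-bounded-lr} ($4\eta(\lambda+2)\Gamma<1$), under which the lemma is actually invoked, closes it.
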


\begin{proof}

Let $g(x; \sigma, \sigma_0, \lambda) = x(x^2 - \sigma^2) + \lambda x(x^2 - \sigma_0)$. Then $g(\sigma^*; \sigma, \sigma_0, \lambda) = 0$.

We have that
\begin{align*}
f(x; \eta, \lambda, \sigma, \sigma_0) = x - 2 \eta g(x; \sigma, \sigma_0, \lambda).
\end{align*}

For any $x \in [\min\{\sigma, \sigma_0\}, \max\{\sigma, \sigma_0\}]$.
As 
\begin{align*}
    g(x; \sigma, \sigma_0, \lambda) = x(x^2 - \sigma^2) + \lambda x(x^2 - \sigma_0^2) = x(x - \sigma^*)(x + (\lambda + 1)\sigma^*) .
\end{align*}
\begin{align*}
    f(x; \eta, \lambda, \sigma, \sigma_0) - \sigma^* &= x - \sigma^* - 2 \eta g(x; \sigma, \sigma_0, \lambda) + 2 \eta g(\sigma^*; \sigma, \sigma_0, \lambda) \\
    &= (x - \sigma^*)( 1 - 2 \eta x (x + (\lambda + 1)\sigma^*) ).
\end{align*}
When $x \in [\min\{\sigma, \sigma_0\}, \max\{\sigma, \sigma_0\}]$, $x (x + (\lambda + 1)\sigma^*) \ge \min\{\sigma^2, \sigma_0^2\}$. On the other hand 
\[
    x (x + (\lambda + 1)\sigma^*) \le (\lambda + 2) \max\{\sigma^2, \sigma_0^2\}.
\]
This suggest that
\[
1 - 2 \eta x (x + (\lambda + 1)\sigma^*) > 0.
\]

Therefore, 
\begin{align*}
    |f(x; \eta, \lambda, \sigma, \sigma_0) - \sigma^*| \le |x - \sigma^*| (1 - 2 \eta \min\{\sigma^2, \sigma_0^2\}).
\end{align*}
Also $f(x; \eta, \lambda, \sigma, \sigma_0) - \sigma^*$ has the same sign as $x - \sigma^*$. This concludes the proof.
\end{proof}

\FloatBarrier
\section{Experimental Details from~\Cref{sec:experiments}: Large Model Experiments}
\label{app:ift_details}

In this section, we present all of the omitted experimental details from~\Cref{sec:experiments} that are necessarily for replication.

\subsection{Pre-trained models.}
\label{app:pretrained_models}

For our pre-trained models, we use checkpoints from three base models: OLMo-1B \citep{groeneveld2024olmo}, OLMo-2-7B \citep{olmo20242olmo2furious}, and LLM360-Amber \citep{liu2023llm360}. We choose checkpoints that have been released on each of the model's HuggingFace pages, given by Table~\ref{app_tab:pretrained_models}.

\begin{table}[ht]
    \centering
    \small
    \begin{tabular}{ccccc}
        \toprule
        \textbf{Model} & \textbf{HuggingFace ID} & \textbf{Revision} & \textbf{Step} & \textbf{Token Budget} \\
        \midrule
        OLMo-1B & \texttt{allenai/OLMo-1B-hf} & \texttt{step10000-tokens41B} & 10k & 0.04T \\
                &  & \texttt{step117850-tokens494B} & 118k & 0.5T \\
                &  & \texttt{step358000-tokens1501B} & 358k & 1.5T \\
                &  & \texttt{step447000-tokens1874B} & 447k & 1.9T \\
                &  & \texttt{step561250-tokens2353B} & 561k & 2.4T \\
                &  & \texttt{step738000-tokens3094B} & 738k & 3.1T \\
                \midrule
        OLMo-2-7B & \texttt{allenai/OLMo-2-1124-7B} & \texttt{stage1-step19000-tokens80B} & 19k & 0.08T \\
                  &  & \texttt{stage1-step120000-tokens504B} & 120k & 0.5T \\
                  &  & \texttt{stage1-step441000-tokens1850B} & 441k & 1.9T \\
                  &  & \texttt{stage1-step584000-tokens2450B} & 584k & 2.5T \\
                  &  & \texttt{stage1-step727000-tokens3050B} & 727k & 3.1T \\
                  &  & \texttt{stage1-step928646-tokens3896B} & 929k & 3.9T \\
                \midrule
        LLM360-Amber (7B) & \texttt{LLM360/Amber} & \texttt{ckpt\_040} & 40 & 0.12T \\
                &  & \texttt{ckpt\_102} & 102 & 0.31T \\
                &  & \texttt{ckpt\_244} & 244 & 0.75T \\
                &  & \texttt{ckpt\_306} & 306 & 0.94T \\
                &  & \texttt{ckpt\_358} & 358 & 1.1T \\
                &  & \texttt{ckpt\_410} & 410 & 1.3T \\
        \bottomrule
    \end{tabular}
    \caption{Pre-trained models used in our experiments in~\Cref{sec:experiments}.}
    \label{app_tab:pretrained_models}
\end{table}

\subsection{Fine-tuning setup.}
\label{app:fine_tuning_setup}

We fine-tune with two different common post-training paradigms: instruction tuning and multimodal tuning. For instruction tuning, we use the following datasets.

\textbf{Anthropic-HH} \citep{bai2022traininghelpfulharmlessassistant}. While Anthropic-HH is typically a dataset designed for preference tuning---the dataset includes both a ``chosen'' and a ``rejected'' response for each instruction---it can also be used as a standard instruction tuning dataset by treating the ``chosen'' response as the target. Anthropic-HH contains 180k instructions and responses.

\textbf{TULU} \citep{wang2023far}. We use the version 1.0 of the TULU SFT mixture, which contains 490k instructions and responses. However, for compute efficiency, we only use a randomly selected 200k subset.

\textbf{LLaVA} \citep{liu2023llava}. We use the LLaVA visual instruction tuning framework to train multimodel models. The LLaVA framework involves two stages: first, fine-tuning an adapter between a vision model and a pre-trained language model, and then fine-tuning the entire model to follow instructions in the presence of images. 

When fine-tuning for instruction tuning, we use the standard SFT training algorithm with the following hyperparameters, as shown in Table~\ref{app_tab:instruction_tuning_hyperparameters}. In this table, we also present the hyperparameters we use with the LLaVA framework, using the defaults for all non-specified hyperparameters.

\begin{table}[ht]
    \centering
    \small
    \begin{tabular}{p{0.12\textwidth} p{0.08\textwidth} p{0.28\textwidth} p{0.12\textwidth} p{0.08\textwidth} p{0.08\textwidth} p{0.06\textwidth}}
        \toprule
        \textbf{Dataset} & \textbf{Batch size} & \textbf{Learning rates} & \textbf{Learning rate schedule} & \textbf{Warmup steps} & \textbf{Optimizer} & \textbf{Weight decay} \\
        \midrule
        Anthropic-HH & 256 & \texttt{1e-6, 5e-6, 1e-5, 5e-5, 8e-5, 1e-4, 2e-4} & Cosine & 20 & AdamW & 0\\
        \midrule
        Alpaca & 256 & \texttt{1e-6, 5e-6, 1e-5, 5e-5, 8e-5, 1e-4, 2e-4} & Cosine & 20 & AdamW & 0\\
        \midrule
        TULU & 256 & \texttt{1e-6, 5e-6, 1e-5, 5e-5, 8e-5, 1e-4, 2e-4} & Cosine & 20 & AdamW & 0\\
        \midrule
        Visual (LLaVa) Stage 1 (Projector training) & 256 & \texttt{1e-3} & Cosine & 50 & AdamW & 0\\
        \midrule
        Visual (LLaVa) Stage 2 (Inst. tuning) & 256 & \texttt{8e-6, 1e-5, 2e-5, 4e-5, 1e-4} & Cosine & 40 & AdamW & 0\\
        \bottomrule
    \end{tabular}
    \caption{Hyperparameters used for instruction tuning and LLaVA.}
    \label{app_tab:instruction_tuning_hyperparameters}
\end{table}

\subsection{Evaluations}
\label{app:evaluations}

We evaluate the fine-tuned models in two settings: downstream evaluations---tasks that is representative of the goal of fine-tuning---and generalist evaluations---tasks that are representative of the model's overall language understanding and inference capabilities. For downstream evaluations, we use the following datasets.

\textbf{AlpacaEval} \citep{alpaca_eval}. To evaluate the downstream performance of instruction-tuned models, we use AlpacaEval, a benchmark for evaluating the quality of a model's response to an instruction. The AlpacaEval benchmark contains 20k instructions, and measures the win-rate of the fine-tuned model against a reference model. By default, AlpacaEval reports win-rate vs GPT-4 responses. However, we evaluate models that are weak by comparison to GPT-4. If we compare against GPT-4, the win rate is so low that it is difficult to see the differences between models. Thus, we compare against a weaker model. In particular, for each of our models, we use a reference model of the same architecture that was also fine-tuned on the same dataset. More specifically, we use the model trained with seed $0$ with learning rate $10^{-5}$. This means that the AlpacaEval scores are not comparable across different graphs, as the reference generations are different for each model and dataset. Additionally, the AlpacaEval score of the model trained with seed $0$ and learning rate $10^{-5}$ is $50\%$ by definition. Overall, we adopt these choices to ensure that the reference generations are comparable to each model output. We use LLaMA-3-70B-Instruct \citep{grattafiori2024llama3herdmodels} as an evaluator to determine the win rate.

\textbf{VLM Score.} To evaluate the downstream performance of our LLaVA models, we use an average of the following five standard vision-language benchmarks: MME~\citep{fu2024mmecomprehensiveevaluationbenchmark}, GQA~\citep{hudson2019gqa}, AI2D~\citep{kembhavi2016diagram}, POPE~\citep{li2023evaluatingobjecthallucinationlarge}, and TextVQA~\citep{singh2019towards}. We report the average as the ``VLM score''.

\textbf{Generalist evaluations.} To evaluate each language model for generalist capabilities, we consider a suite of ten commonly used LLM evaluation benchmarks. These tasks assess performance beyond the fine-tuning task. These tasks cover reasoning (ARC\_Challenge and ARC\_Easy~\citep{allenai:arc}), commonsense (PIQA~\citep{Bisk2020}, Winogrande~\citep{ai2:winogrande}), natural language inference (BoolQ~\citep{clark2019boolq}, COPA, SCIQ) and sentence completion (HellaSwag). For all of our evaluations, we report 5-shot performance.

\FloatBarrier
\section{Experimental Details from~\Cref{sec:experiments_controlled}: Controlled Experiments}
\label{app:controlled_experimental_details}

In this section, we provide additional experimental details for the controlled experiments presented in~\Cref{sec:experiments_controlled}.

\subsection{Pre-training and fine-tuning setup.}

For our controlled experiments, we pre-train models using the OLMo codebase \citep{groeneveld2024olmo}. We use muP parameterization for all of our experiments \citep{yang2022tensor}.

\textbf{Pre-training.}
We train three different model classes: OLMo-15M, OLMo-30M, and OLMo-90M with 15M, 30M and 90M non-embedding parameters, respectively. We use the following hyperparameters for pre-training, as shown in Table~\ref{app_tab:controlled_pretraining_hyperparameters}. For each model, we train for tokens in the range {4B, 8B, 16B, 32B, 64B, 128B} using the pre-tokenized C4 ``high quality'' web data distributed by OLMo \citep{olmo20242olmo2furious}. We train with 8xA100 GPUs.

\begin{table}[ht]
    \centering
    \small
    \begin{tabular}{lrrr}
        \toprule
        \textbf{Hyperparameters} & \textbf{OLMo-15M} & \textbf{OLMo-30M} & \textbf{OLMo-90M} \\
        \midrule
        Layers & 3 & 6 & 9 \\
        Heads & 3 & 6 & 9 \\
        Number of unique tokens & 50304 & 50304 & 50304 \\
        Hidden dimensions & 192 & 384 & 576 \\
        Inner MLP dimensions & 768 & 1536 & 2304 \\
        Max context length & 1024 & 1024 & 1024 \\
        Activation type & SwiGLU & SwiGLU & SwiGLU \\
        Attention dropout & 0.1 & 0.1 & 0.1 \\
        Residual dropout & 0.1 & 0.1 & 0.1 \\
        Embedding dropout & 0.1 & 0.1 & 0.1 \\
        Optimizer & AdamW & AdamW & AdamW \\
        Learning rate & 0.0003 & 0.0003 & 0.0003 \\
        Beta1 & 0.9 & 0.9 & 0.9 \\
        Beta2 & 0.95 & 0.95 & 0.95 \\
        Learning rate scheduler & Cosine & Cosine & Cosine \\
        Warmup steps & 10\% of training & 10\% of training & 10\% of training \\
        Weight decay & 0.1 & 0.1 & 0.1 \\
        Batch size & 256 & 256 & 256 \\
        \bottomrule
    \end{tabular}
    \caption{Pre-training hyperparameters used in our controlled experiments.}
    \label{app_tab:controlled_pretraining_hyperparameters}
\end{table}

For each model, we anneal the learning rate to zero over the course of training, at the rate specified by the cosine learning rate scheduler.

\textbf{Fine-tuning.}
For each of our controlled experiments, we fine-tune the pre-trained models on a series of downstream tasks of two types: classification and language modeling. These ten datasets are: classification---SUBJ \citep{pang2004sentimental}, BoolQ~\citep{clark2019boolq}, MR~\citep{conneau2018senteval}, CR~\citep{conneau2018senteval}, RTE~\citep{dagan2005pascal}, TREC~\citep{voorhees2000building}, English Tweet sentiment~\citep{tweet-sentiment-extraction}, SIQA~\citep{sap2019socialiqa}, and language modeling---GSM8k~\citep{cobbe2021training}, Starcoder-Python~\citep{li2023starcoder}. For Starcoder-Python, we use a 5k example subset. To avoid confusion, note that despite the fact that GSM8k is often evaluated as a math reasoning benchmark, we treat it as a language modeling task to evaluate how well the models can learn math-style text. We use the following hyperparameters for fine-tuning, as shown in Table~\ref{app_tab:controlled_finetuning_hyperparameters}. 

\begin{table}[ht]
    \centering
    \small
    \begin{tabular}{p{0.2\textwidth} p{0.7\textwidth}}
        \toprule
        \textbf{Hyperparameters} & \textbf{Values} \\
        \midrule
        Learning rate & 4e-6, 8e-6, 1e-5, 2e-5, 4e-5, 5e-5, 6e-5, 7e-5, 8e-5, 9e-5, 1e-4, 1.1e-4, 1.2e-4, 1.4e-4, 1.6e-4, 1.8e-4, 2e-4, 2.4e-4, 4e-4, 5e-4, 6e-4, 8e-4, 1e-3, 2e-3, 3e-3, 4e-3, 6e-3 \\
        Batch size & 32, 64*, 256 \\
        Learning rate scheduler & Cosine*, Constant \\
        Optimizer & AdamW \\
        Weight decay & 0.0 \\
        Warmup steps & 10\% of training \\
        Epochs & 4 \\
        \bottomrule
    \end{tabular}
    \caption{Fine-tuning hyperparameters used in our controlled experiments. We tune over all specified learning rates. For the other hyperparameters, when multiple are specified, the asterisks (*) indicates the default value which is used unless a different hyperparameter is specified. We perform early stopping over the number of epochs.}
    \label{app_tab:controlled_finetuning_hyperparameters}
\end{table}

\textbf{Evaluation.} For tuning, we use a heldout validation set from each dataset, but report scores on a separate heldout test set. In order to compute the perplexity for classification tasks, we compute a score for each class by measuring the length-normalized likelihood of the class, and then report the perplexity over the classes. For generative tasks, we use the standard language modeling loss. As a measure of generalist capability, we report the perplexity on a heldout C4 web data set.

\textbf{Appropriate learning rate ranges for Figure~\ref{fig:pt_ft_perplexity}.} For visualization purposes, we choose to plot a subset of the learning rates which we evaluate in Figure~\ref{fig:pt_ft_perplexity}. In particular, we plot learning rates where the maximum pre-training perplexity, over all token budgets, is less than 6. This ensures that the learning rates we plot are in a range where the model is still retaining pre-training capability, and has not degenerated to a high perplexity which may not represent the more general case.

\begin{figure*}[t]
    \centering
    \includegraphics[width=\textwidth]{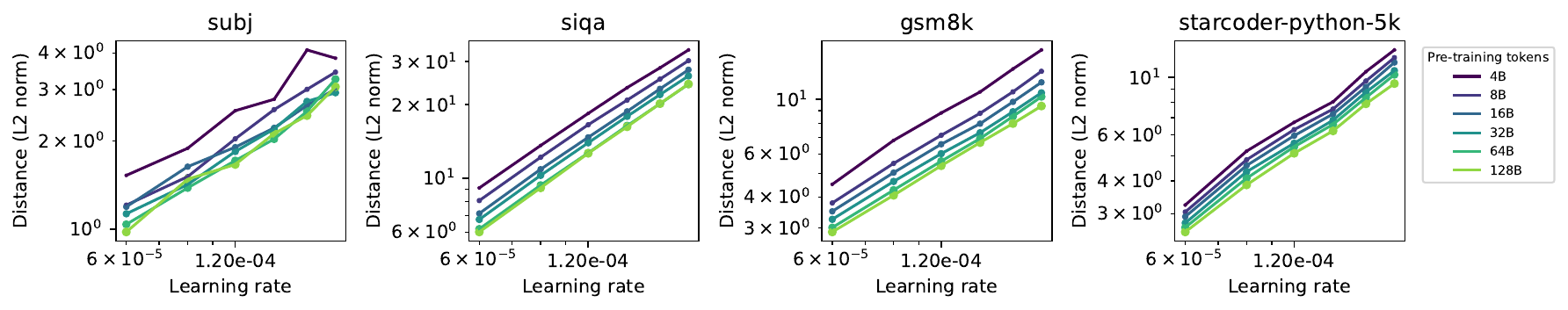}
    \caption{\textbf{Distance, as measured by L2 norm, between the pre-trained and fine-tuned model as a function of learning rate for OLMo-30M.} More specifically, if $\theta_{\text{pre}}$ and $\theta_{\text{ft}}$ are the parameters of the pre-trained and fine-tuned models, respectively, we plot $\|\theta_{\text{pre}} - \theta_{\text{ft}}\|_2$ as a function of the learning rate. We observe that the distance between the pre-trained and fine-tuned model is not exactly, but approximately, directly proportional to the learning rate and independent of the amount of pre-training.}
    \label{fig:appendix_distance_vs_lr}
\end{figure*}

\textbf{Using learning rate as a proxy for a fixed perturbation size.} We report the distance between the pre-trained and fine-tuned model as a function of the learning rate for different token budgets in Figure~\ref{fig:appendix_distance_vs_lr}. Recall, from Section~\ref{sec:experiments_controlled}, that we specified that the learning rate is an approximate proxy for the size of the perturbation applied to the model. We observe that the distance between the pre-trained and fine-tuned model is not exactly, but approximately, directly proportional to the learning rate and independent of the amount of pre-training.

\subsection{Gaussian perturbations.}
\label{app:gaussian_perturbations_details}

In this subsection, we outline the details concerning Gaussian perturbations applied during our experiments. In particular, we perturb each parameter by a random value sampled from a mean-zero Gaussian distribution and evaluate the degradation of pre-training perplexity in Section~\ref{sec:experiments_controlled}. Using an isotropic Gaussian perturbation, i.e., perturbing each parameter by the same amount, would discount differences in parameter magnitude across different layers. To account for this, we choose to scale the perturbation to each layer to be approximately proportional to the magnitude of the parameter in that layer---however, we want the magnitude to be constant for different pre-training token budgets. Thus, we choose to normalize the magnitude of each perturbation to the same magnitude as the layer at initialization prior to pre-training.

\FloatBarrier
\section{Connection Between Progressive Sensitivity and Sharpness}
\label{app:sharpness}

\begin{figure*}[t]
    \centering
    \includegraphics[width=\textwidth]{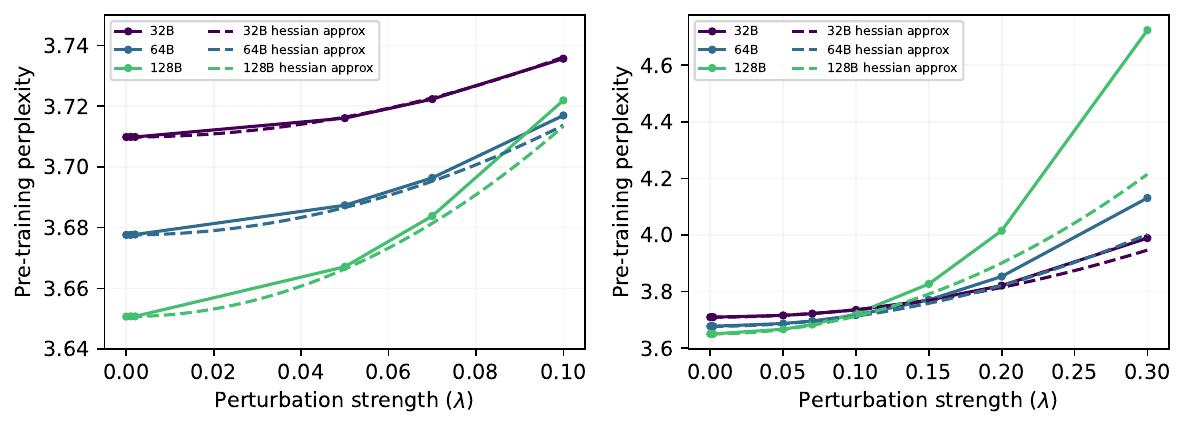}
    \caption{\textbf{Hessian approximation of the pre-training loss under a single interpolated Gaussian parameter perturbation.} We randomly draw a Gaussian perturbation $\varepsilon$, and then compute the loss $L(\theta + \lambda \varepsilon)$, where $\lambda$ is the scaling factor, for many different $\lambda$ (extremely close to zero on the left, and with a wider range on the right). We then compute Hessian, and use it to render the quadratic approximation of the loss.}
    \label{fig:appendix_hessian_approximation_gaussian}
\end{figure*}

\begin{figure*}[t]
    \centering
    \includegraphics[width=\textwidth]{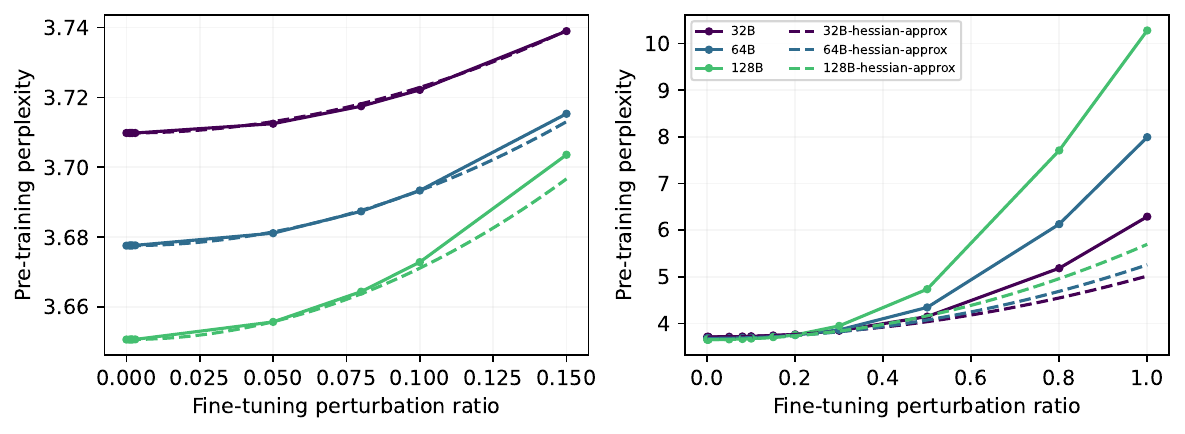}
    \caption{\textbf{Hessian approximation of the pre-training loss under an interpolated fine-tuning perturbation.} We fine-tune each model on ag\_news yielding a fine-tuning perturbation $\varepsilon$, and then compute the loss $L(\theta + \lambda \varepsilon)$, where $\lambda$ is the scaling factor, for many different $\lambda$ (extremely close to zero on the left, and with a wider range on the right). We then compute Hessian, and use it to render the quadratic approximation of the loss.}
    \label{fig:appendix_hessian_approximation_finetuning}
\end{figure*}

In this section, we discuss the connection between our \textit{progressive sensitivity} conjecture and the phenomenon known as \textit{progressive sharpening} \citep{cohen2021gradient} in greater detail. 

\textbf{Progressive sharpening.} This phenomenon refers to the empirical observation that over training with a fixed learning rate, the spectral norm $\|\nabla^2 \mathcal{L}(\theta)\|_2$ of the Hessian of the loss function $\mathcal{L}$ at the parameters $\theta$ increases over time, at least early in training. In the case of of (full batch) gradient descent with a fixed learning rate $\eta$, $\|\nabla^2 \mathcal{L}(\theta)\|_2$ specifically increases until it reaches $2/\eta$, which is discussed in detail in \citet{cohen2021gradient}. In addition to the spectral norm, other norms of the Hessian, such as the trace norm, also exhibit a similar behavior.

\textbf{Relationship between progressive sensitivity and progressive sharpening when loss is quadratic.} As it turns out, progressive sensitivity and progressive sharpening are closely related specifically in the quadratic setting. In particular, consider a quadratic loss function $\mathcal{L}(\theta) = \frac{1}{2}\theta^\top H \theta + g^\top \theta + c$, where $\theta \in \R^{d}$, $H \in \R^{d \times d}$ is a symmetric matrix, $g \in \R^{d}$, and $c \in \R$. We will look specifically at the sensitivity to a Gaussian perturbation $\Delta(\theta, \lambda) = \E\left[\mathcal{L}(\theta + \lambda \varepsilon) - \mathcal{L}(\theta)\right]$, where $\varepsilon \sim \mathcal{N}(0, I)$ is a unit Gaussian vector.

\begin{proposition}
\label{prop:sharpness_gaussian}
    The sensitivity of $\mathcal{L}$ to a Gaussian perturbation is given by $\Delta(\theta, \lambda) = \frac{1}{2}\lambda^2 \Tr H$.
\end{proposition}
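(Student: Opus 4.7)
The plan is to compute $\Delta(\theta,\lambda)$ directly by expanding the quadratic form around $\theta$ and exploiting the moments of the isotropic Gaussian perturbation $\varepsilon \sim \mathcal{N}(0, I)$. The entire argument is a short symbolic computation, so I would carry it out in one pass rather than introducing any auxiliary machinery.

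First I would expand
\begin{equation*}
\mathcal{L}(\theta + \lambda \varepsilon) = \tfrac{1}{2}(\theta + \lambda\varepsilon)^\top H (\theta + \lambda\varepsilon) + g^\top(\theta + \lambda\varepsilon) + c
\end{equation*}
using the symmetry of $H$ and collect the result into three pieces by order in $\lambda$: the $\lambda^0$ piece reproduces $\mathcal{L}(\theta)$, the $\lambda^1$ piece equals $\lambda(H\theta + g)^\top \varepsilon$, and the $\lambda^2$ piece equals $\tfrac{\lambda^2}{2}\varepsilon^\top H \varepsilon$. Subtracting $\mathcal{L}(\theta)$ then cancels the constant-in-$\varepsilon$ terms, leaving
\begin{equation*}
\mathcal{L}(\theta + \lambda\varepsilon) - \mathcal{L}(\theta) = \lambda (H\theta + g)^\top \varepsilon + \tfrac{\lambda^2}{2}\varepsilon^\top H \varepsilon.
\end{equation*}

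Next I would take expectation term by term. Since $\mathbb{E}[\varepsilon] = 0$, the linear term vanishes. For the quadratic term, I would use the standard identity $\mathbb{E}[\varepsilon^\top H \varepsilon] = \mathrm{Tr}(H\,\mathbb{E}[\varepsilon\varepsilon^\top]) = \mathrm{Tr}(H)$, which follows from the cyclic property of the trace together with $\mathbb{E}[\varepsilon\varepsilon^\top] = I$. This yields $\Delta(\theta,\lambda) = \tfrac{1}{2}\lambda^2 \mathrm{Tr}(H)$, as claimed.

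There is no real obstacle here: the only content is the identity $\mathbb{E}[\varepsilon^\top H \varepsilon] = \mathrm{Tr} H$ for an isotropic Gaussian, and everything else is bookkeeping. The more interesting step is conceptual rather than technical, namely noting that the result depends on $\theta$ and $g$ only through the disappearing linear term, so the sensitivity to a zero-mean Gaussian perturbation of a quadratic loss is a pure function of the Hessian trace; this is precisely the bridge to the progressive-sharpening discussion that motivates the proposition.
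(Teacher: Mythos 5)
Your proof is correct and follows essentially the same route as the paper's: expand the quadratic, note the linear-in-$\varepsilon$ term vanishes since $\E[\varepsilon]=0$, and apply $\E[\varepsilon^\top H \varepsilon] = \Tr(H\,\E[\varepsilon\varepsilon^\top]) = \Tr H$. Your version is slightly more explicit about the order-by-order bookkeeping, but there is no substantive difference.
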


\begin{proof}
    We have,
    \begin{align}
        \E\left[\mathcal{L}(\theta) - \mathcal{L}(\theta + \lambda \varepsilon)\right] &= \E\left[\frac{1}{2}\left((\theta + \lambda \varepsilon)^\top H (\theta + \lambda \varepsilon) + g^\top (\theta + \lambda \varepsilon) + c\right) - \frac{1}{2}(\theta^\top H \theta + g^\top \theta + c)\right] \\
        &= \E\left[\frac{1}{2} \lambda^2 \varepsilon^\top H \varepsilon\right] = \frac{1}{2} \lambda^2 \Tr H,
    \end{align}
    where the second equality follows from the linearity of expectation and the fact that $\E[\varepsilon] = 0$.
\end{proof}

This proposition establishes that the sensitivity under a Gaussian perturbation is exactly related to the Hessian when the loss function is quadratic. This connection will hold, in general, when the loss function is well-approximated by its second-order Taylor expansion, such as when $\lambda$ is small. In this instance, progressive sharpening and progressive sensitivity are closely related.

\textbf{Progressive sharpness is not sufficient to explain degradation when $\lambda$ is large.} We plot the empirical loss of three different OLMo-30B models (trained on 32B, 64B, and 128B tokens) under a Gaussian perturbation with perturbation strength $\lambda$, as well as the second-order Taylor approximation in Figure~\ref{fig:appendix_hessian_approximation_gaussian}. In particular, we draw the perturbation $\varepsilon$ with the distribution described in Appendix~\ref{app:gaussian_perturbations_details}. We observe that while the loss is well-approximated by the Hessian when $\lambda$ is small (left), the approximation breaks down when $\lambda$ is large (right), and the actual loss is substantially higher than the quadratic approximation.

\textbf{Progressive sharpness is not a sufficient explanation for fine-tuning sensitivity.} Similar to the Gaussian case, we consider the loss of three OLMo-30B models as they are interpolated between the base model and the model fine-tuned on ag\_news in Figure~\ref{fig:appendix_hessian_approximation_finetuning}. In this example, a perturbation strength of $\lambda = 0$ corresponds to the base model, while a perturbation strength of $\lambda = 1$ corresponds to the fine-tuned model. Similar to the Gaussian case, we observe that the loss is not well-approximated by the Hessian when $\lambda$ is large, and the actual loss is substantially higher than the quadratic approximation (right).

\textbf{Progressive sensitivity as a generalization of progressive sharpness.} Our results highlight that in addition to progressive sharpness, which specifically refers to a progressive increase in the eigenvalues of the Hessian of the loss function with training, there is a more global phenomenon where the loss becomes even more sensitive to perturbations than the quadratic approximation predicts.

\FloatBarrier
\section{Omitted Figures from Section~\ref{sec:experiments}: Large Model Experiments}
\label{app:ift_omitted_figures}

In this section, we provide the omitted figures from Section~\ref{sec:experiments} that show the results of the extended experiments with large models.

The following Table~\ref{tab:large_model_experimental_figures} lists the table of contents for the omitted figures.

\begin{table}[ht]
    \centering
    \begin{tabular}{lccc}
        \toprule
        Dataset (Variant) & OLMo-1B & OLMo-2-7B & LLM360-7B \\
        \midrule
        Anthropic-HH (tuned learning rate) & Figure~\ref{fig:appendix_large_model_AnthropicHH_OLMo-1B_tuned}  & Figure~\ref{fig:appendix_large_model_AnthropicHH_OLMo-2-7B_tuned}  & Figure~\ref{fig:appendix_large_model_AnthropicHH_LLM360-7B_tuned} \\
        Anthropic-HH (all learning rates)   & Figure~\ref{fig:appendix_large_model_AnthropicHH_OLMo-1B_all}    & Figure~\ref{fig:appendix_large_model_AnthropicHH_OLMo-2-7B_all}    & Figure~\ref{fig:appendix_large_model_AnthropicHH_LLM360-7B_all} \\
        \midrule
        TULU (tuned learning rate)      & Figure~\ref{fig:appendix_large_model_TuluV1_OLMo-1B_tuned}    & Figure~\ref{fig:appendix_large_model_TuluV1_OLMo-2-7B_tuned}    & Figure~\ref{fig:appendix_large_model_TuluV1_LLM360-7B_tuned} \\
        TULU (all learning rates)        & Figure~\ref{fig:appendix_large_model_TuluV1_OLMo-1B_all}      & Figure~\ref{fig:appendix_large_model_TuluV1_OLMo-2-7B_all}      & Figure~\ref{fig:appendix_large_model_TuluV1_LLM360-7B_all} \\
        \midrule
        VLM (tuned learning rate)       & Figure~\ref{fig:appendix_large_model_VLM_OLMo-1B_tuned}       & Figure~\ref{fig:appendix_large_model_VLM_OLMo-2-7B_tuned}       & Figure~\ref{fig:appendix_large_model_VLM_LLM360-7B_tuned} \\
        VLM (all learning rates)         & Figure~\ref{fig:appendix_large_model_VLM_OLMo-1B_all}         & Figure~\ref{fig:appendix_large_model_VLM_OLMo-2-7B_all}         & Figure~\ref{fig:appendix_large_model_VLM_LLM360-7B_all} \\
        \bottomrule
    \end{tabular}
    \caption{Figure references for each dataset (Alpaca, Anthropic-HH, TULU, VLM) and model (OLMo-1B, OLMo-2-7B, LLM360-7B), separated by learning rate tuning variant.}
    \label{tab:large_model_experimental_figures}
\end{table}

\begin{figure*}[ht]
    \centering
    \includegraphics[width=\textwidth]{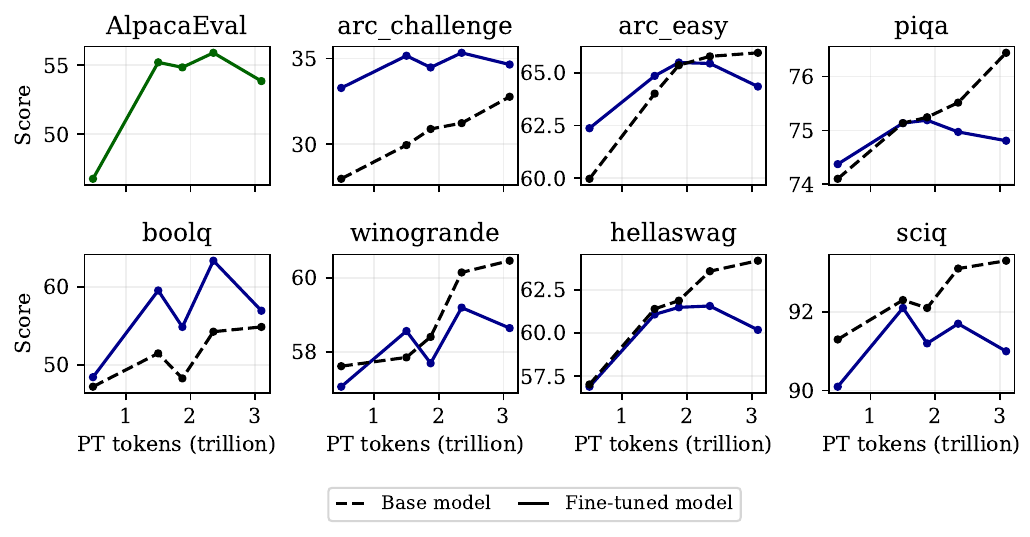}
    \caption{\textbf{Evaluation OLMo-1B post-trained on Anthropic-HH as a function of the number of pre-trained tokens, with tuned learning rates.} We report the scores on eight different datasets: AlpacaEval is considered to be the main evaluation of interest (corresponding with the downstream performance), and the other datasets are considered out-of-distribution (corresponding with the generalist performance). We use the intermediate checkpoints from Table~\ref{app_tab:pretrained_models} for the evaluation. We tune the learning rate for each checkpoint to maximize the main evaluation (AlpacaEval). This figure is analogous to Figure~\ref{fig:ift_vlm}.}
    \label{fig:appendix_large_model_AnthropicHH_OLMo-1B_tuned}
\end{figure*}

\begin{figure*}[ht]
    \centering
    \includegraphics[width=\textwidth]{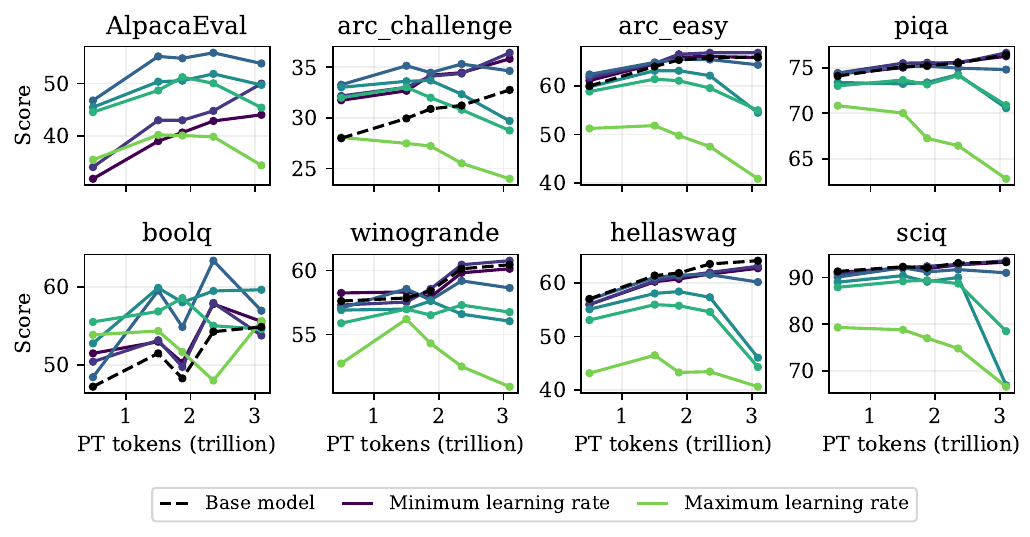}
    \caption{\textbf{Evaluation OLMo-1B post-trained on Anthropic-HH as a function of the number of pre-trained tokens, for all learning rates.} We report the scores on eight different datasets: AlpacaEval is considered to be the main evaluation of interest (corresponding with the downstream performance), and the other datasets are considered out-of-distribution (corresponding with the generalist performance). We use the intermediate checkpoints from Table~\ref{app_tab:pretrained_models} for the evaluation. We also compare to the base model (dashed line). This figure is similar to Figure~\ref{fig:appendix_large_model_AnthropicHH_OLMo-1B_tuned}, except we plot every learning rate, with a line representing a fixed learning rate.}
    \label{fig:appendix_large_model_AnthropicHH_OLMo-1B_all}
\end{figure*}

\begin{figure*}[ht]
    \centering
    \includegraphics[width=\textwidth]{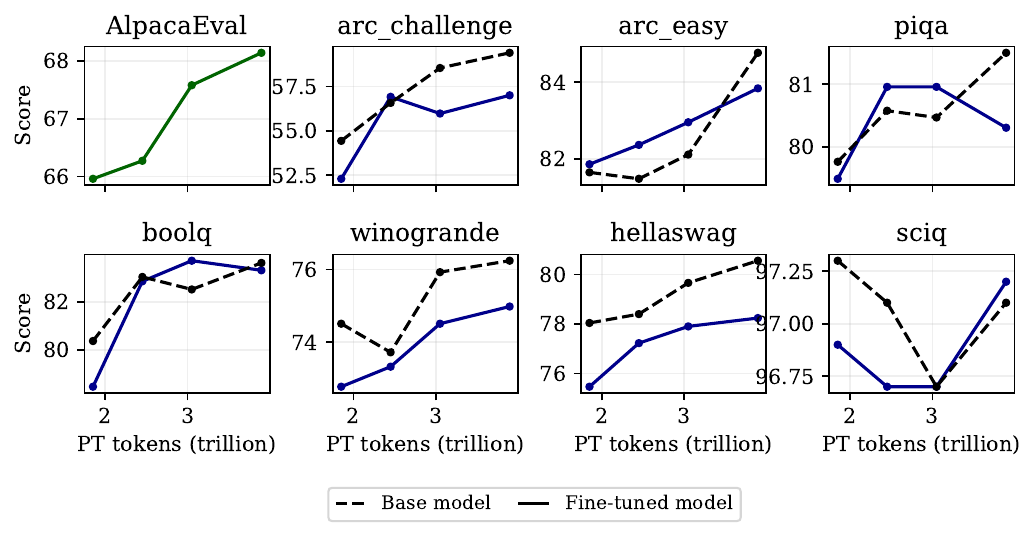}
    \caption{\textbf{Evaluation OLMo-2-7B post-trained on Anthropic-HH as a function of the number of pre-trained tokens, with tuned learning rates.} We report the scores on eight different datasets: AlpacaEval is considered to be the main evaluation of interest (corresponding with the downstream performance), and the other datasets are considered out-of-distribution (corresponding with the generalist performance). We use the intermediate checkpoints from Table~\ref{app_tab:pretrained_models} for the evaluation. We tune the learning rate for each checkpoint to maximize the main evaluation (AlpacaEval). This figure is analogous to Figure~\ref{fig:ift_vlm}.}
    \label{fig:appendix_large_model_AnthropicHH_OLMo-2-7B_tuned}
\end{figure*}

\begin{figure*}[ht]
    \centering
    \includegraphics[width=\textwidth]{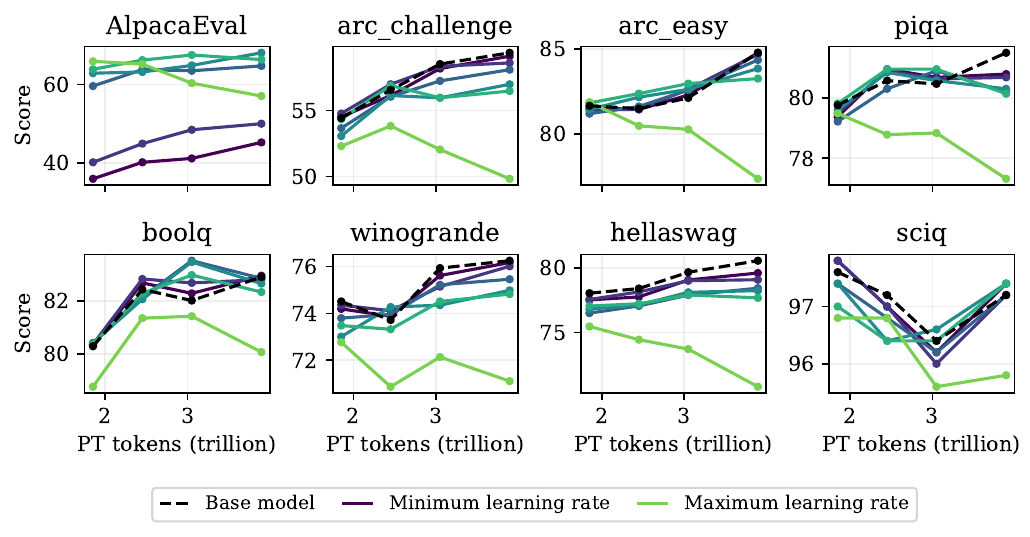}
    \caption{\textbf{Evaluation OLMo-2-7B post-trained on Anthropic-HH as a function of the number of pre-trained tokens, for all learning rates.} We report the scores on eight different datasets: AlpacaEval is considered to be the main evaluation of interest (corresponding with the downstream performance), and the other datasets are considered out-of-distribution (corresponding with the generalist performance). We use the intermediate checkpoints from Table~\ref{app_tab:pretrained_models} for the evaluation. We also compare to the base model (dashed line). This figure is similar to Figure~\ref{fig:appendix_large_model_AnthropicHH_OLMo-2-7B_tuned}, except we plot every learning rate, with a line representing a fixed learning rate.}
    \label{fig:appendix_large_model_AnthropicHH_OLMo-2-7B_all}
\end{figure*}

\begin{figure*}[ht]
    \centering
    \includegraphics[width=\textwidth]{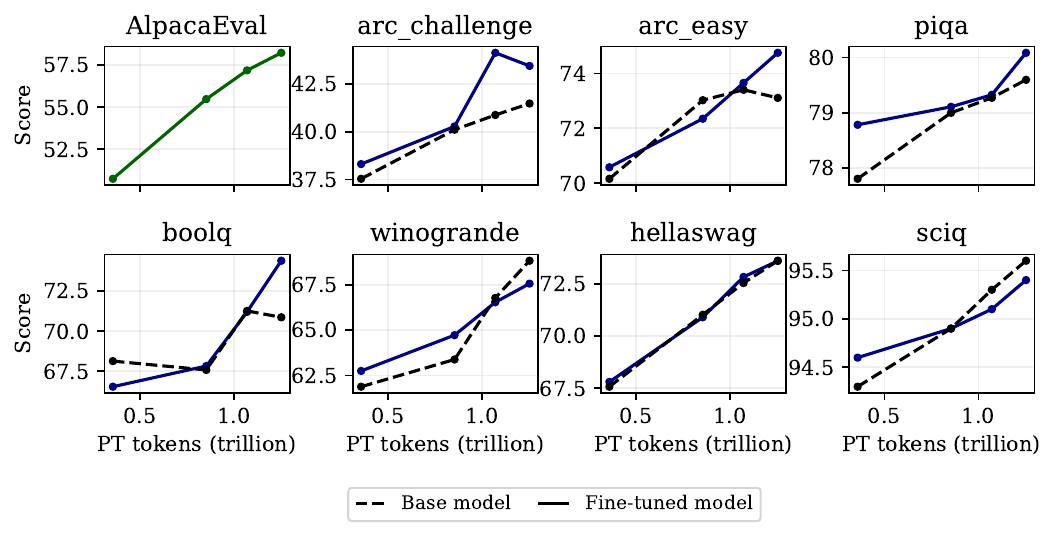}
    \caption{\textbf{Evaluation LLM360-7B post-trained on Anthropic-HH as a function of the number of pre-trained tokens, with tuned learning rates.} We report the scores on eight different datasets: AlpacaEval is considered to be the main evaluation of interest (corresponding with the downstream performance), and the other datasets are considered out-of-distribution (corresponding with the generalist performance). We use the intermediate checkpoints from Table~\ref{app_tab:pretrained_models} for the evaluation. We tune the learning rate for each checkpoint to maximize the main evaluation (AlpacaEval). This figure is analogous to Figure~\ref{fig:ift_vlm}.}
    \label{fig:appendix_large_model_AnthropicHH_LLM360-7B_tuned}
\end{figure*}

\begin{figure*}[ht]
    \centering
    \includegraphics[width=\textwidth]{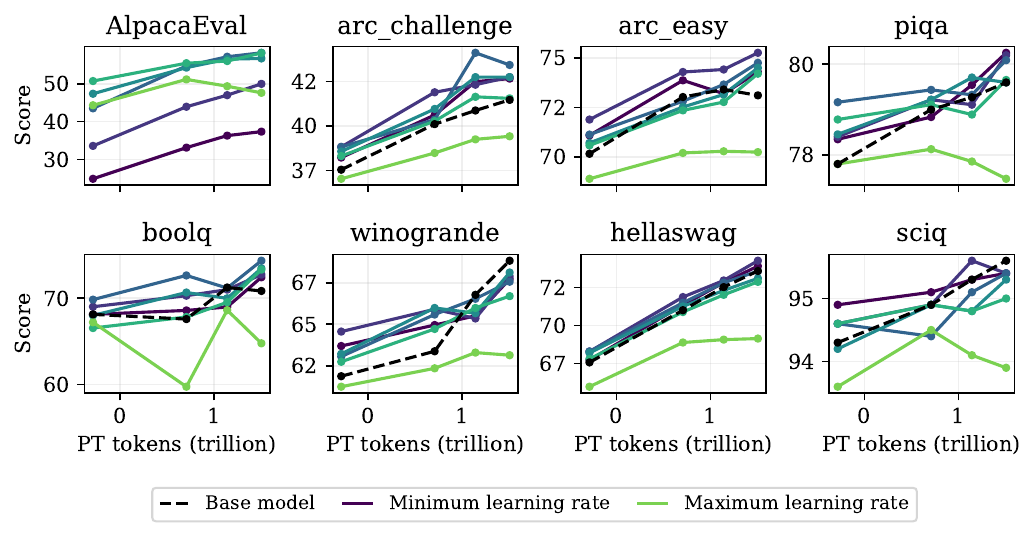}
    \caption{\textbf{Evaluation LLM360-7B post-trained on Anthropic-HH as a function of the number of pre-trained tokens, for all learning rates.} We report the scores on eight different datasets: AlpacaEval is considered to be the main evaluation of interest (corresponding with the downstream performance), and the other datasets are considered out-of-distribution (corresponding with the generalist performance). We use the intermediate checkpoints from Table~\ref{app_tab:pretrained_models} for the evaluation. We also compare to the base model (dashed line). This figure is similar to Figure~\ref{fig:appendix_large_model_AnthropicHH_LLM360-7B_tuned}, except we plot every learning rate, with a line representing a fixed learning rate.}
    \label{fig:appendix_large_model_AnthropicHH_LLM360-7B_all}
\end{figure*}

\begin{figure*}[ht]
    \centering
    \includegraphics[width=\textwidth]{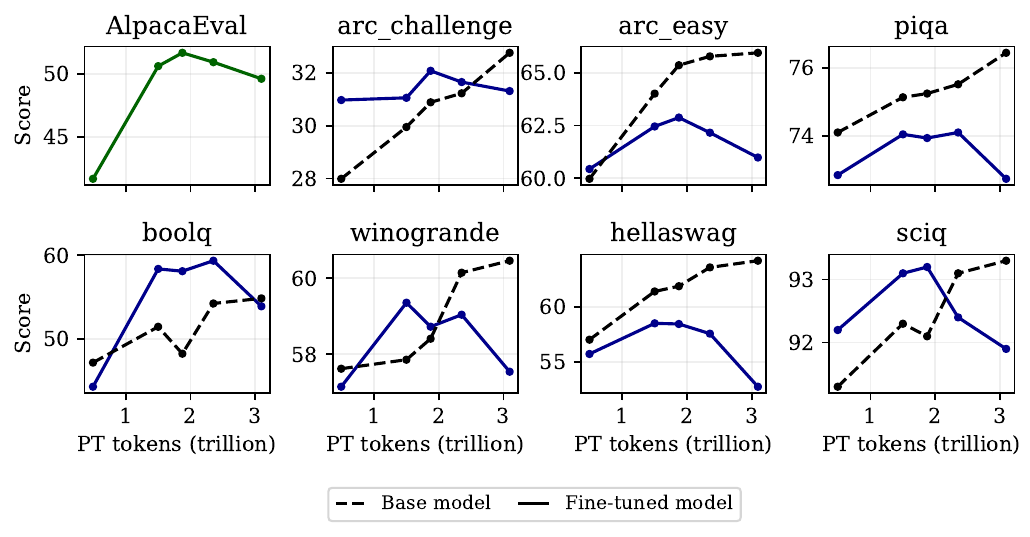}
    \caption{\textbf{Evaluation OLMo-1B post-trained on TULU as a function of the number of pre-trained tokens, with tuned learning rates.} We report the scores on eight different datasets: AlpacaEval is considered to be the main evaluation of interest (corresponding with the downstream performance), and the other datasets are considered out-of-distribution (corresponding with the generalist performance). We use the intermediate checkpoints from Table~\ref{app_tab:pretrained_models} for the evaluation. We tune the learning rate for each checkpoint to maximize the main evaluation (AlpacaEval). This figure is analogous to Figure~\ref{fig:ift_vlm}.}
    \label{fig:appendix_large_model_TuluV1_OLMo-1B_tuned}
\end{figure*}

\begin{figure*}[ht]
    \centering
    \includegraphics[width=\textwidth]{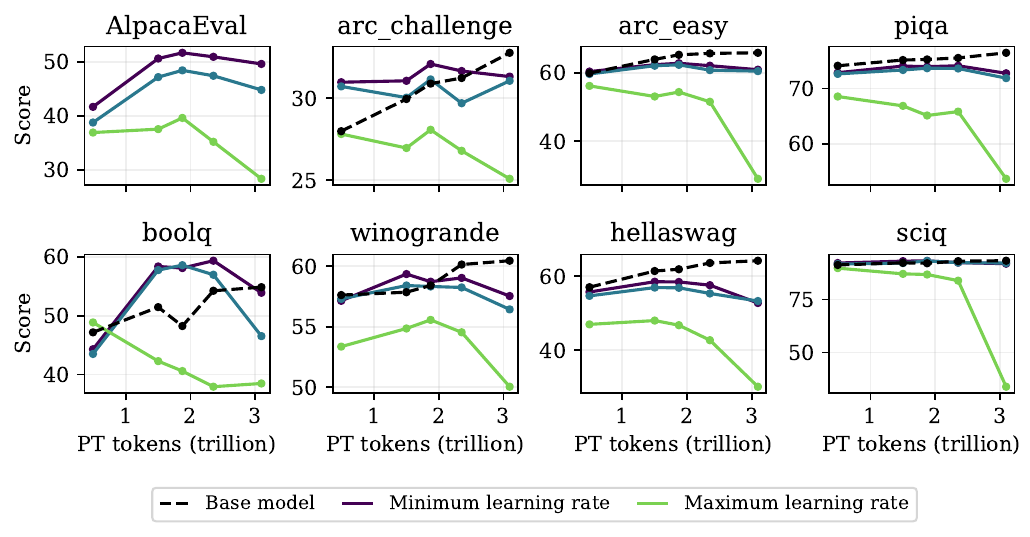}
    \caption{\textbf{Evaluation OLMo-1B post-trained on TULU as a function of the number of pre-trained tokens, for all learning rates.} We report the scores on eight different datasets: AlpacaEval is considered to be the main evaluation of interest (corresponding with the downstream performance), and the other datasets are considered out-of-distribution (corresponding with the generalist performance). We use the intermediate checkpoints from Table~\ref{app_tab:pretrained_models} for the evaluation. We also compare to the base model (dashed line). This figure is similar to Figure~\ref{fig:appendix_large_model_TuluV1_OLMo-1B_tuned}, except we plot every learning rate, with a line representing a fixed learning rate.}
    \label{fig:appendix_large_model_TuluV1_OLMo-1B_all}
\end{figure*}

\begin{figure*}[ht]
    \centering
    \includegraphics[width=\textwidth]{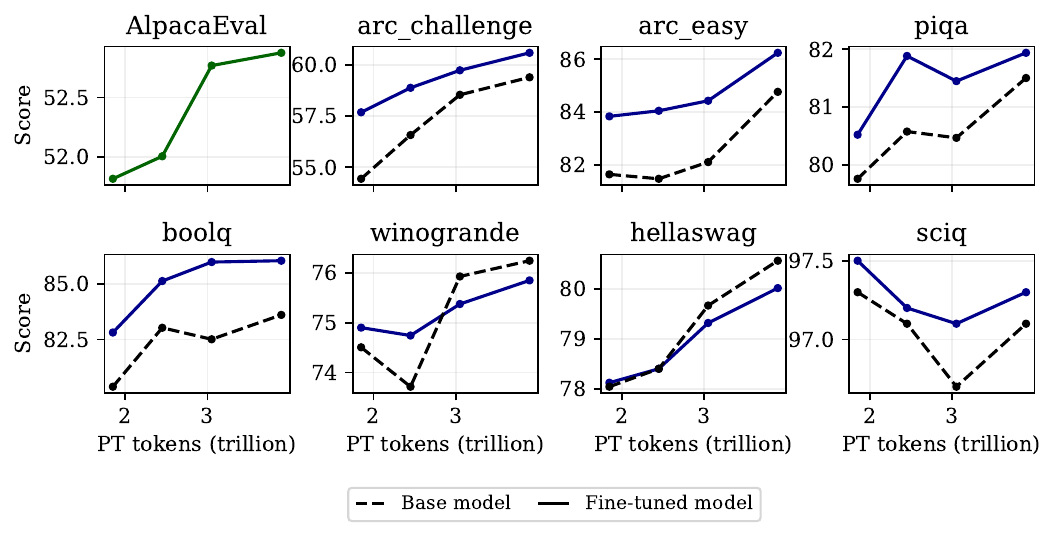}
    \caption{\textbf{Evaluation OLMo-2-7B post-trained on TULU as a function of the number of pre-trained tokens, with tuned learning rates.} We report the scores on eight different datasets: AlpacaEval is considered to be the main evaluation of interest (corresponding with the downstream performance), and the other datasets are considered out-of-distribution (corresponding with the generalist performance). We use the intermediate checkpoints from Table~\ref{app_tab:pretrained_models} for the evaluation. We tune the learning rate for each checkpoint to maximize the main evaluation (AlpacaEval). This figure is analogous to Figure~\ref{fig:ift_vlm}.}
    \label{fig:appendix_large_model_TuluV1_OLMo-2-7B_tuned}
\end{figure*}

\begin{figure*}[ht]
    \centering
    \includegraphics[width=\textwidth]{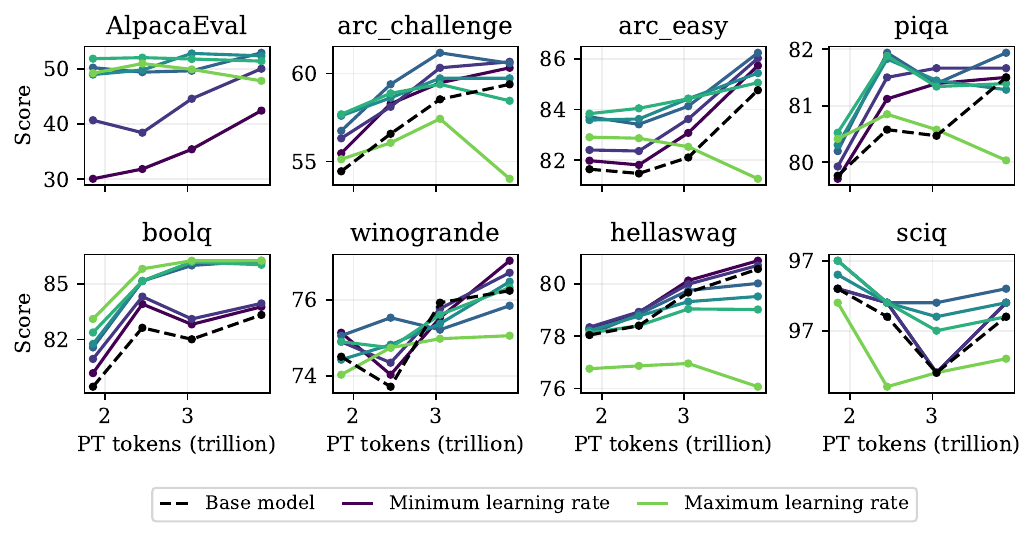}
    \caption{\textbf{Evaluation OLMo-2-7B post-trained on TULU as a function of the number of pre-trained tokens, for all learning rates.} We report the scores on eight different datasets: AlpacaEval is considered to be the main evaluation of interest (corresponding with the downstream performance), and the other datasets are considered out-of-distribution (corresponding with the generalist performance). We use the intermediate checkpoints from Table~\ref{app_tab:pretrained_models} for the evaluation. We also compare to the base model (dashed line). This figure is similar to Figure~\ref{fig:appendix_large_model_TuluV1_OLMo-2-7B_tuned}, except we plot every learning rate, with a line representing a fixed learning rate.}
    \label{fig:appendix_large_model_TuluV1_OLMo-2-7B_all}
\end{figure*}

\begin{figure*}[ht]
    \centering
    \includegraphics[width=\textwidth]{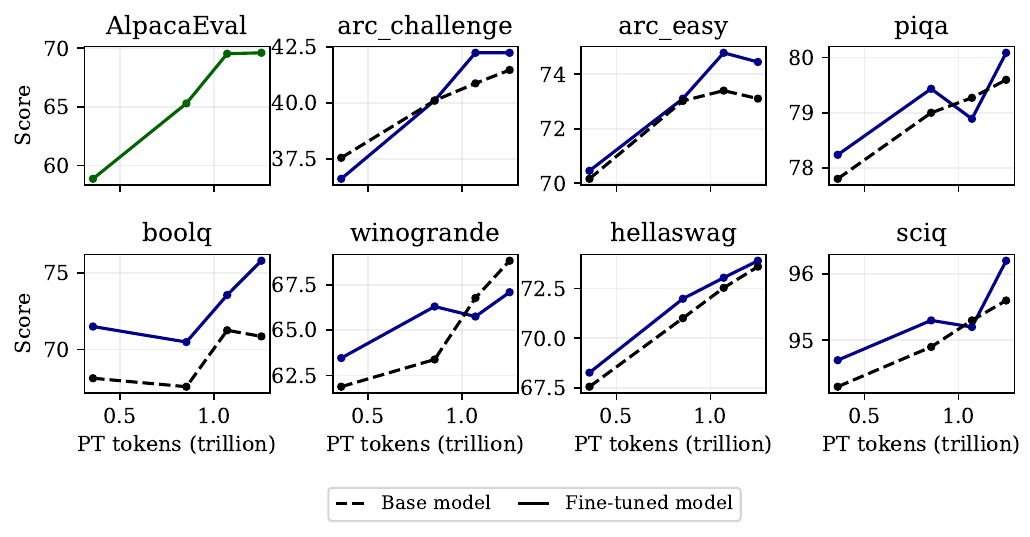}
    \caption{\textbf{Evaluation LLM360-7B post-trained on TULU as a function of the number of pre-trained tokens, with tuned learning rates.} We report the scores on eight different datasets: AlpacaEval is considered to be the main evaluation of interest (corresponding with the downstream performance), and the other datasets are considered out-of-distribution (corresponding with the generalist performance). We use the intermediate checkpoints from Table~\ref{app_tab:pretrained_models} for the evaluation. We tune the learning rate for each checkpoint to maximize the main evaluation (AlpacaEval). This figure is analogous to Figure~\ref{fig:ift_vlm}.}
    \label{fig:appendix_large_model_TuluV1_LLM360-7B_tuned}
\end{figure*}

\begin{figure*}[ht]
    \centering
    \includegraphics[width=\textwidth]{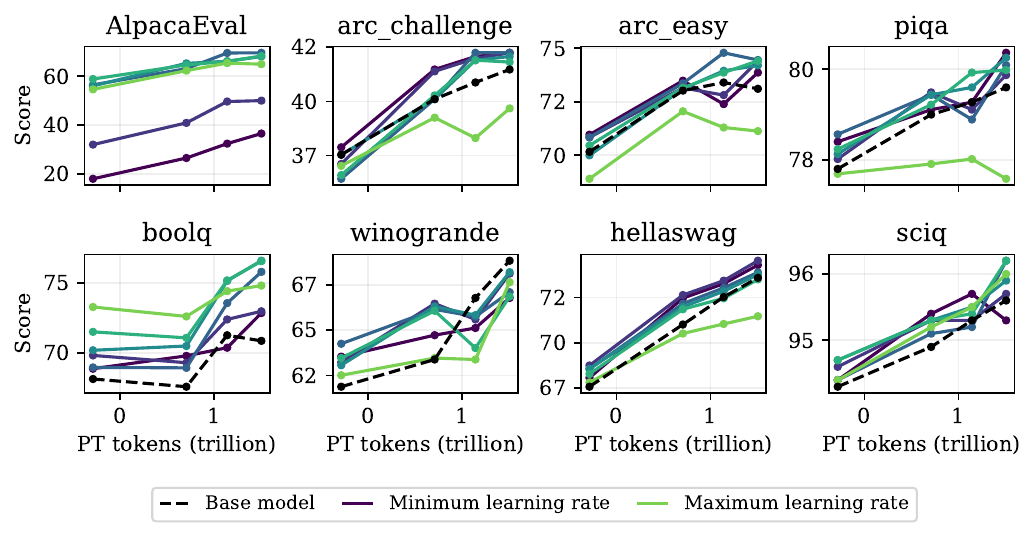}
    \caption{\textbf{Evaluation LLM360-7B post-trained on TULU as a function of the number of pre-trained tokens, for all learning rates.} We report the scores on eight different datasets: AlpacaEval is considered to be the main evaluation of interest (corresponding with the downstream performance), and the other datasets are considered out-of-distribution (corresponding with the generalist performance). We use the intermediate checkpoints from Table~\ref{app_tab:pretrained_models} for the evaluation. We also compare to the base model (dashed line). This figure is similar to Figure~\ref{fig:appendix_large_model_TuluV1_LLM360-7B_tuned}, except we plot every learning rate, with a line representing a fixed learning rate.}
    \label{fig:appendix_large_model_TuluV1_LLM360-7B_all}
\end{figure*}

\begin{figure*}[ht]
    \centering
    \includegraphics[width=\textwidth]{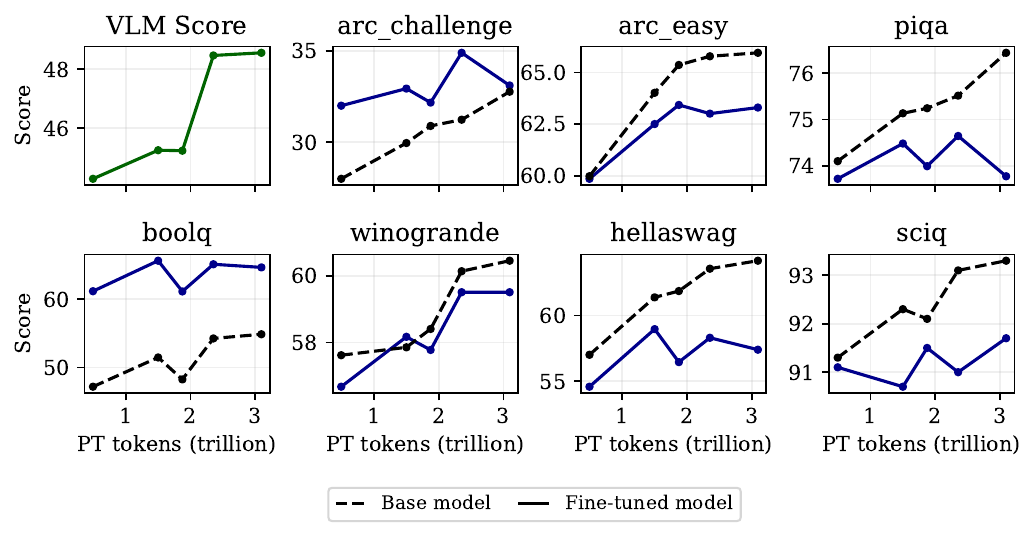}
    \caption{\textbf{Evaluation OLMo-1B post-trained on VLM as a function of the number of pre-trained tokens, with tuned learning rates.} We report the scores on eight different datasets: VLM Score is considered to be the main evaluation of interest (corresponding with the downstream performance), and the other datasets are considered out-of-distribution (corresponding with the generalist performance). We use the intermediate checkpoints from Table~\ref{app_tab:pretrained_models} for the evaluation. We tune the learning rate for each checkpoint to maximize the main evaluation (VLM Score). This figure is analogous to Figure~\ref{fig:ift_vlm}.}
    \label{fig:appendix_large_model_VLM_OLMo-1B_tuned}
\end{figure*}

\begin{figure*}[ht]
    \centering
    \includegraphics[width=\textwidth]{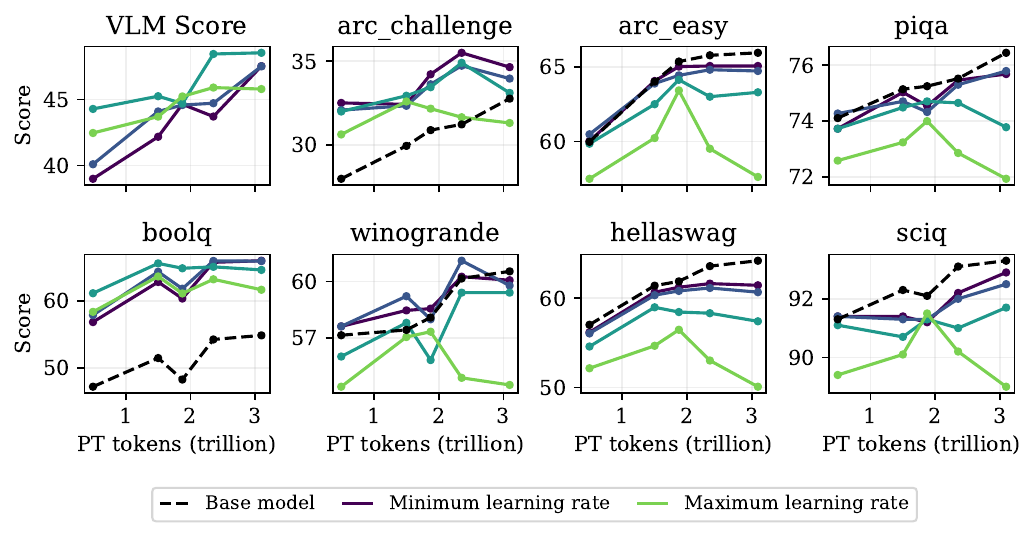}
    \caption{\textbf{Evaluation OLMo-1B post-trained on VLM as a function of the number of pre-trained tokens, for all learning rates.} We report the scores on eight different datasets: VLM Score is considered to be the main evaluation of interest (corresponding with the downstream performance), and the other datasets are considered out-of-distribution (corresponding with the generalist performance). We use the intermediate checkpoints from Table~\ref{app_tab:pretrained_models} for the evaluation. We also compare to the base model (dashed line). This figure is similar to Figure~\ref{fig:appendix_large_model_VLM_OLMo-1B_tuned}, except we plot every learning rate, with a line representing a fixed learning rate.}
    \label{fig:appendix_large_model_VLM_OLMo-1B_all}
\end{figure*}

\begin{figure*}[ht]
    \centering
    \includegraphics[width=\textwidth]{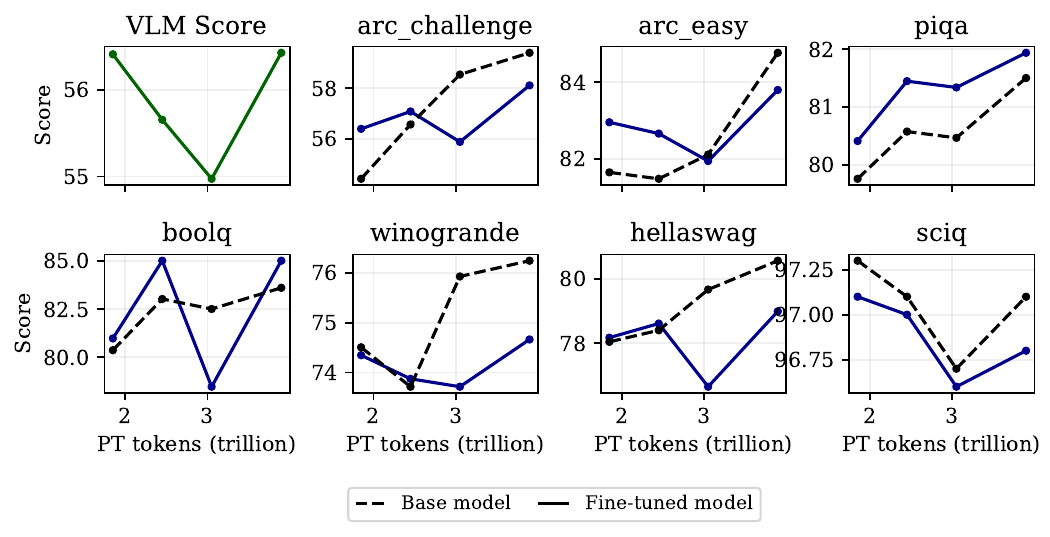}
    \caption{\textbf{Evaluation OLMo-2-7B post-trained on VLM as a function of the number of pre-trained tokens, with tuned learning rates.} We report the scores on eight different datasets: VLM Score is considered to be the main evaluation of interest (corresponding with the downstream performance), and the other datasets are considered out-of-distribution (corresponding with the generalist performance). We use the intermediate checkpoints from Table~\ref{app_tab:pretrained_models} for the evaluation. We tune the learning rate for each checkpoint to maximize the main evaluation (VLM Score). This figure is analogous to Figure~\ref{fig:ift_vlm}.}
    \label{fig:appendix_large_model_VLM_OLMo-2-7B_tuned}
\end{figure*}

\begin{figure*}[ht]
    \centering
    \includegraphics[width=\textwidth]{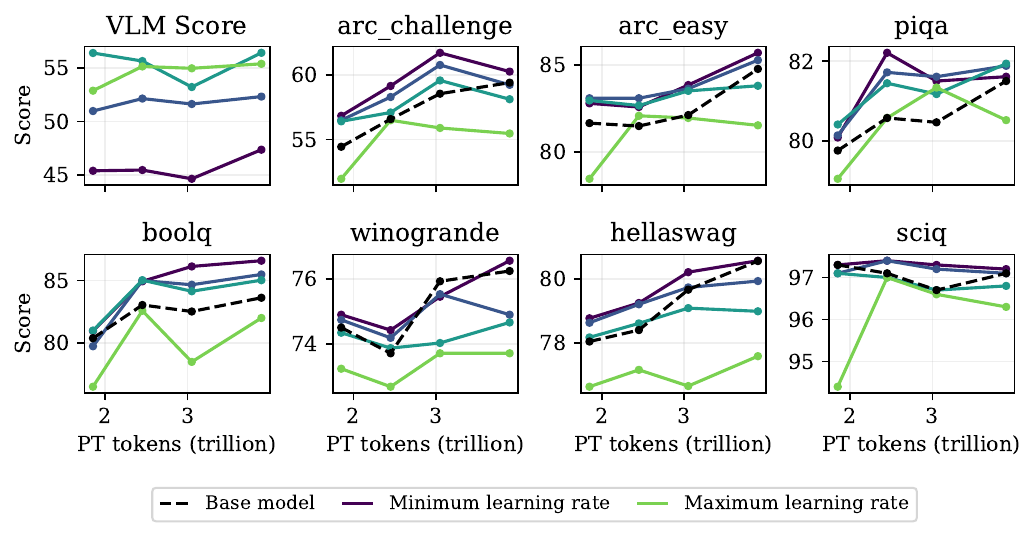}
    \caption{\textbf{Evaluation OLMo-2-7B post-trained on VLM as a function of the number of pre-trained tokens, for all learning rates.} We report the scores on eight different datasets: VLM Score is considered to be the main evaluation of interest (corresponding with the downstream performance), and the other datasets are considered out-of-distribution (corresponding with the generalist performance). We use the intermediate checkpoints from Table~\ref{app_tab:pretrained_models} for the evaluation. We also compare to the base model (dashed line). This figure is similar to Figure~\ref{fig:appendix_large_model_VLM_OLMo-2-7B_tuned}, except we plot every learning rate, with a line representing a fixed learning rate.}
    \label{fig:appendix_large_model_VLM_OLMo-2-7B_all}
\end{figure*}

\begin{figure*}[ht]
    \centering
    \includegraphics[width=\textwidth]{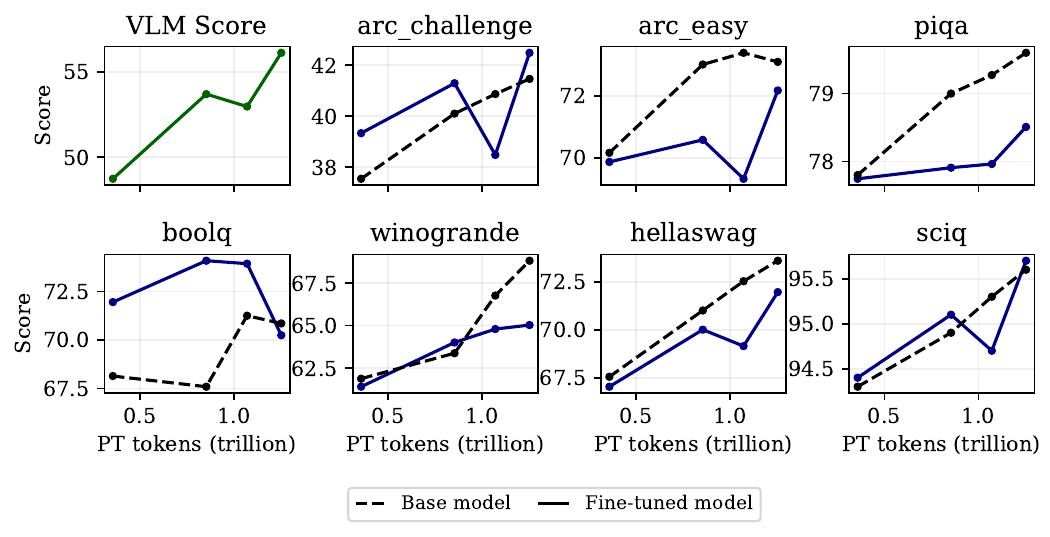}
    \caption{\textbf{Evaluation LLM360-7B post-trained on VLM as a function of the number of pre-trained tokens, with tuned learning rates.} We report the scores on eight different datasets: VLM Score is considered to be the main evaluation of interest (corresponding with the downstream performance), and the other datasets are considered out-of-distribution (corresponding with the generalist performance). We use the intermediate checkpoints from Table~\ref{app_tab:pretrained_models} for the evaluation. We tune the learning rate for each checkpoint to maximize the main evaluation (VLM Score). This figure is analogous to Figure~\ref{fig:ift_vlm}.}
    \label{fig:appendix_large_model_VLM_LLM360-7B_tuned}
\end{figure*}

\begin{figure*}[ht]
    \centering
    \includegraphics[width=\textwidth]{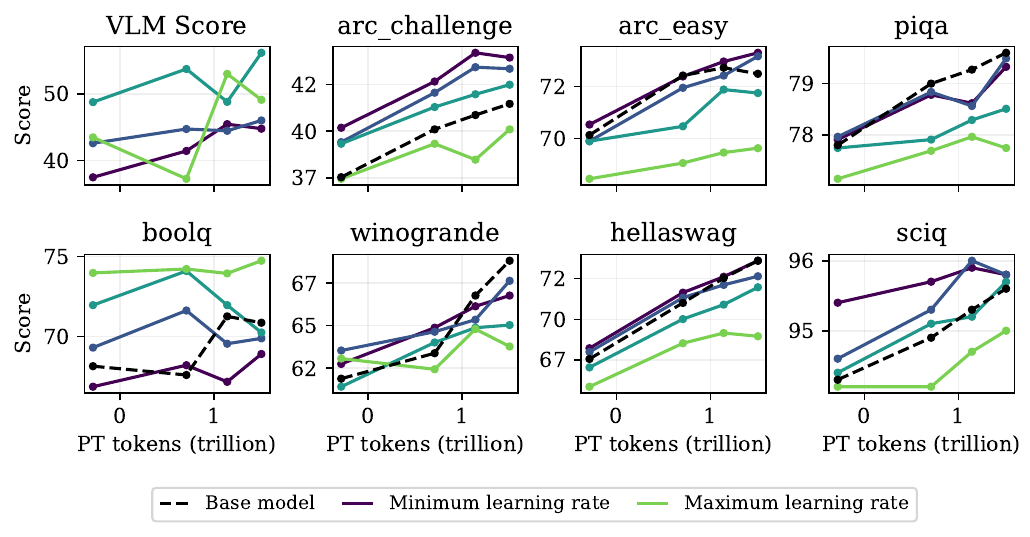}
    \caption{\textbf{Evaluation LLM360-7B post-trained on VLM as a function of the number of pre-trained tokens, for all learning rates.} We report the scores on eight different datasets: VLM Score is considered to be the main evaluation of interest (corresponding with the downstream performance), and the other datasets are considered out-of-distribution (corresponding with the generalist performance). We use the intermediate checkpoints from Table~\ref{app_tab:pretrained_models} for the evaluation. We also compare to the base model (dashed line). This figure is similar to Figure~\ref{fig:appendix_large_model_VLM_LLM360-7B_tuned}, except we plot every learning rate, with a line representing a fixed learning rate.}
    \label{fig:appendix_large_model_VLM_LLM360-7B_all}
\end{figure*}

\FloatBarrier
\section{Omitted Figures from Section~\ref{sec:experiments_controlled}: Controlled Experiments}
\label{app:controlled_omitted_figures}

In this section, we provide the omitted figures from Section~\ref{sec:experiments_controlled} that show the results of the extended controlled experiments. 

\subsection{Sensitivity}

\begin{figure*}[!ht]
    \centering
    \includegraphics[width=\textwidth]{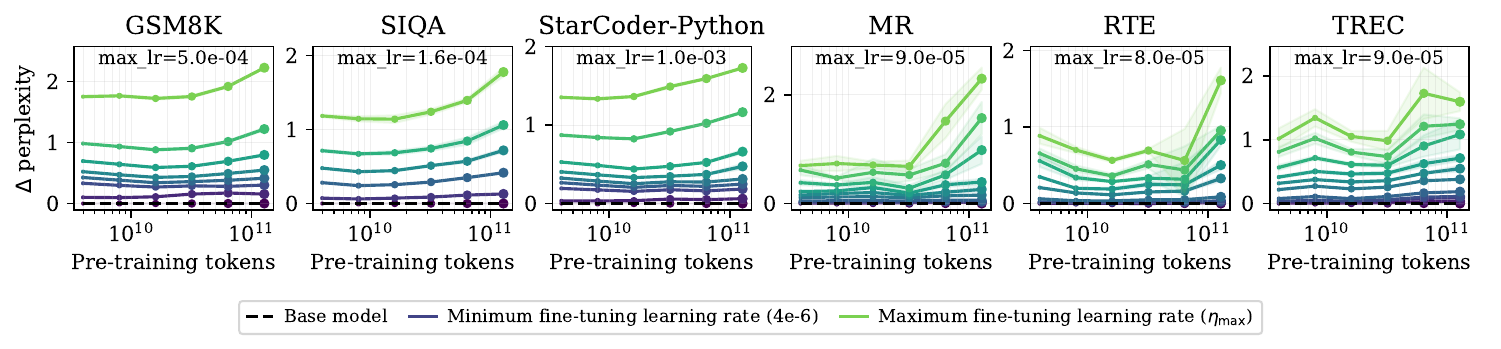}
    \caption{\textbf{Sensitivity of fine-tuned models with fixed learning rate in our controlled setup.} This figure is analogous to Figure~\ref{fig:pt_ft_perplexity} from the main paper, but plots the difference in perplexity between the fine-tuned model and the base model for OLMo-30M. This figure illustrates that sensitivity increases progressively throughout training.}
    \label{app_fig:appendix_pt_perplexity_delta}
\end{figure*}

To supplement Figure~\ref{fig:ft_pt_tuned_lr} from the main paper, we plot the sensitivity of fine-tuned models with fixed learning rate in our controlled setup as a function of the number of pre-training tokens in Figure~\ref{app_fig:appendix_pt_perplexity_delta}. We find, across all datasets, that sensitivity progressively increases throughout training. Since this figure is sufficiently similar to Figure~\ref{fig:ft_pt_tuned_lr}, we omit the corresponding sensitivity figures for the other settings we consider.

\subsection{Extended fine-tuning experiments.}

We now plot the extended fine-tuning experiments. We ablate the batch size, learning rate scheduler, and model size. Table~\ref{tab:experimental_figures} provides a reference to the figures that show the results of the extended controlled experiments.

\begin{table}[h]
\centering
\small
\begin{tabular}{lccccc}
\toprule
\textbf{Setting} & \textbf{Pre-training} & \textbf{Fine-tuning} & \textbf{Tuned pre-training} & \textbf{Tuned fine-tuning} & \textbf{Optimal LR} \\
    & \textbf{perplexity} & \textbf{perplexity} & \textbf{perplexity} & \textbf{perplexity} &  \\
\midrule
Batch size: 256 & Figure~\ref{fig:appendix_batch_size_256_30m_pt_perplexity} & Figure~\ref{fig:appendix_batch_size_256_30m_ft_perplexity} & Figure~\ref{fig:appendix_batch_size_256_30m_tuned_pt_perplexity} & Figure~\ref{fig:appendix_batch_size_256_30m_tuned_ft_perplexity} & Figure~\ref{fig:appendix_batch_size_256_30m_optimal_lr} \\
Batch size: 32 & Figure~\ref{fig:appendix_batch_size_32_30m_pt_perplexity} & Figure~\ref{fig:appendix_batch_size_32_30m_ft_perplexity} & Figure~\ref{fig:appendix_batch_size_32_30m_tuned_pt_perplexity} & Figure~\ref{fig:appendix_batch_size_32_30m_tuned_ft_perplexity} & Figure~\ref{fig:appendix_batch_size_32_30m_optimal_lr} \\
LR schedule: Constant & Figure~\ref{fig:appendix_scheduler_constant_pt_perplexity} & Figure~\ref{fig:appendix_scheduler_constant_ft_perplexity} & Figure~\ref{fig:appendix_scheduler_constant_tuned_pt_perplexity} & Figure~\ref{fig:appendix_scheduler_constant_tuned_ft_perplexity} & Figure~\ref{fig:appendix_scheduler_constant_30m_optimal_lr} \\
LR schedule: constant with warmup & Figure~\ref{fig:appendix_scheduler_constant_with_warmup_pt_perplexity} & Figure~\ref{fig:appendix_scheduler_constant_with_warmup_ft_perplexity} & Figure~\ref{fig:appendix_scheduler_constant_with_warmup_tuned_pt_perplexity} & Figure~\ref{fig:appendix_scheduler_constant_with_warmup_tuned_ft_perplexity} & Figure~\ref{fig:appendix_scheduler_constant_with_warmup_optimal_lr} \\
OLMo-15M & Figure~\ref{fig:appendix_15m_pt_perplexity} & Figure~\ref{fig:appendix_15m_ft_perplexity} & Figure~\ref{fig:appendix_15m_tuned_pt_perplexity} & Figure~\ref{fig:appendix_15m_tuned_ft_perplexity} & Figure~\ref{fig:appendix_15m_optimal_lr} \\
OLMo-30M (extended) & Figure~\ref{fig:appendix_30m_pt_perplexity} & Figure~\ref{fig:appendix_30m_ft_perplexity} & Figure~\ref{fig:appendix_30m_tuned_pt_perplexity} & Figure~\ref{fig:appendix_30m_tuned_ft_perplexity} & Figure~\ref{fig:appendix_30m_optimal_lr} \\
OLMo-90M & Figure~\ref{fig:appendix_90m_pt_perplexity} & Figure~\ref{fig:appendix_90m_ft_perplexity} & Figure~\ref{fig:appendix_90m_tuned_pt_perplexity} & Figure~\ref{fig:appendix_90m_tuned_ft_perplexity} & Figure~\ref{fig:appendix_90m_optimal_lr} \\
\bottomrule
\end{tabular}
\caption{\textbf{Table of contents for extended experimental settings.} This table provides a reference to the figures that show the results of the extended controlled experiments.}
\label{tab:experimental_figures}
\end{table}
    
\begin{figure*}[ht]
    \centering
    \includegraphics[width=\textwidth]{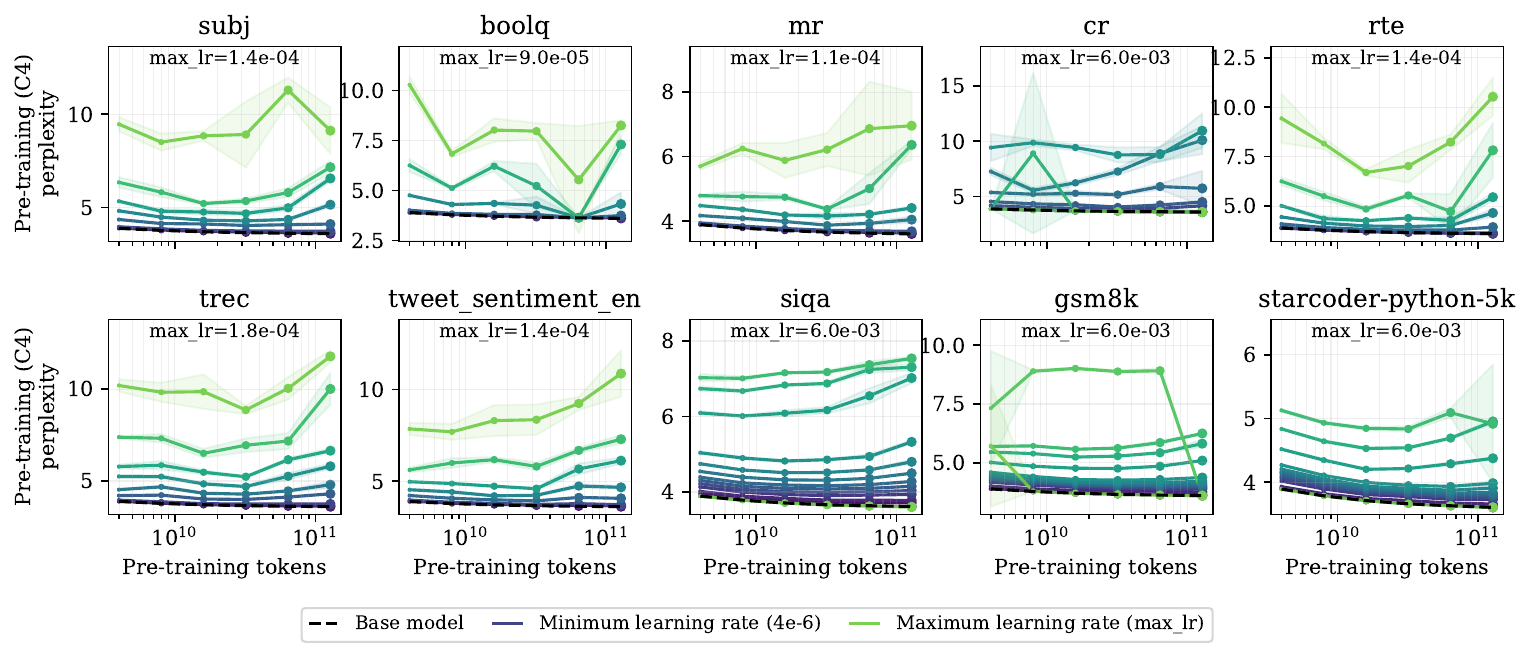}
    \caption{\textbf{Pre-training perplexity after fine-tuning as a function of the pre-training budget using the configuration specified in Table~\ref{app_tab:controlled_pretraining_hyperparameters} but with batch size 256 for the OLMo-30M model.} Each connected line reflects a series of models trained with fixed hyperparameters.}
    \label{fig:appendix_batch_size_256_30m_pt_perplexity}
\end{figure*}

\begin{figure*}[ht]
    \centering
    \includegraphics[width=\textwidth]{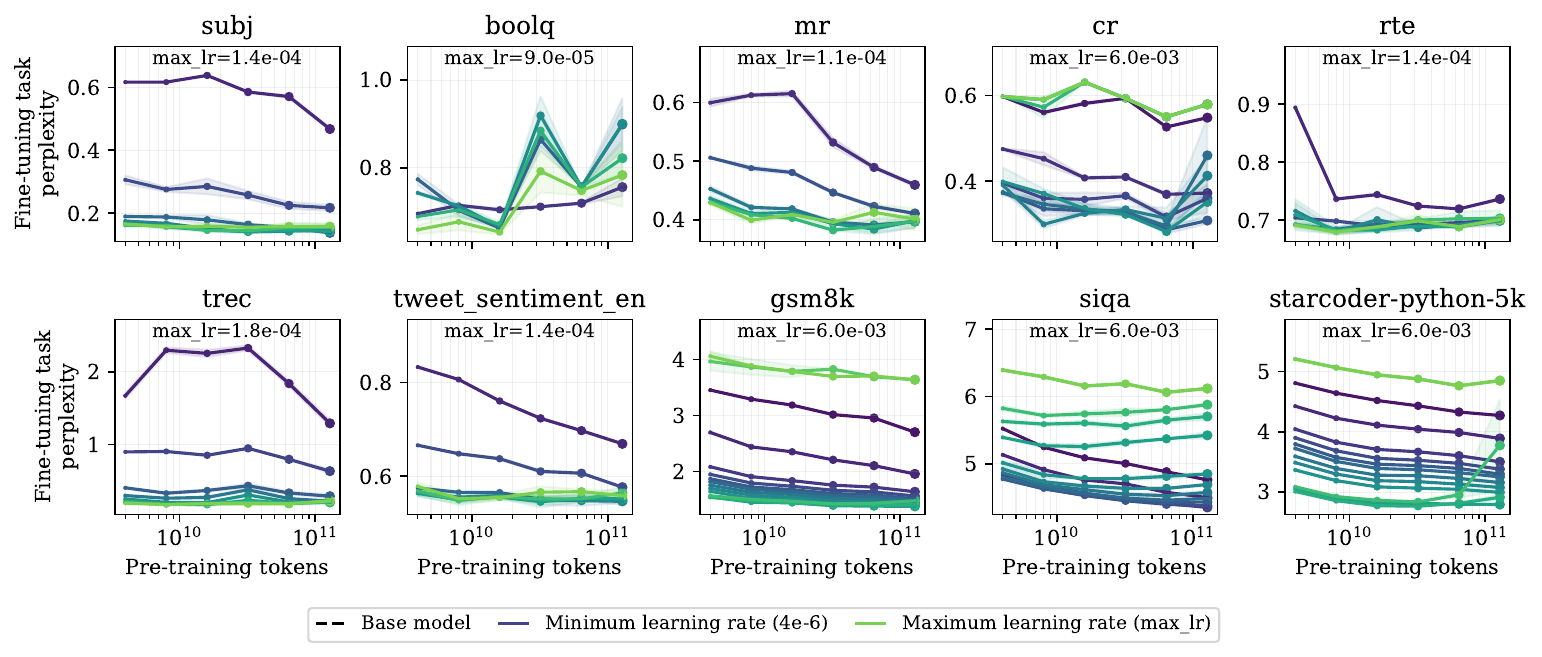}
    \caption{\textbf{Fine-tuning perplexity after fine-tuning as a function of the pre-training budget using the configuration specified in Table~\ref{app_tab:controlled_pretraining_hyperparameters} but with batch size 256 for the OLMo-30M model.} Each connected line reflects a series of models trained with fixed hyperparameters.}
    \label{fig:appendix_batch_size_256_30m_ft_perplexity}
\end{figure*}

\begin{figure*}[ht]
    \centering
    \includegraphics[width=\textwidth]{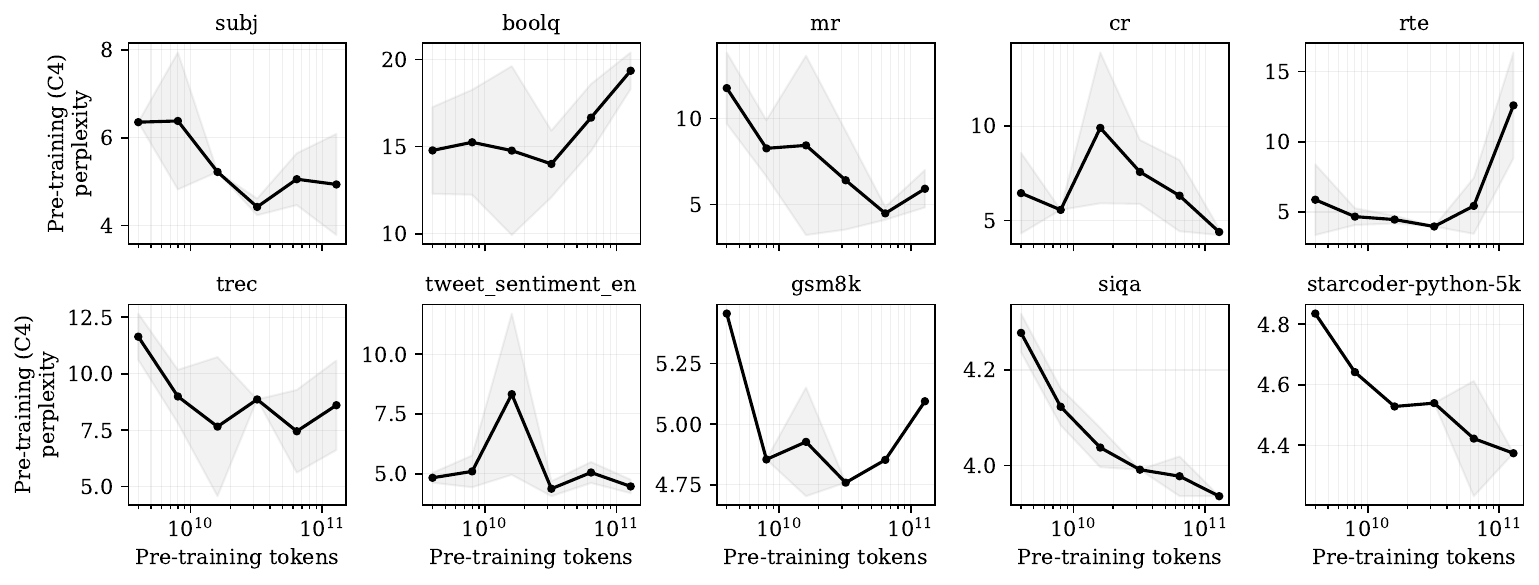}
    \caption{\textbf{Pre-training perplexity after fine-tuning as a function of the pre-training budget with a tuned learning rate to optimize fine-tuning performance using the configuration specified in Table~\ref{app_tab:controlled_pretraining_hyperparameters} but with batch size 256 for the OLMo-30M model.} Similar to the untuned version but with the fine-tuning-optimal learning rate.}
    \label{fig:appendix_batch_size_256_30m_tuned_pt_perplexity}
\end{figure*}

\begin{figure*}[ht]
    \centering
    \includegraphics[width=\textwidth]{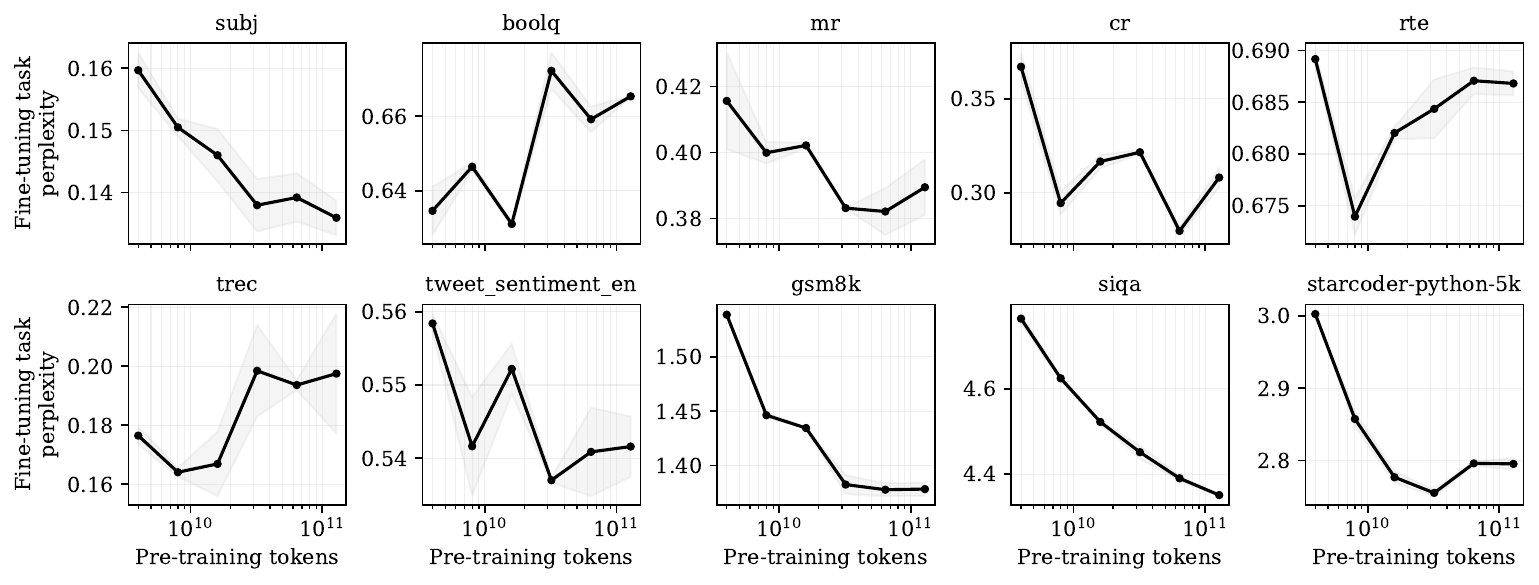}
    \caption{\textbf{Fine-tuning perplexity after fine-tuning as a function of the pre-training budget with a tuned learning rate to optimize fine-tuning performance using the configuration specified in Table~\ref{app_tab:controlled_pretraining_hyperparameters} but with batch size 256 for the OLMo-30M model.} Similar to the untuned version but with the fine-tuning-optimal learning rate.}
    \label{fig:appendix_batch_size_256_30m_tuned_ft_perplexity}
\end{figure*}

\begin{figure*}[ht]
    \centering
    \includegraphics[width=\textwidth]{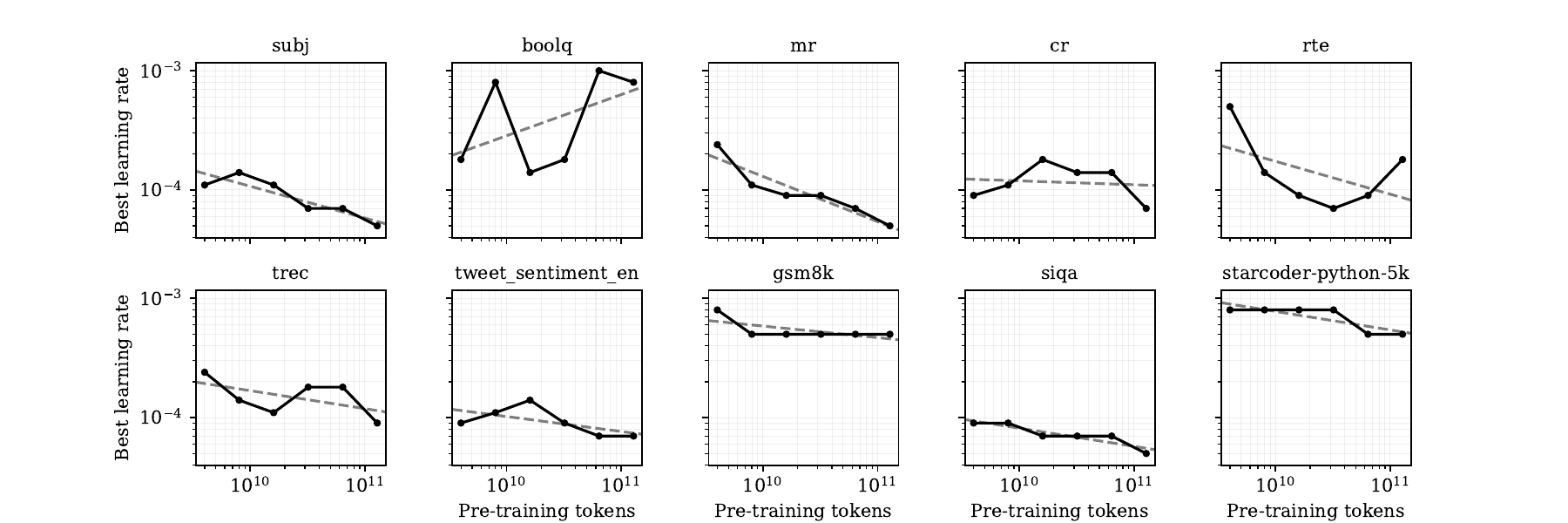}
    \caption{\textbf{The optimal learning rate for best fine-tuning performance as a function of the pre-training budget using the configuration specified in Table~\ref{app_tab:controlled_pretraining_hyperparameters} but with batch size 256 for the OLMo-30M model.} The learning rate shown corresponds with those chosen in Figures~\ref{fig:appendix_batch_size_256_30m_tuned_pt_perplexity} and \ref{fig:appendix_batch_size_256_30m_tuned_ft_perplexity}.}
    \label{fig:appendix_batch_size_256_30m_optimal_lr}
\end{figure*}

\begin{figure*}[ht]
    \centering
    \includegraphics[width=\textwidth]{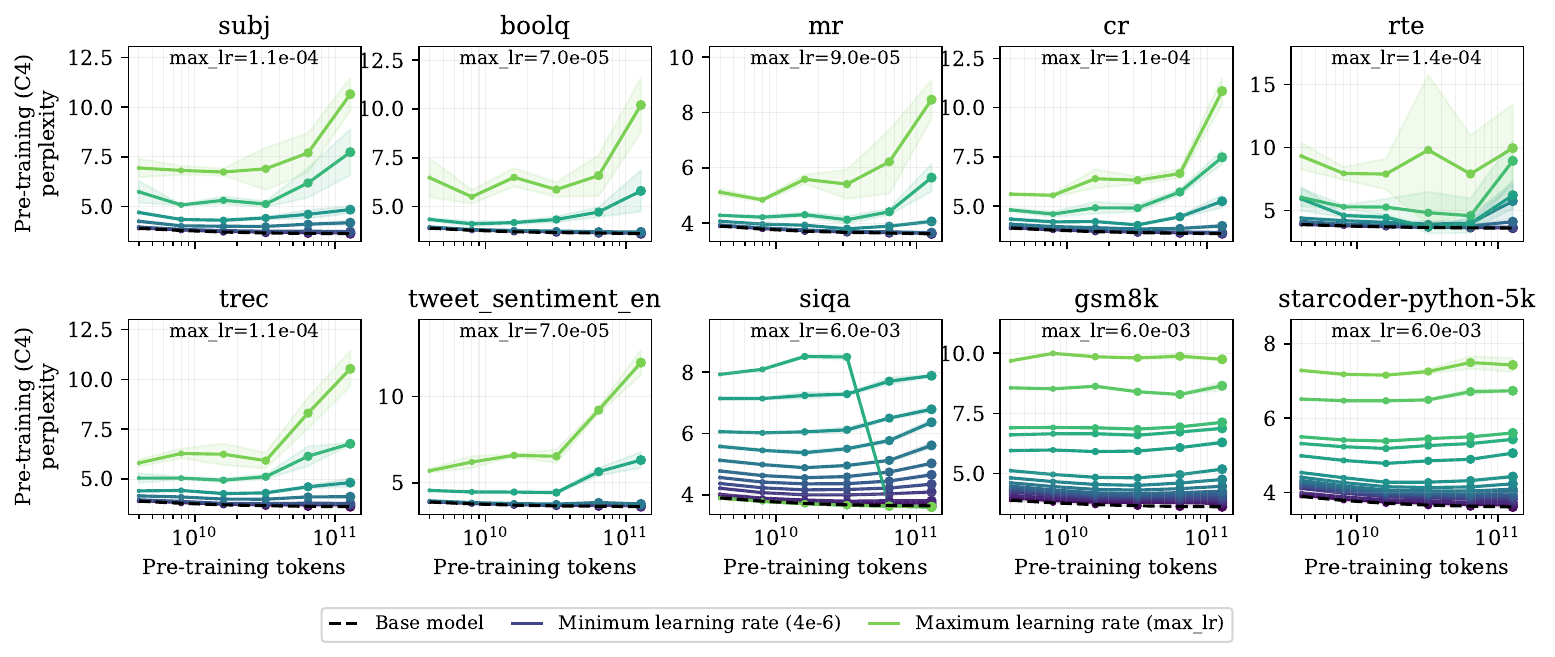}
    \caption{\textbf{Pre-training perplexity after fine-tuning as a function of the pre-training budget using the configuration specified in Table~\ref{app_tab:controlled_pretraining_hyperparameters} but with batch size 32 for the OLMo-30M model.} Each connected line reflects a series of models trained with fixed hyperparameters.}
    \label{fig:appendix_batch_size_32_30m_pt_perplexity}
\end{figure*}

\begin{figure*}[ht]
    \centering
    \includegraphics[width=\textwidth]{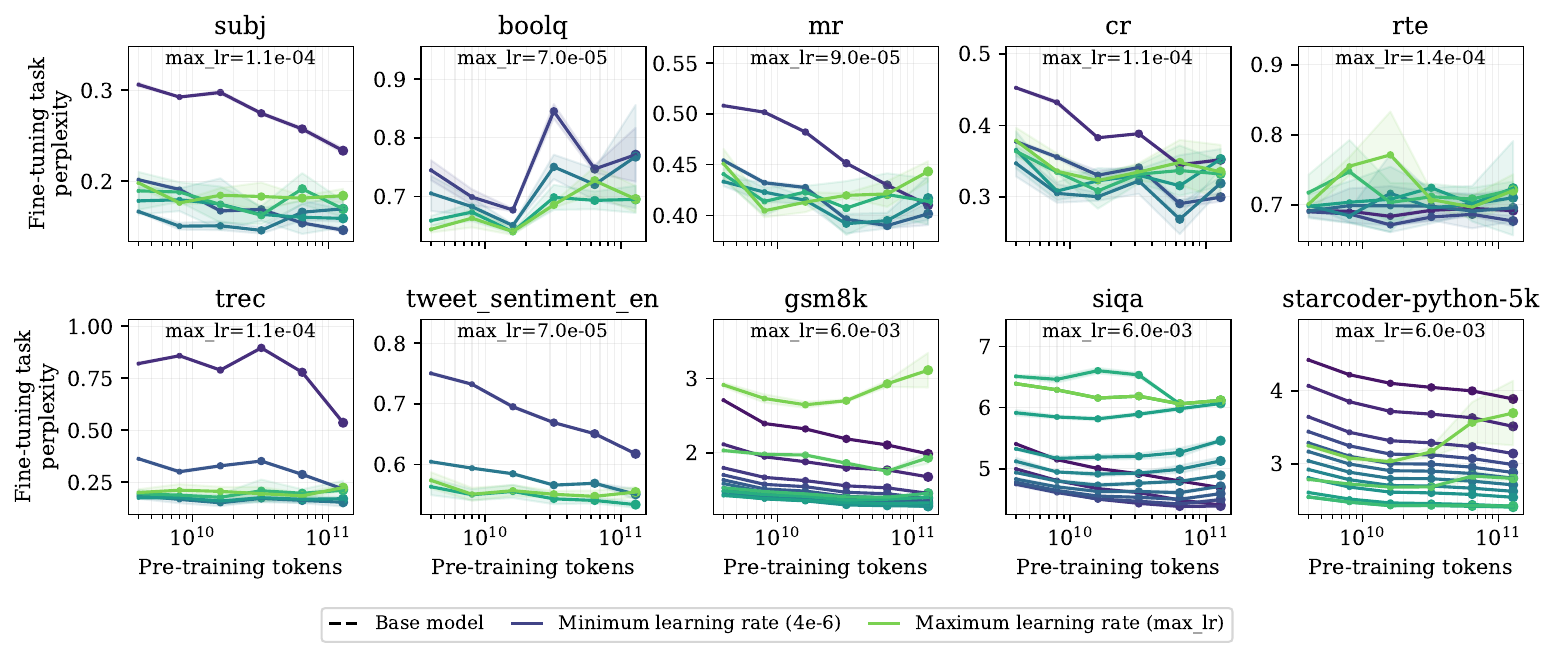}
    \caption{\textbf{Fine-tuning perplexity after fine-tuning as a function of the pre-training budget using the configuration specified in Table~\ref{app_tab:controlled_pretraining_hyperparameters} but with batch size 32 for the OLMo-30M model.} Each connected line reflects a series of models trained with fixed hyperparameters.}
    \label{fig:appendix_batch_size_32_30m_ft_perplexity}
\end{figure*}

\begin{figure*}[ht]
    \centering
    \includegraphics[width=\textwidth]{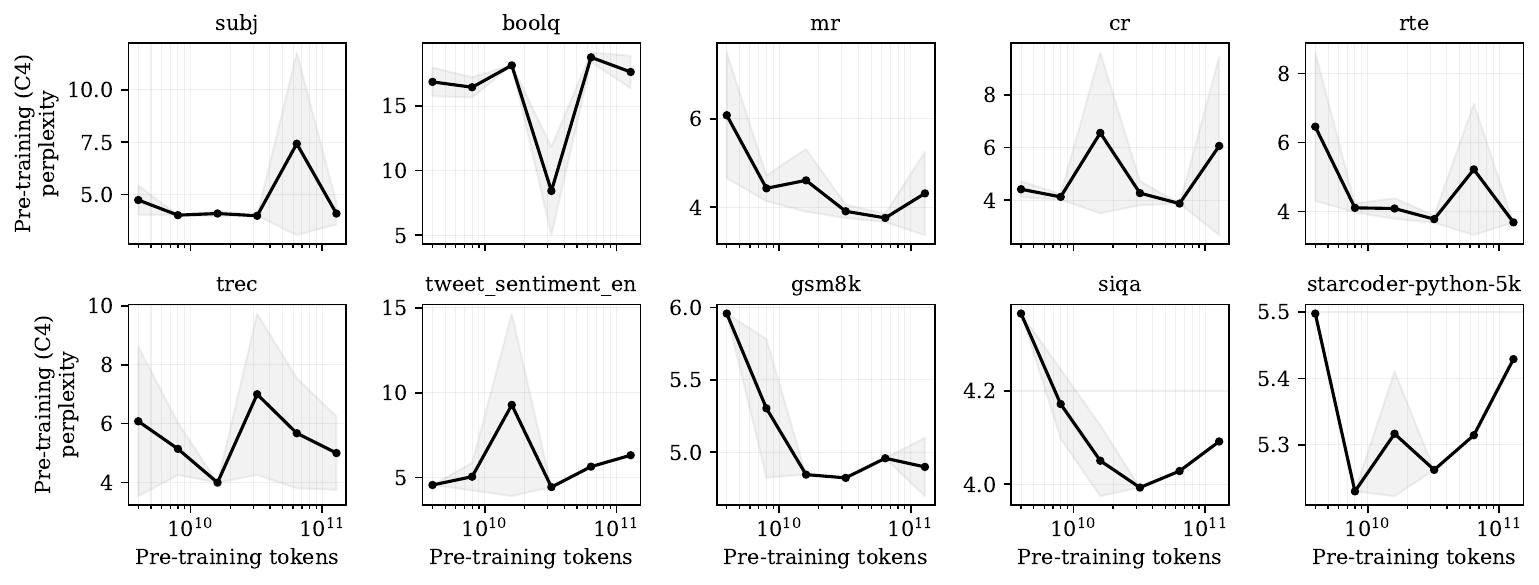}
    \caption{\textbf{Pre-training perplexity after fine-tuning as a function of the pre-training budget with a tuned learning rate to optimize fine-tuning performance using the configuration specified in Table~\ref{app_tab:controlled_pretraining_hyperparameters} but with batch size 32 for the OLMo-30M model.} Similar to the untuned version but with the fine-tuning-optimal learning rate.}
    \label{fig:appendix_batch_size_32_30m_tuned_pt_perplexity}
\end{figure*}

\begin{figure*}[ht]
    \centering
    \includegraphics[width=\textwidth]{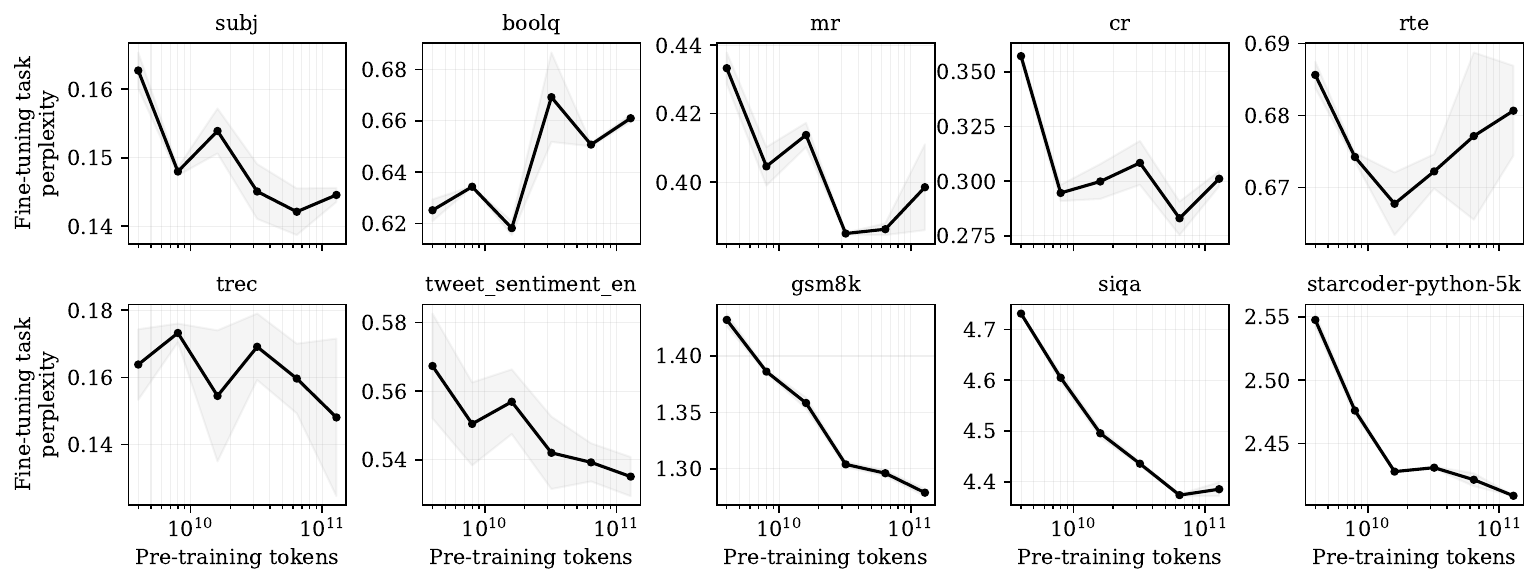}
    \caption{\textbf{Fine-tuning perplexity after fine-tuning as a function of the pre-training budget with a tuned learning rate to optimize fine-tuning performance using the configuration specified in Table~\ref{app_tab:controlled_pretraining_hyperparameters} but with batch size 32 for the OLMo-30M model.} Similar to the untuned version but with the fine-tuning-optimal learning rate.}
    \label{fig:appendix_batch_size_32_30m_tuned_ft_perplexity}
\end{figure*}

\begin{figure*}[ht]
    \centering
    \includegraphics[width=\textwidth]{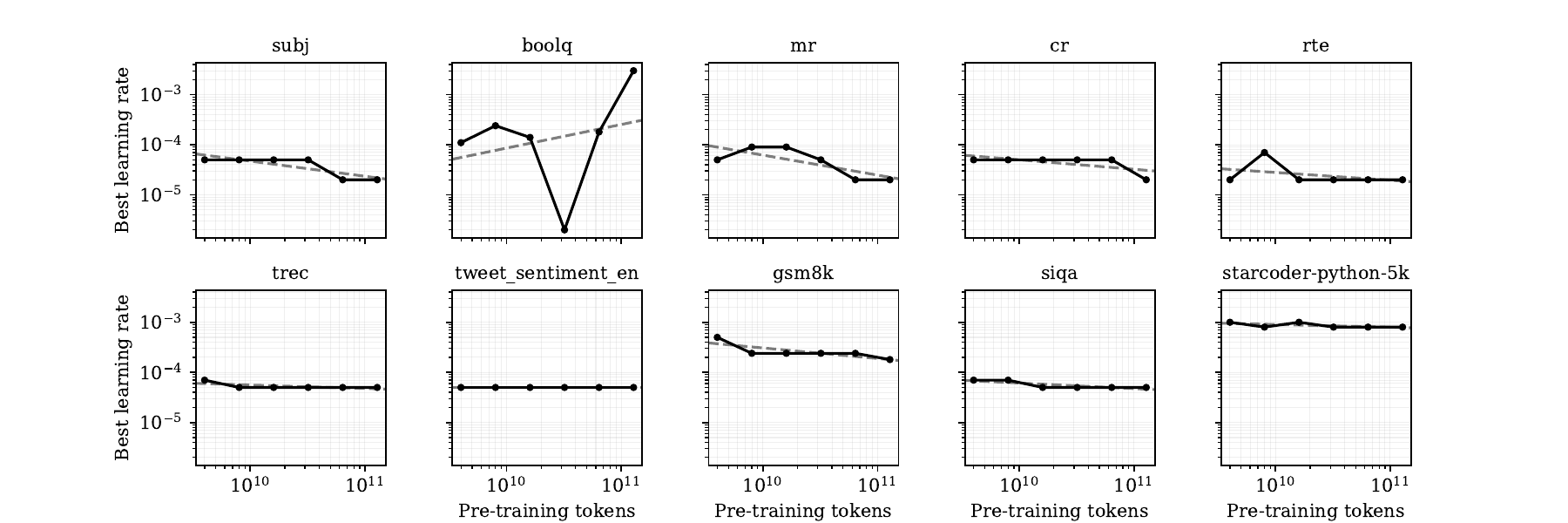}
    \caption{\textbf{The optimal learning rate for best fine-tuning performance as a function of the pre-training budget using the configuration specified in Table~\ref{app_tab:controlled_pretraining_hyperparameters} but with batch size 32 for the OLMo-30M model.} The learning rate shown corresponds with those chosen in Figures~\ref{fig:appendix_batch_size_32_30m_tuned_pt_perplexity} and \ref{fig:appendix_batch_size_32_30m_tuned_ft_perplexity}.}
    \label{fig:appendix_batch_size_32_30m_optimal_lr}
\end{figure*}

\begin{figure*}[ht]
    \centering
    \includegraphics[width=\textwidth]{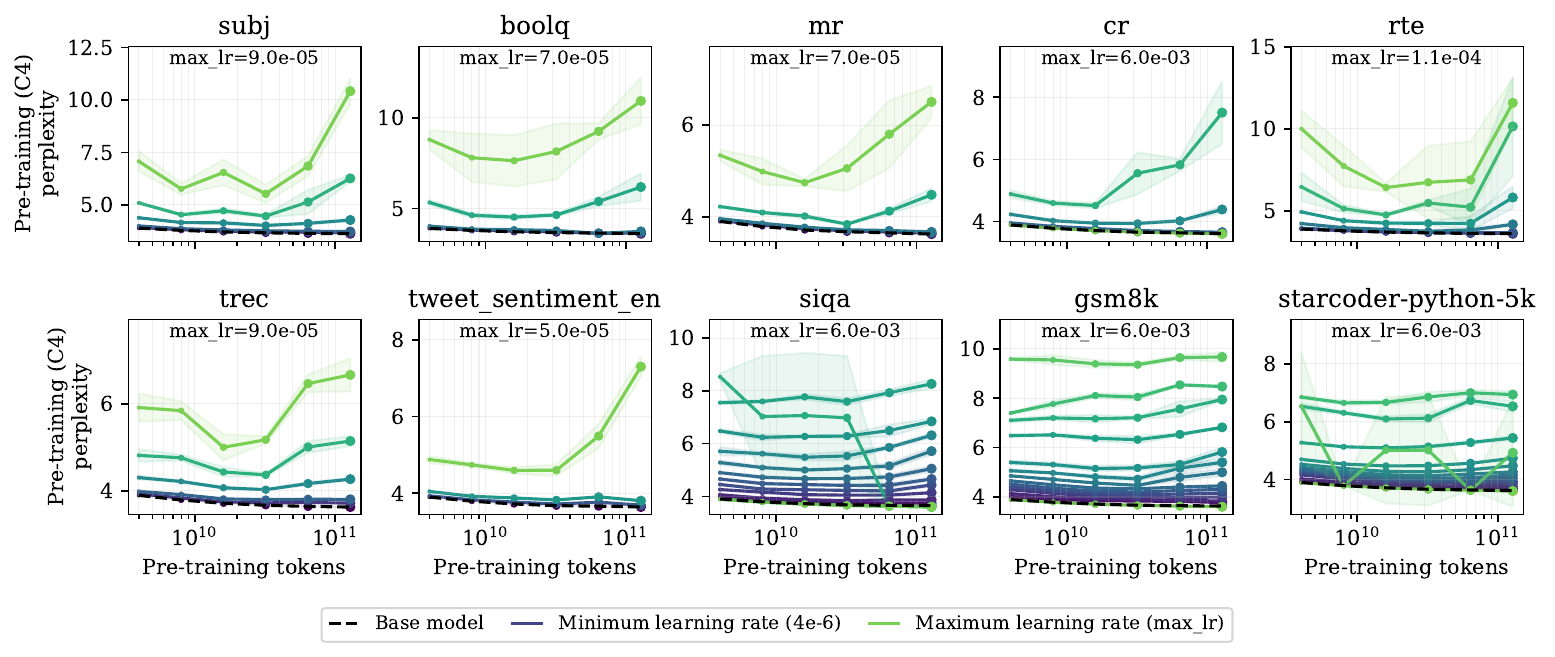}
    \caption{\textbf{Pre-training perplexity after fine-tuning as a function of the pre-training budget using a constant learning rate scheduler (instead of Cosine) with the configuration specified in Table~\ref{app_tab:controlled_pretraining_hyperparameters} for the OLMo-30M model.} Each connected line reflects a series of models trained with fixed hyperparameters.}
    \label{fig:appendix_scheduler_constant_pt_perplexity}
\end{figure*}

\begin{figure*}[ht]
    \centering
    \includegraphics[width=\textwidth]{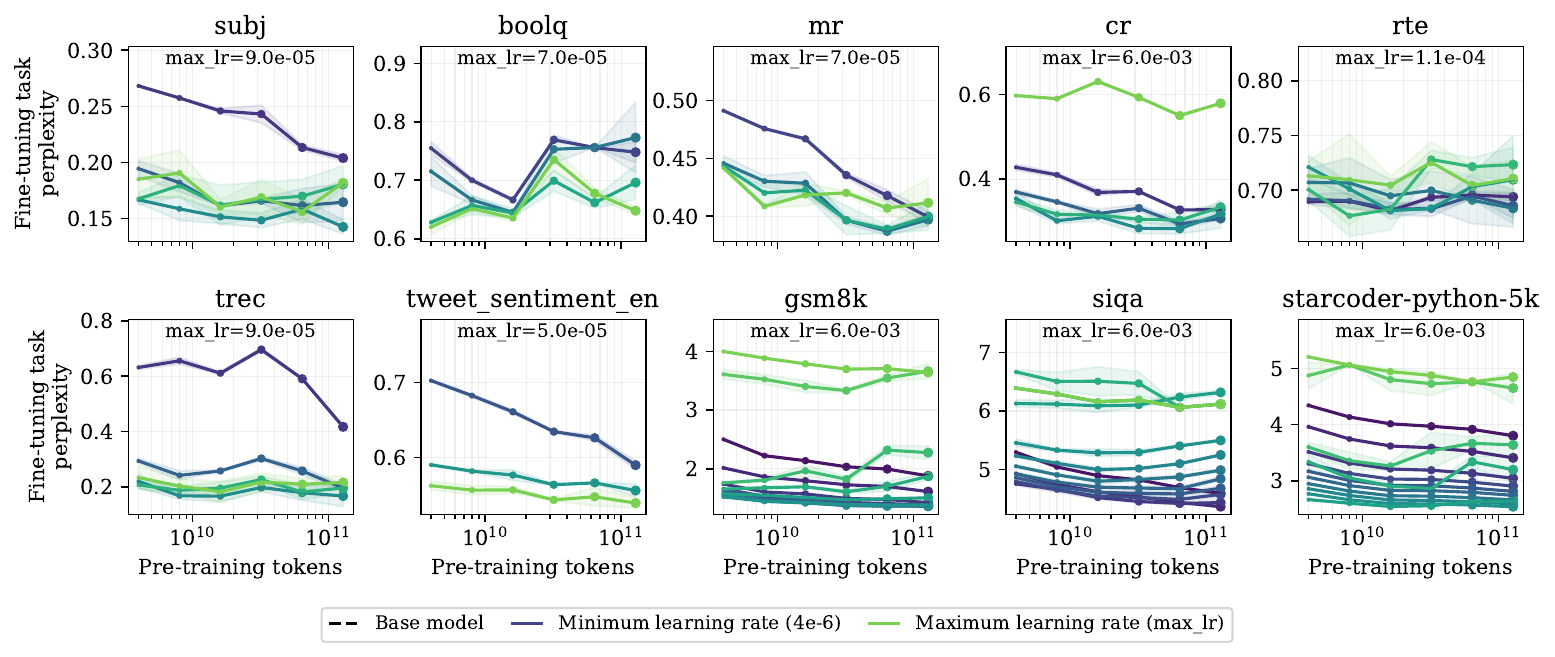}
    \caption{\textbf{Fine-tuning perplexity after fine-tuning as a function of the pre-training budget using a constant learning rate scheduler (instead of Cosine) with the configuration specified in Table~\ref{app_tab:controlled_pretraining_hyperparameters} for the OLMo-30M model.} Each connected line reflects a series of models trained with fixed hyperparameters.}
    \label{fig:appendix_scheduler_constant_ft_perplexity}
\end{figure*}

\begin{figure*}[ht]
    \centering
    \includegraphics[width=\textwidth]{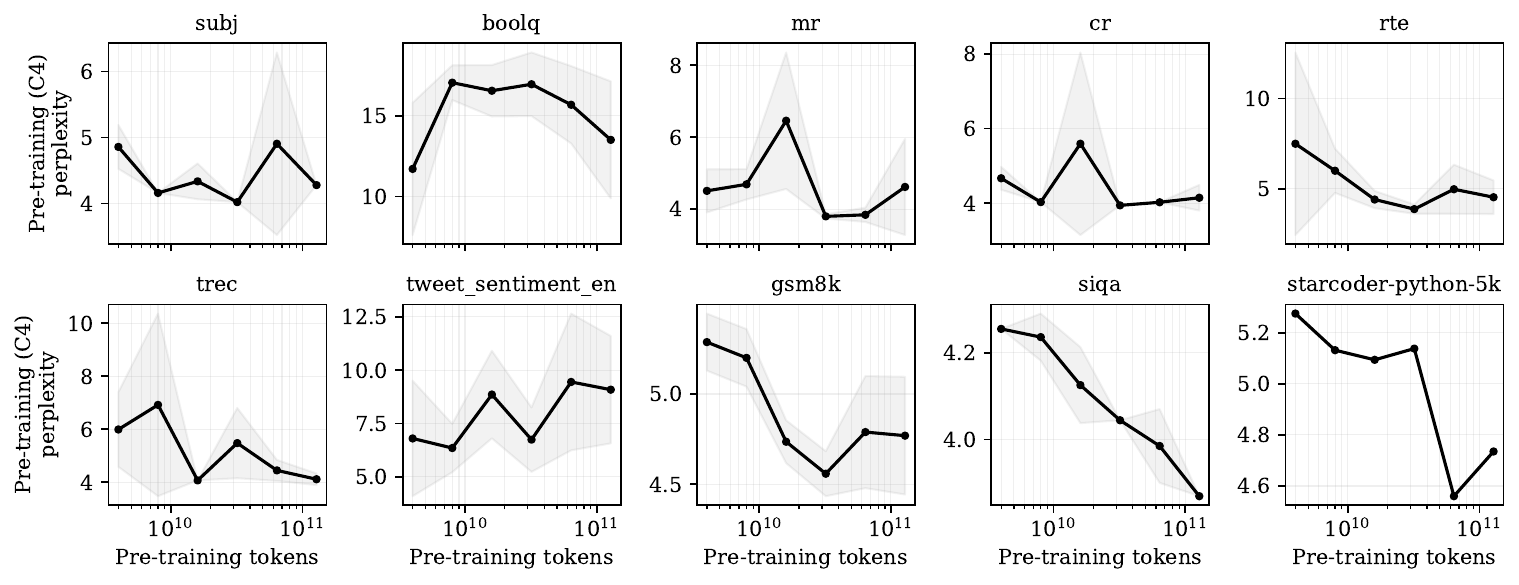}
    \caption{\textbf{Pre-training perplexity after fine-tuning as a function of the pre-training budget with a tuned learning rate to optimize fine-tuning performance using a constant learning rate scheduler for the OLMo-30M model.} Similar to the untuned version but showing the performance with the fine-tuning-optimal learning rate.}
    \label{fig:appendix_scheduler_constant_tuned_pt_perplexity}
\end{figure*}

\begin{figure*}[ht]
    \centering
    \includegraphics[width=\textwidth]{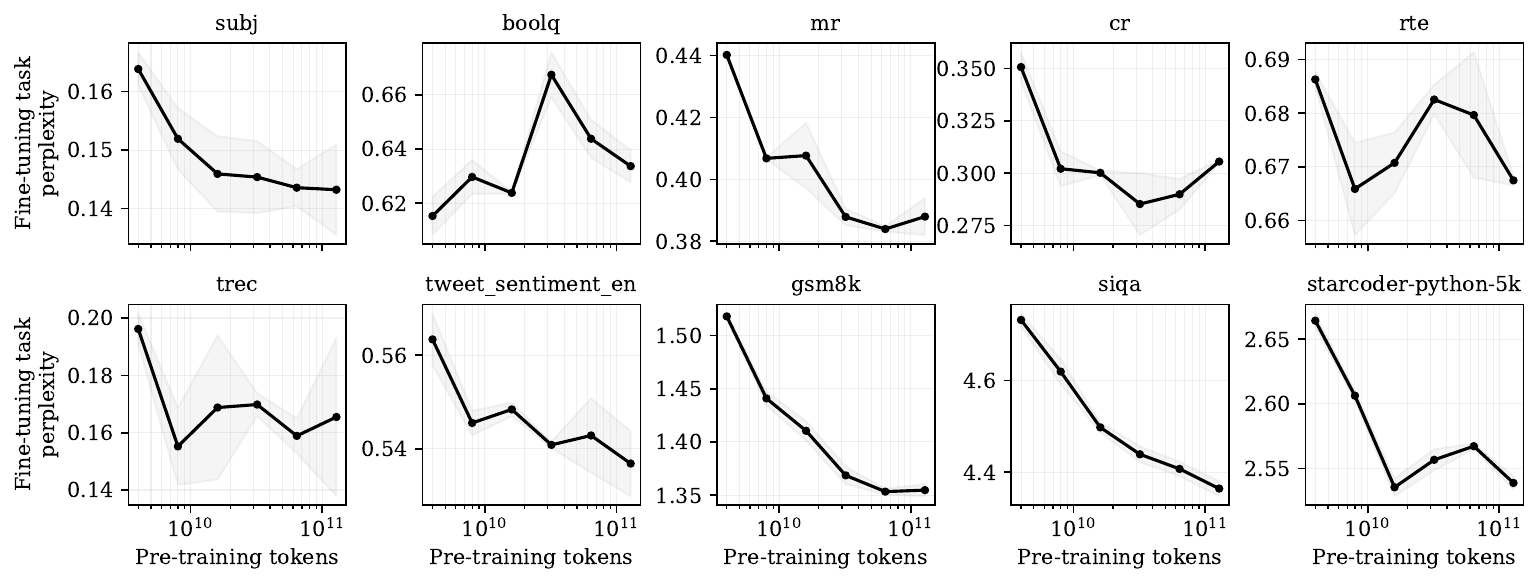}
    \caption{\textbf{Fine-tuning perplexity after fine-tuning as a function of the pre-training budget with a tuned learning rate to optimize fine-tuning performance using a constant learning rate scheduler for the OLMo-30M model.} Similar to the untuned version but showing the performance with the fine-tuning-optimal learning rate.}
    \label{fig:appendix_scheduler_constant_tuned_ft_perplexity}
\end{figure*}

\begin{figure*}[ht]
    \centering
    \includegraphics[width=\textwidth]{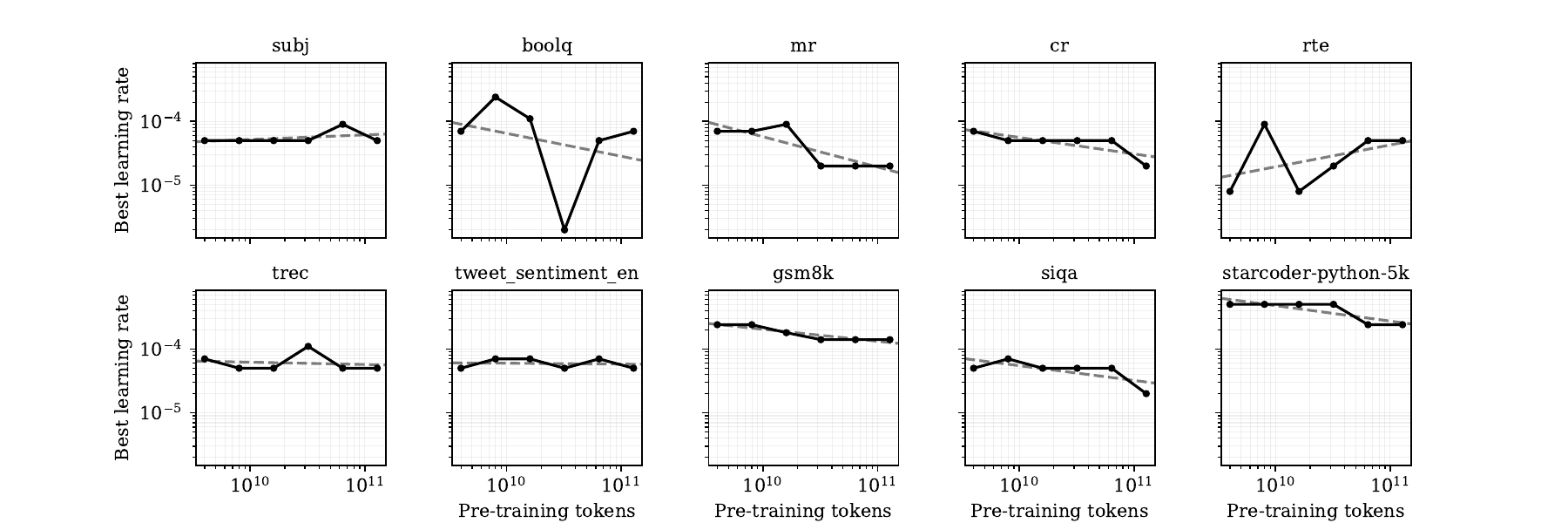}
    \caption{\textbf{The optimal learning rate for best fine-tuning performance as a function of the pre-training budget using a constant learning rate scheduler for the OLMo-30M model.} The learning rate shown corresponds with those chosen in Figures~\ref{fig:appendix_scheduler_constant_tuned_pt_perplexity} and \ref{fig:appendix_scheduler_constant_tuned_ft_perplexity}.}
    \label{fig:appendix_scheduler_constant_30m_optimal_lr}
\end{figure*}

\begin{figure*}[ht]
    \centering
    \includegraphics[width=\textwidth]{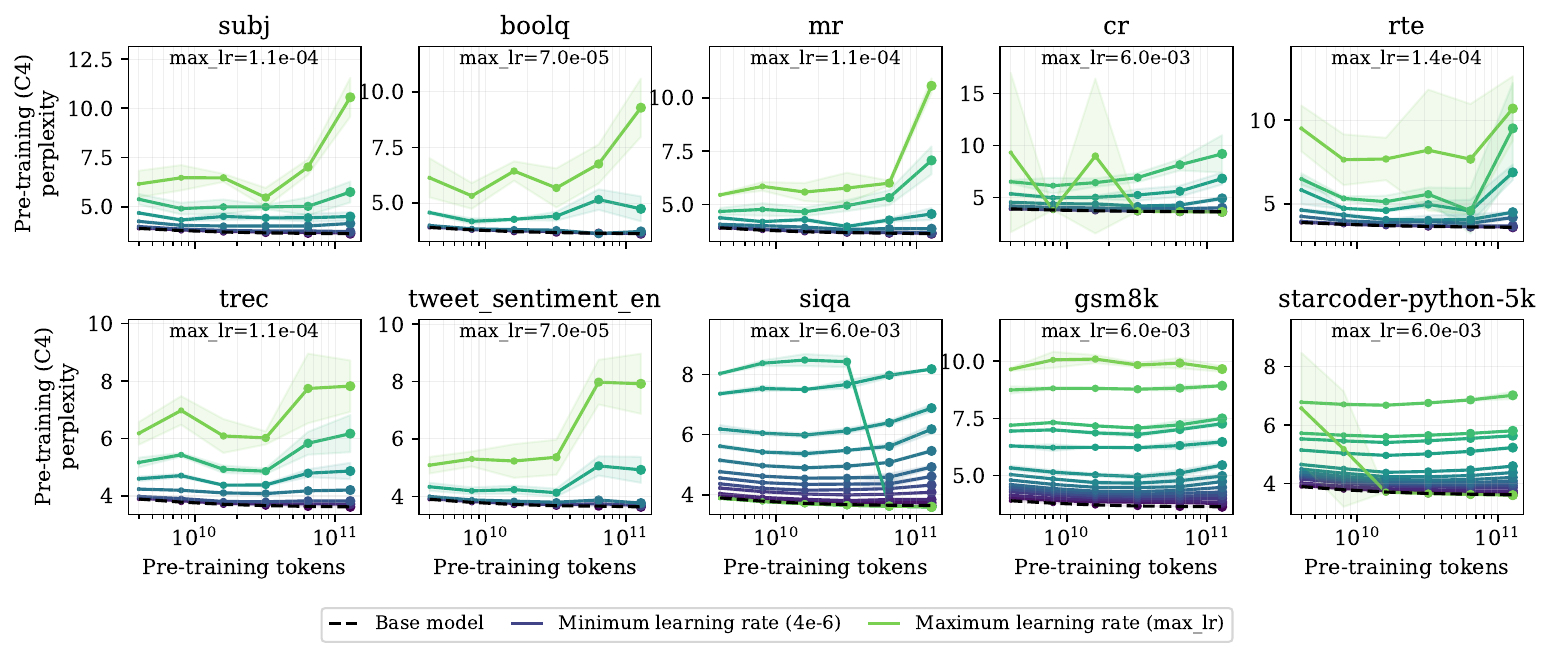}
    \caption{\textbf{Pre-training perplexity after fine-tuning as a function of the pre-training budget using a constant learning rate scheduler with warmup with the configuration specified in Table~\ref{app_tab:controlled_pretraining_hyperparameters} for the OLMo-30M model.} Each connected line reflects a series of models trained with fixed hyperparameters.}
    \label{fig:appendix_scheduler_constant_with_warmup_pt_perplexity}
\end{figure*}

\begin{figure*}[ht]
    \centering
    \includegraphics[width=\textwidth]{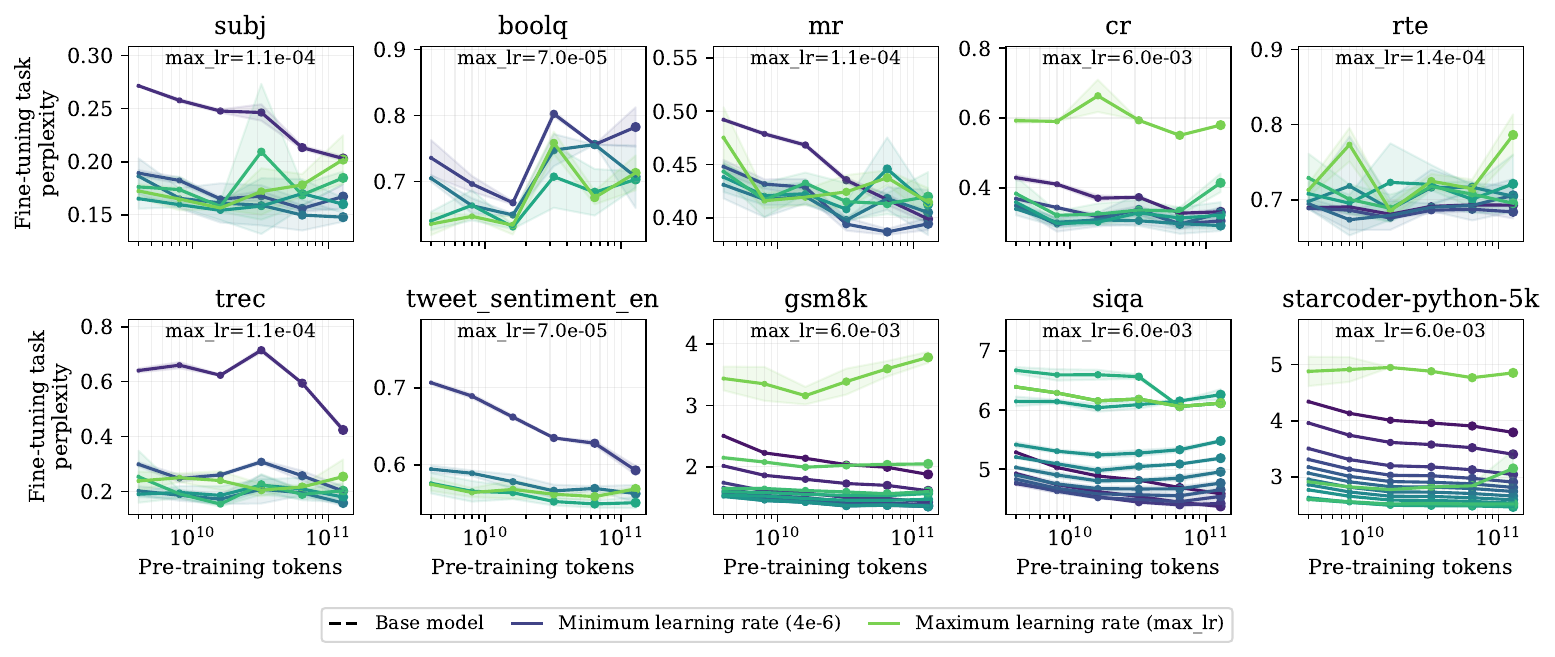}
    \caption{\textbf{Fine-tuning perplexity after fine-tuning as a function of the pre-training budget using a constant learning rate scheduler with warmup with the configuration specified in Table~\ref{app_tab:controlled_pretraining_hyperparameters} for the OLMo-30M model.} Each connected line reflects a series of models trained with fixed hyperparameters.}
    \label{fig:appendix_scheduler_constant_with_warmup_ft_perplexity}
\end{figure*}

\begin{figure*}[ht]
    \centering
    \includegraphics[width=\textwidth]{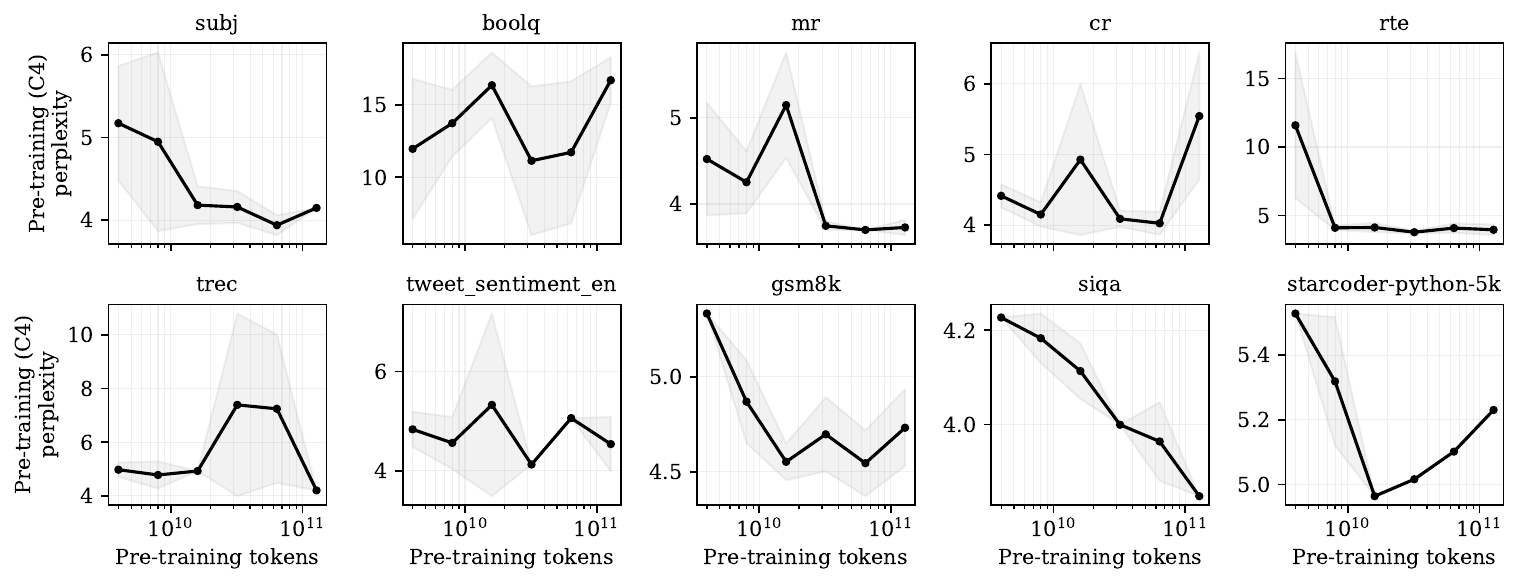}
    \caption{\textbf{Pre-training perplexity after fine-tuning as a function of the pre-training budget with a tuned learning rate to optimize fine-tuning performance using a constant learning rate scheduler with warmup for the OLMo-30M model.} Similar to the untuned version but showing the performance with the fine-tuning-optimal learning rate.}
    \label{fig:appendix_scheduler_constant_with_warmup_tuned_pt_perplexity}
\end{figure*}

\begin{figure*}[ht]
    \centering
    \includegraphics[width=\textwidth]{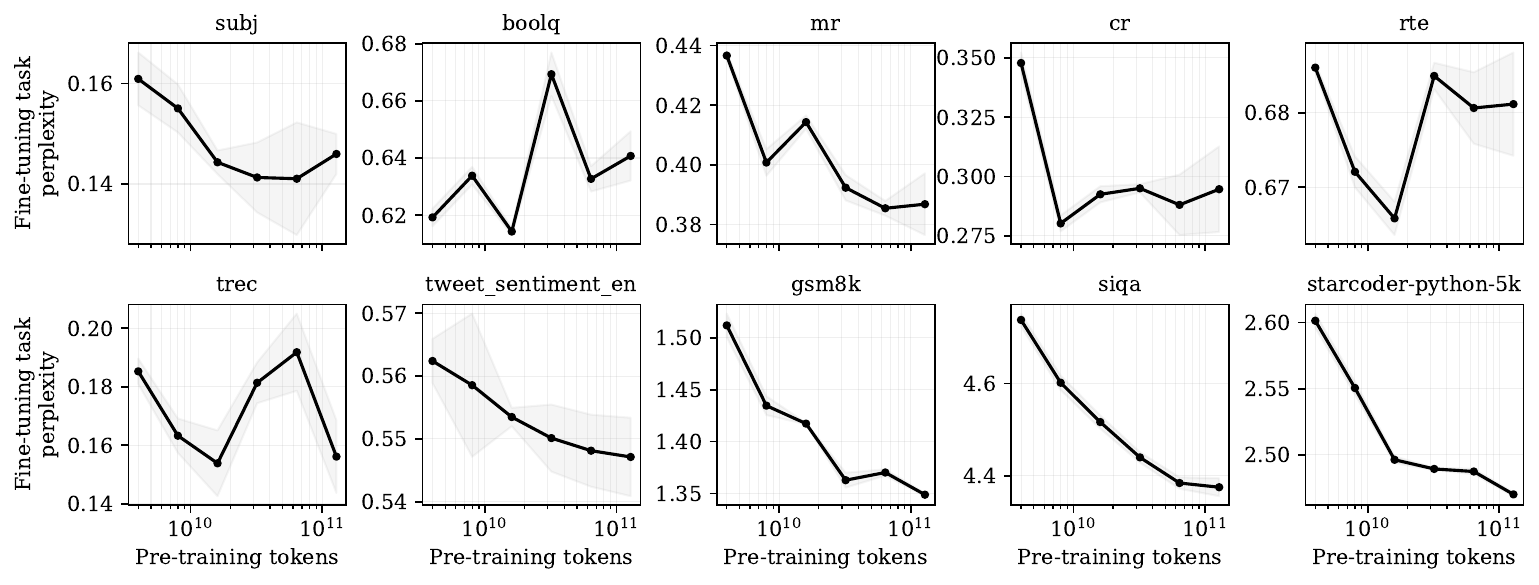}
    \caption{\textbf{Fine-tuning perplexity after fine-tuning as a function of the pre-training budget with a tuned learning rate to optimize fine-tuning performance using a constant learning rate scheduler with warmup for the OLMo-30M model.} Similar to the untuned version but showing the performance with the fine-tuning-optimal learning rate.}
    \label{fig:appendix_scheduler_constant_with_warmup_tuned_ft_perplexity}
\end{figure*}

\begin{figure*}[ht]
    \centering
    \includegraphics[width=\textwidth]{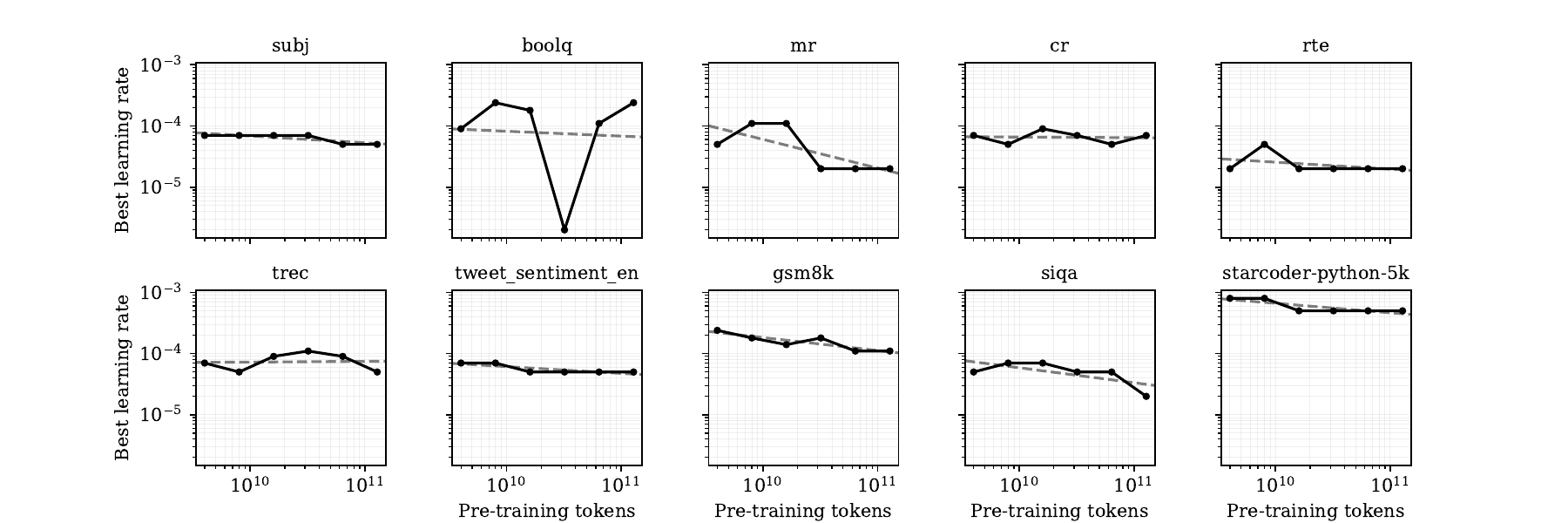}
    \caption{\textbf{The optimal learning rate for best fine-tuning performance as a function of the pre-training budget using a constant learning rate scheduler with warmup for the OLMo-30M model.} The learning rate shown corresponds with those chosen in Figures~\ref{fig:appendix_scheduler_constant_with_warmup_tuned_pt_perplexity} and \ref{fig:appendix_scheduler_constant_with_warmup_tuned_ft_perplexity}.}
    \label{fig:appendix_scheduler_constant_with_warmup_optimal_lr}
\end{figure*}

\begin{figure*}[ht]
    \centering
    \includegraphics[width=\textwidth]{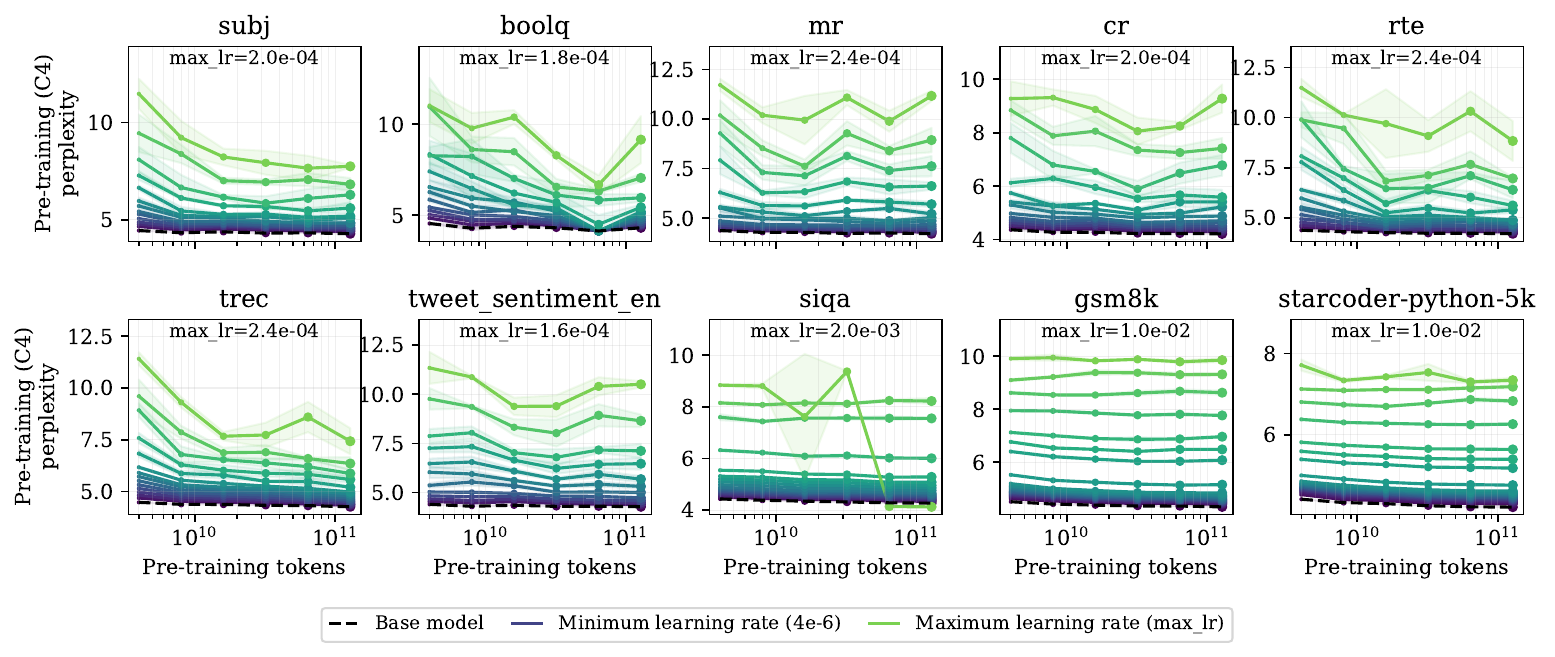}
    \caption{\textbf{Pre-training perplexity after fine-tuning as a function of the pre-training budget using the configuration specified in Table~\ref{app_tab:controlled_pretraining_hyperparameters} for OLMo-15M.} Each connected line reflects a series of models trained with fixed hyperparameters. Analogous to Figure~\ref{fig:pt_ft_perplexity} (top) from the main paper.}
    \label{fig:appendix_15m_pt_perplexity}
\end{figure*}

\begin{figure*}[ht]
    \centering
    \includegraphics[width=\textwidth]{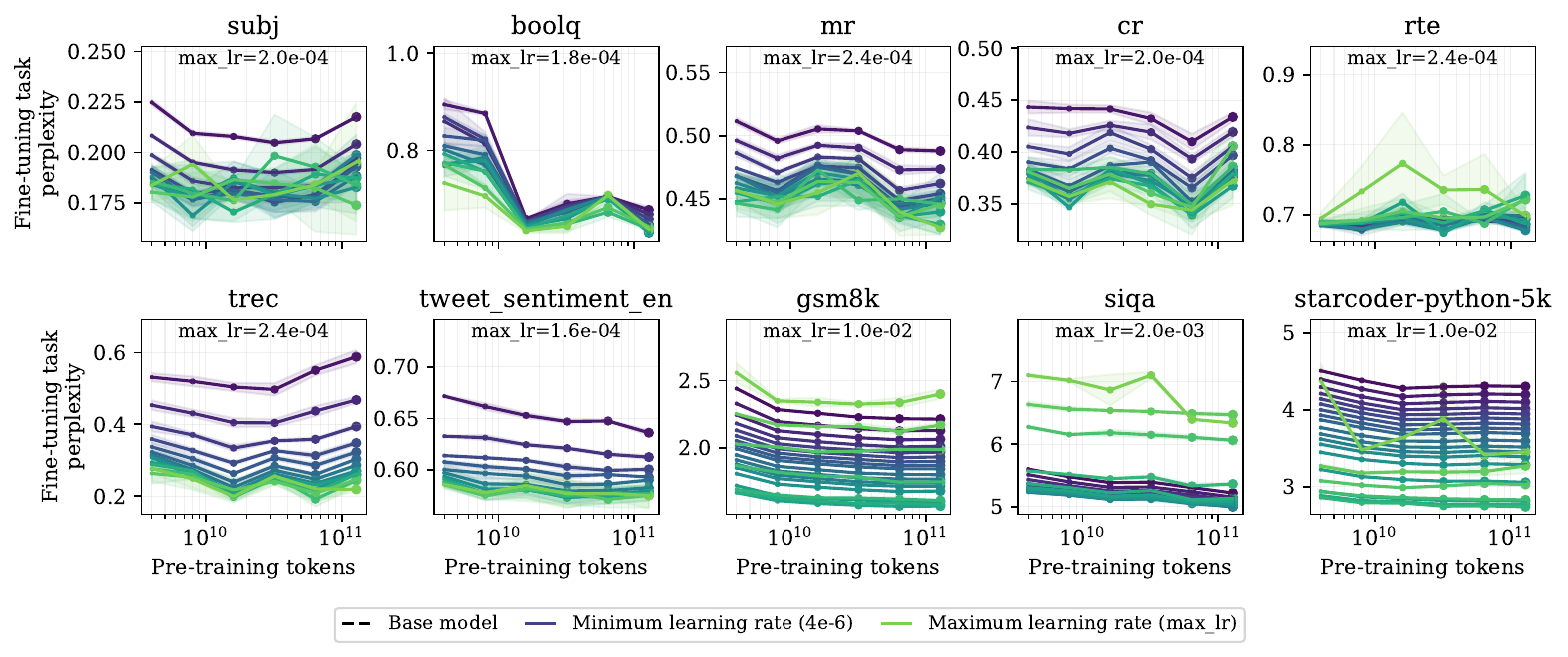}
    \caption{\textbf{Fine-tuning perplexity after fine-tuning as a function of the pre-training budget using the configuration specified in Table~\ref{app_tab:controlled_pretraining_hyperparameters} for OLMo-15M.} Each connected line reflects a series of models trained with fixed hyperparameters. Analogous to Figure~\ref{fig:pt_ft_perplexity} (bottom) from the main paper.}
    \label{fig:appendix_15m_ft_perplexity}
\end{figure*}

\begin{figure*}[ht]
    \centering
    \includegraphics[width=\textwidth]{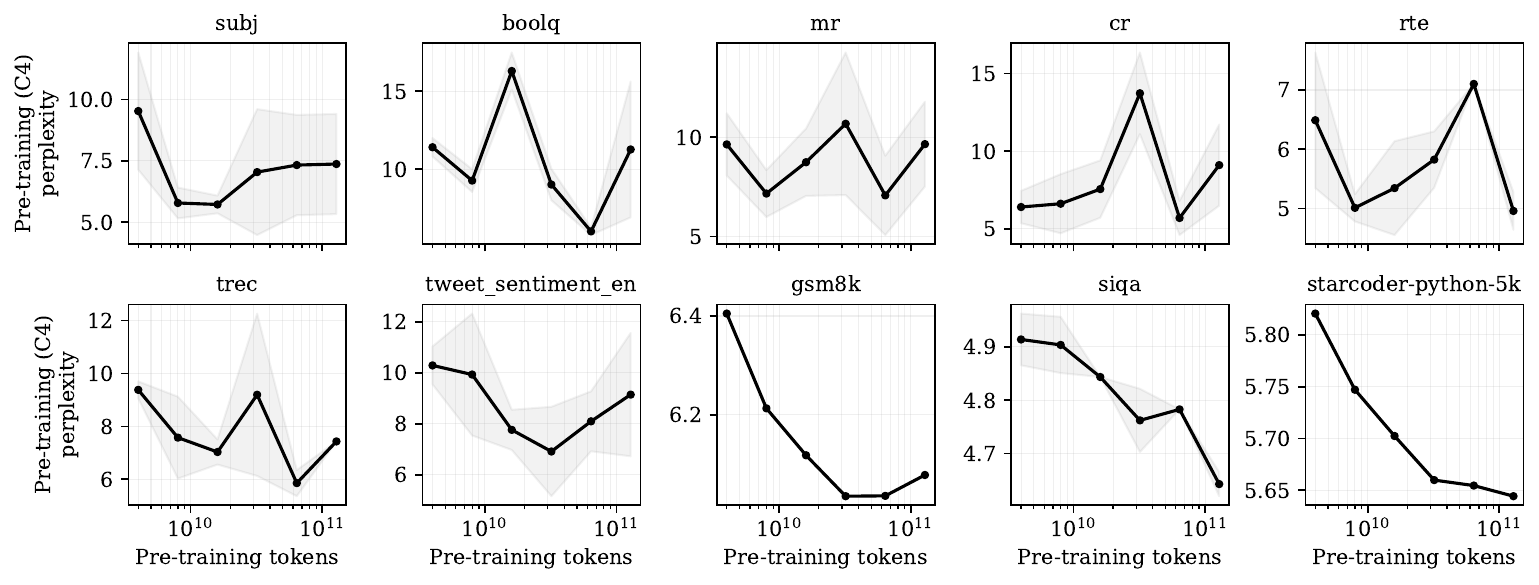}
    \caption{\textbf{Pre-training perplexity after fine-tuning as a function of the pre-training budget with a tuned learning rate to optimize fine-tuning performance using the configuration specified in Table~\ref{app_tab:controlled_pretraining_hyperparameters} for OLMo-15M.} Similar to the untuned version but showing the performance obtained with the fine-tuning-optimal learning rate, analogous to Figure~\ref{fig:ft_pt_tuned_lr} (bottom) from the main paper.}
    \label{fig:appendix_15m_tuned_pt_perplexity}
\end{figure*}

\begin{figure*}[ht]
    \centering
    \includegraphics[width=\textwidth]{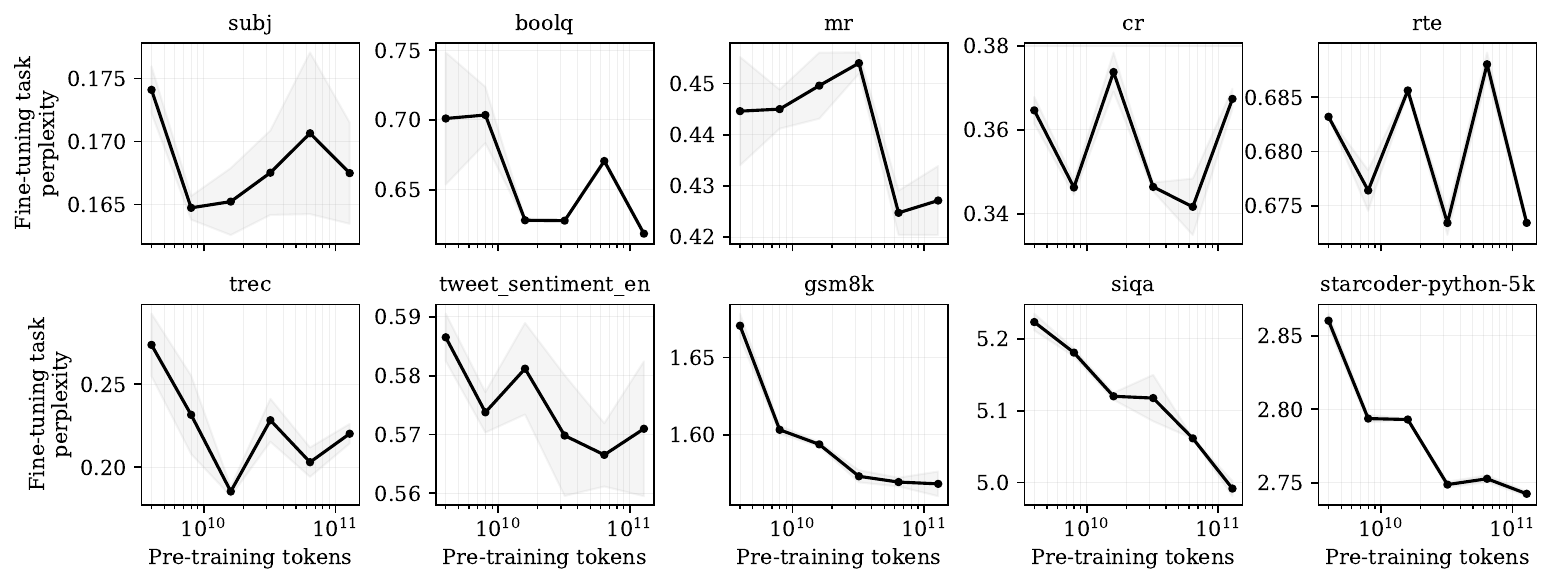}
    \caption{\textbf{Fine-tuning perplexity after fine-tuning as a function of the pre-training budget with a tuned learning rate to optimize fine-tuning performance using the configuration specified in Table~\ref{app_tab:controlled_pretraining_hyperparameters} for OLMo-15M.} Similar to the untuned version but showing the performance obtained with the fine-tuning-optimal learning rate, analogous to Figure~\ref{fig:ft_pt_tuned_lr} (top) from the main paper.}
    \label{fig:appendix_15m_tuned_ft_perplexity}
\end{figure*}

\begin{figure*}[ht]
    \centering
    \includegraphics[width=\textwidth]{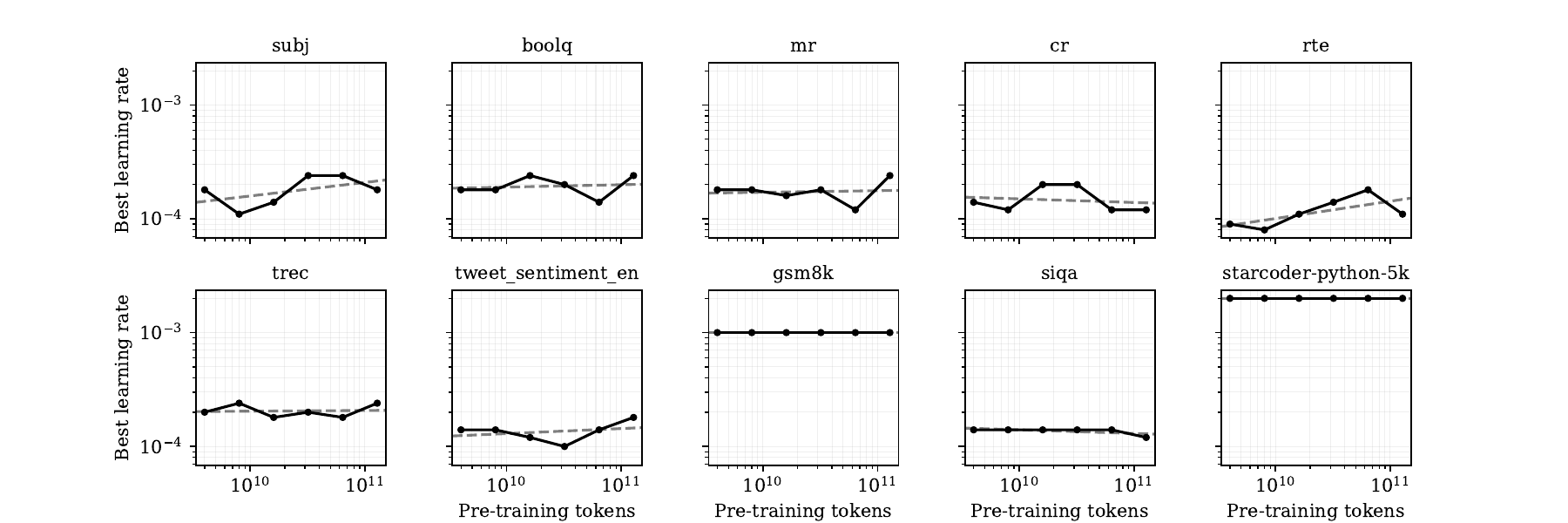}
    \caption{\textbf{The optimal learning rate for best fine-tuning performance as a function of the pre-training budget using the configuration specified in Table~\ref{app_tab:controlled_pretraining_hyperparameters} for OLMo-15M.} The learning rate shown corresponds with those chosen in Figures~\ref{fig:appendix_15m_tuned_pt_perplexity} and \ref{fig:appendix_15m_tuned_ft_perplexity}.}
    \label{fig:appendix_15m_optimal_lr}
\end{figure*}

\begin{figure*}[ht]
    \centering
    \includegraphics[width=\textwidth]{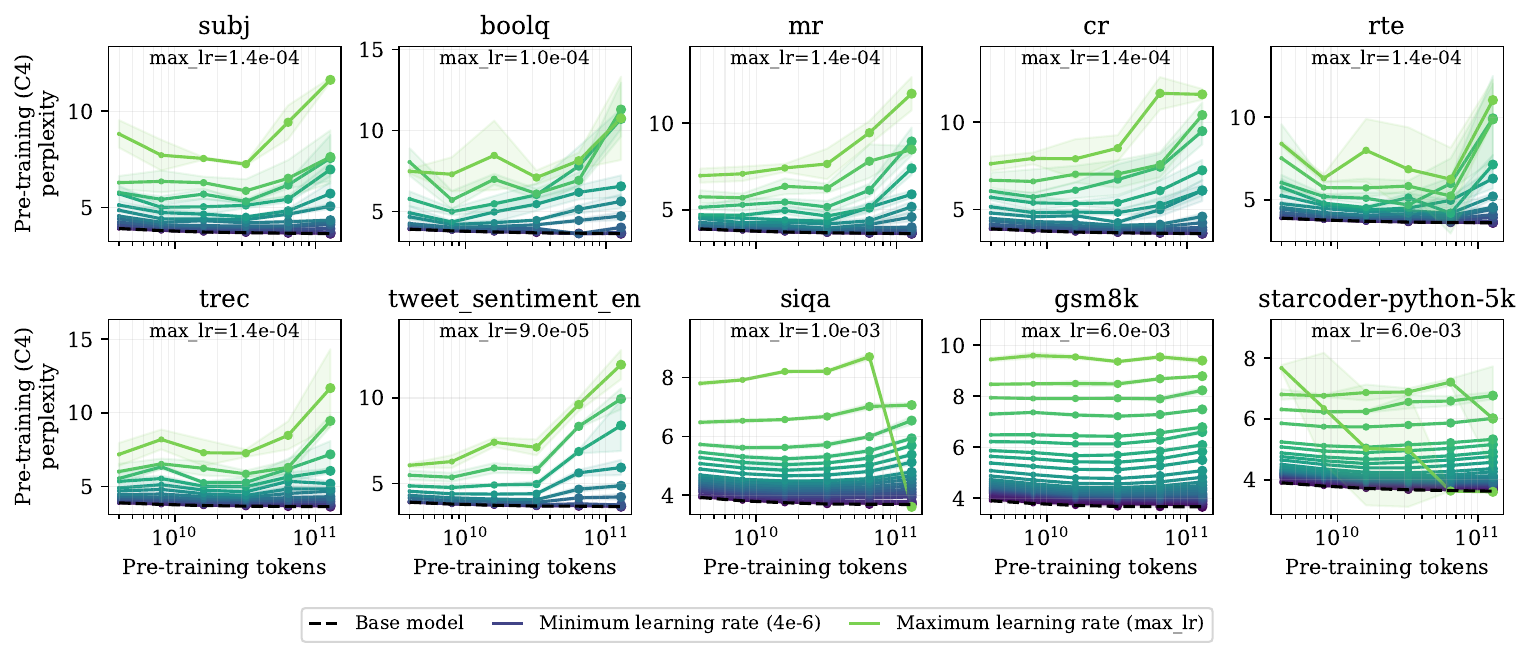}
    \caption{\textbf{Pre-training perplexity after fine-tuning as a function of the pre-training budget using the configuration specified in Table~\ref{app_tab:controlled_pretraining_hyperparameters} for OLMo-30M.} Each connected line reflects a series of models trained with fixed hyperparameters. Extended version of Figure~\ref{fig:pt_ft_perplexity} (top) from the main paper.}
    \label{fig:appendix_30m_pt_perplexity}
\end{figure*}

\begin{figure*}[ht]
    \centering
    \includegraphics[width=\textwidth]{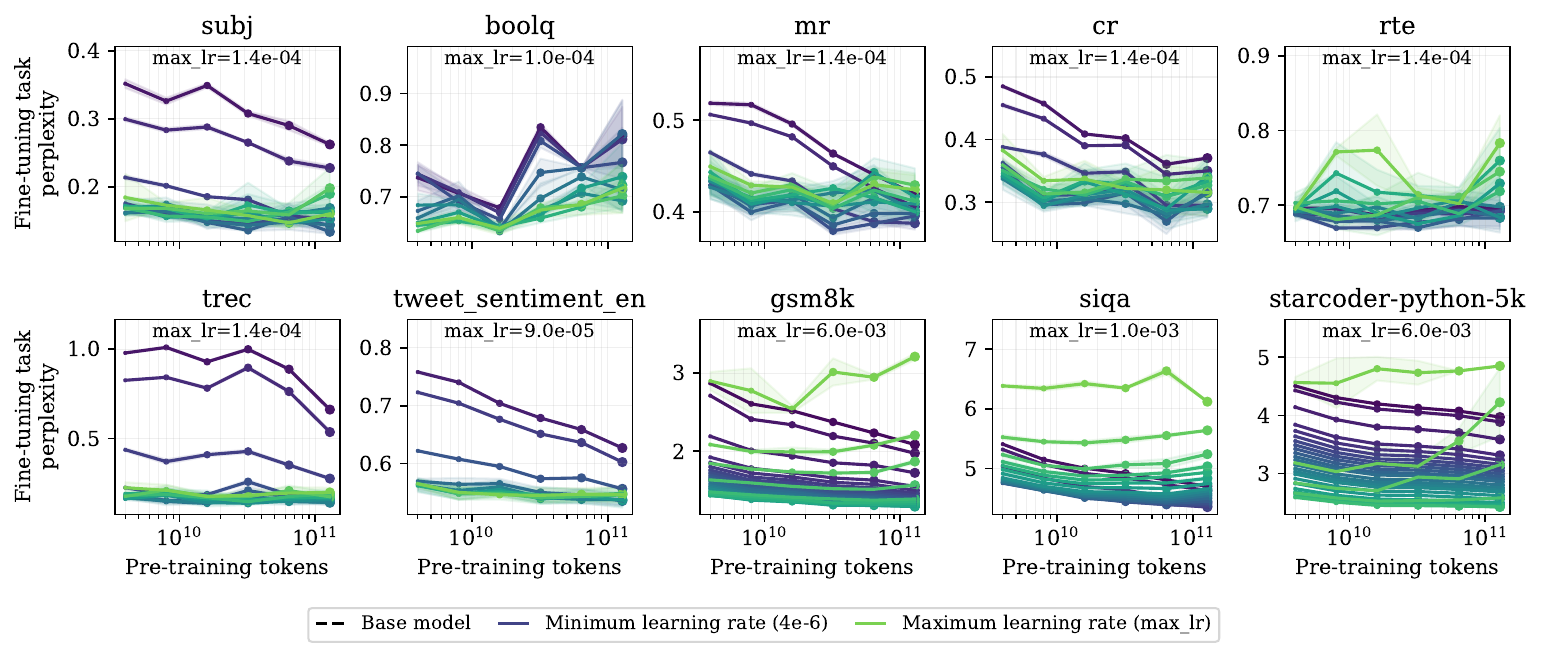}
    \caption{\textbf{Fine-tuning perplexity after fine-tuning as a function of the pre-training budget using the configuration specified in Table~\ref{app_tab:controlled_pretraining_hyperparameters} for OLMo-30M.} Each connected line reflects a series of models trained with fixed hyperparameters. Extended version of Figure~\ref{fig:pt_ft_perplexity} (bottom) from the main paper.}
    \label{fig:appendix_30m_ft_perplexity}
\end{figure*}

\begin{figure*}[ht]
    \centering
    \includegraphics[width=\textwidth]{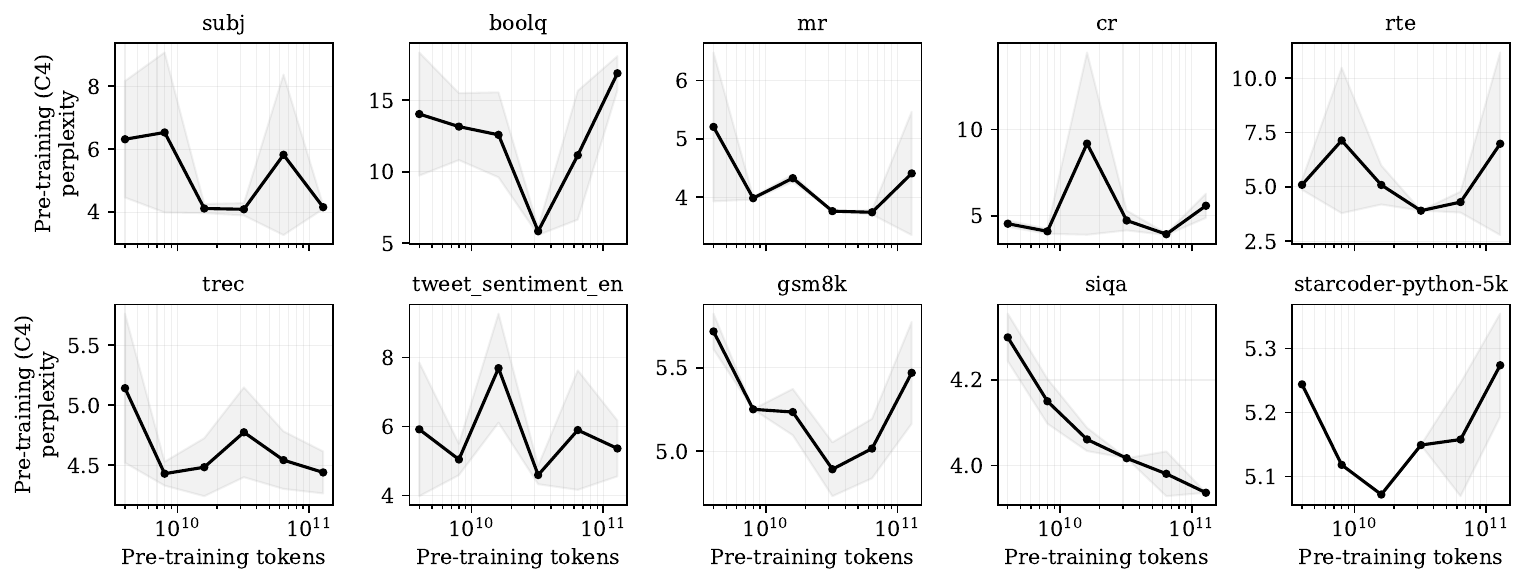}
    \caption{\textbf{Pre-training perplexity after fine-tuning as a function of the pre-training budget with a tuned learning rate to optimize fine-tuning performance using the configuration specified in Table~\ref{app_tab:controlled_pretraining_hyperparameters} for OLMo-30M.} Similar to the untuned version but showing the performance with the fine-tuning-optimal learning rate, analogous to Figure~\ref{fig:ft_pt_tuned_lr} (bottom) from the main paper. Extended version of Figure~\ref{fig:appendix_30m_pt_perplexity} from the main paper.}
    \label{fig:appendix_30m_tuned_pt_perplexity}
\end{figure*}

\begin{figure*}[ht]
    \centering
    \includegraphics[width=\textwidth]{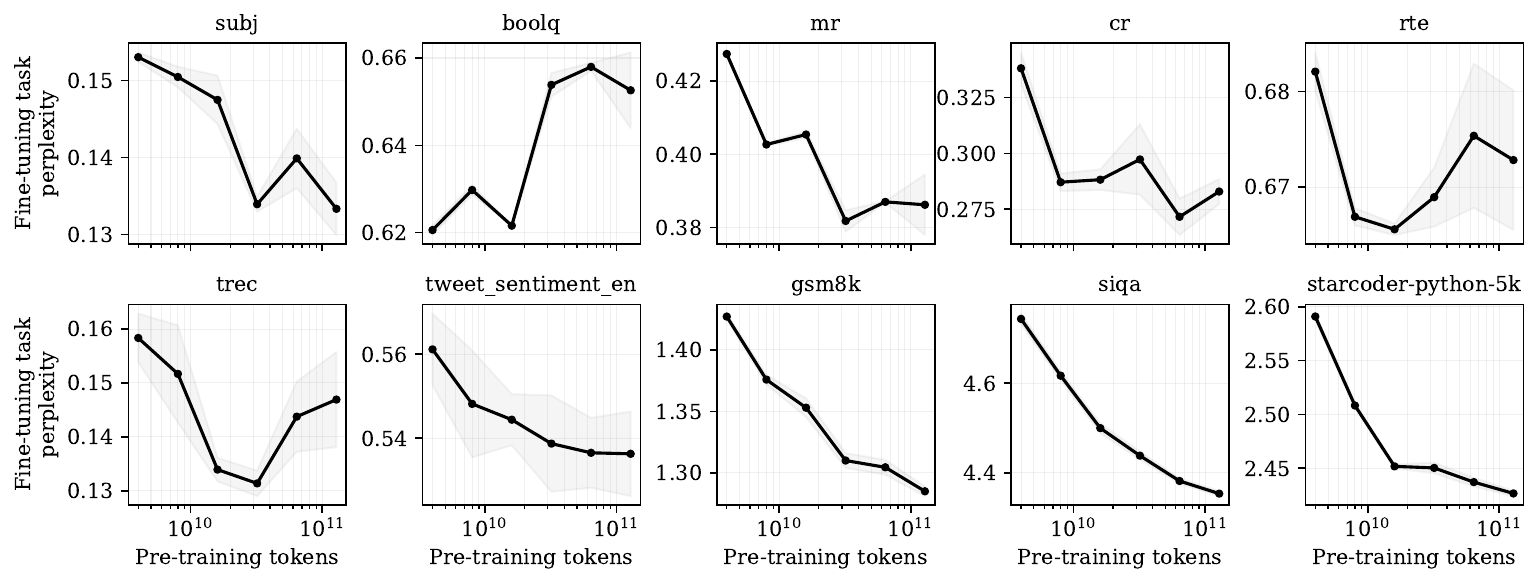}
    \caption{\textbf{Fine-tuning perplexity after fine-tuning as a function of the pre-training budget with a tuned learning rate to optimize fine-tuning performance using the configuration specified in Table~\ref{app_tab:controlled_pretraining_hyperparameters} for OLMo-30M.} Similar to the untuned version but showing the performance with the fine-tuning-optimal learning rate, analogous to Figure~\ref{fig:ft_pt_tuned_lr} (top) from the main paper. Extended version of Figure~\ref{fig:appendix_30m_ft_perplexity} from the main paper.}
    \label{fig:appendix_30m_tuned_ft_perplexity}
\end{figure*}

\begin{figure*}[ht]
    \centering
    \includegraphics[width=\textwidth]{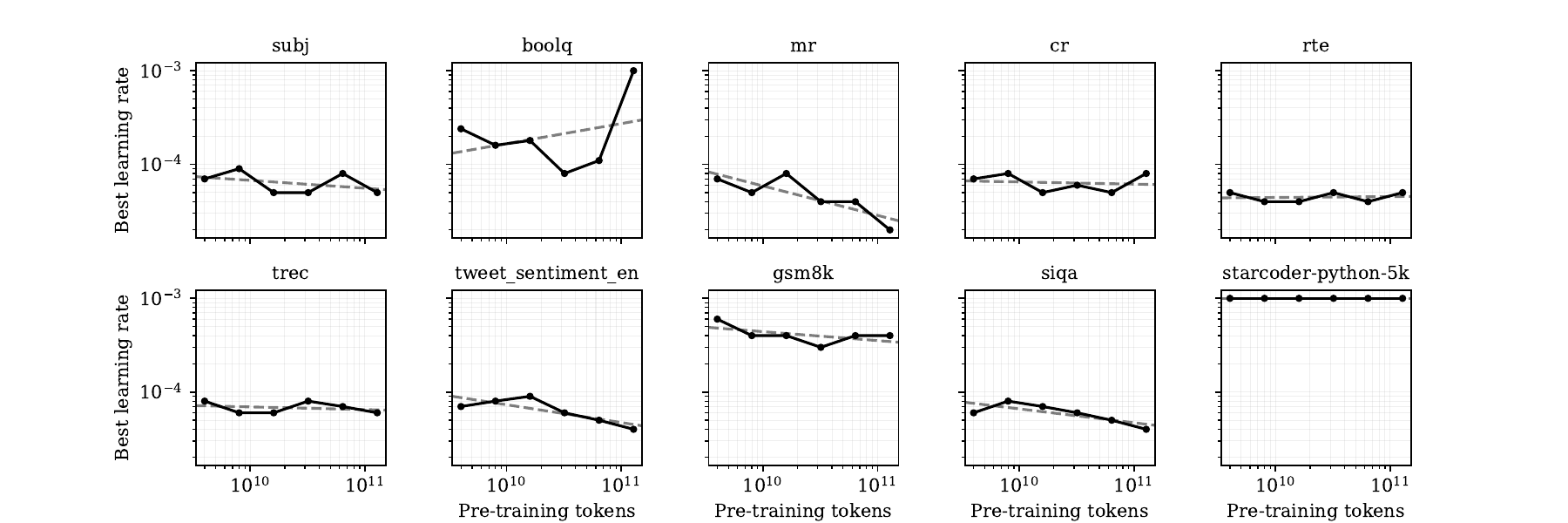}
    \caption{\textbf{The optimal learning rate for best fine-tuning performance as a function of the pre-training budget using the configuration specified in Table~\ref{app_tab:controlled_pretraining_hyperparameters} for OLMo-30M.} The learning rate shown corresponds with those chosen in Figures~\ref{fig:appendix_30m_tuned_pt_perplexity} and \ref{fig:appendix_30m_tuned_ft_perplexity}.}
    \label{fig:appendix_30m_optimal_lr}
\end{figure*}

\begin{figure*}[ht]
    \centering
    \includegraphics[width=\textwidth]{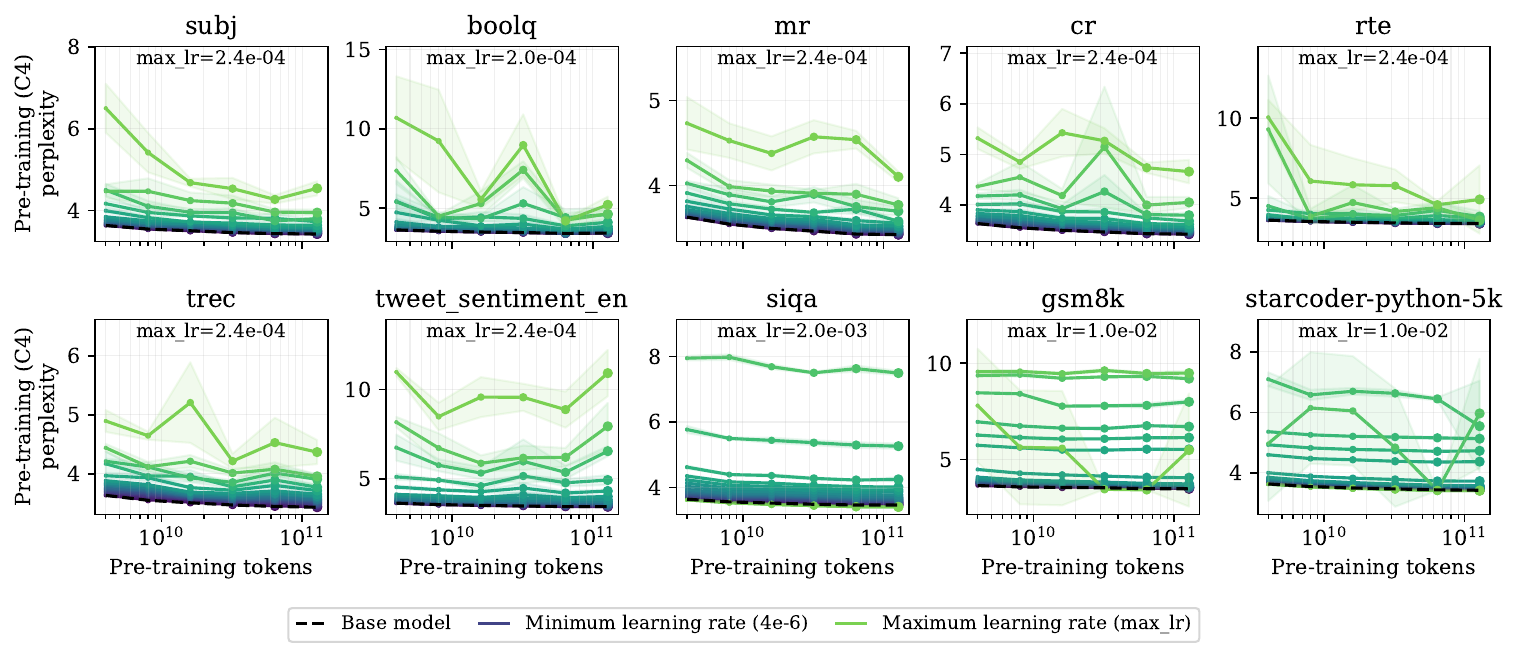}
    \caption{\textbf{Pre-training perplexity after fine-tuning as a function of the pre-training budget using the configuration specified in Table~\ref{app_tab:controlled_pretraining_hyperparameters} for OLMo-90M.} Each connected line reflects a series of models trained with fixed hyperparameters. Analogous to Figure~\ref{fig:pt_ft_perplexity} (top) from the main paper.}
    \label{fig:appendix_90m_pt_perplexity}
\end{figure*}

\begin{figure*}[ht]
    \centering
    \includegraphics[width=\textwidth]{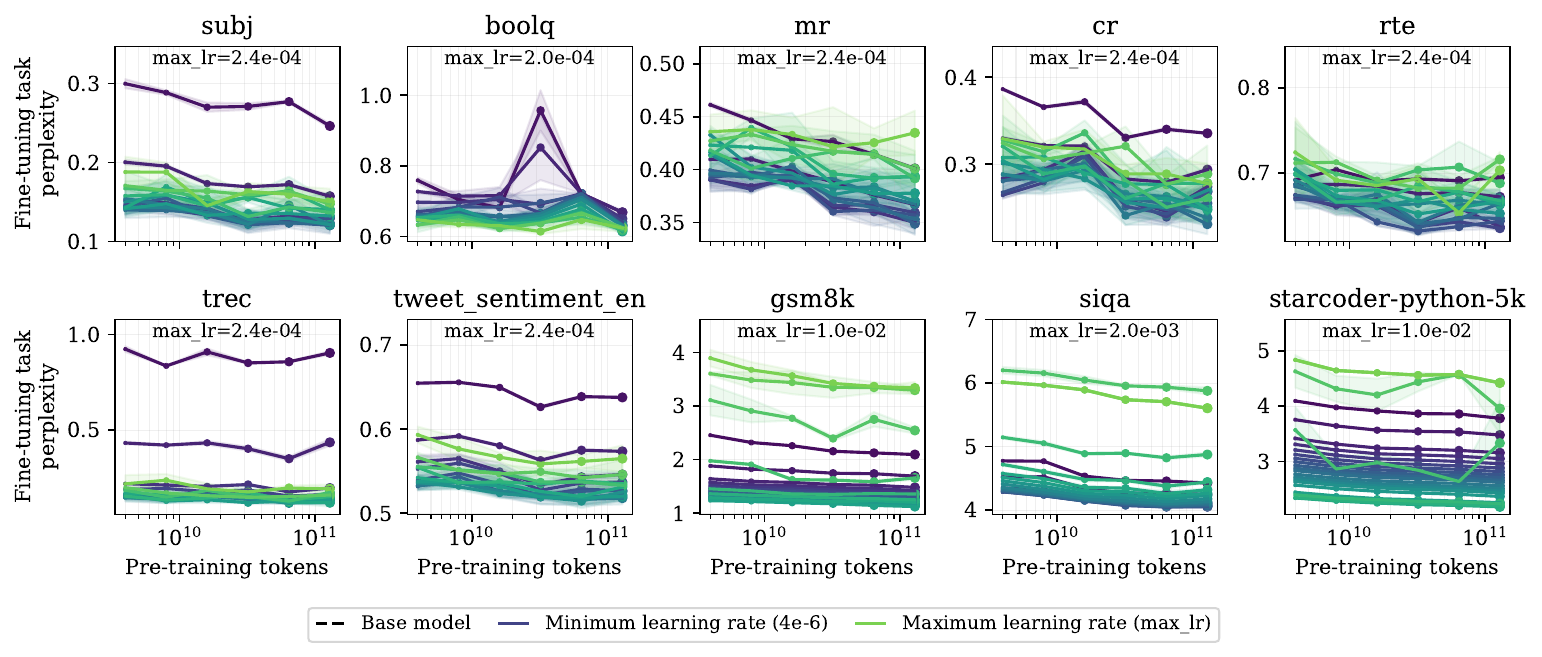}
    \caption{\textbf{Fine-tuning perplexity after fine-tuning as a function of the pre-training budget using the configuration specified in Table~\ref{app_tab:controlled_pretraining_hyperparameters} for OLMo-90M.} Each connected line reflects a series of models trained with fixed hyperparameters. Analogous to Figure~\ref{fig:pt_ft_perplexity} (bottom) from the main paper.}
    \label{fig:appendix_90m_ft_perplexity}
\end{figure*}

\begin{figure*}[ht]
    \centering
    \includegraphics[width=\textwidth]{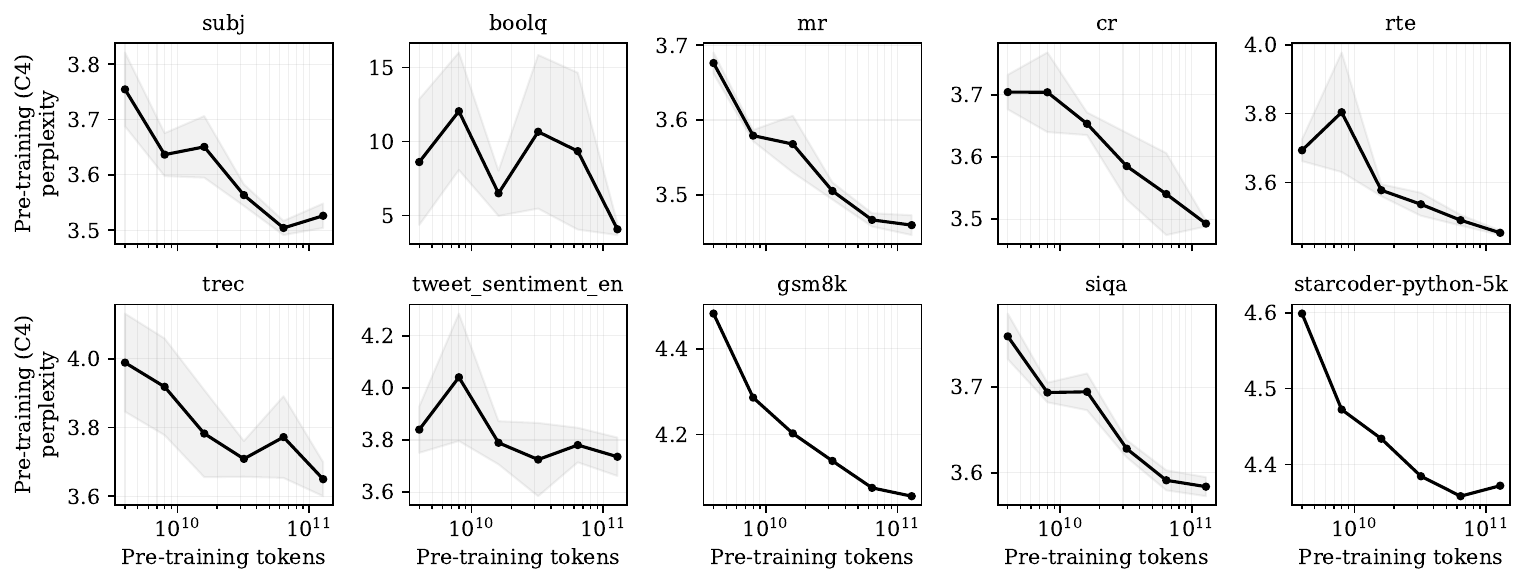}
    \caption{\textbf{Pre-training perplexity after fine-tuning as a function of the pre-training budget with a tuned learning rate to optimize fine-tuning performance using the configuration specified in Table~\ref{app_tab:controlled_pretraining_hyperparameters} for OLMo-90M.} Similar to the untuned version but showing the performance with the fine-tuning-optimal learning rate, analogous to Figure~\ref{fig:ft_pt_tuned_lr} (bottom) from the main paper.}
    \label{fig:appendix_90m_tuned_pt_perplexity}
\end{figure*}

\begin{figure*}[ht]
    \centering
    \includegraphics[width=\textwidth]{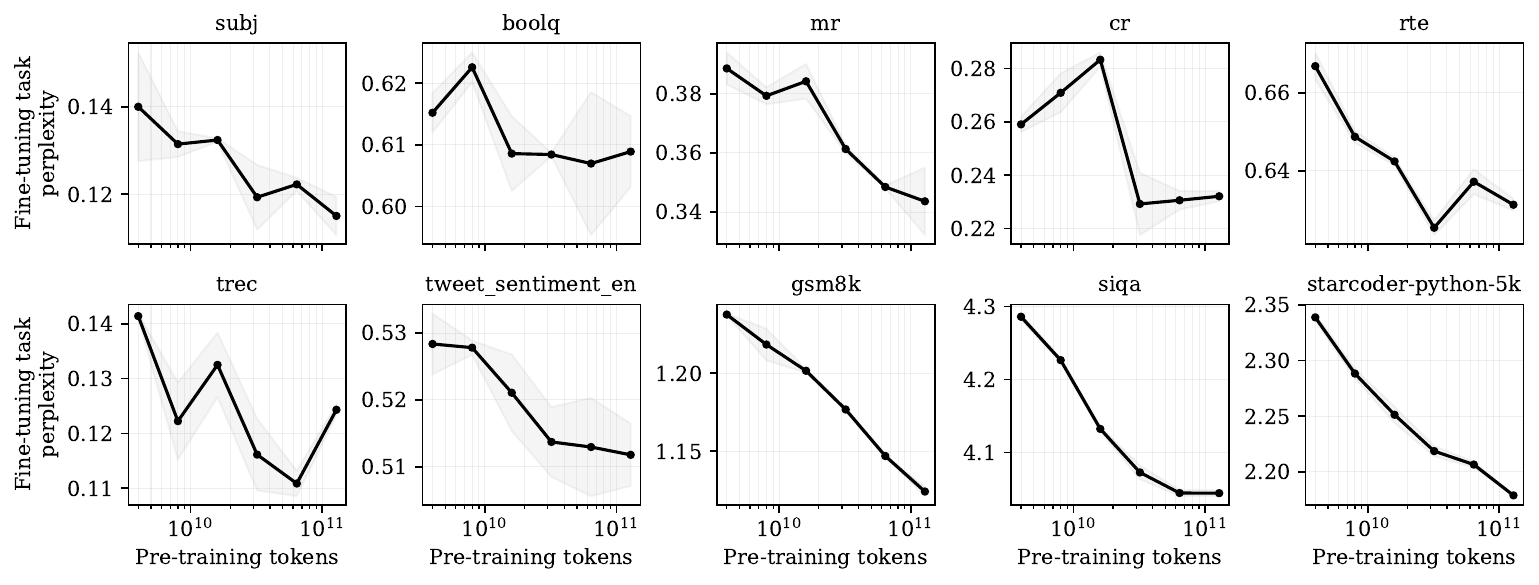}
    \caption{\textbf{Fine-tuning perplexity after fine-tuning as a function of the pre-training budget with a tuned learning rate to optimize fine-tuning performance using the configuration specified in Table~\ref{app_tab:controlled_pretraining_hyperparameters} for OLMo-90M.} Similar to the untuned version but showing the performance with the fine-tuning-optimal learning rate, analogous to Figure~\ref{fig:ft_pt_tuned_lr} (top) from the main paper.}
    \label{fig:appendix_90m_tuned_ft_perplexity}
\end{figure*}

\begin{figure*}[ht]
    \centering
    \includegraphics[width=\textwidth]{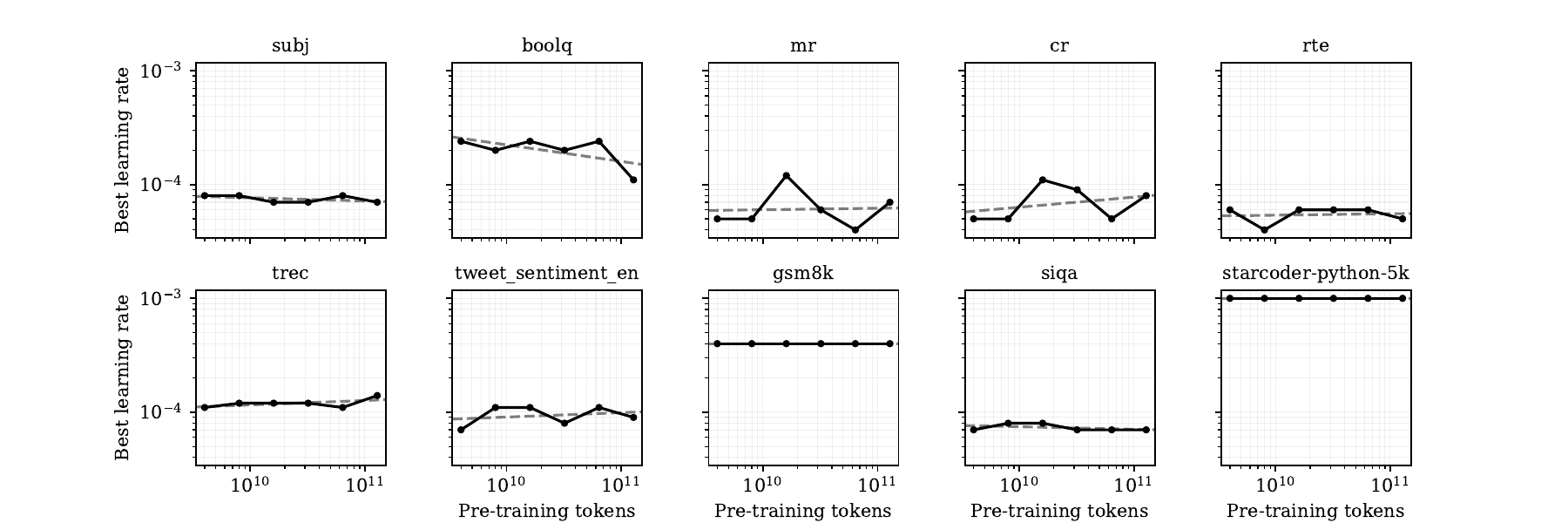}
    \caption{\textbf{The optimal learning rate for best fine-tuning performance as a function of the pre-training budget using the configuration specified in Table~\ref{app_tab:controlled_pretraining_hyperparameters} for OLMo-90M.} The learning rate shown corresponds with those chosen in Figures~\ref{fig:appendix_90m_tuned_pt_perplexity} and \ref{fig:appendix_90m_tuned_ft_perplexity}.}
    \label{fig:appendix_90m_optimal_lr}
\end{figure*}

\FloatBarrier

\begin{figure*}[ht]
    \centering
    \includegraphics[width=\textwidth]{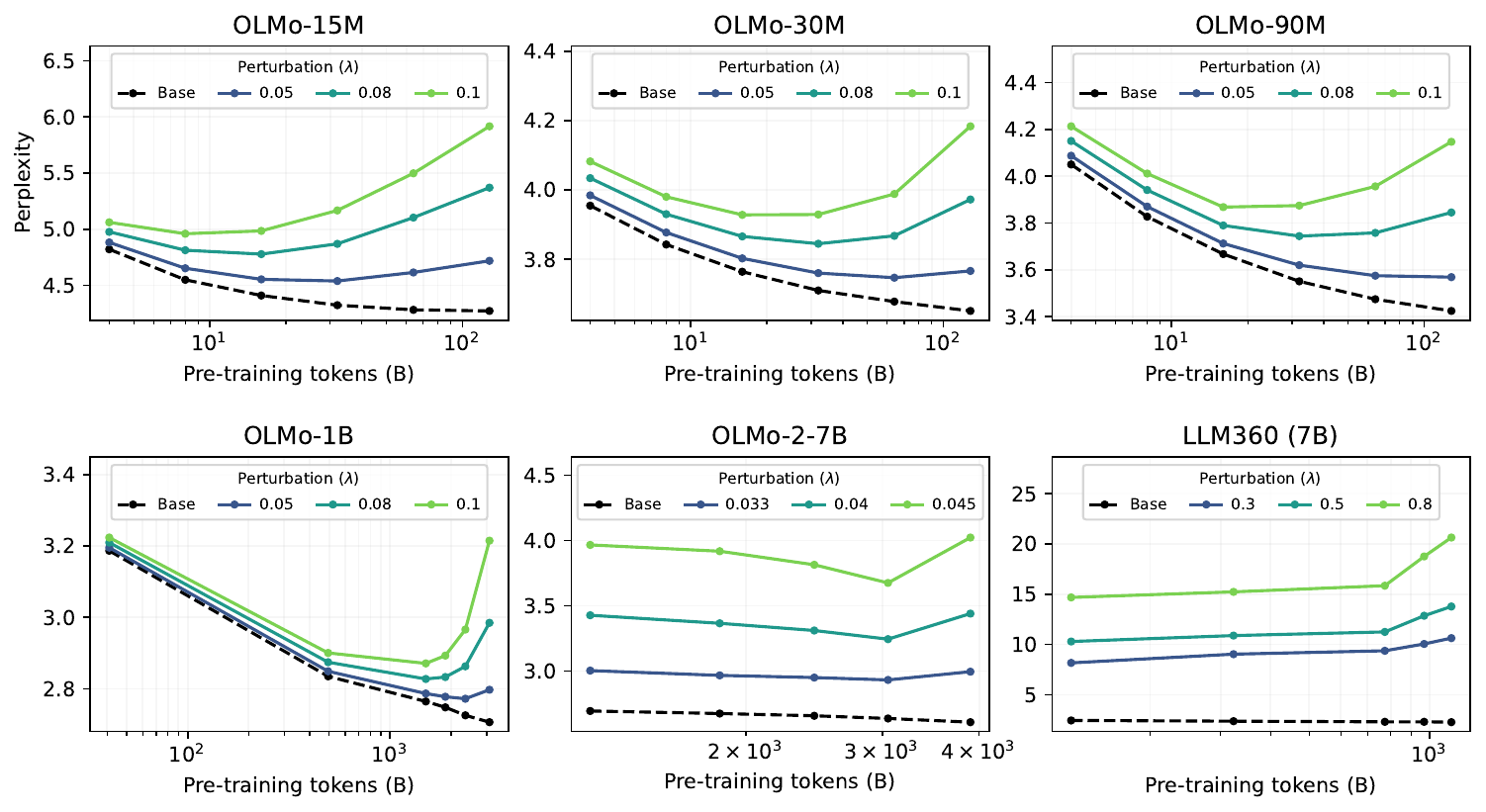}
    \caption{\textbf{Pre-training perplexity of models with parameters perturbed by Gaussian noise, as a function of the number of pre-training tokens.} We report the C4 web data perplexity of different models where each parameter is perturbed by Gaussian noise scaled by the factor $\lambda$ (color). This figures is an extension of Figure~\ref{fig:pt_perplexity_gaussian} to additional models: OLMo-15M, OLMo-90M, OLMo-1B, OLMo-2-7B, and LLM360-Amber (7B).}
    \label{fig:appendix_gaussian_perturbations}
\end{figure*}

\subsection{Extended Gaussian perturbations experiments.}

Here, we present extended experiments with Gaussian perturbations on additional models: OLMo-15M, OLMo-90M, OLMo-1B, OLMo-2-7B, and LLM360-Amber (7B). We perturb each parameter by Gaussian noise scaled by the factor $\lambda$. Figure~\ref{fig:appendix_gaussian_perturbations} shows the pre-training perplexity of models with parameters perturbed by Gaussian noise as a function of the number of pre-training tokens. Refer to Appendix~\ref{app:controlled_experimental_details} for more details on the experimental setup.

\end{document}